\def\eqref#1{equation~\ref{#1}}
\def\1{\bm{1}}
\def\eps{{\epsilon}}
\DeclareMathAlphabet{\mathsfit}{\encodingdefault}{\sfdefault}{m}{sl}
\SetMathAlphabet{\mathsfit}{bold}{\encodingdefault}{\sfdefault}{bx}{n}
\DeclareMathOperator*{\argmax}{arg\,max}
\DeclareMathOperator*{\argmin}{arg\,min}
\newcommand{\notarxiv}[1]{foo}
\newcommand{\arxiv}[1]{ba}
	\renewcommand{\arxiv}[1]{#1}%
	\renewcommand{\notarxiv}[1]{\ignorespaces}%
	\renewcommand{\arxiv}[1]{\ignorespaces}%
	\renewcommand{\notarxiv}[1]{#1}%
\definecolor{darkblue}{rgb}{0,0,.75}
\DeclarePairedDelimiter{\abs}{\lvert}{\rvert} %
\DeclarePairedDelimiter{\brk}{[}{]}
\DeclarePairedDelimiter{\crl}{\{}{\}}
\DeclarePairedDelimiter{\prn}{(}{)}
\DeclarePairedDelimiter{\norm}{\|}{\|}
\DeclarePairedDelimiter{\tri}{\langle}{\rangle}
\DeclarePairedDelimiter{\ceil}{\lceil}{\rceil}
\DeclareDocumentCommand\opnorm{ s o m }{%
	\IfBooleanTF{#1}{%
		\norm*{#3}_{\mathrm{op}}
	}{%
		\IfNoValueTF{#2}{%
			\norm{#3}_{\mathrm{op}}
		}{%
			\norm[#2]{#3}_{\mathrm{op}}
		}
	}
}
\DeclarePairedDelimiterXPP{\inner}[2]{}{\langle}{\rangle}{}{#1,#2}
\newcommand{\overeq}[1]{\overset{#1}{=}}
\newcommand{\overge}[1]{\overset{#1}{\ge}}
\NewDocumentCommand\Ex{s O{} m }{%
	\mathbb{E}%
	\begingroup
	\IfBooleanTF{#1}
	{\ExInn*{#3}}
	{\ExInn[#2]{#3}}%
	\endgroup
}
\DeclarePairedDelimiterX\ExInn[1]{[}{]}{%
	\activatebar
	#1%
}
\RenewDocumentCommand\Pr{sO{}r()}{%
	\mathbb{P}%
	\begingroup
	\IfBooleanTF{#1}
	{\PrInn*{#3}}
	{\PrInn[#2]{#3}}%
	\endgroup
}
\DeclarePairedDelimiterX\PrInn[1](){%
	\activatebar
	#1%
}
\newcommand{\activatebar}{%
	\begingroup\lccode`~=`|
	\lowercase{\endgroup\def~}{\;\delimsize\vert\;}%
	\mathcode`|=\string"8000
}
\newcommand\numberthis{\addtocounter{equation}{1}\tag{\theequation}}
\newcommand{\lfro}[1]{\left\|{#1}\right\|_{\rm F}} %
\newcommand{\defeq}{\coloneqq}
\newcommand{\eqdef}{\eqqcolon}
\newcommand{\indic}[1]{\mathbbm{1}_{\!\left\{#1\right\}}} %
\newcommand{\R}{\mathbb{R}}
\newcommand{\ones}{\mathbf{1}}
\long\def\@makecaption#1#2{
  \vskip 0.8ex
  \setbox\@tempboxa\hbox{\small {\bf #1:} #2}
  \parindent 1.5em  %
  \dimen0=\hsize
  \advance\dimen0 by -3em
  \ifdim \wd\@tempboxa >\dimen0
  \hbox to \hsize{
    \parindent 0em
    \hfil 
    \parbox{\dimen0}{\def\baselinestretch{0.96}\small
      {\bf #1.} #2
    } 
    \hfil}
  \else \hbox to \hsize{\hfil \box\@tempboxa \hfil}
  \fi
}
\definecolor{innerboxcolor}{rgb}{.9,.95,1}
\definecolor{outerlinecolor}{rgb}{.6,0,.2}
\newcommand{\ND}[3]{\mathcal{N}\left(#1, #2 I_{#3} \right)}
\newcommand{\E}{\mathbb{E}} %
\renewcommand{\P}{\mathbb{P}} %
\providecommand{\argmax}{\mathop{\rm argmax}} %
\providecommand{\argmin}{\mathop{\rm argmin}}
\providecommand{\diag}{\mathop{\rm diag}}
\providecommand{\abs}{\mathop{\rm abs}}
\providecommand{\minimize}{\mathop{\rm minimize}}
\providecommand{\subjectto}{\mathop{\rm subject\;to}}
\theoremstyle{plain}
\newtheorem{theorem}{Theorem}[section]
\newtheorem{prop}[theorem]{Proposition}
\newtheorem{lem}[theorem]{Lemma}
\newtheorem{corollary}[theorem]{Corollary}
\theoremstyle{definition}
\theoremstyle{remark}
\newtheorem{remark}[theorem]{Remark}
\newtheorem{fact}{Fact}
\newcounter{example}
\newenvironment{example*}[1][]{
  \ifthenelse{\isempty{#1}}{%
    \noindent \textbf{Example:}\hspace*{.05em}
  }{%
    \noindent \textbf{Example} ({#1})\textbf{:}\hspace*{.05em}
  }
}{%
  $\Diamond$ \bigskip
}
\def\keywordname{{\bfseries \emph Keywords}}%
\def\keywords#1{\par\addvspace\medskipamount{\rightskip=0pt plus1cm
    \def\and{\ifhmode\unskip\nobreak\fi\ $\cdot$
    }\noindent\keywordname\enspace\ignorespaces#1\par}}
\newcommand{\grad}{\nabla}
\newcommand\Block[2]{%
	#1%
	\algocf@group{#2}%
}
\newcommand{\WCE}[1]{\mathrm{Err}_{\textup{wc}}\left(#1\right)}
\newcommand{\WCELB}[1]{\widetilde{\mathrm{Err}}_{\textup{wc}}^{\textup{\MakeLowercase{#1}}}}
\newcommand{\yErr}[2]{\mathrm{Err}_{\textit{#1}}\left(#2\right)}
\newcommand{\BalErr}[1]{\mathrm{Err}_{\textup{bal}}\left(#1\right)}
\newcommand{\ErrYtoK}{\mathrm{Err}_{y\to k}}
\newcommand{\ErrY}{\mathrm{Err}_y}
\newcommand{\deltaIminusY}[3]{\delta^{(#1)}_{#2-#3}}
\newcommand{\deltaTildeIminusY}[3]{\tilde{\delta}^{(#1)}_{#2-#3}}
\newcommand{\zetaYij}[3]{\zeta^{(#1)}_{#2,#3}}
\newcommand{\zetaTildeYij}[3]{\tilde{\zeta}^{(#1)}_{#2,#3}}
\newcommand{\Deltatilde}{\tilde{\Delta}}
\newcommand{\bigL}[1]{\mathcal{L}\left(#1\right)}
\newcommand{\LA}{\textup{LA}}
\newcommand{\MA}{\textup{MA}}
\newcommand{\MM}{\textup{MM}}
\newcommand{\bias}[1]{b_{[#1]}}
\newcommand{\Nbar}[1]{{N}^{\setminus #1}}
\newcommand{\wtilde}[1]{\tilde{w}_{#1}}
\newcommand{\btilde}[1]{\tilde{b}_{#1}}
\newcommand{\Wtilde}[1]{\tilde{W}_{#1}}
\newcommand{\alphatilde}[1]{\tilde{\alpha}_{#1}}
\newcommand{\betatilde}[1]{\tilde{\beta}_{#1}}
\newcommand{\cls}{c}
\newcommand{\Rc}{\R^{\cls}}
\newcommand{\ellzo}{\ell_{0\textup{-}1}}
\newcommand{\Pd}[1]{\mathcal{P}_{#1} \left( \left\{s_{i}, N_{i}\right\}_{i=1}^{\cls} \right)}
\newcommand{\D}{\mathcal{D}}
\newcommand{\mm}{^{\rm{mm}}}
\newcommand{\ma}{^{\rm{ma}}}
\newcommand{\ce}{^{\rm{ce}}}
\newcommand{\la}{^{\rm{la}}}
\newcommand{\cdt}{^{\rm{cdt}}}
\newcommand{\pind}[1]{^{(#1)}}
\newcommand{\nuY}[1]{\nu^{(#1)}}
\newcommand{\nuYhat}[1]{\hat{\nu}^{(#1)}}
\newcommand{\nuYtilde}[1]{\tilde{\nu}^{(#1)}}
\newcommand{\SigmaY}[1]{\Sigma^{(#1)}}
\newcommand{\SigmaYhat}[1]{\hat{\Sigma}^{(#1)}}
\newcommand{\SigmaYtilde}[1]{\tilde{\Sigma}^{(#1)}}
\newcommand{\xbar}[1]{\bar{x}_{#1}}
\newcommand{\nbar}[1]{\bar{n}_{#1}}
\newcommand{\Kbar}[1]{\bar{K}_{#1}}
\newcommand{\uY}[2]{u^{(#1)}_{\brk*{#2}}}
\newcommand{\deltatilde}[1]{\tilde{\delta}_{#1}}
\newcommand{\Mbar}[1]{M^{\setminus{#1}}}
\title{An Analytical Model for Overparameterized Learning\\Under Class Imbalance}
\author{\name Eliav Mor \email eliav.mor@gmail.com \\
 \addr Department of Computer Science\\
 Tel Aviv Univeristy
 \AND
 \name Yair Carmon \email ycarmon@tauex.tau.ac.il \\
 \addr Department of Computer Science\\
 Tel Aviv Univeristy
 }
 \titlespacing*{\paragraph}{0pt}{0pt plus 0pt minus 0pt}{6pt}  %
\begin{document}

\maketitle

\begin{abstract}%
 We study class-imbalanced linear classification in a high-dimensional Gaussian mixture model. We develop a tight, closed form approximation for the test error of several practical learning methods, including logit adjustment and class dependent temperature. Our approximation allows us to analytically tune and compare these methods, highlighting how and when they overcome the pitfalls of standard cross-entropy minimization. We test our theoretical findings on simulated data and imbalanced CIFAR10, MNIST and FashionMNIST datasets.
\end{abstract}

\section{Introduction}\label{sec:intro}

Learning from class-imbalanced data is a longstanding and central challenge in machine learning \citep{he2009learning,van2017devil,buda2018systematic,huang2019deep}. Its importance stems from the fact that in many applications some classes are under-represented in the training data, yet we still desire classifiers that perform well on these minority classes. Examples include face recognition~\citep{merler2019diversity,wang2018cosface, deng2019arcface}, wildlife conservation~\citep{tuia2022perspectives,sullivan2009ebird, van2018inaturalist}, medical diagnosis~\citep{mac2002problem}, and fraud detection~\citep{singh2022credit}.

Standard machine learning methods are particularly sensitive to class imbalance in the \emph{overparameterized} regime\footnote{The "overparameterized regime" characterizes a setting where the number of parameters in a model significantly exceeds the size of the training set, allowing the model to interpolate the training set.}, in which the model is capable of perfectly fitting the entire training data. The predominant approach for multi-class classification, namely minimizing an empirical cross-entropy loss, often generalizes very poorly to minority classes. One may intuitively explain this fact by viewing empirical cross-entropy as a surrogate for the population classification error, which discounts minority classes. It is therefore natural to seek a loss that approximates a class-balanced population error, leading to classical techniques such as re-weighting the losses or re-sampling of the training data~\citep{he2009learning}. However, this intuition and the resulting techniques both fail in the overparameterized setting, where the training cross-entropy loss becomes zero (and hence does not approximate the population error) \citep{rosset2003margin,soudry2018implicit,lyu2020gradient} and re-weighting or re-sampling the data has little effect on the resulting classifier \citep{byrd2019effect,fang2020rethinking, xu2021understanding,zhai2023understanding}.

These observations have led to extensive work on methods that modify the cross-entropy objective to improve the performance of overparameterized models on minority classes \citep{cao2019learning, menon2021long, ye2020identifying, lu2022importance,nguyen2021avoiding, kini2021label, lin2017focal, cui2019class, hong2021disentangling, ren2020balanced,samuel2021distributional,li2021autobalance,behnia2023implicit}. We focus on three leading approaches: logit adjustment (LA) \citep{menon2021long}, class-dependent temperature (CDT) \cite{ye2020identifying}, and margin adjustment\footnote{MA does not have a standard name in the literature: \citet{behnia2023implicit} call it ``label dependent temperature,'' \citet{lu2022importance} call it ``importance tempering'' and \citet{nguyen2021avoiding} call it ``margin scaling.''} (MA) \citep{behnia2023implicit,lu2022importance, nguyen2021avoiding} (see \Cref{sec:final_discussion} for discussion of additional pertinent methods). While LA, MA and CDT are effective in the overparameterized class-imbalanced setting, they each introduce additional hyperparameters, and it is unclear whether one method is fundamentally preferable to the other.

\begin{figure*}[t]
	\centering
	\includegraphics[width=1\textwidth]{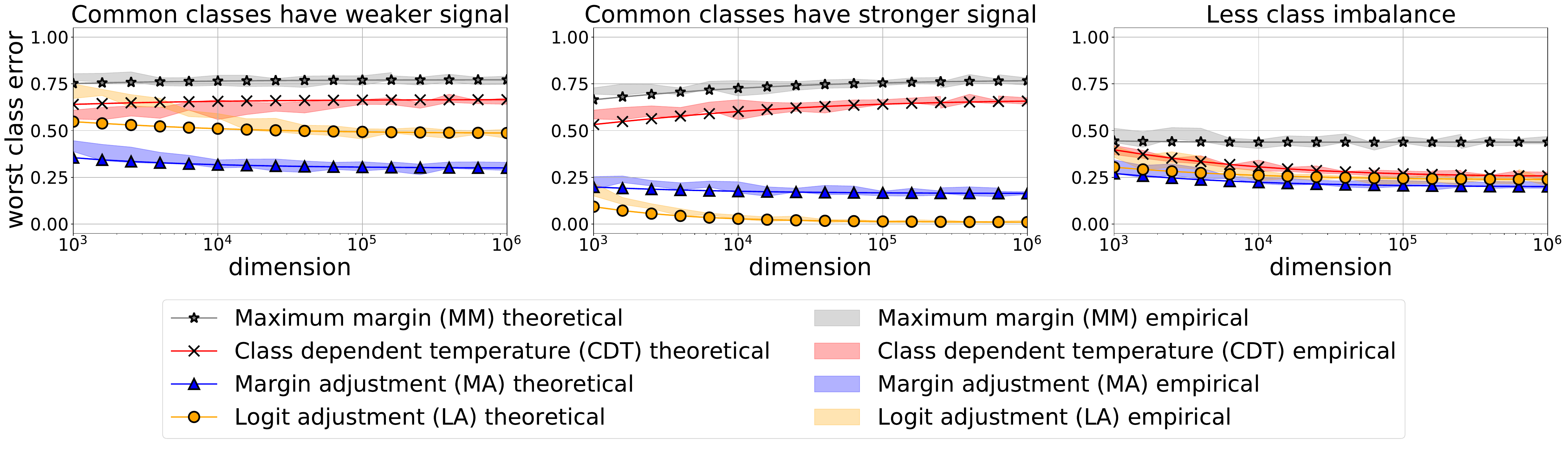}
	\caption{\label{fig:1}
		\textbf{Illustration of our main findings.} We plot the test error of the worst-performing class as a function of model dimension, for the different learning methods we consider. The shaded areas indicate empirical measurements and the solid lines show our analytical approximation prediction. Each panel shows a different set of model parameters; see \Cref{appendix:fig1} for detailed description.}
\end{figure*}

\paragraph{Our contribution.}
In this work, we shed light on the failure mechanism of class-imbalanced overparameterized cross-entropy minimization, and on the relative merits of LA, MA, and CDT. To do so, we consider high-dimensional (overparameterized) linear classification in a simple class-imbalanced Gaussian mixture data model with varying signal strengths between classes. We focus on learning by applying (stochastic) gradient descent on the empirical risk until reaching loss zero, which is equivalent to solving a (modified) margin maximization problem \citep{soudry2018implicit, behnia2023implicit, kini2021label}.
In this setting, we develop a tractable analytical approximation for the test error as a function of the model parameters and balancing hyperparameters. 
\Cref{fig:1} illustrates the tightness of our approximation, as well as our key findings:
\begin{enumerate}[leftmargin=*,itemsep=1.0\parskip,topsep=0pt,parsep=0pt]
  \item Our model clearly exhibits the failure of cross-entropy---i.e., the maximum margin (MM) classifier---on minority classes.
  Moreover, our approximation quantitatively shows how class imbalance affects test error, and reveals the failure mechanism of cross-entropy minimization: minority classifier vectors become dominated by noise from majority examples.
  \item Our approximations allow us to derive closed form expressions for near-optimal hyperparameter tuning for MA and LA, and shows that both methods successfully mitigate class imbalance.
  \item While CDT is sometimes helpful, there exist problem instances where MA and LA perform well and CDT performs arbitrarily badly, for any choice of its hyperparameters. Moreover, CDT sometimes fails to reach zero training error even for linearly separable training data.
  \item Our analysis delineates the combined effect of class imbalance and class signal strength on test error across different losses. Notably, even in our simplified setting, no method universally dominates the other: carefully-tuned LA performs better when the signal strength is either identical across classes or positively correlated with per-class sample size, while MA may perform better in other settings.
\end{enumerate}
Overall, we believe that our work provides a helpful perspective on the challenges of overparameterized learning of imbalanced data, as well as a useful theoretical proving ground for new techniques.

\vspace*{-8pt}
\paragraph{Going beyond linear classification on Gaussian mixtures.} To test whether our conclusions hold beyond the stylized setting that we analyze, we experiment with non-linear classifiers and a standard class-imbalanced benchmark. More specifically, we study an imbalanced version of the CIFAR10 \citep{krizhevsky2009learning}, MNIST \citep{deng2012mnist} and FashionMNIST \citep{xiao2017fashionmnist} datasets
using either a radial basis function (RBF) kernel or fine-tuning a pre-trained CLIP model \citep{radford2021learning}. In the kernel experiments, our conclusions hold fairly well: cross-entropy generalizes poorly, CDT fails in certain settings, and our theoretically-derived tuning of MA and LA performs well. In the CLIP experiments, CDT performs about as well as MA but seems more sensitive to parameter-tuning, and our theoretically-derived tuning for LA performs comparably to standard tuning.

\paragraph{Paper organization.}\label{par:paper_organization}
In \Cref{sec:preliminaries} we present the problem setting, our data model, and the methods we consider. In \Cref{sec:approximation} we describe our analytical approximation for the classifiers learned by the different methods and their test performance. In \Cref{sec:other-methods} we apply our approximation on MA, MM, LA and CDT, providing expressions for near-optimal hyperparameter tuning and comparing the different methods. In \Cref{sec:experiments} we report on experiments within our model and beyond. Finally, in \Cref{sec:final_discussion}, we present an extended discussion covering additional related work, limitations of our study, and directions for future research.

\section{Preliminaries}\label{sec:preliminaries}
\paragraph{Notation.}
We denote matrices by capital letters, and vectors and scalars by lowercase letters. We denote the $i$'th column of $A$ by $a_i$, and we use brackets to refer to specific matrix or vectors entries, i.e., $A_{[i,j]} = a_{i[j]}$.
We write $\norm{v}$ for the Euclidean norm of $v$, while $\lfro{A}$ and $\opnorm{A}$ denote the Frobenius and $\ell_2$ operator norm of $A$, respectively. We let $\indic{\mathfrak{E}}$ denote the indicator of event $\mathfrak{E}$.
\subsection{Hypothesis class and error metrics}\label{sec:class_of_predictors}
We consider linear predictors that classify $d$-dimensional features to one of $c$ classes. A predictor is parameterized by weights $W\in\R^{d\times \cls}$ and biases $b\in\R^{\cls}$. Given input $x\in\R^d$, it predicts $\hat{y}=\argmax_{y \in [c]} \crl*{ w_{y}^\top x + \bias{y}}$. 

The prediction error for examples of class $y$ is
\begin{align}
    \label{eq:err_y}
    \yErr{y}{W, b} \defeq \E_{x|y} \brk*{\ellzo \prn*{z; y}} ,
\end{align}
where $z=W^\top x + b$ and $\ellzo(z,y) = \indic{y\notin\argmax_i z_{[i]}}$ is the 0-1 loss. We also define the \textbf{worst class error} as 
\begin{align}
	\label{eq:wge-and-be}
    \WCE{W, b} &\defeq \max_{y \in [c]} \yErr{y}{W, b}. 
\end{align}
Finally, we write the probability of class $k$ obtaining larger score than the true class $y$ as 
\begin{equation}\label{eq:err-y-to-k-def}
	\ErrYtoK(W, b) \defeq \P_{x|y} \prn[\Big]{\tri{w_{k},  x} + \bias{k} > \tri{w_{y},  x} + \bias{y}},
\end{equation}
and note that 
\begin{equation}
	\label{eq:worst_class_error_bounds}
	\max_{k \ne y} \ErrYtoK(W, b) \leq \WCE{W, b} \leq \max_{y} \sum_{k\ne y} \ErrYtoK(W, b) \le \cls \cdot \max_{k \ne y} \ErrYtoK(W, b).
\end{equation}

\subsection{Learning methods}\label{subsec:methods}
We now formally define the methods that our work analyzes and compares.
Given a training set $\crl{x_i,y_i}_{i\in [N]}$, we consider minimizing empirical risks of the form
\begin{align*}
	\bigL{W, b} \defeq \frac{1}{N} \sum_{i=1}^{N} \ell \left(W^\top x_i+b; y_i\right),
\end{align*}
over $W,b$, where $\ell: \R\times [\cls]\to \R_+$ is a classification loss function. 

\paragraph{Cross-entropy, MA and CDT.} 
Perhaps the most common multiclass classification loss, cross-entropy is given by
\begin{align*}
    \ell^{\rm{ce}} \left(z;y\right) \defeq \log\prn[\Bigg]{\sum_{i\in [\cls]} e^{z_{[i]}-z_{[y]}}}.
\end{align*}
We analyze two modified versions of the cross-entropy loss that aim to improve overparameterized learning over imbalanced data: margin adjustment (MA)~\citep{behnia2023implicit,wang2022importance} and class-dependent temperature (CDT)~\citep{ye2020identifying,kini2021label}, given by
\begin{equation*}
	\ell\ma (z;y) \defeq \log\prn[\Bigg]{\sum_{i\in [\cls]} e^{\frac{z_{[i]}-z_{[y]}}{\delta_y}}}
	\mbox{~~and~~}
	\ell\cdt(z;y) \defeq  \log\prn[\Bigg]{\sum_{i\in [\cls]} e^{\frac{z_{[i]}}{\delta_i}-\frac{z_{[y]}}{\delta_y}}},
\end{equation*}

respectively. Both losses have per-class hyperparameters $\{\delta_i\}_{i\in[\cls]}$ for which increasing $\delta_i$ pushes the model to prefer class $i$. 
The standard approach for selecting these hyperparameters is setting $\delta_{i} = N_{i}^{-\gamma}$ and tuning $\gamma\ge 0$ \citep{lu2022importance, kini2021label, ye2020identifying, behnia2023implicit}, where for $\gamma=0$ we recover cross-entropy.

\paragraph{Equivalent margin problems.} 
For linearly separable training data and for each of the empirical loss functions we consider, gradient descent converges to zero loss, and its implicit bias is given by the solution to a constrained optimization problem. More specifically, consider gradient descent steps of the form
\begin{equation*}
	W_{t+1} = W_t - \eta \grad_W \bigL{W_t,b_t}
	~~\mbox{and}~~
	b_{t+1} = b_t -  \eta' \grad_b \bigL{W_t,b_t},~~\mbox{where}~~\prn*{\frac{\eta}{\eta'}}^2\eqdef \rho.
\end{equation*}
Then, for any $\rho$ and sufficiently small $\eta$, if $\ell = \ell\ma$ we have $\ErrY(W_t, b_t) \overset{t\to\infty}{\longrightarrow} \ErrY(W\ma, b\ma)$ for all $y\in[\cls]$, where
\begin{align*}
	W\ma, b\ma \defeq &  \argmin_{W \in \R^{d\times \cls}, b \in \R^{\cls}} \lfro{W}^2 + \rho \norm{b}^2 
	\\
	& \subjectto~~
		(w_{y_i}-w_k)^\top x_i + \bias{y_i} - \bias{k} \ge \delta_{y_i} ~~\mbox{for all}~~i \le N~\mbox{and}~k\ne y_i.\label{opt:ma-constraint}\numberthis
\end{align*}
When $\ell=\ell\cdt$ we instead have $\ErrY(W_t, b_t) \overset{t\to\infty}{\longrightarrow} \ErrY(W\cdt, b\cdt)$ for all $y\in[\cls]$, where
\begin{align*}
	& W\cdt, b\cdt \defeq   \argmin_{W \in \R^{d\times \cls}, b \in \R^{\cls}} \lfro{W}^2 + \rho \norm{b}^2 
	\\
	& \quad \subjectto~~
	\frac{1}{\delta_{y_i}} \prn*{w_{y_i}^\top x_i + \bias{y_i}} - \frac{1}{\delta_{k}} \prn*{w_{k}^\top x_i + \bias{k}} \ge 1 ~~\mbox{for all}~~i \le N~\mbox{and}~k\ne y_i.\label{opt:cdt-constraint}\numberthis
\end{align*}
Substituting $\delta_i = 1$ for all $i\in[c]$ into either~\cref{opt:ma-constraint} or~\cref{opt:cdt-constraint} yields the standard maximum margin separator, which is the implicit bias of gradient descent for $\ell = \ell^{\rm{ce}}$.
See \Cref{appendix:implicit_bias} for a more formal statement of these claims, and a proof based on the prior work \citep{soudry2018implicit, ji2020gradient, kini2021label}. 

\begin{remark}[Limitations of CDT]\label{remark:cdt}
	The MA constraint~\cref{opt:ma-constraint} guarantees zero training error, i.e., $\argmax_{y\in[c]} \crl*{ w_y^\top x_i + b_{[y]}} = y_i$ for all training points $(x_1,y_1),\ldots,(x_n,y_n)$. By contrast, the CDT constraint~\cref{opt:cdt-constraint} generally does not offer such a guarantee. Consequently, for three or more classes, optimizing the CDT loss does not always fit all the training data. Furthermore, for two classes, the choice of $\delta_1$ and $\delta_2$ has no effect on the solution to~\cref{opt:cdt-constraint}. See more details in \Cref{appendix:inconsistentcy_of_cdt}.
\end{remark}

\paragraph{Logit adjustment.} The LA method finds $W$ and $b$ by minimizing the empirical cross-entropy risk, and adjusts the bias vector by subtracting $\iota\in\R^{\cls}$, i.e., replacing $b$ with $b-\iota$. In the terminology of the original proposal \citep{menon2021long}, this is the ``post-hoc'' correction variant of logit adjustment. \citet{menon2021long} also propose an ``LA loss'' for empirical risk minimization. However, we note that the implicit bias of the LA loss is identical to that of normal cross-entropy; i.e., after long enough gradient descent training, it also converges in direction to the standard max margin separator. See \Cref{appendix:implicit_bias} and \citet[][Theorem 4]{kini2021label}  for a more formal statement of these claims. The standard approach for selecting $\iota$ is setting $\iota_{[i]} = \tau \log N_i$ and tuning $\tau\ge0$.

\subsection{Data model}\label{sec:data_model}
Inspired by \citet{sagawa2020investigation, wald2022malign, kini2021label, wang2021benign, glasgow2023max, carmon2019unlabeled}  we model our data as a high-dimensional Gaussian mixture. Specifically, conditional on the label $y$, the input $x\in\R^d$ has the following Gaussian distribution
\begin{equation}\label{eq:x-given-y-distribution}
	x | y \sim \mathcal{N}(\mu_y; \sigma^2 I_d)
\end{equation}
for some $\mu_1,\ldots,\mu_{\cls}\in\R^d$ and $\sigma>0$, where $I_d$ denotes the unit matrix. 
\paragraph{Scaling with dimension.} To ensure that the problem we study becomes neither trivial nor impossible to learn as the dimension $d$ grows, we must carefully set its parameters as a function of $d$. To that end, we choose
\begin{equation}\label{eq:model-scaling}
	\sigma^2 = \frac{1}{d}~~\mbox{and}~~\norm{\mu_i}^2= \frac{s_i}{\sqrt{d}}~~\mbox{and}~~\inner{\mu_i}{\mu_j}=0~\forall~i\ne j.
\end{equation}
The scaling of $\norm{\mu_i}^2 / \sigma^2 \asymp \sqrt{d}$ is essential for obtaining non-trivial error as $d\to\infty$ with the training set size fixed, while the choice $\sigma^2 d= 1$ is without loss of generality. Assuming orthogonal $\mu_1,\ldots,\mu_{\cls}$ facilitates interpretable analytical expression, though potentially at some loss of generality.

\section{Analytical approximation}\label{sec:approximation}

This section describes the two key steps in deriving our analytical approximation for the error of different learning methods in our data model: replacing the empirical kernel with its expectation and replacing the score statistics (defined below) with their expectation. 

\subsection{Expected kernel approximation}\label{subsec:expected-kernel-approximation}
The MA and CDT problems we consider (\cref{opt:ma-constraint,opt:cdt-constraint} respectively) are quadratic in $W,b$ and therefore their solutions are of the form $w_y = \sum_{i \in [N]} \beta_{y[i]} x_i$, where the optimal $\beta_1, \ldots, \beta_{\cls} \in \R^N$, and $b\in\R^{\cls}$ depend on the data only through the $N\times N$ kernel matrix (see substitution of $w_{y}$ in \cref{opt:ma-constraint} at \cref{opt:substitution_of_wy_in_ma})
\begin{equation}\label{K_matrix}
	K_{[i,j]} \defeq \inner{x_i}{x_j},
\end{equation}
\begin{align*}
	\beta\ma, b\ma \defeq & \argmin_{\beta_{1},\ldots, \beta_{1} \in \R^{N}, b \in \R^{\cls}} \prn*{\sum_{i=1}^{\cls} {\beta_{i}}^\top K \beta_{i} + \rho \bias{i}^2}
	\\
	& 
	\subjectto ~~(\beta_{y_i}-\beta_k)^\top k_{i} + \bias{y_i} - \bias{k} \ge \delta_{y_{i}} ~~\mbox{for all}~~i \le N~\mbox{and}~k\ne y_i. \numberthis \label{opt:substitution_of_wy_in_ma}
\end{align*}
The first step in our approximation is to \emph{replace $K$ by its expectation $\E K_{[i,j]} = \norm{\mu_{y_i}}^2\indic{y_i=y_j}+ \sigma^2 d\indic{i=j}$}. We justify this approximation by observing that, under our dimension scaling, \cref{eq:model-scaling}, the empirical kernel concentrates to its expectation as 
	$\opnorm{ K - \E K} = O\prn*{ \frac{\sqrt{N}}{\sqrt{d}} + \frac{1}{\sqrt{d}}\prn*{1 + Nd^{-1/4} \max_i s_i }\sqrt{\log\frac{1}{\delta}}}$ with probability at least $1-\delta$; see \Cref{appendix:expected_kernel_approximation} for proof.

Let $\{\betatilde{i}, \btilde{[i]}\}$ be the (deterministic) values of $\{\beta_i, b_{[i]}\}$ under the expected kernel approximation. By symmetry of $\E K$, the entries of $\betatilde{i}$ are identical for examples of the same class. That is, for every $j\in N$, we have $\betatilde{i[j]} = \alphatilde{i [y_j]}$ for some $\alphatilde{i}\in\R^{\cls}$. The approximate predictor $\Wtilde{}$ is of the form
\begin{equation}
  \label{eq:w_approx}
  \wtilde{y} = \sum_{i=1}^{\cls} \alphatilde{y[i]} \xbar{i} 
  ~~\mbox{where}~~ \xbar{i} \defeq
  \sum_{j \in S_{i}} x_{j} 
\end{equation}
is the sum of all feature vectors from the $i$'th class. That is, under the expected kernel approximation, all the data points become support vectors, whose weight depends only on their class identity. To find $\{\alphatilde{i}, \btilde{[i]}\}$ in closed form (as we do in the next section) it is convenient to express them in terms of the reduced $\cls \times \cls$ expected kernel:
\begin{align}
  \label{eq:approximated_covariance}
  \bar{K}_{[i, j]} \defeq \E\brk*{\inner*{\bar{x}_{i}}{\bar{x}_{j}}} = N_{i}^2\xi_i \indic{i = j}~~\mbox{where}~~\xi_i \defeq \norm{\mu_{i}}^2 + \frac{\sigma^2 d}{N_{i}}.
 \end{align}

\subsection{Expected score statistics approximation}\label{subsec:expected-score-statistics-approximation}
For $x|y \sim \mathcal{N}(\mu_y; \sigma^2 I_d)$ and a classifier $(W,b)$, the prediction errors are
\begin{equation}\label{eq:gaussian-errrors}
	\ErrY(W, b) = 1-\P\prn*{\mathcal{N}(\nuY{y};\SigmaY{y}) \ge 0}
	~~\mbox{and}~~
	\ErrYtoK(W,b) = Q\prn*{\frac{\nuY{y}_{[k]}}{\sqrt{\SigmaY{y}_{[k,k]}}}},
\end{equation}
where $\ErrY$ and $\ErrYtoK$ are defined in  \cref{eq:err_y,eq:err-y-to-k-def}, respectively, $Q(t)=\P(\mathcal{N}(0,1)>t)$ is the Gaussian error function, and, for every $i,j\in[c]$
\begin{equation}\label{eq:nuY-SigmaY}
	\nuY{y}_{[j]} = \frac{(w_y-w_j)^\top \mu_y + b_{[y]}-b_{[j]}}{\sigma}
	~~\mbox{and}~~
	\SigmaY{y}_{[i,j]} = (w_y-w_i)^\top (w_y-w_j).
\end{equation}
We refer to $\{\nuY{y}, \SigmaY{y}\}$ as the \emph{score statistics}. 
See derivation in \Cref{appendix:proof_of_error_to_q}.

Applying the expected kernel approximation described above, we substitute $(W,b)$ with $(\Wtilde{}, \btilde{})$, where $\wtilde{y} = \sum_{i\in [\cls]} \alphatilde{y[i]}\bar{x}_i$ and $\{\alphatilde{i},\btilde{[i]}\}$ are deterministic. This yields the following approximate score statistics
\begin{equation*}
		\nuYtilde{y} = \frac{\tilde{A}^{(y)} \bar{X}^\top \mu_y + \btilde{[y]}\ones -\btilde{}}{\sigma}
	~~\mbox{and}~~
	\SigmaYtilde{y} = \tilde{A}^{(y)} \bar{X}^{\top} \bar{X} \prn*{\tilde{A}^{(y)}}^\top, 
\end{equation*}
where $\bar{X} = [\bar{x}_1, \ldots, \bar{x}_{\cls}]\in \R^{d\times c}$ and $\tilde{A}^{(y)}\in \R^{\cls \times \cls}$ is a matrix whose $i$'th row is $\alphatilde{y} - \alphatilde{i}$. 

The second and final step in our approximation is to replace the score statistics $\nuYtilde{y}$ and $\SigmaYtilde{y}$ by their expectation with respect to the training data $\bar{X}$, given by
\begin{equation}
    \label{eq:expected-score-statistics}
	\nuYhat{y}\defeq \E\nuYtilde{y} = \frac{N_y \norm{\mu_y}^2 {\tilde{a}^{(y)}}_y + \btilde{[y]}\ones - \btilde{}}{\sigma}
	~~\mbox{and}~~
	\SigmaYhat{y} \defeq \E\SigmaYtilde{y}= \tilde{A}^{(y)}\bar{K} \prn*{\tilde{A}^{(y)}}^\top, 
\end{equation}
with $\bar{K}$ as in \cref{eq:approximated_covariance} and ${\tilde{a}^{(y)}}_y$ denoting the $y$'th column of $\tilde{A}^{(y)}$. Again we justify this approximation using concentration of measure under parameter scaling~\cref{eq:model-scaling}; see \Cref{appendix:approximated_classification_model}.

The expected score statistics $\nuYhat{y}$ and $\SigmaYhat{y}$ are deterministic quantities which we obtain in closed form for MA, CDT and LA (see summary in \Cref{appendix:method_parameter_summary}). By substituting them into the expressions~\cref{eq:gaussian-errrors} we obtain tight analytical approximations for the generalization performance of these methods.

\section{Analysis of class balancing interventions}\label{sec:other-methods}

We now apply our approximation strategy to MA (and MM as a special case), LA, and CDT. For MA and LA we obtain near-optimal parameter values with which these methods successfully mitigate class imbalance. In contrast, our  analysis shows that MM and CDT may fail catastrophically. 

\subsection{Margin adjustment and maximum margin}\label{subsec:margin_adjustment}

We begin by applying the expected kernel approximation (\Cref{subsec:expected-kernel-approximation}) to MA, and finding the resulting coefficients in closed form (see \Cref{appendix:ma_alpha_derivation} for proof).

\begin{restatable}{theorem}{maTheorem}\label{theroem:ma}
	Let $\tilde{W}\ma, \tilde{b}\ma$ be the expected kernel approximation for the MA predictor~\cref{opt:ma-constraint} with any $\delta$ that satisfies $\delta_i > 0$ for all $i \in [\cls]$. Then, for all $y\in[\cls]$ we have $\tilde{w}\ma_y = \sum_{i=1}^\cls \alphatilde{y[i]}\ma \bar{x}_i$, where,
	\begin{equation*}
		\alphatilde{y[i]}\ma=\frac{\delta_{i}}{N_{i}\xi_{i}}\left(\indic{i=y}-\frac{1}{c}\right)+\frac{\sum_{j=1}^{c}\left[\frac{\delta_{j}}{\xi_{j}}-\frac{\delta_{y}}{\xi_{y}}\right]}{cN_{i}\xi_{i}\left(M+\rho\right)} ~~\mbox{and}~~ \btilde{[y]}\ma = \frac{\sum_{j = y} ^{\cls} \brk*{\frac{\delta_{y}}{\xi_{y}} -  \frac{\delta_{j}}{\xi_{j}}}}{\cls \left(M+\rho\right)},
	\end{equation*} 
 using $\xi_i \defeq \norm{\mu_{i}}^2 + \frac{\sigma^2 d}{N_{i}}$ and $M\defeq \sum_{i=1}^{\cls} \frac{1}{\xi_i}$
\end{restatable}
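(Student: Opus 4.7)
The plan is to apply the expected kernel approximation directly to~\cref{opt:substitution_of_wy_in_ma} and solve the resulting convex quadratic program in closed form via its KKT conditions. First, I would use the block structure $\E K_{[i,j]} = \norm{\mu_{y_i}}^2\indic{y_i=y_j} + \sigma^2 d\indic{i=j}$ (which follows from the orthogonality of the $\mu_i$'s) to reduce the problem by symmetry: any minimizer can be replaced by its class-wise average without changing the objective or feasibility, so we may assume $\beta_{y[j]}=\alphatilde{y[y_j]}=: A_{y,y_j}$ for some matrix $A\in\R^{\cls\times \cls}$. Substituting this ansatz into the objective and using $\bar{K}_{[y',y'']} = N_{y'}^2 \xi_{y'}\indic{y'=y''}$ yields the reduced QP
\begin{align*}
	\min_{A,\,\btilde{}} \quad & \sum_{y,k\in[\cls]} N_k^2 \xi_k A_{y,k}^2 + \rho\sum_{y}\btilde{[y]}^2 \\
	\subjectto \quad & N_y \xi_y(A_{y,y} - A_{k,y}) + \btilde{[y]} - \btilde{[k]} \ge \delta_y \quad \forall y\in[\cls],\,k\ne y.
\end{align*}

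Next, I would form the Lagrangian with multipliers $\lambda_{y,k}\ge 0$ and write stationarity. Differentiating in $A_{y,y}$ yields $2 N_y A_{y,y} = \sum_{k\ne y}\lambda_{y,k}$, while differentiating in $A_{y,k}$ for $k\ne y$ yields $\lambda_{y,k} = -2 N_y A_{k,y}$. Combining these I obtain the key structural identity $\sum_k A_{k,y} = 0$ (each column of $A$ sums to zero). Differentiating in the biases gives the row-weighted identity $\rho\btilde{[y]} = \sum_k N_k A_{y,k}$.

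Assuming every constraint is active at the optimum (to be verified a posteriori via $\lambda_{y,k}\ge 0$), I would rearrange each equality to express $A_{k,y}$ in terms of $A_{y,y}$ and the biases. Summing this over $k\in[\cls]$ and invoking the column-sum-zero identity determines $A_{y,y}$ in terms of $\btilde{[y]}$ and $B\defeq\sum_k\btilde{[k]}$; back-substitution then yields $A_{y,i}$ for $i\ne y$. Plugging these into the row-weighted identity produces a scalar linear equation for each bias that simplifies to $(M+\rho)\btilde{[y]} = \delta_y/\xi_y - (1/\cls)\sum_j \delta_j/\xi_j$, which is exactly the claimed formula for $\btilde{[y]}\ma$ (summing over $y$ yields $B=0$, consistent with the derivation). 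Substituting this bias back into the $A_{y,i}$ expressions recovers the stated formula for $\alphatilde{y[i]}\ma$.

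The main obstacle here is organizational rather than conceptual: the derivation chains several layers of substitution across multiple indices, so careful bookkeeping is needed to avoid sign mistakes. A secondary concern is verifying dual feasibility $\lambda_{y,k}=-2N_y A_{k,y}\ge 0$ to justify the all-active-constraints assumption; given the explicit formula for $A_{k,y}$ and the positivity of $\delta_i,\xi_i$, this should amount to a direct sign check, possibly augmented by the mild remark that non-positive $\lambda_{y,k}$ would correspond to inactive constraints that do not affect the closed-form minimizer.
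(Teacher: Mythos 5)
Your route is essentially the one the paper takes in \Cref{appendix:ma_alpha_derivation}: reduce by symmetry to the $\cls\times\cls$ parameterization, assume every margin constraint holds with equality, and solve by first-order optimality. The difference is only in the bookkeeping. The paper eliminates the off-diagonal coefficients by substituting the tight constraints into the objective and then setting $\partial F/\partial \alpha_{y[y]}=0$ and $\partial F/\partial \bias{y}=0$, proving $\sum_y \btilde{[y]}\ma=0$ separately by a shift argument (\Cref{lem:bias_sum_is_zero}). You instead read off two structural identities from Lagrangian stationarity --- each column of $A$ sums to zero, and $\rho\btilde{[y]}=\sum_k N_k A_{y,k}$ --- and combine them with the tight constraints. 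Both identities are correct (one can check the paper's closed-form answer satisfies them), your intermediate expression for $A_{y,y}$ matches the paper's \cref{eq:margin_adjustment_with_bias_alpha_yy}, and $\sum_y\btilde{[y]}\ma=0$ falls out at the end rather than being established up front. Either organization reaches the stated formulas.

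The one substantive issue is the dual-feasibility check you defer. Your multiplier is $\lambda_{y,k}=-2N_y\alpha\ma_{k[y]}$, which by \cref{eq:margin_adjustment_with_bias_alpha_ky} is nonnegative iff $\delta_y\ge -\cls\,\btilde{[k]}\ma$, and this \emph{can} fail for legitimate $\delta>0$: take $\cls=3$, $\xi_1=\xi_2=\xi_3$, $\rho=0$, $\delta=(\eps,\eps,1)$ with $\eps<1/4$; then the multiplier for the constraint of class $2$ against class $1$ is negative. So the promised ``direct sign check'' does not close the argument for arbitrary positive $\delta$, and your fallback remark is backwards: since the QP is strictly convex, a negative multiplier means the corresponding constraint should \emph{not} have been taken as an equality, and the true minimizer (with that constraint inactive) differs from the closed form. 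To be clear, the paper's own proof makes the identical all-constraints-active assumption without comment (``assume equality holds''), so your proposal is no less rigorous than the published argument --- it is merely more explicit about where the unverified step sits. A fully airtight version of the theorem would need either a restriction on $\delta$ under which all multipliers are nonnegative, or a treatment of the partially active case.
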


Substituting $\crl{\alphatilde{i}\ma}_{i=1}^{\cls}, \crl{\btilde{[i]}\ma}_{i=1}^{\cls}$ into \cref{eq:expected-score-statistics}, we complete our approximation for the error of MA. We defer the full expressions to~\Cref{appendix:ma_err_derivation} and here we highlight key consequences, focusing on the limit $d\to\infty$ with the scaling~\cref{eq:model-scaling}. To that end, we define for every $\rho$ and $\delta$ (with the latter possibly depending on $d$)
\begin{align}
  \label{eq:ma_wcelb}
  \WCELB{\MA}\prn*{\delta, \rho} \defeq \max_{k \ne y} \lim_{d \rightarrow \infty} \ErrYtoK \prn*{\Wtilde{}\ma \prn*{\delta, \rho}, \btilde{}\ma \prn*{\delta, \rho}}.
\end{align}
By \cref{eq:worst_class_error_bounds} we have that $\frac{\lim_{d\to\infty}\WCE{\Wtilde{}\ma, \btilde{}\ma}}{\WCELB{\MA}\prn*{\delta, \rho}}$ is between $1$ and $\cls$.
\paragraph{Performance of maximum margin (MM).}
In the special case $\delta_i = 1$ for all $i\in[\cls]$, we obtain a simple approximation for the worse-class error of the implicit bias of gradient descent on the standard cross-entropy loss, i.e., the MM separator:\footnote{In \cref{eq:mm_wce_lower_bound}, the case $\rho<\infty$  assumes that $N_j \ne N_k$ for some $j,k\in[\cls]$.}
\begin{align}
  \label{eq:mm_wce_lower_bound}
  \WCELB{\MM}\prn*{\rho} \defeq \WCELB{\MA}\prn*{\ones,\rho}  = \begin{cases}
    \max_{k \ne y} Q \left( \frac{s_{y}\sqrt{N_{y}}}{\sqrt{1 + \frac{N_{k}}{N_{y}}}} \right) & \rho = \infty,~\mbox{i.e., $b$ fixed to $0$}\\
     1 & \rho < \infty
  \end{cases}
\end{align}

\cref{eq:mm_wce_lower_bound} highlights a significant limitation of cross-entropy under class imbalance. Specifically, for any set of signal strengths $\{s_i\}$ we can bring the worst-class error arbitrarily close to $1/2$ simply by increasing the sample size of some class $k$ so that $\frac{N_k}{N_y} \gg s_y^2 N_y$ for some class $y$. Moreover, and perhaps surprisingly, learning a bias parameter (i.e., setting $\rho<\infty$) using cross-entropy results in catastrophic failure (worst class error of $1$) in the limit $d\to \infty$; we validate this observation empirically (and explore behavior for finite $d$) in \Cref{appendix:experiments:performance_of_max_margin}.
\paragraph{The failure mechanism of MM.} To gain further insight for why MM can generalize poorly in our overparameterized linear model, consider the expected kernel approximation coefficients $\alphatilde{}\mm$ in the limit $d\to\infty$ and without bias learning ($\rho=\infty$). \Cref{theroem:ma} shows that in this case we have $\lim_{d\to\infty}\alphatilde{y[k]}\mm = \indic{y=k} - \frac{1}{c}$. Therefore, the classifier for class $y$ tends to $\wtilde{y}\mm = \bar{x}_y -  \frac{1}{c}\sum_{k\in[\cls]}\bar{x}_k$, where $\bar{x}_i  = \sum_{j\in S_i} x_j$ is the sum of feature vectors with label $i$. 
When $y$ is a minority class, i.e., $N_y \ll N_k$ for some class $k$, we have $\norm{\bar{x}_y} \ll \norm{\bar{x}_k}$ and the classifier $\wtilde{y}\mm$ is dominated by training examples from majority classes, impairing its ability to generalize to new data. In contrast, with MA parameters $\{\delta_i\}$, we obtain (for $\rho=\infty$ and $d\to\infty$) that $\wtilde{y}\ma = \delta_y\bar{x}_y -  \frac{1}{c}\sum_{k\in[\cls]} \delta_k\bar{x}_k$. Thus, increasing $\delta$ for minority classes lets us compensate for class imbalance.
\paragraph{Near-optimal $\delta$ values.}
In \Cref{appendix:ma_err_derivation} we derive a set of margin adjustment parameters that minimize $\WCELB{\MA}\prn*{\delta, \rho}$:  
\begin{equation}
  \label{eq:ma_opt_margins}
  \delta_{i}^\star \defeq \frac{\xi_{i}}{\norm{\mu_{i}}^2 + 2 \prn*{\sum_{j=1}^{\cls} \frac{1}{\xi_{j}} + \rho}^{-1}}
  \in \argmin_{\delta} \WCELB{\MA}\prn*{\delta, \rho},
\end{equation}

where we recall that $\xi_i =  \norm{\mu_{i}}^2 + \frac{\sigma^2 d}{N_{i}}$ and note that the minimization above is over $\delta$ as a function of $d$ given the scaling~\cref{eq:model-scaling}. In the limit $d\to\infty$, the value of $\delta^\star_i$ tends to $\frac{1}{N_i}$ for $\rho<\infty$, and becomes proportional to $\frac{1}{s_i N_i}$ at $\rho=\infty$ (Multiplication of $\delta$ by a scalar has no effect on the resulting model).

The optimal value of  $\WCELB{\MA}$ is
\begin{align}\label{eq:ma_wce_lower_bound}
  \begin{cases}
    \max_{k \ne y} Q \left(\frac{1}{\sqrt{\prn*{s_{y}^2 N_{y}}^{-1} + \prn*{s_{k}^2 N_{k}}^{-1}}} \right) & \rho = \infty \\
    \max_{k \ne y} Q \prn*{ \frac{s_{y} + s_{k}}{2\sqrt{N_{y}^{-1} + N_{k}^{-1}}}} & \rho < \infty.
  \end{cases}
\end{align}

We observe a discontinuity in $\rho$ that is due to taking $d\to\infty$; in \Cref{appendix:experiments:effect_of_rho} we illustrate how in finite $d$ the transition between the two error values occurs for $\rho$ around $\sqrt{d}$. However, in contrast to MM, learning a bias with carefully-tuned MA is sometimes helpful (see proof in  \Cref{appendix:ma_comparison}).

\begin{restatable}{prop}{maCompProp} \label{prop:ma_comparison}
	Let $N_1 \le N_2 \le \cdots \le N_{\cls}$ and let $\rho< \infty$. If $s_1 \le s_2 \le \cdots \le s_{\cls}$ then  $\WCELB{\MA}\prn*{\delta^\star, \rho} \le  \WCELB{\MA}\prn*{\delta^\star, \infty}$, with equality when $s_1=s_2=\cdots=s_{\cls}$. 
	When $s_1 > s_2 > \cdots > s_{\cls}$ then either $\WCELB{\MA}\prn*{\delta^\star, \rho} >  \WCELB{\MA}\prn*{\delta^\star, \infty}$ or $\WCELB{\MA}\prn*{\delta^\star, \rho} <  \WCELB{\MA}\prn*{\delta^\star, \infty}$ is possible.
\end{restatable}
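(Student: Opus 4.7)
The plan is to work with the explicit closed-form for $\WCELB{\MA}(\delta^\star,\rho)$ given in \cref{eq:ma_wce_lower_bound}, introducing the shorthand $A_{y,k} := [(s_y^2 N_y)^{-1} + (s_k^2 N_k)^{-1}]^{-1/2}$ and $B_{y,k} := (s_y+s_k)/(2\sqrt{N_y^{-1}+N_k^{-1}})$, so that $\WCELB{\MA}(\delta^\star,\infty) = Q(\min_{y\ne k} A_{y,k})$ and $\WCELB{\MA}(\delta^\star,\rho) = Q(\min_{y\ne k} B_{y,k})$ for $\rho<\infty$. Since $Q$ is strictly decreasing, the proposition reduces to comparing the two minima.

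For the first claim I first observe that under $s_1\le\cdots\le s_c$ and $N_1\le\cdots\le N_c$ the minima of both $A_{y,k}$ and $B_{y,k}$ over $y\ne k$ are attained at $\{y,k\}=\{1,2\}$: writing $A_{y,k}^2 = t_y t_k/(t_y+t_k)$ with $t_i:=s_i^2 N_i$, this follows from monotonicity in each $t_i$ and the fact that $t_1\le\cdots\le t_c$; for $B_{y,k}^2 = (s_y+s_k)^2 N_y N_k/(4(N_y+N_k))$, both factors are non-decreasing in $(y,k)$ under the given orderings, so the minimum is again at $\{1,2\}$. It then suffices to show $A_{1,2}^2 \le B_{1,2}^2$, which after clearing positive denominators is equivalent to
\[
4 s_1^2 s_2^2 (N_1+N_2) \;\le\; (s_1+s_2)^2 (s_1^2 N_1 + s_2^2 N_2).
\]

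The core technical step is the factorization
\[
(s_1+s_2)^2 (s_1^2 N_1 + s_2^2 N_2) - 4 s_1^2 s_2^2 (N_1+N_2) = (s_2-s_1)\bigl[s_2^2(3s_1+s_2)N_2 - s_1^2(s_1+3s_2)N_1\bigr],
\]
verified by direct expansion---this is the only algebraic obstacle I anticipate. Under $s_1\le s_2$ the first bracket is non-negative, and the auxiliary identity $s_2^2(3s_1+s_2)-s_1^2(s_1+3s_2) = (s_2-s_1)(s_1^2+4 s_1 s_2+s_2^2)$ together with $N_2\ge N_1 > 0$ shows that the second bracket is also non-negative. Strict inequality holds whenever $s_1<s_2$, and when $s_1=\cdots=s_c=s$ one checks directly that $A_{y,k}=s/\sqrt{N_y^{-1}+N_k^{-1}}=B_{y,k}$ for every pair, yielding the equality case.

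For the second claim I exhibit two-class ($c=2$) counter-examples, where the only pair is $(1,2)$ and the same factorization controls the sign of $B_{1,2}^2-A_{1,2}^2$. The factor $s_2-s_1$ is now strictly negative, while the second bracket $s_2^2(3s_1+s_2)N_2 - s_1^2(s_1+3s_2)N_1$ can be made positive by taking $N_2/N_1$ large relative to $s_1/s_2$ (e.g., $(s_1,s_2,N_1,N_2)=(2,1,1,100)$ yields second bracket $=680>0$) or negative by taking $s_1/s_2$ large relative to $N_2/N_1$ (e.g., $(100,0.01,1,2)$ yields second bracket $\approx -10^{6}<0$). These two regimes produce instances with $\WCELB{\MA}(\delta^\star,\rho)>\WCELB{\MA}(\delta^\star,\infty)$ and $\WCELB{\MA}(\delta^\star,\rho)<\WCELB{\MA}(\delta^\star,\infty)$ respectively, completing the proof.
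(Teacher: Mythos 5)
Your proof is correct, and it follows the same overall reduction as the paper—both arguments start from the closed forms in \cref{eq:ma_wce_lower_bound} and boil the comparison down to the single two-class inequality $4 s_y^2 s_k^2 (N_y+N_k) \le (s_y+s_k)^2(s_y^2 N_y + s_k^2 N_k)$—but the execution differs in three respects. First, the paper proves this inequality for \emph{every} pair $(y,k)$, which bounds the maxima directly, whereas you first localize both maxima to the pair $\{1,2\}$ via monotonicity of $t_yt_k/(t_y+t_k)$; your localization is correct but strictly unnecessary, since the pairwise comparison already suffices. Second, to verify the key inequality the paper substitutes $q=N_y/N_k$, $r=s_k/s_y$ and shows $f(r)=(1+r)^2+(1/r+1)^2q-4(1+q)>0$ for $r>1$ from $f(1)=0$ and $f'(r)=2(1+r)(1-q/r^3)>0$; you instead verify the (after normalization, identical) statement by the explicit factorization $(s_2-s_1)\bigl[s_2^2(3s_1+s_2)N_2-s_1^2(s_1+3s_2)N_1\bigr]$, which I checked and which is correct. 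The calculus route is shorter; your factorization makes the equality case and the sign analysis more transparent. Third, and most substantively, for the reversed-signals claim the paper only remarks that ``the sign of $f'(r)$ depends on $q$'' for $r<1$ and concludes both orders are possible—an argument that is somewhat loose, since the sign of a derivative does not by itself determine the sign of $f$. Your explicit two-class instances $(s_1,s_2,N_1,N_2)=(2,1,1,100)$ and $(100,0.01,1,2)$, whose second brackets have opposite signs while $s_2-s_1<0$ in both, settle this point concretely and constitute a more complete justification than the paper's.
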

\subsection{Logit adjustment}\label{subsec:logit_adjustment}
Recall that LA consists of performing standard cross-entropy training, followed by shifting the bias vector by $-\iota$ for some $\iota\in \R^{\cls}$. We analyze LA in our approximation framework by taking the expected score statistics~\cref{eq:expected-score-statistics} for the MM expected kernel approximation $\Wtilde{}\mm, \btilde{}\mm$ and replacing $\btilde{}\mm$ with $\btilde{}\mm - \iota$. We defer the full expression for the resulting approximation to \Cref{appendix:logit_adjustment_error} and, as in the previous section, focus on approximating the worst-class error in the limit $d\to\infty$ with the scaling~\cref{eq:model-scaling}. To that end, we define
\begin{align}
  \label{eq:la_wcelb}
  \WCELB{\LA}\prn*{\iota, \rho} \defeq \max_{k \ne y} \lim_{d \rightarrow \infty} \ErrYtoK \prn*{\Wtilde{}\mm \prn*{\rho}, \btilde{}\mm \prn*{\rho}-\iota}.
\end{align}
\paragraph{Near-optimal $\iota$ for equal signals.}
Let $\xi_i = \norm{\mu_{i}}^2 + \frac{\sigma^2 d}{N_{i}}$, let $M = \sum_{i\in[\cls]} \frac{1}{\xi_i}$, and let $M^{\setminus y} = M - \frac{1}{\xi_y}$. We show that taking
\vspace*{-10pt}
\begin{align}
	\label{eq:equal_strength_signals_optimal_iota}
	\iota_{y}^{*} = \frac{2 + \norm{\mu_{y}}^2 \brk*{M^{\setminus y} + \rho} }{2\xi_{y}\brk*{M + \rho}},
\end{align}
minimizes $\WCELB{\LA}\prn*{\iota, \rho}$ over $\iota$ for instances where $s_1=s_2=\cdots=s_{\cls}$ (we use $*$ instead of $\star$ as a superscript to denote this condition).
\begin{remark}
	Prior work~\citep{menon2021long,kini2021label} recommend setting $\iota_i \propto \log N_i$. In contrast, for $\rho<\infty$  (i.e., a bias is learned) and at the limit $d\to\infty$, we have that $\iota^{*} \propto N_i$. We empirically compare these different recommendations in the next section.
\end{remark}

Substituting back the $\iota_{y}^{*}$ above into our error approximation yields (for general signal strengths),
\begin{align}
	\label{eq:la_wcelb}
	\WCELB{\LA}\prn*{\iota^{*}, \rho} = 
	\max_{k \ne y} Q \prn*{ \frac{
			s_{y}N_y T_{(k\setminus y)} +s_{k}N_k T_{(y\setminus k)}  - \abs*{s_y - s_k} N_y N_k }{{2\sqrt{\prn*{N_{k} - N_{y}}^2 \Nbar{y,k} + N_y T_{(k\setminus y)}^2 + N_k T_{(y\setminus k)}^2}
	}}}
\end{align}
where $\Nbar{y,k} = \sum_{i \ne y,k}^{\cls} N_{i}$ and $T_{(i\setminus j)} = 2 N_i + \Nbar{i,j} + \rho$. 

For equal-strength signals, LA dominates MA (see proof in  \Cref{appendix:la_vs_ma}).
\begin{restatable}{prop}{propVsMaLem}
  \label{prop:la_vs_ma}
  If $s_1=s_2=\cdots=s_{\cls}$ then $\WCELB{\LA}\prn*{\iota^{*}, \rho}$ is monotonic decreasing in $\rho$ and, for all values of $\rho$, we have  $\WCELB{\LA}\prn*{\iota^{*}, \rho} \le \WCELB{\MA}\prn*{\delta^{\star}, \rho}$.
\end{restatable}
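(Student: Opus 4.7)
The plan is to compare, for each pair of classes $(y,k)$, the argument of $Q$ inside $\WCELB{\LA}\prn*{\iota^{*}, \rho}$ with the argument inside $\WCELB{\MA}\prn*{\delta^{\star}, \rho}$, under the assumption $s_1 = \cdots = s_{\cls} = s$. Because $Q$ is strictly decreasing and both worst-class errors are maxima of $Q$-values over pairs, both monotonicity in $\rho$ and the dominance of LA over MA reduce to per-pair statements about these $Q$-arguments.

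First I would substitute $s_y = s_k = s$ into the given formulas and collect terms. Introducing the shorthands $A \defeq 4 N_y N_k$, $B \defeq N_y + N_k$, $\bar{N} \defeq \Nbar{y,k}$, $C \defeq A + B \bar{N}$, and noting $T_{(k\setminus y)} = 2 N_k + \bar{N} + \rho$ and $T_{(y \setminus k)} = 2 N_y + \bar{N} + \rho$, a direct expansion would show that the squared $Q$-argument for LA equals $\tfrac{s^2}{4} \cdot \tfrac{f(\rho)}{g(\rho)}$ with
\begin{equation*}
f(\rho) \defeq (C + B\rho)^2, \qquad g(\rho) \defeq C(B + \bar{N} + 2\rho) + B \rho^2,
\end{equation*}
while the squared $Q$-argument for MA equals $\tfrac{s^2}{4} \cdot \tfrac{A}{B}$. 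This rewriting is the main computational step; once performed, both claims follow from short polynomial identities that exploit the identity $B^2 - A = (N_k - N_y)^2$.

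For monotonicity I would differentiate $f/g$, cancel a factor of $(C + B\rho)$, and show that the remaining bracket reduces to $C(B^2 - A) \ge 0$, i.e.\ $\prn*{f/g}'(\rho) \propto (C + B\rho) \cdot C \cdot (N_k - N_y)^2 \ge 0$. Hence the LA $Q$-argument is non-decreasing in $\rho$ per pair, so $\WCELB{\LA}\prn*{\iota^{*}, \rho}$ is non-increasing in $\rho$ (strictly so unless the maximizing pair has $N_y = N_k$).

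For the dominance claim I expect the key identity
\begin{equation*}
B \cdot f(\rho) - A \cdot g(\rho) = (B^2 - A) \prn*{C \bar{N} + 2 C \rho + B \rho^2},
\end{equation*}
to be verified by direct expansion. Since $B^2 - A = (N_k - N_y)^2 \ge 0$ and the second factor is manifestly non-negative, we obtain $B f(\rho) \ge A g(\rho)$ for every $\rho \ge 0$, so LA's squared $Q$-argument is at least MA's for every pair and every $\rho \ge 0$. Applying $Q$ and taking the maximum over pairs then gives $\WCELB{\LA}\prn*{\iota^{*}, \rho} \le \WCELB{\MA}\prn*{\delta^{\star}, \rho}$. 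The main obstacle is simply keeping the algebra honest long enough to see the $(B^2 - A)$ factoring in both the derivative and the difference identity; no inequality beyond $(N_y - N_k)^2 \ge 0$ is used.
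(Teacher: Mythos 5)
Your proposal is correct, and your reduction of both claims to per-pair comparisons of the $Q$-arguments is exactly the paper's strategy; I verified your expressions for $f$ and $g$ and both of your key identities ($f'g-fg' = 2(C+B\rho)\,C\,(N_k-N_y)^2$ and $Bf-Ag=(B^2-A)(C\Nbar{y,k}+2C\rho+B\rho^2)$) against a direct expansion, and they hold. For the monotonicity claim your computation is essentially identical to the paper's (the paper differentiates the unsquared ratio $g_1/h_1$ rather than $f/g$, and its derivative numerator likewise factors as $C(B^2-A)$; note this vanishes when $N_y=N_k$, so "monotonic decreasing" should be read weakly, as you correctly flag). For the dominance claim your route is genuinely different and cleaner: the paper first uses monotonicity in $\rho$ to reduce to $\rho=0$, then proves a second monotonicity statement in $\Nbar{y,k}$ to reduce to $\Nbar{y,k}=0$, where the LA bound degenerates to the MA bound $2/\sqrt{N_y^{-1}+N_k^{-1}}$; your single factorization $Bf-Ag=(B^2-A)(C\Nbar{y,k}+2C\rho+B\rho^2)\ge 0$ subsumes both reductions in one identity and makes transparent that the entire gap between LA and MA is controlled by $(N_k-N_y)^2$. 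What the paper's two-step argument buys is that each step is a one-variable calculus computation requiring no foresight about the factorization; what yours buys is brevity and an explicit formula for the gap. Both hinge on nothing beyond $(N_y-N_k)^2\ge 0$, so there is no gap in your argument.
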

\paragraph*{Discussion}
Although \Cref{prop:la_vs_ma} is limited to the case of equal strength due to technical reasons, we remain interested in understanding whether the LA predictor (using near-optimal parameters for equal strength signals) is superior to the MA predictor (using near-optimal parameters), when the signal strengths are reversed or aligned.
We investigate this question in depth in \Cref{appendix:la_vs_ma_comparison}. Our findings in \Cref{appendix:fig:la_vs_ma_comparison} show that for aligned signals, it appears that LA (tuned for equal signals) is still uniformly better than MA without bias, but on rare occasions MA with bias outperforms it. For reversed signals, LA is mostly worse than MA with bias, but there are rare exceptions to this. These findings strongly suggest that LA is uniformly better than MA without bias for aligned signals, in line with our additional experiments in \Cref{sec:experiments}. Moreover, we see that the converse is not true. For reversed signals MA is not always superior to LA, even when the latter is sub-optimally tuned. %
\subsection{Class dependent temperature}\label{subsec:cdt}

Finally, we consider CDT, where for simplicity we focus on the bias-free setting ($\rho=\infty$). First, we apply the expected kernel approximation and find the CDT coefficients in closed form (See \Cref{appendix:cdt_alpha_derivation} for proof).
\begin{restatable}{theorem}{cdtTheorem}\label{theorem:cdtTheorem}
	Let $\tilde{W}\cdt$ be the expected kernel approximation for the CDT predictor~\cref{opt:cdt-constraint} with any $\delta$ that satisfies $\delta_i > 0$ for all $i \in [\cls]$, using $\rho=\infty$. Then, for all $y\in[\cls]$ we have $\tilde{w}\cdt_y = \sum_{i=1}^\cls \alphatilde{y[i]}\cdt \bar{x}_i$, where (for $\xi_i = \norm{\mu_{i}}^2 + \frac{\sigma^2 d}{N_{i}}$),
	\begin{equation*}
		\alphatilde{y[i]}\cdt = \frac{\delta_{i}}{ N_{i}\xi_{i}}\left( \indic{i=y} - \frac{\delta_{y}\delta_{i}}{\sum_{j=1}^{\cls} \delta_{j}^2}\right).
	\end{equation*} 
\end{restatable}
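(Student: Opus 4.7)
The plan is to mirror the structure of the proof of Theorem~\ref{theroem:ma} for MA: first reduce the CDT problem~\cref{opt:cdt-constraint} to a finite quadratic program over the $c^2$ variables $\{\tilde\alpha_{y[i]}\}_{y,i\in[c]}$ via the expected kernel approximation of \Cref{subsec:expected-kernel-approximation}, and then solve that program in closed form via the KKT conditions.

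For the reduction, I would use that $\mathbb{E}K$ is invariant under permutations of training examples within the same class, so the optimal coefficients in the dual representation analogous to \cref{opt:substitution_of_wy_in_ma} take the symmetric form $\tilde\beta_{y[j]} = \tilde\alpha_{y[y_j]}$, giving the ansatz $\tilde w_y = \sum_{i=1}^c \tilde\alpha_{y[i]} \bar x_i$ already used in \cref{eq:w_approx}. With $\rho=\infty$ the biases are frozen to zero. Using $\bar K_{[i,j]} = N_i^2 \xi_i \indic{i=j}$ from \cref{eq:approximated_covariance} together with the analogous calculation $\mathbb{E}[\bar x_i^\top x_{i'}] = N_m \xi_m \indic{i=m}$ for a training example $x_{i'}$ with $y_{i'}=m$, the objective $\lfro{W}^2$ collapses to $\sum_y \sum_i N_i^2 \xi_i \, \tilde\alpha_{y[i]}^2$ and all CDT constraints from a given $(m,k)$ pair collapse to the single inequality $\tilde\alpha_{m[m]}/\delta_m - \tilde\alpha_{k[m]}/\delta_k \ge 1/(N_m \xi_m)$.

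Next I would attach multipliers $\lambda_{m,k}\ge 0$ to the $c(c-1)$ reduced constraints and differentiate the Lagrangian. The stationarity conditions split into a diagonal case ($i=y$) and an off-diagonal case ($i\ne y$):
\[
\tilde\alpha_{y[y]} \;=\; \frac{1}{2 N_y^2 \xi_y \delta_y}\sum_{k\ne y} \lambda_{y,k}, \qquad \tilde\alpha_{y[i]} \;=\; -\frac{\lambda_{i,y}}{2 N_i^2 \xi_i \delta_y} \quad (i\ne y).
\]
Substituting these back into the constraint for $(m,k)$---enforced as equality, to be justified below---and writing $S_m \defeq \sum_{j\ne m}\lambda_{m,j}$ and $D \defeq \sum_j \delta_j^2$, I obtain the linear system $S_m/\delta_m^2 + \lambda_{m,k}/\delta_k^2 = 2N_m$. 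Summing over $k\ne m$ solves $S_m = 2 N_m \delta_m^2 (1 - \delta_m^2/D)$, and back-substitution gives $\lambda_{m,k} = 2 N_m \delta_m^2 \delta_k^2 / D$. Re-inserting this $\lambda$ into the formulas for $\tilde\alpha_{y[y]}$ and $\tilde\alpha_{y[i]}$ recovers, after minor simplification, the unified expression stated in the theorem.

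The one delicate step---and the main obstacle---is justifying the equality assumption used when substituting back. I would handle this by observing that the derived duals satisfy $\lambda_{m,k} = 2 N_m \delta_m^2 \delta_k^2 / D > 0$ (using $N_m,\delta_j>0$), so the candidate $(\tilde\alpha,\lambda)$ automatically meets dual feasibility and complementary slackness, while primal feasibility follows from a direct check that the proposed $\tilde\alpha$ saturates every constraint. Combined with strict convexity of the reduced quadratic program, these KKT conditions identify the unique global minimizer and conclude the proof.
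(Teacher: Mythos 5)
Your proposal is correct and follows essentially the same route as the paper: both reduce the CDT problem to the same finite quadratic program over the $c\times c$ coefficients via the expected-kernel symmetry argument, both guess that every constraint $\tilde\alpha_{m[m]}/\delta_m - \tilde\alpha_{k[m]}/\delta_k \ge 1/(N_m\xi_m)$ is active, and both solve the resulting linear system; your closed forms for the multipliers ($\lambda_{m,k}=2N_m\delta_m^2\delta_k^2/\sum_j\delta_j^2$) and the resulting $\tilde\alpha$ check out and reproduce the stated expression. The one substantive difference is how the active-set guess is certified: the paper eliminates the off-diagonal variables using the equality constraints and then verifies stationarity of the reduced objective, appealing to convexity, whereas you keep all variables, introduce explicit Lagrange multipliers, and verify dual feasibility ($\lambda_{m,k}>0$) plus complementary slackness. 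Your version is the more airtight of the two on exactly the point you flag as delicate --- minimizing over the equality subspace does not by itself certify optimality over the larger inequality-feasible set, and the positivity of the multipliers is what closes that gap --- so this is a modest strengthening of the paper's argument rather than a different proof.
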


Comparing \Cref{theorem:cdtTheorem} to \Cref{theroem:ma} (with $\rho=0$), we see that the only difference between $\alphatilde{y}\ma$ and $\alphatilde{y}\cdt$ is a factor of $\frac{1}{c} \ones$ that turns into $\frac{\delta_y}{\sum_{j=1}^{\cls} \delta_{j}^2} \mathbf{\delta}$. Unfortunately, following through our approximation procedure reveals that CDT, with any $\delta_i > 0$, may fail to mitigate class imbalance, which we formalize in the following (see proof in \Cref{appendix:limitaiton_of_cdt_multiclass}).
\begin{restatable}{prop}{cdtLem}{\label{prop:failureOfCDT}}
  For any $\epsilon>0$ and $c \geq 3$ classes, there exists an instance $\{s_i, N_i\}_{i=1}^{\cls}$ for which 
\begin{align*}
  \lim_{d\to \infty} \WCE{\Wtilde{}\cdt, 0} \ge \frac{1}{2} - \epsilon
   ~~\mbox{while}~~
   \lim_{d\to \infty} \WCE{\Wtilde{}\ma(\delta^\star, \infty), 0} \le \epsilon.
\end{align*}
\end{restatable}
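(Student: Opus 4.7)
The plan is to exhibit a three-class instance on which MA achieves arbitrarily small worst-class error while CDT, for every $\delta > 0$, achieves worst-class error arbitrarily close to $1/2$. A natural candidate is $c = 3$, $s_1 = s_2 = s_3 = s$, $N_1 = 1$, $N_2 = N_3 = N$, with $s$ chosen first (depending only on $\epsilon$) and then $N$ chosen large enough in terms of $s$ and $\epsilon$.

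For MA, I would substitute these parameters into the $\rho = \infty$ case of \cref{eq:ma_wce_lower_bound}: the smallest pair-separation involves class~1 and equals $s/\sqrt{1 + 1/N} \ge s/\sqrt{2}$. Hence $\WCELB{\MA}(\delta^\star, \infty) \le Q(s/\sqrt{2})$, and \cref{eq:worst_class_error_bounds} gives $\lim_{d\to\infty} \WCE{\Wtilde{}\ma(\delta^\star,\infty),0} \le 3\,Q(s/\sqrt{2})$, which is at most $\epsilon$ once $s \ge \sqrt{2}\,Q^{-1}(\epsilon/3)$.

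For CDT, the plan is to show that for every $\delta > 0$, $\lim_{d\to\infty}\WCE{\Wtilde{}\cdt,0} \ge 1/2 - \epsilon$ on the same instance. Starting from \Cref{theorem:cdtTheorem} and substituting into \cref{eq:expected-score-statistics}, I take $d \to \infty$ (using $\xi_i \to 1/N_i$) to obtain closed-form deterministic expressions for the standardized scores $r_{y\to k}(\delta) := \nuYhat{y}_{[k]}/\sqrt{\SigmaYhat{y}_{[k,k]}}$. Using the scale-invariance of CDT in $\delta$ to normalize $\sum_j \delta_j^2 = 1$, the key structural observation is that $r_{1\to k}$ can be large only when $q_k := \delta_k(\delta_k - \delta_1)$ is close to $1$, since $\SigmaYhat{1}_{[k,k]}$ contains the summand $N\,\delta_k^2(1 - q_k)^2$. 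The algebraic identity $q_2 + q_3 = 1 - \delta_1(\delta_1 + \delta_2 + \delta_3)$, which follows immediately from $\sum_j \delta_j^2 = 1$, forces $q_2 + q_3 < 1$ for every $\delta > 0$, and thus $\max(1 - q_2,\, 1 - q_3) \ge 1/2$. Combined with the full variance lower bound $\SigmaYhat{1}_{[k,k]} \ge N\bigl[\delta_k^2(1 - q_k)^2 + \delta_m^4(\delta_k - \delta_1)^2\bigr]$ (where $m$ denotes the element of $\{2,3\}\setminus\{k\}$), this yields a uniform bound $\min_k r_{1\to k}(\delta) \le C\,s/\sqrt{N}$ for some absolute constant $C$. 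Hence $\lim_{d\to\infty} \WCE{\Wtilde{}\cdt,0} \ge Q\bigl(\min_k r_{1\to k}\bigr) \ge Q(Cs/\sqrt{N})$, which exceeds $1/2-\epsilon$ once $N \ge (Cs/Q^{-1}(1/2-\epsilon))^2$.

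The main obstacle is establishing the uniform bound $\min_k r_{1\to k} \le C s/\sqrt{N}$ over all $\delta > 0$. The inequality $\max(1 - q_2, 1 - q_3) \ge 1/2$ easily yields the bound when the $\delta_j$'s are of comparable magnitude, but the ratios $\delta_1/\delta_k$ can be arbitrarily large on $\{\delta > 0 : \sum_j\delta_j^2 = 1\}$, so the isolated variance contribution $N\,\delta_k^2(1 - q_k)^2$ alone can degenerate. The auxiliary term $N\,\delta_m^4(\delta_k - \delta_1)^2$ compensates in that regime, but routing through both terms requires a short case analysis partitioning $\delta$ according to whether each $\delta_k$ lies above or below an absolute threshold.
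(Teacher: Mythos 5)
Your instance and overall strategy are sound and genuinely different from the paper's. The paper's proof (in \Cref{appendix:limitaiton_of_cdt_multiclass}) builds classes with geometrically growing sizes $N_i$ and signals $s_i\propto N_i^{-1/2}$ (so $s_i^2N_i$ is constant), and argues by cases on the ordering of the limiting $\delta_i$: if some pair is inverted ($\delta_y<\delta_k$ for $y<k$) the ratio $\psi_{y,k}$ of \cref{eq:psiCDT} drops below $1$ and the huge imbalance $N_k/N_y$ kills class $y$; otherwise it bounds $\psi_{1,2}\le c/2$ and the same conclusion follows for class $1$. Your equal-signal instance with $N_1=1$, $N_2=N_3=N$ is simpler, your MA bound $3\,Q\prn*{s/\sqrt{2}}\le\epsilon$ for $s\ge\sqrt{2}\,Q^{-1}(\epsilon/3)$ is correct, and the reduction to class $1$'s score statistics is legitimate because your claimed bound is uniform over $\delta$. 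Two caveats: the proposition demands an instance for every $c\ge3$, so you must state the extension $N_2=\cdots=N_c=N$ (it goes through, with a $\sqrt{c-1}$ in place of $\sqrt{2}$ via Cauchy--Schwarz on $\sum_{z\ne1,k}\delta_z^2$); and, more importantly, the mechanism you propose for the uniform bound has a hole.

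The gap is in establishing $\min_k r_{1\to k}\le Cs/\sqrt{N}$ for all $\delta>0$. The two variance mechanisms you identify---the main term $N\delta_k^2(1-q_k)^2$ activated by $\max(1-q_2,1-q_3)\ge\tfrac12$, and the auxiliary term $N\delta_m^4(\delta_k-\delta_1)^2$---do not cover the regime where $\delta_1$ dominates \emph{both} other coordinates. Take $\delta_2=\delta_3=\epsilon'$ and $\delta_1^2=1-2\epsilon'^2$: for either $k$ the variance lower bound is $N\brk*{\delta_k^2(1-q_k)^2+\delta_m^4(\delta_k-\delta_1)^2}=O(N\epsilon'^2)$, which is $o(N\delta_1^2)$, so no estimate of the form ``numerator $\lesssim\delta_1 s$, variance $\gtrsim N\delta_1^2$'' can close the argument, and a threshold case analysis on the $\delta_k$ alone will not rescue it. What saves the claim is that the mean degenerates in tandem: $\nuYhat{1}_{[k]}\propto\delta_1 A_k$ with $A_k\defeq\Delta+\delta_1(\delta_k-\delta_1)=\delta_k^2+\delta_m^2+\delta_1\delta_k=O(\epsilon')$ there. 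In fact an exact identity closes the argument with no case analysis: with $\Delta=1$ and $B_k\defeq\Delta+\delta_k(\delta_1-\delta_k)$ one checks $\delta_1A_k=\delta_kB_k+\delta_m^2(\delta_1-\delta_k)$, while $\SigmaYhat{1}_{[k,k]}\ge N\brk*{(\delta_kB_k)^2+\prn*{\delta_m^2(\delta_1-\delta_k)}^2}$, so the elementary inequality $a+b\le\sqrt{2(a^2+b^2)}$ gives $r_{1\to k}\le\sqrt{2}\,s/\sqrt{N}$ for \emph{every} $k$ and every $\delta>0$; your observation $q_2+q_3<1$ then becomes unnecessary. I recommend replacing the sketched case analysis with this identity, after which your proof is complete and, in my view, cleaner than the paper's.
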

The proof of \Cref{prop:failureOfCDT} relies on instances with very strong class imbalance; in the next section we empirically show that CDT fails in such settings.
\section{Empirical validation}\label{sec:experiments}
To test our observations both within and beyond the scope of our model, we compare CDT, MA, and LA in three different setups. First, we test linear classification on data generated from our Gaussian model  (\Cref{sec:data_model}). Second, we consider classification with an RBF kernel over artificially imbalanced CIFAR10 data \citep{krizhevsky2009learning,cao2019learning,cui2019class} using features extracted from a CLIP model~\citep{radford2021learning}. Finally, using the same data, we compare MA, CDT and (post-hoc) LA when fine-tuning the CLIP model. We report full experimental details and provide results on additional problem instances in \Cref{appendix:experiments}.

\paragraph*{Loss hyperparameter tuning.} We tune each method by sweeping over a scalar hyperparameter. In the first experiment, we tune MA and LA  using our theoretical expressions from \Cref{sec:other-methods}, setting $\delta_{i}= \prn*{\delta^{\star}_{i}}^\gamma$ and $\iota_{i}=\tau\iota_{i}^*$, respectively, and sweeping over $\gamma$ and $\tau$ such that value 1 recovers the theoretical tuning and value 0 reduces to standard maximum margin. In the second and third experiments we assume equal signals and, following our theory, set $\delta_i = N_i^{-\gamma}$ for MA and $\iota_i = \tau N_i / N$ for LA; in the third experiment we also test the standard setting $\iota_i = \tau \log N_i$. For CDT (where our theory does not find good hyperparameters) we set $\delta_i = N_i^{-\gamma}$ as is standard.

\vspace*{-4pt}
\paragraph*{Linear classification with Gaussian model.} 
We begin by simulating three instances of our theoretical model (for \(d=10^5\)), with signal strengths that are either (a) aligned with class sample size, (b) the same for all classes, or (c) inversely aligned with class sample size. \Cref{fig:2} shows excellent agreement between theory and practice, with our approximation tightly matching empirical performance and our key predictions upheld. Namely, our theoretically-derived tuning parameters (\(\gamma,\tau=1\)) are optimal; for aligned signals, LA is better than MA with bias, which is in turn better than MA; for reversed signals, the opposite is true. Additionally, both CDT and MM (corresponding to \(\gamma,\tau=0\)) can fail badly, with MM and \(\rho<\infty\) obtaining worst class error 1 as predicted. In addition to the worst-class error studied so far, \Cref{fig:2} also displays our analytical approximation and empirical measurements for  class-balanced error and the macro $F_1$ score. Here too we see excellent agreement between the approximation and measurements. Moreover, the best tuning parameters for worst-class perform well under balanced error and macro $F_1$ score as well. See \Cref{appendix:experiments:linear_classification_with_gmm} for more details.

\vspace*{-4pt}
\paragraph*{Classification with RBF kernel.} 
To test our predictions beyond the setting of our model, we perform RBF kernel classification on imbalanced CIFAR10 data featurized using the ViT-B/32 CLIP model~\citep{radford2021learning}. We consider two class size profiles. The first is a standard ``long-tailed'' exponential profile~\citep{cui2019class,cao2019learning} with 500 examples for the majority class and 5 examples for the minority class. The second has the same majority and minority sizes, but we increase the sample size of other classes to make the theoretically-predicted CDT worse. 
Figure \ref{fig:3} shows that many of our predictions continue to hold for this setting, most notably the high errors of CDT and MM (particularly with bias) and optimal tuning around $\gamma, \tau = 1$. Similar experiments on MNIST and FashionMNIST datasets yield results that are mostly aligning with our observations on CIFAR10, see in \Cref{appendix:experiments:classification_with_rbf_kernel}.

\vspace*{-4pt}
\paragraph*{CLIP fine-tuning.} 
Taking a step further, we experiment with end-to-end fine-tuning of the ViT-B/32 CLIP model (with a zero-shot initialization), on imbalanced subsets of CIFAR10 with different numbers of classes, where a smaller number of classes allows sharper class imbalance. \Cref{fig:4} shows only partial agreement with our theory, as CDT performs comparably to MA, and our theoretical setting of LA is slightly better at the 10 class instance but worse at the other two. Nevertheless we note that CDT is much more sensitive to the tuning of $\gamma$ than MA, which is opposite to the conclusions of \citet{behnia2023implicit}. We also observe that CDT does not converge to zero training error (see \Cref{appendix:experiments:clip_fine-tuning}), corroborating \Cref{remark:cdt}. 

\begin{figure*}
    \centering
    \includegraphics[width=0.925\textwidth]{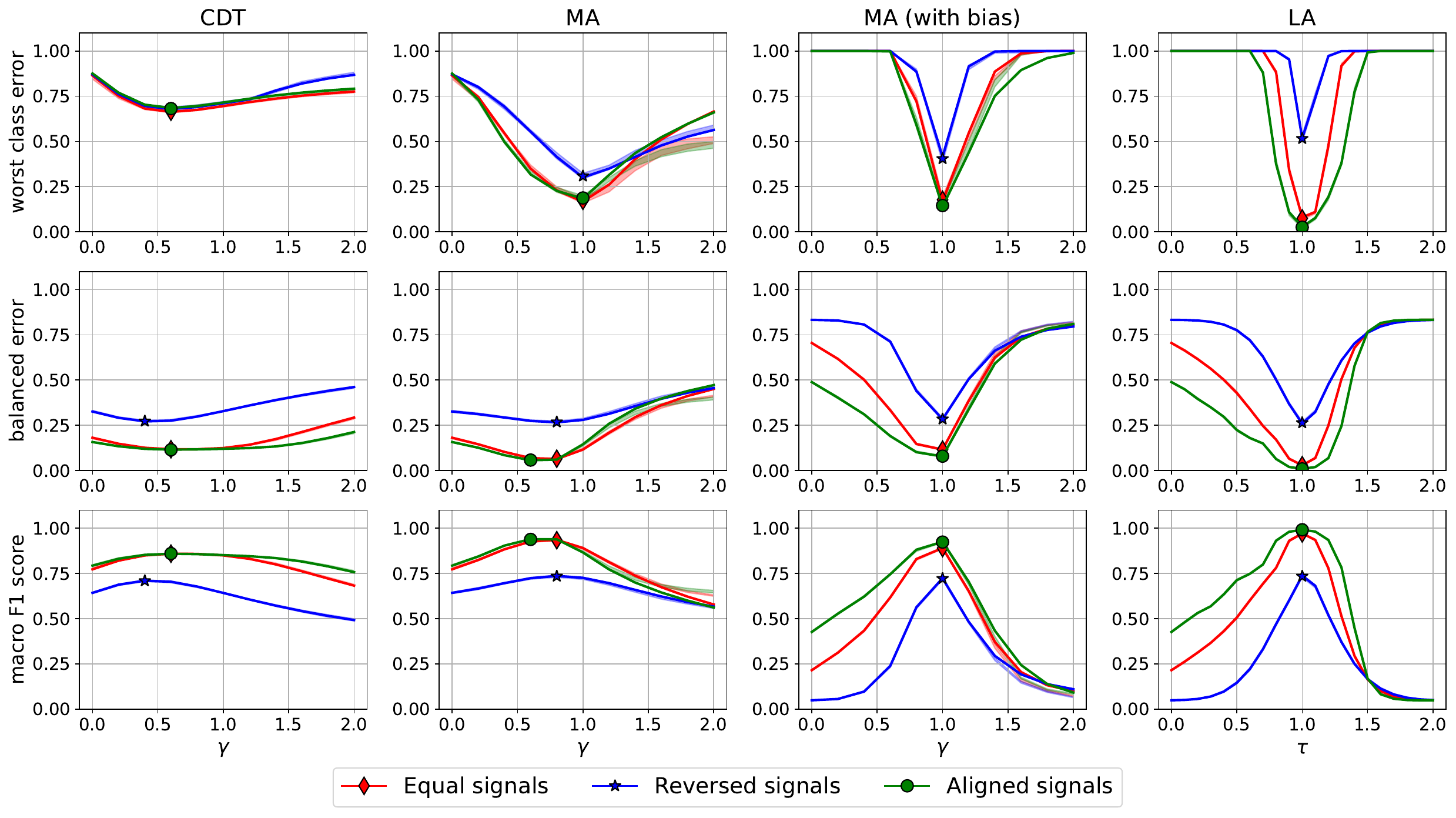}
    \caption{
		Worst class error, balanced error, and macro $F_1$ score vs.\ loss tuning hyperparameter for the different methods we consider and \textbf{linear classification} on synthetic data from our model. The shaded region shows empirical results (two standard deviations over 5 random seeds), and the solid lines show our analytical approximation.
    }
    \label{fig:2}
\end{figure*}
\begin{figure*}
	\centering
	\includegraphics[width=0.925\textwidth]{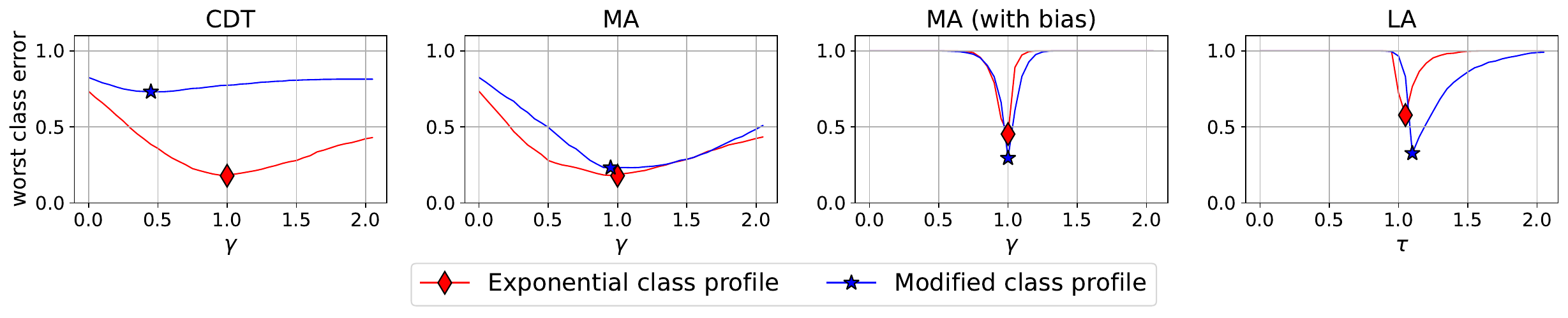}
	\caption{
		Empirical worst class test error vs.\ loss tuning hyperparameter for the different methods we consider and \textbf{kernel classification} on two imbalanced versions of CIFAR10. 
		   	The markers show the minimum error for each instance.
	}
	\label{fig:3}
\end{figure*}

\begin{figure*} %
	\centering
	\includegraphics[width=0.925\textwidth]{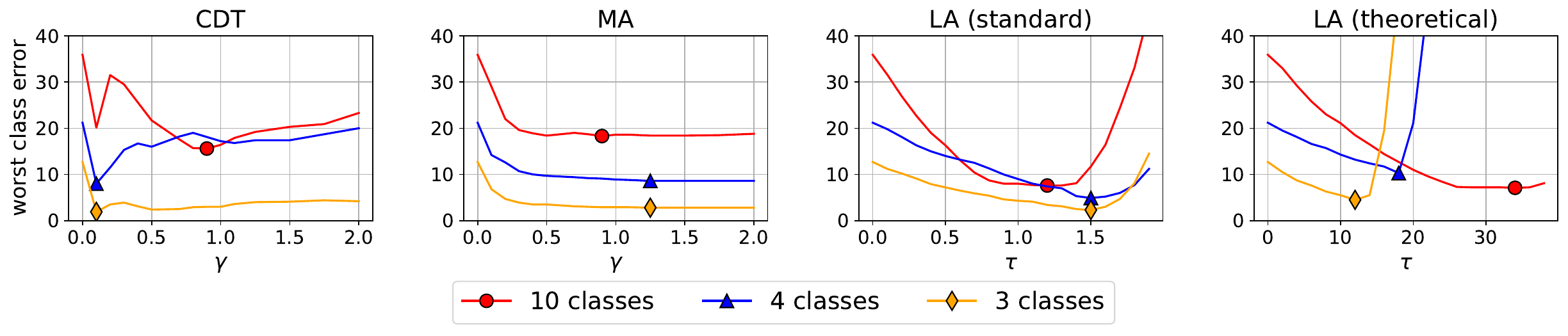}
	
	\caption{
		Empirical worst class test error vs.\ loss tuning hyperparameter for the different methods we consider and \textbf{neural network fine-tuning} on class-imbalanced subsets of CIFAR10. 
	}
	\label{fig:4}
\end{figure*}
\section{Discussion}\label{sec:final_discussion}

In this section, we provide an extended discussion of prior work on the implicit bias of gradient methods, loss functions for class imbalance, and robustness to spurious correlations. We conclude by reviewing the limitations of our study and outlining directions for future work to extend our model.

\paragraph{The implicit bias of gradient methods on separable training data.}
The implicit bias of gradient descent when applied to certain loss functions and separable training data plays a central role in our study. \citet{rosset2003margin} show that the regularization path of a family of loss functions (including cross-entropy) converges in direction to maximum margin separator in the limit of vanishing regularization. 
Later, \citet{soudry2018implicit} show that the same limit holds for gradient descent without any regularization, characterizing its implicit bias. Subsequent work generalizes these results to stochastic gradient methods~\citep{nacson2019stochastic} and shows that the equivalence between regularization path limit and implicit bias holds to a high degree of generality~\citep{ji2020gradient}. We build on these results, as well as an analysis of CDT by~\citet{kini2021label}, to tie the different loss functions we consider to margin maximization problems, as explained in \Cref{subsec:methods}. We note that the connection between gradient descent and margin regularization extends to neural network models \citep{lyu2020gradient}, though there it is more involved~\citep{vardi2022margin}.

\paragraph{Properties of margin maximizers in high dimension.}
Our expected kernel approximation (\Cref{subsec:expected-kernel-approximation}) effectively assumes that all data-points are support vectors, with weight determined only by their labels. While this assumption may seem strong, the resulting approximate test error provides a tight fit to experiments even at moderate dimension (see \Cref{fig:1}). Moreover, the realization that in high-dimensional Gaussian models all data points become support vectors is more or less apparent in a number of prior works. \citet{sagawa2020investigation} and \citet{wald2022malign} implicitly use this property in order to outline a robustness limitation of learned data separators. \citet{glasgow2023max} leverage a similar property to analyze generalization error of different data separators. \citet{frei2023benign, frei2023double} generalize this property to nonlinear classification of linearly separable training data. Finally, \citet{wang2021benign} give explicit conditions under which all examples are support vectors, instantiating them for a Gaussian mixture model very similar to the one we study, except that the model they consider is class-balanced and has uniform signal strengths.

\paragraph{Losses for imbalanced data.}
There is extensive literature proposing and analyzing loss functions for class-imbalanced data, which we now briefly review. There are also many additional approaches to counter class-imbalance such as specialized model architectures, data augmentation, and ensembling, but they fall outside our scope; see \citep{zhang2023deep} for a recent survey.

As previously mentioned, techniques based on upweighting or oversampling minority classes~\citep{he2009learning,cui2019class} often have limited success when combined with overparameterized models since the implicit bias of gradient descent on the cross-entropy loss is insensitive to these operations~\citep{soudry2018implicit,byrd2019effect,lyu2020gradient,zhai2023understanding,xu2021understanding}. The focal loss~\citep{lin2017focal} attempts to down-weight easy examples and focus the model on challenging instances. However, it admits the same implicit bias as cross-entropy since it is not class-dependent and has an exponential tail. SMOTE by \citet{chawla2002smote} can potentially get around this limitation by oversampling synthetic minority examples, though in separable linear classification the augmentation proposed in \citep{chawla2002smote} will not affect the implicit bias of gradient descent on cross-entropy.

In contrast to oversampling minority classes, undersampling majority classes~\citep{he2009learning,buda2018systematic} is an effective class balancing intervention even in the overparameterized setting~\citep{byrd2019effect}, but has the obvious disadvantage of ignoring part of the training data. Our analysis clearly demonstrates the benefits and costs of undersampling: for maximum margin classification (i.e., the implicit bias of cross-entropy minimized with gradient descent), the approximation~\eqref{eq:mm_wce_lower_bound} for the worst class error decreases when decreasing the sample size of the majority class. In contrast, for well-tuned margin adjustment, our approximation~\eqref{eq:ma_wce_lower_bound} suggests that decreasing the sample size never helps. 

Several prior works design losses to directly address classification margins. 
The label distribution aware margin (LDAM) loss \citep{cao2019learning} adds a class-dependent offset to the logit of the correct label, while \citet{menon2021long} propose the LA loss that offsets the logits of all labels and is \textit{Fisher consistent}, i.e., given infinite data, minimizing it also minimizes the class-balanced error. However, when model weights grow to infinity (as they typically do in unconstrained overparameterized classification), either type of logit offset becomes negligible and the learned model again degenerates to a maximum margin separator. Therefore, in our work, we study a post-hoc variant of LA.
In the face detection literature, losses that combine logit normalization and margin adjustment have proven successful \citep{wang2018cosface, deng2019arcface}; we believe exploring the connection between such losses and margin adjustment (MA)~\citep{behnia2023implicit,wang2022importance,nguyen2021avoiding} is an interesting topic for future work.

\citet{ye2020identifying} propose the class dependent temperature (CDT) loss to compensate for a minority feature deviation phenomenon, also observed in  \citep{kang2020decoupling}.
Our analysis suggests that while CDT is sometimes effective at mitigating class imbalance, it also has significant limitations.
\citet{kini2021label} propose combining the LA and CDT losses into a single loss called vector scaling (VS) loss and provides a theoretical analysis for the binary case. However, the binary classification loss they analyze is a two-class version of  MA, whereas two-class CDT is always equivalent to maximum margin~\citep{behnia2023implicit}.
\citet{li2021autobalance} suggest learning the parameters of VS-loss via bi-level optimization.

Finally, \citet{kang2020decoupling} show that losses that yield good classifiers under class imbalance do not necessarily learn good representations, and 
\citep{samuel2021distributional,jitkrittum2022elm} propose losses that directly target representation learning. Our analysis deals with linear classification and therefore has no direct implications for representation learning. Nevertheless, \citet{kang2020decoupling} show that separately learning the representation and classifier is often effective, and our result can provide guidance for the latter part. 

\paragraph{Neural and minority collapse.}
Building on the notion of neural collapse \citep{papyan2020prevalence},  \citet{fang2021exploring} and \citet{ji2021unconstrained} introduce a ``layer peeled model'' for neural network training where they identify a ``minority collapse'' phenomenon, which potentially explains the tendency of neural networks to generalize poorly to minority classes.
\citet{lu2022importance, behnia2023implicit} show that MA and CDT can prevent minority collapse.

There are three differences between our results and \citep{fang2021exploring, lu2022importance, behnia2023implicit}. First, our analysis focuses on a linear setting where the representation is fixed rather than part of the optimization objective. Second, we consider multi-class classification under general class size distribution whereas \citep{fang2021exploring, lu2022importance, behnia2023implicit} consider a special case where there are only two class sizes. Finally---and most importantly---we directly analyze the test performance of different balanced losses, while finding a direct link between neural collapse and generalization is an open problem.

\paragraph{Robustness to spurious correlations.} 
The problem of learning under spurious correlations (i.e., signals that appear in part of the training data but not in the test data) has several parallels to class-imbalanced learning. In particular, cross-entropy minimization tends to rely on spurious features, and re-weighting techniques are not effective in the overparameterized setting~\citep{sagawa2020distributionally}. Furthermore, loss functions proposed for addressing class imbalance are typically also relevant for balancing different sub-populations in the training data~\citep{kini2021label,lu2022importance} and vice versa \citep{wang2022importance,nguyen2021avoiding}. Finally, a number of works use Gaussian models similar to the one we study in order to shed light on how spurious correlations affects generalization~\citep{sagawa2020distributionally,nagarajan2021understanding,wald2022malign}.

However, class imbalance is also fundamentally different from spurious correlation. The former is ubiquitous in practice while the latter is often challenging to define, identify and address~\citep{hendrycks2021many,gulrajani2021search}. Moreover, for overparameterized Gaussian mixture models there exists an information-theoretic barrier that prevents any interpolating method (such as MA) from generalizing well~\citep{wald2022malign}; our results indicate such a barrier does not exist in the class-imbalanced setting.

\paragraph{Limitations and future work.} 
The main limitations of our work is the assumption of a very specific data distribution and (to a lesser extent) the focus on linear classification. Our preliminary experiments suggest that our theoretical predictions translate well to some non-linear, real-data kernel classification problems, but may be less directly applicable to neural network training. 

We identify two potential reasons for this discrepancy. First, our neural network experiments do not involve very long training, while our analysis pertains to solutions obtain after infinitely many steps. Even in our simple Gaussian model, understanding the impact of shorter training, early stopping, and regularization is a fascinating topic for future work. Second, prior work observes that losses leading to good classifiers for imbalanced data do not necessarily induce good neural representations, and suggest approaching each task separately~\citep{kang2020decoupling}, while \citet{behnia2023implicit} argues that some losses can produce both good representations and good classifiers in the context of class imbalance. Consequently, future work can apply our approximation method for analyzing classifier-learning methods, or extend our model in order to also capture representation learning.

\section{Acknowledgments}\label{sec:acknowledges}
The authors acknowledge support from the Israeli Science
Foundation (ISF) grant no. 2486/21 and the Alon Fellowship. \clearpage

\bibliographystyle{icml2024}

\newpage
\appendix
\onecolumn
\part{Appendix} %
\parttoc %
\section{Approximation method}\label{appendix:approximated_classification_model}
\subsection{Notation}
\paragraph{Training set distribution.} We let
\begin{equation*}
	\Pd{d}
\end{equation*}
denote 
the distribution of a training set of size $N=\sum_{i\in[c]} N_i$ containing (for each $y\in [\cls]$) $N_y$ examples from class $y$ drawn i.i.d.\ from ~\cref{eq:x-given-y-distribution}, with model parameters given by~\cref{eq:model-scaling}. For dataset $\{(x_i,y_i)\}_{i\in [N]}\sim \Pd{d}$, we let $S_k \defeq \{i\mid y_i =k\}$ be the set of indices for training examples belonging to class $k$. 
\subsection{Expected kernel approximation}\label{appendix:expected_kernel_approximation}
In this section we justify the use of $\E K$ instead of $K$ at the limit of $d \rightarrow \infty$.
Recall the maximum margin predictor problem defined by:
\begin{align*}
 \beta\mm, b\mm \defeq & \argmin_{\beta_{1},\ldots, \beta_{1} \in \R^{N}, b \in \R^{\cls}} \prn*{\sum_{i=1}^{\cls} {\beta_{i}}^\top K \beta_{i} + \rho \bias{i}^2}
 \\
 & 
 \subjectto ~~(\beta_{y_i}-\beta_k)^\top k_{i} + \bias{y_i} - \bias{k} \ge 1 ~~\mbox{for all}~~i \le N~\mbox{and}~k\ne y_i. \numberthis \label{opt:mm_kernel}
\end{align*}
\begin{lem}
 \label{appendix:lem:random_matrix_concentration}
 Given a dataset $\mathcal{D}=\crl{(x_i, y_i)}_{i=1}^{N}$, let $X \in \R^{N \times d}$ be the matrix of the features $X=\left[x_1, \ldots ,x_N\right]^\top$, $M=\left[\mu_{y_i}, \ldots , \mu_{y_N}\right]^\top$ be the matrix of the corresponding signal vectors of each data point in $X$, and $G=X-M$. Then for every $t >0$ with probability at least $1-4 \left(\cls+1\right)\exp \prn{-t^2/8}$ it holds that 
 \begin{align}
 \label{eq:g_singular_valus_bound}
 1 - \sqrt{\frac{N}{d}} - \frac{t}{\sqrt{d}} \leq s_{\min} \left( G^\top \right) &\leq s_{\max} \left( G^\top \right) \leq 1 + \sqrt{\frac{N}{d}} + \frac{t}{\sqrt{d}} \\
 \label{eq:g_mu_bound}
 \forall i \in [\cls] \ \Vert G \mu_{i} \Vert &\leq t \sqrt{\frac{N}{d}} \Vert \mu_{i} \Vert
 \end{align}
\end{lem}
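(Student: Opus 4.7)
The plan is to combine two classical concentration tools---non-asymptotic singular value bounds for Gaussian matrices and Gaussian concentration for Lipschitz functions---with a union bound. The key initial observation is that, since $x_i \mid y_i \sim \mathcal{N}(\mu_{y_i}, \sigma^2 I_d)$ with $\sigma^2 = 1/d$ under the scaling~\eqref{eq:model-scaling}, the rows of $G = X - M$ are i.i.d.\ samples from $\mathcal{N}(0, I_d/d)$. Equivalently, $\sqrt{d}\, G^\top \in \mathbb{R}^{d\times N}$ has i.i.d.\ standard Gaussian entries, putting us squarely in the setting of classical Gaussian random matrix results.

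For the extremal singular value bound~\eqref{eq:g_singular_valus_bound}, I would invoke a standard non-asymptotic bound on the spectrum of a Gaussian matrix (e.g., Corollary 5.35 of \citet{vershynin2012nonasmptotic}): for any $t>0$,
\[
\sqrt{d} - \sqrt{N} - t \;\le\; s_{\min}(\sqrt{d}\,G^\top) \;\le\; s_{\max}(\sqrt{d}\,G^\top) \;\le\; \sqrt{d} + \sqrt{N} + t
\]
holds with probability at least $1 - 2\exp(-t^2/2)$. Dividing through by $\sqrt{d}$ gives exactly~\eqref{eq:g_singular_valus_bound}, and the stronger exponent $t^2/2$ is easily absorbed into $4\exp(-t^2/8)$, accounting for the ``$+1$'' in the prefactor $4(\cls+1)$.

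For the vector-norm bound~\eqref{eq:g_mu_bound}, I would handle each $i\in[\cls]$ separately. Since rows of $G$ are i.i.d.\ $\mathcal{N}(0, I_d/d)$, the vector $G\mu_i \in \mathbb{R}^N$ has i.i.d.\ coordinates distributed as $\mathcal{N}(0, \|\mu_i\|^2/d)$, so $\mathbb{E}\|G\mu_i\|^2 = N\|\mu_i\|^2/d$ and by Jensen $\mathbb{E}\|G\mu_i\| \le \sqrt{N/d}\,\|\mu_i\|$. Viewing $G\mapsto \|G\mu_i\|$ as a function of the standardized entries $\sqrt{d}\,G$, it is Lipschitz with constant $\|\mu_i\|/\sqrt{d}$ (since $\|G\mu_i - G'\mu_i\| \le \|G-G'\|_F\,\|\mu_i\|$), so Gaussian concentration gives
\[
\mathbb{P}\bigl(\|G\mu_i\| > \mathbb{E}\|G\mu_i\| + s\bigr) \;\le\; \exp\!\bigl(-s^2 d / (2\|\mu_i\|^2)\bigr).
\]
Choosing $s = (t-1)\sqrt{N/d}\,\|\mu_i\|$ and combining with the Jensen bound on $\mathbb{E}\|G\mu_i\|$ yields $\|G\mu_i\| \le t\sqrt{N/d}\,\|\mu_i\|$ with failure probability at most $\exp\!\bigl(-(t-1)^2 N/2\bigr)$, which for the relevant regime is bounded by $4\exp(-t^2/8)$. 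Union-bounding over $i\in[\cls]$ contributes $4\cls\exp(-t^2/8)$, and combining with the singular value tail yields the claimed $4(\cls+1)\exp(-t^2/8)$.

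The main ``hard part'' is purely bookkeeping---matching the exponents so every individual tail event fits inside a multiple of $\exp(-t^2/8)$ and verifying that the absorbed constants indeed give the prefactor $4(\cls+1)$ after the union bound. No substantive mathematical step is required beyond the two off-the-shelf Gaussian concentration ingredients.
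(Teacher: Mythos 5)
Your proposal is correct and follows the same overall structure as the paper's proof: standardize $G$ so that $\sqrt{d}\,G^\top$ has i.i.d.\ standard Gaussian entries, obtain \cref{eq:g_singular_valus_bound} from \citet[Cor.~5.35]{vershynin2012nonasmptotic}, control $\norm{G\mu_i}$ for each class separately, and union bound over $\cls+1$ events while absorbing the stronger $\exp(-t^2/2)$ singular-value tail into $4\exp(-t^2/8)$. The only genuine difference is the tool used for \cref{eq:g_mu_bound}: the paper cites the Gaussian-norm deviation bound of \citet[eq.~3.5]{ledoux2013probability}, which directly yields $\Pr(\norm{G\mu_i} > t\sqrt{N/d}\,\norm{\mu_i}) \le 4\exp(-t^2/8)$, whereas you derive it from Gaussian concentration of Lipschitz functions together with the Jensen bound $\E\norm{G\mu_i}\le\sqrt{N/d}\,\norm{\mu_i}$. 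Your route is self-contained and in fact gives a sharper tail $\exp(-(t-1)^2N/2)$ for large $N$; the bookkeeping you defer does close: for $t\ge 1$ one checks $\tfrac{t^2}{8}-\tfrac{(t-1)^2}{2}=-\tfrac{3t^2}{8}+t-\tfrac12\le\tfrac16<\log 4$, so $\exp(-(t-1)^2N/2)\le 4\exp(-t^2/8)$ for all $N\ge1$, while for $t<1$ the claimed probability bound exceeds $1$ and the lemma is vacuous, so the "for every $t>0$" phrasing is safe. Either ingredient is acceptable; the paper's choice merely avoids the expectation step by quoting a packaged norm bound.
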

\Cref{appendix:lem:random_matrix_concentration} was largely derived from \citet{wald2022malign}, but in our case we obtain a different probability factor for the event to occur due to the union bound over $\cls+1$ events.
For the reader's convenience, we have included the proof again here.
\begin{proof}
 $G$ is a random Gaussian matrix with $G_{[i, j]} \sim \mathcal{N}\prn*{0, d^{-1}}$, hence from \citet[][Cor 5.35]{vershynin2012nonasmptotic} it holds that with probability at least $1-2\exp(-t^2/2)$ \cref{eq:g_singular_valus_bound} holds. In addition, since $G$ is random Gaussian matrix, it holds that $G \mu_{i} \sim \ND{0}{d^{-1} \norm{\mu_{i}}^2}{N}$. We use a bound on a norm of Gaussian vectors \citep[][eq 3.5]{ledoux2013probability} and get that for any $t_{i} > 0$,
 \begin{align*}
 \P \brk*{ \norm*{G \mu_{i}} > t_{i}} \leq 4\exp \prn*{-\frac{d t_{i}^2}{8 N \norm{\mu_{i}}^2}}.
 \end{align*}
 Specifically, for $t_{i} = t \sqrt{\frac{N_{i}}{d}} \norm{\mu_{i}}$ it holds that with probability at least $1-4\exp(-t^2/8)$ \cref{eq:g_mu_bound} holds. Taking the union bound over $\cls+1$ events and using the fact that $2\exp(\frac{-t^2}{2}) < 4\exp(\frac{-t^2}{8})$ for any $t > 0$ results in proving the desired lemma.
\end{proof}
\begin{restatable}{lem}{ConcetrationOfKernelLem}{(Concentration of features kernel)}
	\label{lem:feature_concentration_limit_lemma}
	Given a dataset $\mathcal{D} \sim \Pd{d}$, let $X \in \R^{N \times d}$ be the matrix of the features $X=\left[x_1, \ldots ,x_N\right]^\top$ and $K=XX^\top$ be the features kernel of $X$. Then for any $t >0$, with probability at least $1-4 \left(c+1\right)\exp \prn{-t^2/8}$, it holds that 
	\begin{equation*}
		\opnorm{K - \E \left[K\right]} \leq 2\left( \frac{\sqrt{N}+t}{\sqrt{d}} + \frac{\prn{\sqrt{N}+t}^2}{d} + \frac{t N}{d^{3/4}}\max_{i \in [c]} s_{i} \right).
	\end{equation*}
	Therefore for any dataset $\mathcal{D} \sim \Pd{d}$, let $K$ bet the kernel matrix of the feature vectors in $\mathcal{D}$ then, $\lim_{d \rightarrow \infty } \P \brk*{\opnorm{K - \E \left[K\right]} = 0} = 1$.
\end{restatable}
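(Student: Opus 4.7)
The plan is to reduce the concentration of $K$ to the two high-probability bounds already provided by \Cref{appendix:lem:random_matrix_concentration}. Write $X = M + G$, where $M \in \R^{N\times d}$ has deterministic (given the labels) rows $\mu_{y_i}^\top$ and $G$ has independent $\mathcal{N}(0,\sigma^2 I_d)$ rows. Expanding $K = XX^\top$ and using $\E G = 0$ together with $\sigma^2 d = 1$ under the scaling~\eqref{eq:model-scaling}, one checks that $\E K = MM^\top + I_N$, and therefore
\begin{equation*}
K - \E K \;=\; (GG^\top - I_N) \;+\; MG^\top \;+\; GM^\top .
\end{equation*}
The triangle inequality then reduces the task to bounding three operator norms on the event of \Cref{appendix:lem:random_matrix_concentration}, whose probability is already $1 - 4(c+1)\exp(-t^2/8)$ — so no additional union bound is needed.

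\textbf{Bounding the three terms.} The singular value bound~\eqref{eq:g_singular_valus_bound} forces every eigenvalue of $GG^\top$ to lie in $[(1-\epsilon)^2,(1+\epsilon)^2]$ with $\epsilon \defeq (\sqrt N + t)/\sqrt d$, giving $\opnorm{GG^\top - I_N} \le 2\epsilon + \epsilon^2$, which already matches the first two terms in the target bound. For the cross terms, I would use the low-rank decomposition $M = \sum_{k=1}^c \mathbf{1}_{S_k}\mu_k^\top$, where $\mathbf{1}_{S_k}\in\{0,1\}^N$ is the class-$k$ indicator, so that
\begin{equation*}
MG^\top \;=\; \sum_{k=1}^c \mathbf{1}_{S_k}\,(G\mu_k)^\top
\end{equation*}
is a sum of $c$ rank-one matrices of operator norm $\sqrt{N_k}\,\|G\mu_k\|$. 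Applying~\eqref{eq:g_mu_bound} and the signal scaling $\|\mu_k\|^2 = s_k/\sqrt d$ gives $\sqrt{N_k}\|G\mu_k\| \le t\sqrt{N N_k/d}\cdot s_k^{1/2}/d^{1/4}$; summing in $k$ and crudely upper-bounding $\sqrt{N_k s_k}$ by quantities involving $N$ and $\max_i s_i$ produces the third term $\tfrac{tN}{d^{3/4}}\max_i s_i$ (with $\opnorm{GM^\top}=\opnorm{MG^\top}$ handled identically). Collecting the three bounds and pulling out the factor of $2$ yields exactly the stated inequality.

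\textbf{Convergence in probability.} For the second claim, fix $N$ and choose $t=t_d$ growing slowly with $d$ (for instance $t_d = d^{1/8}$). Then $4(c+1)\exp(-t_d^2/8)\to 0$, while simultaneously $\epsilon_d = (\sqrt N + t_d)/\sqrt d \to 0$ and $t_d N/d^{3/4}\to 0$, so the operator-norm bound vanishes. Hence for every $\eta>0$, $\P(\opnorm{K-\E K} > \eta)\to 0$ as $d\to\infty$, which is the desired convergence (interpreted, as is standard, as convergence in probability).

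\textbf{Anticipated obstacle.} None of the steps are individually deep: the first term reduces to the Bai–Yin/Gordon-type singular value bound already stated, and the cross terms reduce to a sum of $c$ Gaussian vector norms already bounded in~\eqref{eq:g_mu_bound}. The only mildly delicate point is the bookkeeping that turns $\sum_k \sqrt{N_k s_k}$ into the clean form $\tfrac{tN}{d^{3/4}}\max_i s_i$ while absorbing constants into the leading factor of $2$; this is where a careful choice of Cauchy–Schwarz or a naive $\sqrt{N_k}\le \sqrt N$ bound is required to match the stated expression.
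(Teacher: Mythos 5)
Your proposal follows essentially the same route as the paper: decompose $X=M+G$, split $K-\E K$ into $GG^\top-\E\brk*{GG^\top}$ plus the cross terms $MG^\top+GM^\top$, and control each piece on the single event of \Cref{appendix:lem:random_matrix_concentration} using the singular-value bound \cref{eq:g_singular_valus_bound} and the bound \cref{eq:g_mu_bound} on $\norm{G\mu_i}$, with no further union bound. The one place you diverge is in bounding $\opnorm{MG^\top}$ as a sum of $c$ rank-one terms of norm $\sqrt{N_k}\norm{G\mu_k}$, which is looser than the paper's direct bound $\opnorm{MG^\top}\le \sqrt{N}\max_i\norm{G\mu_i}$ (obtainable, e.g., via the Frobenius norm, since $MG^\top$ has rows $(G\mu_{y_i})^\top$) by a factor of up to $\sqrt{c}$ because $\sum_k\sqrt{N_k}$ can reach $\sqrt{cN}$; consequently your constants would not exactly reproduce the stated inequality, though this is immaterial for the limiting claim, which your choice $t_d=d^{1/8}$ establishes correctly.
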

\begin{proof}
 \label{appendix:proof_of_feature_concentration_limit_lemma}
 Similarly to \Cref{appendix:lem:random_matrix_concentration} we can view $X=G+M$ where $G$ is a random Gaussian matrix with $G_{[i,j]}\sim \mathcal{N}\prn*{0, d^{-1}}$. Since $GG^\top \sim \mathcal{W}\left(d^{-1} I, d \right)$ it holds that $\E \left[ GG^\top \right] = I$, then from \cref{eq:g_singular_valus_bound} it holds that 
 \begin{align}
 \opnorm{GG^\top - \E\left[GG^\top\right]} \leq \left( 1 + \sqrt{\frac{N}{d}} + \frac{t}{\sqrt{d}} \right)^2 - 1
 \end{align}
 Combining this and \cref{eq:g_mu_bound} we get
 \begin{align*}
 \opnorm{K - \E\left[K\right]} &\leq \opnorm{GG^\top - \E\left[GG^\top\right]} + 2\opnorm{GM^\top} \\
 &\overset{(\text{a})}{\leq} \left( 1 + \sqrt{\frac{N}{d}} + \frac{t}{\sqrt{d}} \right)^2 - 1 + 2 \max_{x : \norm{x}=1} \Vert GM^Tx \Vert \\
 &= \left( 2\frac{\sqrt{N}+t}{\sqrt{d}} + \frac{\prn{\sqrt{N}+t}^2}{d} \right) + 2 \max_{x : \norm{x}=1} \Vert GM^Tx \Vert \\
 &= \left( 2\frac{\sqrt{N}+t}{\sqrt{d}} + \frac{\prn{\sqrt{N}+t}^2}{d} \right) + 2 \max_{x : \norm{x}=1} \sqrt{x^\top M G^\top G M^\top x} \\
 &= \left( 2\frac{\sqrt{N}+t}{\sqrt{d}} + \frac{\prn{\sqrt{N}+t}^2}{d} \right) + 2 \max_{x : \norm{x}=1} \sqrt{x^\top \sum_{i, j} \left(G\mu_{y_i}^\top\right)^\top \left(G \mu_{y_j}^\top\right) x} \\
 &\leq \left( 2\frac{\sqrt{N}+t}{\sqrt{d}} + \frac{\prn{\sqrt{N}+t}^2}{d} \right) + 2 \max_{x : \norm{x}=1} \sqrt{\max_{i \in [\cls]} \norm{G \mu_{i}^\top}^2 \norm{x}^2}\\
 &= \left( 2\frac{\sqrt{N}+t}{\sqrt{d}} + \frac{\prn{\sqrt{N}+t}^2}{d} \right) + 2 \max_{i \in [\cls]} \norm{G \mu_{i}^\top} \sqrt{N}\\
 &\overset{(\text{b})}{\leq} 2\left( \frac{\sqrt{N}+t}{\sqrt{d}} + \frac{\prn{\sqrt{N}+t}^2}{d} + \frac{t N}{\sqrt{d}}\max_{i \in [\cls]} \norm{\mu_{i}}\right)
 \end{align*}
 where transitions $(\text{a})$ and $(\text{b})$ are justified by \Cref{appendix:lem:random_matrix_concentration}, which leads us to conclude the desired bound.
\end{proof}
Essentially \Cref{lem:feature_concentration_limit_lemma} show that under our data assumption, \Cref{sec:data_model}, at the limit of $d \to \infty$ $K$, converge to $\E K$. This motivates us to substitute $K$ by $\E K$ and derive the margin problem in the overparameterized regime. Specifically the approximated maximum margin predictor problem is defined by
\begin{align*}
 \beta\mm, b\mm \defeq & \argmin_{\beta_{1},\ldots, \beta_{\cls} \in \R^{N}, b \in \R^{\cls}} \prn*{\sum_{i=1}^{\cls} {\beta_{i}}^\top \E \brk*{K} \beta_{i} + \rho \bias{i}^2}
 \\
 & 
 \subjectto ~~(\beta_{y_i}-\beta_k)^\top \E\brk*{k_{i}} + \bias{y_{i}} - \bias{k} \ge 1 ~~\mbox{for all}~~i \le N~\mbox{and}~k\ne y_i.\label{opt:approx_max_margin}\numberthis
\end{align*}
We continue by explaining why this substitution helps to simplify the optimization in \cref{opt:mm_kernel}.
\begin{prop}
 \label{prop:symmetry_prop}
 For any pair of coefficients $\beta_{y[z]}\mm, \beta_{y[k]}\mm$ associated with points belonging to the same class, the optimal solution of Equation \ref{opt:approx_max_margin} guarantees that $\beta_{y[z]}\mm$ is equal to $\beta_{y[k]}\mm$.
 \end{prop}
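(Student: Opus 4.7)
}
The plan is to establish the claim via a permutation-symmetry argument combined with convex averaging. First I would observe that the expected kernel $\E[K]$, computed explicitly in \cref{eq:approximated_covariance} at the single-point level as $\E K_{[i,j]} = \norm{\mu_{y_i}}^2\indic{y_i = y_j} + \sigma^2 d\indic{i=j}$, depends on the training data only through the label vector $(y_1,\ldots,y_N)$. Consequently, if $\pi$ is any permutation of $[N]$ that preserves labels (i.e.\ $y_{\pi(i)} = y_i$ for all $i$), and $P_\pi$ denotes the associated permutation matrix, then $P_\pi^\top \E[K] P_\pi = \E[K]$ and the $i$-th row of $\E[K]$ is mapped to the $\pi^{-1}(i)$-th row.

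Second, I would use this to show that~\cref{opt:approx_max_margin} is invariant under the group $G$ of within-class permutations: given any feasible $(\beta_1,\ldots,\beta_{\cls},b)$, the tuple $(P_\pi \beta_1,\ldots, P_\pi \beta_{\cls}, b)$ has identical objective value (since $(P_\pi \beta_y)^\top \E[K] (P_\pi \beta_y) = \beta_y^\top \E[K]\beta_y$), and its constraints are exactly the original $N(\cls-1)$ constraints in permuted order, so it is feasible iff the original is. Therefore, if $(\beta^{\mm}, b^{\mm})$ solves~\cref{opt:approx_max_margin}, so does $(P_\pi \beta^{\mm}, b^{\mm})$ for every $\pi \in G$.

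Third, I would conclude via convex averaging: let $\bar{\beta}_y \defeq \frac{1}{|G|}\sum_{\pi \in G} P_\pi \beta_y^{\mm}$. By convexity of the objective and of the feasible set, $(\bar{\beta}, b^{\mm})$ is also optimal, and by construction $\bar{\beta}_y$ is $G$-invariant, i.e.\ its entries are constant on each class index set $S_k$. To upgrade this to $\beta_y^{\mm}$ itself being $G$-invariant, I would invoke strict convexity of the $\beta$-quadratic form: $\E[K] = D + \sigma^2 d \cdot I_N$, where $D$ is the PSD block-diagonal matrix with blocks $\norm{\mu_k}^2 \ones\ones^\top$ on $S_k$, so $\E[K]$ is positive definite. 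Hence~\cref{opt:approx_max_margin} admits a unique optimizer in $\beta$, which forces $\beta_y^{\mm} = \bar{\beta}_y$ and therefore $\beta_{y[z]}^{\mm} = \beta_{y[k]}^{\mm}$ whenever $y_z = y_k$.

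The only step that requires some care is the strict convexity verification; if one wanted to avoid it (for instance, to allow edge cases where $\sigma^2 d \to 0$ or a reduced formulation), the same conclusion follows by noting that the averaged $\bar{\beta}_y$ is itself a valid optimum with the claimed symmetry, and this suffices to justify restricting attention to the symmetric parameterization $\alphatilde{y}$ in~\cref{eq:w_approx} used throughout \Cref{sec:approximation}.
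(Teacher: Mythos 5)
Your proposal is correct and rests on the same core observation as the paper's own proof: the expected kernel depends on the data only through the labels, so the optimization problem in \cref{opt:approx_max_margin} is invariant under within-class permutations of the training points. In fact your version is more complete than the paper's — the paper asserts directly that "the optimal solutions are identical" and stops, whereas you supply the missing convex-averaging step and the uniqueness argument via positive definiteness of $\E[K] = D + \sigma^2 d\, I_N$, which is exactly what is needed to conclude that the (unique) optimal $\beta$ itself, rather than merely some optimum, has class-constant entries.
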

\begin{proof} 
 \label{appendix:proof_symmetry_prop}
 Let $\mathcal{D}_{1} = \crl{(x_i, y_i)}_{i=1}^{N}$ be a linearly separable dataset in $\R^d$ such that for any $i \in [N]$
 \begin{equation*}
 x_i = \mu_{y_i} + n_{i} ~~\mbox{where}~~ n_{[i]} \sim \ND{0}{\sigma^2}{d}.
 \end{equation*}
 Let $\mathcal{D}_{2}$ be a dataset that is composed of the data points in $\mathcal{D}_{1}$ with a different permutation of the data points (within the same class). Additionaly, denote by $K^{(i)}$ be the kernel matrix of $\mathcal{D}_{i}$ and notice that:
 \begin{align*}
 \E \brk*{K^{(1)}} = \E \brk*{K^{(2)}}.
 \end{align*}
 Which means that the optimal solutions of \cref{opt:approx_max_margin} under $\mathcal{D}_{1}$ and $\mathcal{D}_{2}$ are identical. Hence, for any pair of coefficients $k \ne z$ of the same class, it holds that ${\beta_{y[k]}\mm} = {\beta_{y[z]}\mm}$.
 \end{proof}
\Cref{prop:symmetry_prop} shows that \cref{opt:approx_max_margin} actually involve only $\cls\times (\cls+1)$ parameters (including the bias) under $\cls \times \prn*{\cls -1}$ constraints instead of $\cls \times N + c$ parameters with $N \times \prn*{\cls -1}$ constraints. This allows us to simplify the optimization in \cref{opt:approx_max_margin} and conclude our final optimization problem:
\begin{align*}
 \alphatilde{}\mm, \btilde{}\mm \defeq & \argmin_{\alphatilde{1},\ldots, \alphatilde{\cls} \in \R^{\cls}, b \in \R^{\cls}} \prn*{\sum_{i=1}^{\cls} {\alphatilde{i}}^\top \Kbar{} \alphatilde{i} + \rho \bias{i}^2}
 \\
 & 
 \subjectto ~~(\alphatilde{y}-\alphatilde{k})^\top \frac{\Kbar{y}}{N_{y}} + \bias{y} - \bias{k} \ge 1 ~~\mbox{for all}~~y \in \brk*{\cls} ~\mbox{and}~k\ne y \label{opt:final_approx_max_margin}\numberthis
\end{align*}
where $\Kbar{}$ is the expectation of the features sum vectors kernel.
\begin{equation*}
 \Kbar{[i,j]} = \begin{cases}
 N_{i}^2 \xi_{i} & i = j \\
 0 & i \ne j
 \end{cases}
\end{equation*}
Specifically, denote $\xbar{i} \defeq \sum_{j \in S_{i}} x_j$ and $\xi_{i} = \norm{\mu_{i}}^2 + \frac{\sigma^2d}{N_{i}}$ then,
\begin{align*}
 \Kbar{[i,i]} &= \E\brk*{\inner*{\xbar{i}}{\xbar{i}}} = \E\brk*{\inner*{ \sum_{j_{1} \in S_{i}} x_{j_1}}{\sum_{j_{2} \in S_{i}} x_{j_2}}} \\
 &= \E\brk*{\sum_{j_{1} \in S_{i}}\sum_{j_{2} \in S_{i}} \norm{\mu_{i}}^2 + \sum_{j_{1} \in S_{i}} N_{i} \inner*{n_{j_1}}{\mu_{y}} + \sum_{j_{2} \in S_{i}} N_{i} \inner*{n_{j_2}}{\mu_{y}} + \sum_{j_{1} \in S_{i}} \sum_{j_{2} \in S_{i}} \inner*{n_{j_1}}{n_{j_{2}}}} \\
 &\overeq{(1)} N_{i}^2 \norm{\mu_{i}}^2 + \E \brk*{\sum_{j_{1} \in S_{i}} \sum_{j_{2} \in S_{i}} \inner*{n_{j_1}}{n_{j_{2}}}} \overeq{(2)} N_{i}^2 \norm{\mu_{i}}^2 + N_{i} \sigma^2 d = N_{i}^2 \xi_{i}
\end{align*}
were (1) and (2) are justified by linearity of expectation and i.i.d random samples of gaussian noise. In addition $\Kbar{[i,j]}=0$ where $i \ne j$ as the product of two i.i.d random variables with orthogonal signal component.
\subsection{Prediction error parameter approximation}\label{appendix:prediction_parameters_approximation}
\begin{lem}
 \label{lem:concentration_bounds_lem}
 Let $x$ and $y$ be two independent vectors drawn from $\mathcal{N}\left(0,I_{d}\right)$.
 Then, for any $d\ge2$, fixed $u\in\R^{d}$ and $\delta\in\left(0,\frac{1}{e}\right)$
 we have
 \[
 \P\left(\left|\inner xu\right|>\left\Vert u\right\Vert \sqrt{2\log\frac{1}{\delta}}\right)\le\delta,
 \]
 \[
 \P\left( \abs*{\norm{x}^2-d}>4\sqrt{d}\log\frac{2}{\delta}\right)\le\delta,\ \mbox{and}
 \]
 \[
 \P\left(\abs*{\inner{x}{y}}>\sqrt{d}\log\frac{4}{\delta}\right)\le\delta.
 \]
\end{lem}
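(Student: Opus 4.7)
The plan is to prove the three inequalities by invoking standard Gaussian and $\chi^2$ tail bounds. Specifically, for the first inequality I would use the univariate Gaussian tail bound, for the second the Laurent--Massart $\chi^2_d$ concentration inequality \citep{laurent2000adaptive}, and for the third I would combine the first two via conditioning and a union bound (with a polarization identity as a fallback).

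For the first inequality, I note that $\langle x, u\rangle \sim \mathcal{N}(0, \|u\|^2)$ since $x$ is standard Gaussian and $u$ is deterministic, so the claim reduces to $\P(|Z|>\sqrt{2\log(1/\delta)})\le \delta$ for $Z\sim\mathcal{N}(0,1)$. The crude bound $\P(|Z|>t)\le 2e^{-t^2/2}$ only yields $2\delta$, so I would use the tighter Mills-ratio bound $\P(Z>t)\le (t\sqrt{2\pi})^{-1}e^{-t^2/2}$, exploiting the hypothesis $\delta<1/e$ (which forces $t>\sqrt{2}$ and hence $(t\sqrt{2\pi})^{-1}<1/2$) to recover the stated constant after two-sidedness.

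For the second inequality, $\|x\|^2\sim\chi^2_d$, and Laurent--Massart yields $\P(\|x\|^2-d\ge 2\sqrt{dt}+2t)\le e^{-t}$ together with $\P(d-\|x\|^2\ge 2\sqrt{dt})\le e^{-t}$. Setting $t=\log(2/\delta)$ and union bounding makes the total failure probability $\delta$, after which I use $\sqrt{d}\ge 1$ and $\log(2/\delta)>1$ (from $\delta<1/e<2/e$) to simplify $2\sqrt{dt}+2t\le 4\sqrt{d}\log(2/\delta)$, which is the stated bound.

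For the third inequality, the main approach is to condition on $y$: given $y$, $\langle x, y\rangle\sim\mathcal{N}(0,\|y\|^2)$, so the first bound (applied with failure probability $\delta/2<1/e$) yields $|\langle x, y\rangle|\le \|y\|\sqrt{2\log(2/\delta)}$ with conditional probability at least $1-\delta/2$, while the second bound gives $\|y\|^2\le d+4\sqrt{d}\log(4/\delta)$ with probability at least $1-\delta/2$; a union bound and algebraic simplification using $d\ge 2$ and $\delta<1/e$ would then close the argument. I expect this last step to be the main obstacle, since the product of the two bounds introduces cross terms that must be forced below $d\log^2(4/\delta)$, and the split of the failure probability must be tuned carefully. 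A cleaner alternative I would fall back on is the polarization identity $\langle x, y\rangle=\tfrac{1}{4}(\|x+y\|^2-\|x-y\|^2)$: since $x\pm y\sim\mathcal{N}(0,2I_d)$ are jointly Gaussian and uncorrelated, they are independent, so this expresses the inner product as a scaled difference of two independent $\chi^2_d$ variables and Laurent--Massart applies directly, again reducing the task to verifying that the resulting bound $2\sqrt{d\log(4/\delta)}+\log(4/\delta)$ can be absorbed into $\sqrt{d}\log(4/\delta)$ using $\delta<1/e$ and $d\ge 2$.
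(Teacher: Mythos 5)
Your handling of the first two inequalities is essentially the paper's: the first reduces to a one-dimensional Gaussian tail (the paper uses $\P\left(|Z|>t\right)\le e^{-t^{2}/2}$ directly, which already absorbs the factor of $2$ you were worried about, so the Mills-ratio refinement is an acceptable but unnecessary detour), and the second is exactly the Laurent--Massart argument with $t=\log\frac{2}{\delta}$ followed by $2\sqrt{dt}+2t\le 4\sqrt{d}\,t$.

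The gap is in the third inequality, and the obstacle you flagged is fatal to both of your routes over the stated range $d\ge 2$, $\delta\in(0,\tfrac1e)$. For the conditioning route, the combined bound is $\sqrt{2\log\frac{2}{\delta}}\cdot\bigl(d+4\sqrt{d}\log\frac{4}{\delta}\bigr)^{1/2}$; at $d=2$ and $\delta$ near $1/e$ this is about $\sqrt{3.39}\cdot\sqrt{2+13.5}\approx 7.3$, while the target $\sqrt{d}\log\frac{4}{\delta}\approx 3.4$ --- the correction term $4\sqrt{d}\log\frac{4}{\delta}$ in the norm bound swamps $d$ itself when $d$ is small, so no reallocation of the failure probability saves it. For the polarization route with separate Laurent--Massart applications, the bound you obtain is $2\sqrt{dt}+t$ with $t=\log\frac{4}{\delta}$, and $2\sqrt{dt}+t\le\sqrt{d}\,t$ is equivalent to $\frac{2}{\sqrt{t}}+\frac{1}{\sqrt{d}}\le 1$, which fails whenever $t<4$ (i.e.\ $\delta>4e^{-4}\approx 0.073$) and fails at $d=2$ for essentially every admissible $\delta$. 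The paper instead computes the moment generating function $\E e^{\lambda\inner{x}{y}}=(1-\lambda^{2})^{-d/2}$, applies Chernoff with the fixed choice $\lambda=1/\sqrt{d}$, and uses $-\frac{d}{2}\log(1-\frac{1}{d})\le\log 2$ for $d\ge 2$ to get $\P\left(\inner{x}{y}>\sqrt{d}\log\frac{4}{\delta}\right)\le\frac{\delta}{2}$. Your polarization identity is one step away from this: writing $\inner{x}{y}=\frac12(U-V)$ with $U,V$ i.i.d.\ $\chi^{2}_{d}$ gives $\E e^{\frac{\lambda}{2}(U-V)}=(1-\lambda)^{-d/2}(1+\lambda)^{-d/2}=(1-\lambda^{2})^{-d/2}$, so Chernoff-bounding the \emph{difference} directly recovers the paper's bound exactly; it is the lossy union bound over two one-sided $\chi^{2}$ tails, not the polarization idea, that destroys the constants.
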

Note that the last two inequalities sacrifice mid-tail dependence
(i.e., depending on $\log\frac{1}{\delta}$ instead of $\sqrt{\log\frac{1}{\delta}}$
for values of $\delta$ above $e^{-\sqrt{d}})$ in favor of cleaner
expressions. 
\begin{proof}
 For the first bound we note that $\inner xu\sim\mathcal{N}\left(0,\left\Vert u\right\Vert ^{2}\right)$
 and therefore $\P\left(\left|\inner xu\right|>t\right)=2Q\left(\frac{t}{\left\Vert u\right\Vert }\right)\le e^{-\frac{t^{2}}{2\left\Vert u\right\Vert ^{2}}}.$
 The second bound follows from substituting $a_{i}=1$ and $x=\log\frac{2}{\delta}$
 in Lemma 1 of \citet{laurent2000adaptive}, which
 yields
 \[
 \P\left(\abs*{\left\Vert x\right\Vert ^{2}-d>2}\sqrt{d\log\frac{2}{\delta}}+2\log\frac{2}{\delta}\right)\le\delta,
 \]
 and we note that, since $d$ and $\log\frac{1}{\delta}$ are both
 larger than $1$, we have
 \[
 2\sqrt{d\log\frac{1}{\delta}}+2\log\frac{1}{\delta}\le2\sqrt{d}\log\frac{1}{\delta}+2\sqrt{d}\log\frac{1}{\delta}=4\sqrt{d}\log\frac{1}{\delta}.
 \]
 
 Let us briefly derive the final bound from first principles. First,
 let $k$ and $z$ be independent standard Gaussians. Then,
 for all $\left|\lambda\right|<1$,
 \[
 \E e^{\lambda kz}=\E\left[\E\left(e^{\lambda kz}\right)\mid z\right]=\E e^{\frac{\lambda^{2}}{2}z}=\int_{-\infty}^{\infty}\frac{1}{\sqrt{2\pi}}e^{-\frac{z^{2}}{2}\left(1-\lambda^{2}\right)}dz=\sqrt{\frac{1}{1-\lambda^{2}}}.
 \]
 Applying a Chernoff bound, we have that for all $\lambda\in\left(0,1\right)$,
 \[
 \P\left(\left<x,y\right>>t\right)\le e^{\lambda\left(\left<x,y\right>-t\right)}=e^{-\lambda t}\left(\E e^{\lambda kz}\right)^{d}=e^{-\frac{d}{2}\log\left(1-\lambda^{2}\right)-\lambda t}.
 \]
 Now pick $\lambda=\frac{1}{\sqrt{d}}$, and note that $-d\log\left(1-\frac{1}{d}\right)$
 is decreasing in $d$, and therefore
 \[
 -\frac{d}{2}\log\left(1-\frac{1}{d}^{2}\right)\le\log2.
 \]
 for every $d\ge2$. Therefore, taking $t=\sqrt{d}\log\frac{4}{\delta}=\frac{1}{\lambda}\log\frac{4}{\delta}$
 we obtain
 \[
 \P\left(\left<x,y\right>>\sqrt{d}\log\frac{4}{\delta}\right)\le e^{\log2-\log\frac{4}{\delta}}=\frac{\delta}{2}.
 \]
 Since $\inner{x}{y}$ and $-\inner{x}{y}$ are identically distributed, we also have $\P\left(\left<x,y\right><\sqrt{d}\log\frac{4}{\delta}\right)\le \frac{\delta}{2}$, yielding the claimed bound on $\abs{\inner{x}{y}}$. 
\end{proof}

\begin{restatable}{lem}{ConcentrationOfNuAndSigmaLem}{(Concentration of $\nuYtilde{y}$ and $\SigmaYtilde{y}$)}\label{lem:consentration_of_nu_and_sigma}
	Let $\Wtilde{}$, $\btilde{}$ be an approximate predictor of the form \cref{eq:w_approx} under data assumptions (\Cref{sec:data_model}). Then for any $\delta \in (0, \frac{1}{e})$ w.p $1-\delta$ it holds that
	\begin{align*}
		&\abs*{\nuYtilde{y}_{[i]} - \nuYhat{y}_{[i]}} < \frac{1}{\sqrt[4]{d}} \abs*{\sum_{j=1}^{\cls} \uY{y}{i,j}} \sqrt{2N_j s_{y} \log\prn*{\frac{1.5\cls \prn*{\cls + 1}}{\delta}}},
	\end{align*}
	and
	\begin{align*}
		\abs*{\SigmaYtilde{y}_{[i,j]} - \SigmaYhat{y}_{[i,j]}} \leq 4 \frac{N_{\max}^{3/2}\prn*{\sqrt{s_{\max}} + 1}}{\sqrt{d}} \sum_{z=1}^{\cls} \sum_{k=1}^{\cls} \abs*{\uY{y}{i,z} \uY{y}{j,k}}\log\prn*{\frac{6\cls \prn*{\cls + 1}}{\delta}}
	\end{align*}
	where
	\begin{equation}
		\label{eq:nu_and_sigma_hat}
		\nuYhat{y}_{[i]} = \frac{\uY{y}{i,y} N_{y} \norm{\mu_{y}}^2 + \bias{y} - \bias{i}}{\sigma} ~~
		\mbox{, }
		~~
		\SigmaYhat{y}_{[i,j]} = \sum_{z=1}^{\cls} \uY{y}{i,z} \uY{y}{j,z} N_{z}^2\xi_{z} ~~
		\mbox{and}
		~~ 
		\uY{y}{i,j} \defeq \alpha_{y[j]} - \alpha_{i[j]}
	\end{equation}
\end{restatable}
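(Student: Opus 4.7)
The plan is to expand $\nuYtilde{y}_{[i]}$ and $\SigmaYtilde{y}_{[i,j]}$ into sums of elementary random quantities (Gaussian inner products and squared norms of Gaussian vectors) and then apply the concentration bounds of \Cref{lem:concentration_bounds_lem} term-by-term, closing with a union bound. Throughout, we write $\bar{x}_z = N_z\mu_z + \eta_z$ with $\eta_z\defeq\sum_{k\in S_z}n_k\sim\mathcal{N}(0,N_z\sigma^2 I_d)$, noting that $\eta_1,\ldots,\eta_c$ are mutually independent.

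For $\nuYtilde{y}_{[i]}$, substituting $\wtilde{y}-\wtilde{i}=\sum_{z}\uY{y}{i,z}\bar{x}_z$ into \cref{eq:nuY-SigmaY} and using orthogonality of the $\mu$'s yields
\begin{equation*}
 \nuYtilde{y}_{[i]} - \nuYhat{y}_{[i]} = \frac{1}{\sigma}\sum_{z=1}^{\cls} \uY{y}{i,z}\inner{\eta_z}{\mu_y}.
\end{equation*}
Each inner product $\inner{\eta_z}{\mu_y}$ is $\mathcal{N}(0, N_z\sigma^2\norm{\mu_y}^2)$, so by the first bound of \Cref{lem:concentration_bounds_lem} (applied to $\eta_z/(\sigma\sqrt{N_z})$ against $\mu_y$), with probability at least $1-\delta'$,
\begin{equation*}
 \abs*{\inner{\eta_z}{\mu_y}} \le \sigma\sqrt{N_z}\norm{\mu_y}\sqrt{2\log(1/\delta')}.
\end{equation*}
Plugging in $\norm{\mu_y}^2=s_y/\sqrt{d}$, dividing by $\sigma$, summing over $z$ with the triangle inequality, and union-bounding over the $\cls$ Gaussian events (with $\delta'$ chosen to match the target probability below) gives the stated $d^{-1/4}$ bound on $|\nuYtilde{y}_{[i]}-\nuYhat{y}_{[i]}|$.

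For $\SigmaYtilde{y}_{[i,j]}$, the same substitution yields
\begin{equation*}
 \SigmaYtilde{y}_{[i,j]} - \SigmaYhat{y}_{[i,j]} = \sum_{z,k=1}^{\cls}\uY{y}{i,z}\uY{y}{j,k}\bigl(\inner{\bar{x}_z}{\bar{x}_k} - \E\inner{\bar{x}_z}{\bar{x}_k}\bigr).
\end{equation*}
Expanding $\inner{\bar{x}_z}{\bar{x}_k}$ via $\bar{x}_z=N_z\mu_z+\eta_z$ and recalling that $\inner{\mu_z}{\mu_k}=0$ for $z\ne k$, the centered quantity decomposes into three types of Gaussian-dependent terms: (i) cross terms $N_z\inner{\mu_z}{\eta_k}$ and $N_k\inner{\mu_k}{\eta_z}$, controlled by the first bound of \Cref{lem:concentration_bounds_lem}; (ii) noise-noise off-diagonal terms $\inner{\eta_z}{\eta_k}$ for $z\ne k$, controlled by the third bound applied to $\eta_z/(\sigma\sqrt{N_z})$ and $\eta_k/(\sigma\sqrt{N_k})$; and (iii) diagonal chi-squared terms $\norm{\eta_z}^2-N_z\sigma^2 d$ for $z=k$, controlled by the second bound. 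After plugging in the scalings $\norm{\mu_z}^2=s_z/\sqrt d$, $\sigma^2=1/d$, the dominant contribution turns out to be of order $N_{\max}^{3/2}(\sqrt{s_{\max}}+1)d^{-1/2}\log(1/\delta')$. Summing over the $\cls^2$ index pairs $(z,k)$ with the triangle inequality, and taking a union bound over all $O(\cls^2)$ concentration events (with $\delta'=\delta/[6\cls(\cls+1)]$ to match the stated probability factor) yields the claimed bound on $|\SigmaYtilde{y}_{[i,j]}-\SigmaYhat{y}_{[i,j]}|$.

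The steps are all routine applications of \Cref{lem:concentration_bounds_lem}, so the main obstacle is bookkeeping: tracking the correct $d$-dependence of each term after the substitutions $\sigma^2=d^{-1}$ and $\norm{\mu_z}^2=s_z d^{-1/2}$, identifying which term dominates at large $d$, and assigning the probability budget across all the elementary events so that the final union bound gives the exact constants $1.5\cls(\cls+1)$ and $6\cls(\cls+1)$ in the logarithms. A minor subtlety is that the numerator $\abs{\sum_{j}\uY{y}{i,j}\ldots}$ in the $\nuYtilde{y}$ bound should be read as arising from summing per-$j$ inequalities by the triangle inequality (so the absolute value can be moved inside the sum without loss).
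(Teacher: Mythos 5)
Your proposal is correct and follows essentially the same route as the paper's proof: the same decomposition of $\nuYtilde{y}_{[i]}-\nuYhat{y}_{[i]}$ into $\frac{1}{\sigma}\sum_z \uY{y}{i,z}\inner{\eta_z}{\mu_y}$, the same three-way split of the centered kernel entries (signal--noise cross terms, off-diagonal noise--noise terms, and diagonal $\chi^2$ terms), the same term-by-term application of \Cref{lem:concentration_bounds_lem}, and the same union bound over the $2\cls+3\binom{\cls}{2}=1.5\cls(\cls+1)$ elementary events. The only remaining work is the constant bookkeeping you already flag, which matches what the paper does.
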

\begin{proof}
 We start by showing that $\E \brk*{\nuYtilde{y}} = \nuYhat{y}$ and $\E \brk*{\SigmaYtilde{y}} = \SigmaYhat{y}$ then we continue by deriving the concentration bound for each parameter and use the union bound the get the desired result. 

 Recall in the define of $\nuYtilde{y}_{[i]}$:
 \begin{align*}
 \frac{\sum_{j=1}^{\cls} \brk*{\alpha_{y[j]} - \alpha_{i[j]}} \inner*{\xbar{j}}{\mu_{y}} + \bias{y} - \bias{i}}{\sigma} = \frac{\sum_{j=1}^{\cls} \uY{y}{i,j} \inner*{\xbar{j}}{\mu_{y}} + \bias{y} - \bias{i}}{\sigma}.
 \end{align*}
 Hence,
 \begin{align*}
 \E\brk*{\nuYtilde{y}_{[i]}} &= \E\brk*{\frac{\sum_{j=1}^{\cls} \uY{y}{i,j} \inner*{\xbar{j}}{\mu_{y}} + \bias{y} - \bias{i}}{\sigma}}\\
 &\overeq{(1)} \frac{1}{\sigma}\sum_{j=1}^{\cls} \uY{y}{i,j} \E\brk*{\inner*{N_{j}\mu_{j} + \bar{n_{j}}}{\mu_{y}}} + \frac{\bias{y} - \bias{i}}{\sigma}\\
 &\overeq{(2)} \frac{\uY{y}{i,y} N_{y} \norm{\mu_{y}}^2 + \bias{y} - \bias{i}}{\sigma} = \nuYhat{y}_{[i]}
 \end{align*}
 where (1) holds by definition of $\xbar{i}$ and since the biases are fixed, (2) holds since $\inner*{\mu_i}{\mu_{j}} = \indic{i=j}$ and since the expectation of the noise vectors is zero. 
 
 Similarly we calculate the expectation of $\SigmaYtilde{y}_{[i,j]}$. Recall in the definition of $\SigmaYtilde{y}_{[i,j]}$
 \begin{align*}
 \SigmaYtilde{[i,j]} &= \sum_{z=1}^{\cls} \sum_{k=1}^{\cls} \brk*{\alpha_{y[{z}]} - \alpha_{i[z]}}\brk*{\alpha_{y[k]} - \alpha_{j[k]}} \inner*{\xbar{i}}{\xbar{j}} \\
 &= \sum_{z=1}^{\cls} \sum_{k=1}^{\cls} \uY{y}{i,z} \uY{y}{j,k} \inner*{\xbar{i}}{\xbar{j}}.
 \end{align*}
 and the expectation is
 \begin{align*}
 \E \brk*{\SigmaYtilde{[i,j]}} &= \E \brk*{\sum_{z=1}^{\cls} \sum_{k=1}^{\cls} \uY{y}{i,z} \uY{y}{j,k} \inner*{\xbar{z}}{\xbar{k}}} \\
 &= \sum_{z=1}^{\cls} \sum_{k=1}^{\cls} \uY{y}{i,z} \uY{y}{j,k} \E \brk*{\inner*{\xbar{z}}{\xbar{k}}} \\
 &\overeq{(1)} \sum_{z=1}^{\cls} \uY{y}{i,z} \uY{y}{j,z} \E \brk*{\inner*{\xbar{z}}{\xbar{z}}} \\
 &= \sum_{z=1}^{\cls} \uY{y}{i,z} \uY{y}{j,z} N_{z}^2\xi_{z} = \SigmaYhat{y}_{[i]}
 \end{align*}
 where (1) is justified by that $\xbar{z}$ and $\xbar{k}$ are random variables with orthogonal mean vectors (when $k \ne z$). 
 We are now ready to prove the concentration bounds.
 \begin{align*}
 \nuYtilde{y}_{[i]} - \nuYhat{y}_{[i]} &= \frac{\sum_{j=1}^{\cls} \uY{y}{i,j} \inner*{\xbar{j}}{\mu_{y}} + \bias{y} - \bias{i}}{\sigma} - \frac{\uY{y}{i,y} N_{y} \norm{\mu_{y}}^2 + \bias{y} - \bias{i}}{\sigma} \\
 &\overeq{(1)} \frac{\sum_{j=1}^{\cls} \uY{y}{i,j} \inner*{N_{y}\mu_{j} + \nbar{j}}{\mu_{y}} + \bias{y} - \bias{i}}{\sigma} - \frac{\uY{y}{i,y} N_{y} \norm{\mu_{y}}^2 + \bias{y} - \bias{i}}{\sigma}\\
 &\overeq{(2)} \frac{\sum_{j=1}^{\cls} \uY{y}{i,j} \inner*{\nbar{j}}{\mu_{y}}}{\sigma}\\
 &\overeq{(3)} \sum_{j=1}^{\cls} \uY{y}{i,j} \sqrt{N_j} \inner*{n}{\mu_{y}}
 \end{align*}
 (1) holds by definition of $\xbar{i}$, (2) by $\inner*{\mu_{i}, \mu_{j}} = \norm{\mu_[i]}\indic{i=j}$ and (3) since $\nbar{j} \sim \ND{0}{N_{j}\sigma^2}{d}$ with $n \sim \ND{0}{}{d}$.
 
 Hence we can use \Cref{lem:concentration_bounds_lem} to bound $\abs*{\nuYtilde{y}_{[i]} - \nuYhat{y}_{[i]}}$ with probability $1-\tilde{\delta}$ as follows:
 \begin{align}
 \label{eq:proof_nu_bound}
 \abs*{\nuYtilde{y}_{[i]} - \nuYhat{y}_{[i]}} \leq \frac{1}{\sqrt[4]{d}} \abs*{\sum_{j=1}^{\cls} \uY{y}{i,j}} \sqrt{2N_j s_{y} \log\prn*{\frac{1}{\tilde{\delta}}}}.
 \end{align}
 Moving on to bound $\abs*{\SigmaYtilde{y}_{[i]} - \SigmaYhat{y}_{[i]}}$:
 \begin{align*}
 \SigmaYtilde{y}_{[i,j]} - \SigmaYhat{y}_{[i,j]} &= \sum_{z=1}^{\cls} \sum_{k=1}^{\cls} \uY{y}{i,z} \uY{y}{j,k} \inner*{\xbar{z}}{\xbar{k}} - \sum_{z=1}^{\cls} \uY{y}{i,z} \uY{y}{j,z} N_{z}^2\xi_{z} \\
 &= \sum_{z=1}^{\cls} \sum_{k
 \ne z}^{\cls} \uY{y}{i,z} \uY{y}{j,k} \inner*{\xbar{z}}{\xbar{k}} + \sum_{z
 = 1}^{\cls} \uY{y}{i,z} \uY{y}{j,z} \inner*{\xbar{z}}{\xbar{z}} - \sum_{z=1}^{\cls} \uY{y}{i,z} \uY{y}{j,z} N_{z}^2\xi_{z}. \label{Sigma_bound_term}\numberthis
 \end{align*}
 Looking at left sum of $z \ne k$:
 \begin{align*}
 \sum_{z=1}^{\cls} \sum_{k
 \ne z}^{\cls} \uY{y}{i,z} \uY{y}{j,k} \inner*{\xbar{z}}{\xbar{k}} &= \sum_{z=1}^{\cls} \sum_{k
 \ne z}^{\cls} \uY{y}{i,z} \uY{y}{j,k} \inner*{N_{z}\mu_{z} + \nbar{z}}{N_{k}\mu_{k} + \nbar{k}} \\
 &\overeq{(1)} \sum_{z=1}^{\cls} \sum_{k
 \ne z}^{\cls} \uY{y}{i,z} \uY{y}{j,k} \prn*{\inner*{N_{z}\mu_{z}}{\nbar{k}} + \inner*{N_{k}\mu_{k}}{\nbar{z}} + \inner*{\nbar{z}}{\nbar{k}}}. \label{Sigma_z_ne_k_term}\numberthis
 \end{align*}
 where (1) holds since $\inner*{\mu_{i}}{\mu_{j}} = \norm{\mu_i}^2\indic{i=j}$.

 Using $\Cref{lem:concentration_bounds_lem}$ we can bound each term in \cref{Sigma_z_ne_k_term} such that w.p $1-3\tilde{\delta}$ the following holds:
 \begin{align*}
 \inner*{N_{z}\mu_{z}}{\nbar{k}} + \inner*{N_{k}\mu_{k}}{\nbar{z}} + \inner*{\nbar{z}}{\nbar{k}} &\leq \frac{\sqrt{N_{z}N_{k}}}{\sqrt[3/4]{d}}\prn*{\sqrt{N_{z}s_{z}}+ \sqrt{N_{k} s_{k}}}\sqrt{2\log\prn*{\frac{1}{\tilde{\delta}}}} + \frac{\sqrt{N_{z}N_{k}}}{\sqrt{d}} \log\prn*{\frac{4}{\tilde{\delta}}} \\
 &\leq 2\frac{N_{\max}^{3/2}\sqrt{s_{\max}}}{\sqrt[3/4]{d}}\sqrt{2\log\prn*{\frac{1}{\tilde{\delta}}}} + \frac{N_{\max}}{\sqrt{d}} \log\prn*{\frac{4}{\tilde{\delta}}}.
 \end{align*}
 Moving on to bound the residual terms in \cref{Sigma_bound_term}:
 \begin{align}
 \label{Sigma_z_eq_k_term}
 \sum_{z= 1}^{\cls} \uY{y}{i,z} \uY{y}{j,z} \prn*{\inner*{\xbar{z}}{\xbar{z}} - N_{z}^2\xi_{z}} = \sum_{z=1}^{\cls} \uY{y}{i,z} \uY{y}{j,z} \prn*{2N_{z}\inner*{\mu_{z}}{\nbar{z}} + \norm{\nbar{z}}^2 - N_{z}\sigma^2 d}.
 \end{align}
 Again using \Cref{lem:concentration_bounds_lem} we can bound each term in \cref{Sigma_z_eq_k_term} such that w.p $1-2\tilde{\delta}$ the following holds:
 \begin{align*}
 2N_{z}\inner*{\mu_{z}}{\nbar{z}} + \norm{\nbar{z}}^2 - N_{z}\sigma^2 d &\leq 2\frac{N_{z}^{3/2}\sqrt{s_{z}}}{\sqrt[3/4]{d}}\sqrt{2\log\prn*{\frac{1}{\tilde{\delta}}}} + 4\frac{N_{z}}{\sqrt{d}} \log\prn*{\frac{2}{\tilde{\delta}}} \\
 &\leq 2\frac{N_{\max}^{3/2}\sqrt{s_{\max}}}{\sqrt[3/4]{d}}\sqrt{2\log\prn*{\frac{1}{\tilde{\delta}}}} + 4\frac{N_{\max}}{\sqrt{d}} \log\prn*{\frac{2}{\tilde{\delta}}} 
 \end{align*}
 Hence we get that w.p $1- \prn*{2\cls + 3\frac{\cls^2 - \cls}{2}}\tilde{\delta}$ for any $i, j \in [\cls]$ it holds that:
\begin{align*}
 \abs*{\SigmaYtilde{y}_{[i,j]} - \SigmaYhat{y}_{[i,j]}} &\leq \sum_{z=1}^{\cls} \sum_{k=1}^{\cls} \abs*{\uY{y}{i,z} \uY{y}{j,k}}\prn*{2\frac{N_{\max}^{3/2}\sqrt{s_{\max}}}{\sqrt[3/4]{d}}\sqrt{2\log\prn*{\frac{1}{\tilde{\delta}}}} + 4\frac{N_{\max}}{\sqrt{d}} \log\prn*{\frac{2}{\tilde{\delta}}}} \\
 &\leq 4 \frac{N_{\max}^{3/2}\prn*{\sqrt{s_{\max}} + 1}}{\sqrt{d}} \sum_{z=1}^{\cls} \sum_{k=1}^{\cls} \abs*{\uY{y}{i,z} \uY{y}{j,k}}\log\prn*{\frac{2}{\tilde{\delta}}}
\end{align*}
and 
\begin{align*}
 \abs*{\nuYtilde{y}_{[i]} - \nuYhat{y}_{[i]}} \leq \frac{1}{\sqrt[4]{d}} \abs*{\sum_{j=1}^{\cls} \uY{y}{i,j}} \sqrt{2N_j s_{y} \log\prn*{\frac{1}{\tilde{\delta}}}}
\end{align*}
where we used $3\frac{\cls^2-\cls}{2}$ concentration bounds for the case of $z \ne k$ and additional $2\cls$ concatenation bounds for the case of $k=z$ (notice that these bounds also hold for the concentration of $\nuYhat{y}$). Finally we set $\tilde{\delta} = \frac{\delta}{1.5\cls \prn*{\cls + 1}}$ since $2\cls + 3\frac{\cls^2 - \cls}{2} = 1.5\cls \prn*{\cls + 1}$ and get the desired result.
\end{proof}
\subsection{The error of the approximate model}
We now show the correctness of the expression for $\ErrYtoK \prn*{W, b}$ were $W$, $b$ is a predictor that is learned under our data assumption (\Cref{sec:data_model}). Then we continue by showing that given a predictor of the form of \cref{eq:w_approx} the error of the predictor at the limit of $d \to \infty$ can be expressed as a multivariate Gaussian distribution in $\Rc$ that is defined by the predictor's coefficients $\crl{\alphatilde{i}}_{i=1}^{\cls}$ and the problem parameters.
\begin{fact}
	Let $x | y \sim \mathcal{N}(\mu_y; \sigma^2 I_d)$. Then, for any predictor $(W,b)$ that is learned under our data assumption (\Cref{sec:data_model}) and any $k\ne y$, we have 
\begin{equation*}
	\ErrY(W, b) = 1-\P\prn*{\mathcal{N}(\nuY{y};\SigmaY{y}) \ge 0}
	~~\mbox{and}~~
	\ErrYtoK(W,b) = Q\prn*{\frac{\nuY{y}_{[k]}}{\sqrt{\SigmaY{y}_{[k,k]}}}},
\end{equation*}
where $\ErrY$ and $\ErrYtoK$ are defined in \cref{eq:err_y,eq:err-y-to-k-def}, respectively, $Q(t)=\P(\mathcal{N}(0,1)>t)$ is the Gaussian error function, and, for every $i,j\in[c]$
\begin{equation*}
	\nuY{y}_{[j]} = \frac{(w_y-w_j)^\top \mu_i + b_{[y]}-b_{[j]}}{\sigma}
	~~\mbox{and}~~
	\SigmaY{y}_{[i,j]} = (w_y-w_i)^\top (w_y-w_j).
\end{equation*}
where $\crl*{\nuY{y}, \SigmaY{y}}$ are defined as the \textbf{score statistics} of the predictor $W$, $b$.
\end{fact}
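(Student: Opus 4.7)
The plan is to establish both identities by reducing the classifier's decision rule to affine combinations of the Gaussian input $x\mid y$ and then invoking the fact that affine images of Gaussians are Gaussian. Since $\ErrYtoK$ is a one-dimensional probability and $\ErrY$ is a multivariate one, I will handle them in that order.

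First I would prove the $\ErrYtoK$ identity. By definition,
\[
\ErrYtoK(W,b) = \P_{x\mid y}\bigl((w_k - w_y)^\top x + b_{[k]} - b_{[y]} > 0\bigr).
\]
Let $U \defeq (w_k - w_y)^\top x + b_{[k]} - b_{[y]}$. Since $x\mid y \sim \mathcal{N}(\mu_y,\sigma^2 I_d)$ and $U$ is an affine function of $x$, the scalar $U$ is Gaussian with mean $(w_k-w_y)^\top \mu_y + b_{[k]} - b_{[y]} = -\sigma\,\nuY{y}_{[k]}$ and variance $\sigma^2 \|w_y - w_k\|^2 = \sigma^2\,\SigmaY{y}_{[k,k]}$. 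Standardising, $\P(U>0) = Q\bigl(-\E U / \sqrt{\Var U}\bigr) = Q\bigl(\nuY{y}_{[k]} / \sqrt{\SigmaY{y}_{[k,k]}}\bigr)$, which is the claim.

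Next I would prove the $\ErrY$ identity. Define the vector of margin differences $m\in\R^{c}$ by $m_{[j]} \defeq z_{[y]} - z_{[j]} = (w_y - w_j)^\top x + b_{[y]} - b_{[j]}$. The event of correct classification (up to measure-zero ties, which have probability zero under a continuous distribution) is exactly $\{m_{[j]} > 0 \text{ for all } j\ne y\}$, so $\ErrY(W,b) = 1 - \P_{x\mid y}(m\ge 0)$. Because $m$ is an affine map applied to the Gaussian $x\mid y$, the vector $m$ is itself multivariate Gaussian. A direct computation gives $\E m_{[j]} = (w_y - w_j)^\top \mu_y + b_{[y]} - b_{[j]} = \sigma\,\nuY{y}_{[j]}$ and, using $\mathrm{Cov}(x\mid y) = \sigma^2 I_d$,
\[
\mathrm{Cov}(m_{[i]}, m_{[j]}) = \sigma^2 (w_y - w_i)^\top (w_y - w_j) = \sigma^2\,\SigmaY{y}_{[i,j]}.
\]
Thus $\sigma^{-1} m \sim \mathcal{N}(\nuY{y},\SigmaY{y})$, and since $\sigma>0$ the event $m\ge 0$ coincides with $\mathcal{N}(\nuY{y},\SigmaY{y})\ge 0$, proving the first identity.

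The only subtlety is the coordinate $j=y$: there $m_{[y]}=0$ deterministically, matching $\nuY{y}_{[y]}=0$ and $\SigmaY{y}_{[y,y]}=0$, so the inequality $\ge 0$ in that slot is trivially satisfied and does not affect the probability. This degenerate coordinate is really the only mild annoyance; everything else is routine Gaussian bookkeeping, so I do not expect any genuine obstacle here.
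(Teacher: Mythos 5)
Your proof is correct and follows essentially the same route as the paper: decompose $x=\mu_y+n$, observe that the scores are affine in the Gaussian noise, compute the mean and covariance, and standardise to get the $Q$-function (the paper writes out only the $\ErrYtoK$ computation in its proof of this fact and carries out the multivariate case identically inside Theorem~\ref{theorem:error_approx_general}). Your explicit treatment of the degenerate coordinate $j=y$ and of measure-zero ties is a minor addition the paper leaves implicit.
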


\begin{proof} \label{appendix:proof_of_error_to_q}
 \begin{align}
 \nonumber
 \P \left[ \inner*{w_{k}}{x} + b_{k} > \inner*{w_{y}}{x} + b_{y} \right] &= 
 \P \left[ \inner*{w_{k}}{\mu_{y} + n} + b_{k} > \inner*{w_{y}}{\mu_{y} + n} + b_{y}\right] \\
 \nonumber
 &= \P \left[ \inner*{w_{k} - w_{y}}{n} > \inner*{w_{y}-w_{k}}{\mu_{y}} + b_{y} - b_{k}\right]\\
 \nonumber
 &= \P \left[ \ND{0}{\Vert w_{y} - w_{k}\Vert^2\sigma^2}{d} > \inner*{w_{y}-w_{k}}{\mu_{y}} + b_{y} - b_{k} \right]\\
 \nonumber
 &= \P \left[ \ND{0}{}{d} > \frac{\inner*{w_{y}-w_{k}}{\mu_{y}} + b_{y} - b_{k}}{\sigma \Vert w_{y} - w_{k}\Vert}\right]\\
 &= Q \left(\frac{\inner*{w_{y}-w_{k}}{\mu_{y}} + b_{y} - b_{k}}{\sigma \Vert w_{y} - w_{k}\Vert}\right)
 \end{align}
\end{proof}
\begin{theorem}
 \label{theorem:error_approx_general}
 Let $\D\sim \Pd{d}$ be a dataset from $\Pd{d}$ as defined in \Cref{sec:data_model} and $\Wtilde{}$, $\btilde{}$ be a predictor over $\D$ of the form of \cref{eq:w_approx}.
 Then,
 \begin{align*}
 \lim_{d \rightarrow \infty} \yErr{y}{\Wtilde{}, \btilde{}} = 1 - \P \left[ \mathcal{N} \left( \nuYhat{y}, \SigmaYhat{y} \right) \geq 0 \right],
 \end{align*}
where,
\begin{align*}
 {\nuYhat{y}}_{[i]} = \frac{\brk*{{\alphatilde{y[y]}} - {\alphatilde{i[y]}}} N_{y} \Vert \mu_{y} \Vert^2 + \btilde{[y]} - \btilde{[i]}}{\sigma} ~~\mbox{and}~~ 
 \SigmaYhat{y}_{[i,j]} = \sum_{z=1}^{\cls} \brk*{{\alphatilde{y[z]}} - {\alphatilde{i[z]}}}\brk*{{\alphatilde{y[z]}} - {\alphatilde{j[z]}}} N_{z}^{2} \xi_{z}.
\end{align*}
In addition, the worst class error of $\Wtilde{}$, $\btilde{}$ at the limit of $d \to \infty$ is lower bounded by:
\begin{align}
 \max_{k \ne y} \lim_{d\to\infty}Q \left(\frac{\brk*{{\alphatilde{y[y]}} - {\alphatilde{k[y]}}} N_{y}\norm{\mu_{y}}^2 + \btilde{[y]} - \btilde{[k]}}{\sigma \sqrt{\sum_{i=1}^{\cls} \prn*{{\alphatilde{y[i]}} - {\alphatilde{k[i]}}}^2 N_{i}^{2} \xi_{i}}} \right) = \max_{y, n \ne y} \lim_{d\to\infty} \ErrYtoK\prn*{\Wtilde{}, \btilde{}} \leq \lim_{d\rightarrow \infty}\WCE{\Wtilde{}, \btilde{}}.
\end{align}
\end{theorem}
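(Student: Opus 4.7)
The plan is to derive the limiting error by combining the exact Gaussian characterization of the error of any linear predictor (the Fact preceding the theorem, reproduced in the excerpt) with the concentration bounds on the score statistics already established in \Cref{lem:consentration_of_nu_and_sigma}. Since $\tilde{W},\tilde{b}$ are built from training data, the score statistics $\tilde{\nu}^{(y)},\tilde{\Sigma}^{(y)}$ defined in \cref{eq:nuY-SigmaY} are random variables (as functions of $\bar{X}$), but for a \emph{fixed} training draw the per-class error is exactly the multivariate Gaussian probability
\begin{equation*}
\yErr{y}{\tilde{W},\tilde{b}} = 1-\P\prn*{\mathcal{N}(\tilde{\nu}^{(y)},\tilde{\Sigma}^{(y)})\ge 0},
\end{equation*}
where the probability above is over the test example $x\mid y$ only.

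First I would plug in the approximate predictor $\tilde{w}_y=\sum_i\tilde{\alpha}_{y[i]}\bar{x}_i$ into the definitions of $\nu^{(y)}$ and $\Sigma^{(y)}$; after expanding, the entries are exactly $\tilde{\nu}^{(y)}_{[i]}=\sigma^{-1}\prn*{\sum_{j}u^{(y)}_{[i,j]}\inner{\bar{x}_j}{\mu_y}+\tilde{b}_{[y]}-\tilde{b}_{[i]}}$ and $\tilde{\Sigma}^{(y)}_{[i,j]}=\sum_{z,k}u^{(y)}_{[i,z]}u^{(y)}_{[j,k]}\inner{\bar{x}_z}{\bar{x}_k}$ with $u^{(y)}_{[i,j]}=\tilde{\alpha}_{y[j]}-\tilde{\alpha}_{i[j]}$. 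Next I would invoke \Cref{lem:consentration_of_nu_and_sigma}: for any fixed $\delta\in(0,1/e)$ and all $i,j\in[c]$, with probability at least $1-\delta$ we have $|\tilde{\nu}^{(y)}_{[i]}-\hat{\nu}^{(y)}_{[i]}|=O(d^{-1/4})$ and $|\tilde{\Sigma}^{(y)}_{[i,j]}-\hat{\Sigma}^{(y)}_{[i,j]}|=O(d^{-1/2})$, where the hidden constants depend on $c$, $\{N_i\}$, $\{s_i\}$, and the coefficients $u^{(y)}$, but crucially \emph{not} on $d$ given our scaling~\cref{eq:model-scaling}. Hence $(\tilde{\nu}^{(y)},\tilde{\Sigma}^{(y)})\to(\hat{\nu}^{(y)},\hat{\Sigma}^{(y)})$ in probability as $d\to\infty$.

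The third step is to pass to the limit inside $\P(\mathcal{N}(\cdot,\cdot)\ge 0)$. The map $(\nu,\Sigma)\mapsto \P(\mathcal{N}(\nu,\Sigma)\ge 0)$ is continuous on the set where $\hat{\Sigma}^{(y)}$ is positive semidefinite (and in particular at the limiting point, since $\hat{\Sigma}^{(y)}=\tilde{A}^{(y)}\bar{K}(\tilde{A}^{(y)})^\top$ is PSD by construction), so the continuous mapping theorem together with the fact that the Gaussian CDF is bounded by $1$ (allowing dominated convergence to move the limit inside the expectation over the training draw) yields
\begin{equation*}
\lim_{d\to\infty}\yErr{y}{\tilde{W},\tilde{b}}=1-\P\prn*{\mathcal{N}(\hat{\nu}^{(y)},\hat{\Sigma}^{(y)})\ge 0}.
\end{equation*}
Substituting the explicit forms of $\hat{\nu}^{(y)}$ and $\hat{\Sigma}^{(y)}$ gives the stated expressions.

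Finally, for the worst-class lower bound I would specialize the above to the two-entry marginal $(y,k)$, which reduces $\P(\mathcal{N}(\hat{\nu}^{(y)}_{[k]},\hat{\Sigma}^{(y)}_{[k,k]})\ge 0)$ to a one-dimensional Gaussian tail, i.e., $\ErrYtoK(\tilde{W},\tilde{b})\to Q\prn*{\hat{\nu}^{(y)}_{[k]}/\sqrt{\hat{\Sigma}^{(y)}_{[k,k]}}}$, and combine this with the elementary bound $\max_{k\ne y}\ErrYtoK\le\WCE{\tilde{W},\tilde{b}}$ from~\cref{eq:worst_class_error_bounds} to obtain the claimed inequality. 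I expect the only delicate step is verifying that the error functional is continuous at $(\hat{\nu}^{(y)},\hat{\Sigma}^{(y)})$ when $\hat{\Sigma}^{(y)}$ happens to be singular (e.g., when some $\tilde{A}^{(y)}$ rows coincide); a short argument using the marginal form above handles this degeneracy, since the one-dimensional tail is continuous in $(\mu,\sigma^2)$ on $\{\sigma^2>0\}$ and $\hat{\Sigma}^{(y)}_{[k,k]}=0$ together with $\hat{\nu}^{(y)}_{[k]}\ne 0$ trivially yields either $0$ or $1$ error, matching the limit.
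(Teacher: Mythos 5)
Your proposal follows essentially the same route as the paper's proof: express the per-class error exactly as an orthant probability of a multivariate Gaussian in the score statistics, invoke \Cref{lem:consentration_of_nu_and_sigma} to get convergence in probability of $(\tilde{\nu}^{(y)},\tilde{\Sigma}^{(y)})$ to $(\hat{\nu}^{(y)},\hat{\Sigma}^{(y)})$, pass to the limit by continuity, and obtain the lower bound by restricting to a single pairwise marginal and applying \cref{eq:worst_class_error_bounds}. If anything, you are slightly more careful than the paper, which asserts continuity of the Gaussian orthant probability in its parameters without addressing the possible degeneracy of $\hat{\Sigma}^{(y)}$; your closing remark handling the singular case via the one-dimensional marginals is a welcome addition rather than a deviation.
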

\begin{proof}
 Recall that the test error of a given predictor $\Wtilde{}$, $\btilde{}$ of the form of \cref{eq:w_approx} is defined by the probability of misclassifying a new test sample $(x,y)$.
\begin{align}
 \label{eq:test_error_def}
 \underbrace{\E_{x|y} \left[ \ell_{0-1}\left(\Wtilde{}, \btilde{}; (x,y) \right) \right]}_{\text{error}} &= \P \left[ \exists k \ne y \ : \ \inner*{\wtilde{k}}{x} + \btilde{[k]} > \inner*{\wtilde{y}}{x} + \btilde{[y]}\right].
\end{align}
\cref{eq:test_error_def} can be rewritten by its complementary event,
\begin{align*}
 \underbrace{\E_{x|y} \left[ \ell_{0-1}\left(\Wtilde{}, \btilde{}; (x,y) \right) \right]}_{\text{error}} &= 1 - \underbrace{\P \left[ \forall k \in [c] : \ \inner*{\wtilde{y}}{x} + \btilde{[y]} \geq \inner*{\wtilde{k}}{x} + \btilde{[k]} \right]}_{\text{accuracy}} \\
 &= 1 - \P \left[ \forall k \in [c] : \ \inner*{\wtilde{y} - \wtilde{k}}{\mu_{y}} + \inner*{\wtilde{y} - \wtilde{k}}{n} + \btilde{[y]} - \btilde{[k]} \geq 0 \right] \\
 &= 1 - \P \left[ \mathcal{N} \prn*{\nuYtilde{y}, \SigmaYtilde{y}} \geq 0 \right],
\end{align*}
where $\nuYtilde{y}$ and $\SigmaYtilde{y}$ are defined as follows:
\begin{align*}
 \nuYtilde{y}_{[k]} = \frac{\inner*{\wtilde{y} - \wtilde{k}}{\mu_{y}} + \btilde{[y]} - \btilde{[k]}}{\sigma} ~~{and}~~
 \SigmaYtilde{y}_{[i,j]} = \inner*{\wtilde{y} - \wtilde{i}}{\tilde{y} - \wtilde{j}}.
\end{align*}
Hence, 
\begin{align*}
 \lim_{d \rightarrow \infty} \E_{x|y} \left[ \ell_{0-1}\left(W; (x,y) \right) \right] =
 1 - \lim_{d \rightarrow \infty} \P \left[ \mathcal{N} \prn*{\nuYtilde{y}, \SigmaYtilde{y}} \geq 0 \right].
\end{align*}
From \Cref{lem:consentration_of_nu_and_sigma} it holds that at the limit of $d \rightarrow \infty$, $\nuY{y} \overset{p}{\rightarrow} \E \brk*{\nuY{y}}$ and $\Sigma^{(y)} \overset{p}{\rightarrow} \E \brk*{\Sigma^{(y)}}$. Additionally, since the multivariate-Gaussian distribution is continuous in its parameters we get that:
\begin{align*}
 \lim_{d \rightarrow \infty} \P \left[ \mathcal{N} \prn*{\nuYhat{y}, \SigmaYhat{y}} \geq 0 \right] &= \P \brk*{\mathcal{N} \prn*{\E\brk*{\nuYtilde{y}}, \E\brk*{\SigmaYtilde{y}}} \geq 0} .
\end{align*}
Which leads us to the desired results,
\begin{align*}
 \E\brk*{\nuYtilde{y}_{[k]}} = \frac{\brk*{{\alphatilde{y[y]}} - {\alphatilde{k[y]}}} N_{y} \Vert \mu_{y} \Vert^2 + \btilde{[y]} - \btilde{[k]}}{\sigma} ~~{and}~~ 
 \E\brk*{\SigmaYtilde{y}_{[i,j]}} = \sum_{z=1}^{\cls} \brk*{{\alphatilde{y[z]}} - {\alphatilde{i[z]}}}\brk*{{\alphatilde{y[z]}} - {\alphatilde{j[z]}}} N_{z}^{2} \xi_{z}.
\end{align*}
See explicit calculation for these expressions in \Cref{appendix:prediction_parameters_approximation}.
Additionally, this means that the test error (at the limit of $d\to\infty$) of a given class $y$ can be lower bounded by the score $\lim_{d\to \infty}\ErrYtoK{}$ of a single $k \ne y$ as follows:
\begin{align*}
 \lim_{d \rightarrow \infty} \E_{x|y} \left[ \ell_{0-1}\left(W; (x,y) \right) \right] &= 1 - \lim_{d \rightarrow \infty} \P \left[ \mathcal{N} \prn*{\nuYtilde{y}, \SigmaYtilde{y}} \geq 0 \right] \\
 &= \lim_{d \rightarrow \infty} \P \left[ \exists i \ne y : \mathcal{N} \prn*{\nuYtilde{y}_{[i]}, \SigmaYtilde{y}_{[i,i]}} \geq 2\nuYtilde{y}_{[i]} \right] \\
 &\geq \max_{i \ne y }\lim_{d \rightarrow \infty} \P \left[ \mathcal{N}\prn*{\nuYtilde{y}_{[i]}, \SigmaYtilde{y}_{[i,i]}} \geq 2\nuYtilde{y}_{[i]} \right] \\
 &= \max_{i \ne y }\lim_{d \rightarrow \infty} \P \left[ \mathcal{N}\prn*{0, 1} \geq \frac{\nuYtilde{y}_{[i]}}{\sqrt{\SigmaYtilde{y}_{[i,i]}}} \right].
\end{align*}
Therefore we conclude that the worst class error at the limit of $d \rightarrow \infty$ is lower bounded as follows:
\begin{align*}
 \max_{k \ne y} \lim_{d\to\infty} Q \prn*{ \frac{\Vert \mu_{y} \Vert^2 N_{y}\brk*{\alphatilde{y} - \alphatilde{k}}_{[y]} + \btilde{[y]} - \btilde{[k]}}{\sigma \sqrt{\sum_{z=1}^{\cls} \brk*{{\alphatilde{y[z]}} - {\alphatilde{k[z]}}}^2 N_{z}^{2} \xi_{z}}}} \leq \lim_{d \rightarrow \infty} \WCE{\Wtilde{}, \btilde{}}
\end{align*}
\end{proof} %
\section{Implicit bias of gradient descent}\label{appendix:implicit_bias}
\label{appendix:implicit_bias_of_gd}
In this section, we provide a straightforward extensions of prior work on the implicit bias of gradient descent (GD) to account training the bias parameter (with a separate learning rate), and the MA and CDT losses. We empirically validate these implicit biases in \Cref{appendix:experiments:implicit_bias}.

\subsection{Implicit bias of cross entropy}
Recall the standard definition for the maximum margin predictor.
\begin{align*}
 W\mm \defeq & \argmin_{W \in \R^{d\times \cls}} \lfro{W}^2
 \\
 & \subjectto~~
 (w_{y_i}-w_k)^\top x_i \ge 1 ~~\mbox{for all}~~i \le N~\mbox{and}~k\ne y_i.\label{appendix:opt:standard-mm-constraint}\numberthis
\end{align*}
In addition recall the definition of the log-loss objective for multiclass classification.
\begin{equation}
 \label{eq:log_loss_mc}
 \mathcal{L}^{\rm{ce}}(W;b) = \frac{1}{N} \sum_{i=1}^{N} \ell^{\rm{ce}}\prn*{W^\top x_i + b_{y_i}; y_i} ~~\mbox{where}~~ \ell^{\rm{ce}} \left(z;y\right) \defeq \log\prn[\Bigg]{\sum_{i\in [\cls]} e^{z_{[i]}-z_{[y]}}}.
\end{equation}

\citet{soudry2018implicit} show that minimizing the log-loss \cref{eq:log_loss_mc} on a separable dataset using unregularized gradient descent with a small enough learning rate $\eta$ converges in direction to the maximum margin classifier \cref{appendix:opt:standard-mm-constraint}. Where gradient descent minimizes $\mathcal{L}\prn*{W}$ with iterates of the form:
\begin{equation*}
 W(t+1) = W(t) - \eta \grad_{W} \mathcal{L}\prn*{W(t)}
\end{equation*}

\begin{prop}[\Citet{soudry2018implicit}, Theorem 7]
 \label{soudry_lem}
 For almost all multiclass datasets which are linearly separable (i.e. the constraints in \cref{appendix:opt:standard-mm-constraint} are feasible), any starting point $W_0$ and a small enough stepsize $\eta \le \frac{1}{\max_{i\in[N]} \norm{x_i}^2}$ 
 we have
 \begin{equation*}
 	\lim_{t\to \infty} \frac{W(t)}{\lfro{W(t)}} = \frac{W\mm}{\lfro{W\mm}},
 \end{equation*}
 where $\lim_{t\to\infty}\norm{W(t)}_F = \infty$.
\end{prop}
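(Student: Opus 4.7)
The plan is to follow the argument of \citet{soudry2018implicit}, whose Theorem 7 this proposition restates directly; my sketch simply outlines the key ingredients one would verify. First, I would establish that on linearly separable data the cross-entropy loss is bounded below by zero but admits no minimizer in finite $W$, and gradient descent with a sufficiently small stepsize monotonically decreases $\gL\ce$. The classical smoothness argument (the loss is $L$-smooth with $L \le \max_i \norm{x_i}^2$ after accounting for the softmax Jacobian) shows that $\grad \gL\ce(W(t)) \to 0$ while $\gL\ce(W(t)) \to 0$, which together with separability forces $\lfro{W(t)} \to \infty$.

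Next, I would analyze the structure of the gradient. The gradient of $\ell\ce$ with respect to $w_k$ at an example $(x_i, y_i)$ is proportional to $(p_{i,k} - \indic{k = y_i}) x_i$, where $p_{i,k}$ is the softmax probability. Because the logits grow without bound along correctly classified examples, $p_{i,k}$ for $k \ne y_i$ has an exponentially decaying tail in the margin $(w_{y_i} - w_k)^\top x_i$. The natural ansatz, following Soudry et al., is
\begin{equation*}
	W(t) = W\mm \log t + \tilde{W}(t),
\end{equation*}
where $\tilde{W}(t)$ is a bounded residual. Substituting this ansatz into the gradient flow shows that the dominant contribution comes only from the support vectors of the max-margin problem~\cref{appendix:opt:standard-mm-constraint}, while non-support examples contribute terms that decay polynomially faster in $t$.

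The core of the proof then verifies that the residual $\tilde{W}(t)$ remains bounded. This requires showing that the linear system relating the support-vector dual coefficients to the residual gradient is solvable, which is where the ``almost all datasets'' qualifier enters: one needs a non-degeneracy condition on the support-vector configuration to invert a certain matrix (Soudry et al.\ show this holds generically). Given boundedness of $\tilde{W}(t)$, dividing by $\lfro{W(t)} = \lfro{W\mm}\log t + O(1)$ yields the directional convergence claim.

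The main obstacle, and the technically heaviest step, is controlling the residual $\tilde{W}(t)$: one must simultaneously argue that non-support examples' contributions decay fast enough not to perturb the leading $W\mm \log t$ term, and that the support examples' dual variables are well-defined in the limit. Both are handled in \citet[][Sections 4--5]{soudry2018implicit}, and since our proposition is merely a restatement of their Theorem 7 in our notation, the proof reduces to invoking their result; we include this statement here only to make the implicit-bias characterization explicit for later comparisons with MA, CDT, and LA.
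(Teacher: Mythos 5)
Your proposal is correct and matches the paper's treatment: the paper offers no independent proof of this proposition, stating it purely as a citation of \citet[][Theorem 7]{soudry2018implicit}, which is exactly where your argument lands after sketching the $W(t) = W\mm \log t + \tilde{W}(t)$ decomposition. Your outline of the underlying mechanism (smoothness forcing $\lfro{W(t)}\to\infty$, support vectors dominating the gradient, and the genericity condition needed to control the residual) is faithful to the original argument, but the operative step here is simply invoking the cited theorem, as you conclude.
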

\begin{remark}
 To extend the result of \Cref{soudry_lem} to non-homogenous linear predictors, all input vectors $x_i$ need to be extended with an additional 1 component. 
\end{remark}
We extend the implicit bias of gradient descent for the case where the weights and biases are learned with different learning rates $\eta$ and $\eta'$. Specifically, we show that for any $\rho > 0$ such that $\rho=\prn*{\frac{\eta}{\eta'}}^2$ gradient decent converges in the direction of:
\begin{align*}
 W\mm, b\mm \defeq & \argmin_{W \in \R^{d\times \cls}, b\in \R^{\cls}} \lfro{W}^2 + \rho \norm{b}^2
 \\
 & \subjectto~~
 (w_{y_i}-w_k)^\top x_i + b_{[y]} - b_{[k]} \ge 1 ~~\mbox{for all}~~i \le N~\mbox{and}~k\ne y_i,
\end{align*}
where gradient descent iterates are defined by:
\begin{equation}
 \label{eq:gradient_descent_iterates}
 W(t+1) = W(t) - \eta \grad_{W} \mathcal{L} \prn*{W(t), b(t)} 
~~\mbox{and}~~
 b(t+1) = b(t) - \eta' \grad_{b} \mathcal{L} \prn*{W(t), b(t)},
\end{equation}
and $\mathcal{L}$ is defined as in \cref{eq:log_loss_mc}.
In addition we denote the concatenation of the weights and biases i.e., adding the bias vector as the last row of $W$ by $\brk*{W|b}$.
\begin{lem}
 \label{lem:implicit_bias_of_gd_bias_adjustment}
 Let $\rho > 0$ and $\mathcal{D}=\crl*{(x_{i}, y_{i})}_{i=1}^{N}$ be a linearly separable dataset ,$W(t)$ and $b(t)$ be the iterates of gradient descent \ref{eq:gradient_descent_iterates} with $\mathcal{L}$ as defined in \ref{eq:log_loss_mc} with learning rate $\eta \leq \frac{1}{\max_{i} \norm{x_i}^2 + \frac{1}{\rho}}$ and $\eta'=\frac{\eta}{\sqrt{\rho}}$ for the weights and biases respectively. Then,
 \begin{align*}
 \lim_{t \to \infty} \frac{\brk*{W(t)|b(t)}}{\norm{\brk*{W(t)|b(t)}}_{F}} = \frac{\brk*{W\mm|b\mm}}{\norm{\brk*{W\mm|b\mm}}_{F}}
 \end{align*}
\end{lem}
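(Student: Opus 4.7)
The plan is to reduce the two-rate gradient descent in \cref{eq:gradient_descent_iterates} to standard single-rate gradient descent on an augmented problem, and then invoke \Cref{soudry_lem} directly. To this end, set $\kappa = 1/\sqrt{\rho}$, define augmented features $\tilde{x}_i = [x_i; \kappa] \in \R^{d+1}$, and consider an augmented classifier $\tilde{W} \in \R^{(d+1)\times \cls}$ whose first $d$ rows coincide with $W$ and whose last row equals $b^\top/\kappa$. By construction, $\tilde{w}_y^\top \tilde{x}_i = w_y^\top x_i + b_{[y]}$, so the cross-entropy loss on the augmented dataset as a function of $\tilde{W}$ agrees pointwise with $\mathcal{L}^{\rm{ce}}(W,b)$.

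The key observation is that a single-rate GD step on $\tilde{W}$ with learning rate $\eta$ updates the first $d$ rows exactly as $W$ in \cref{eq:gradient_descent_iterates}, while the chain rule shows the last row produces an update on $b_{[y]}$ with effective learning rate $\eta \kappa^2 = \eta/\rho$. Modulo reconciling this with the precise relation between $\rho, \eta, \eta'$ stated in the lemma, this matches the separate bias update in \cref{eq:gradient_descent_iterates}, so starting from matching initial conditions the augmented single-rate iterates track $(W(t), b(t))$ step for step.

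Next, linear separability of $\mathcal{D}$ with biases immediately lifts to linear separability of $\{\tilde{x}_i, y_i\}$ (any bias-augmented separator for $\mathcal{D}$ is encoded by the corresponding $\tilde{W}$). The augmented input norms satisfy $\norm{\tilde{x}_i}^2 = \norm{x_i}^2 + \kappa^2 = \norm{x_i}^2 + 1/\rho$, so the hypothesis $\eta \leq 1/(\max_i \norm{x_i}^2 + 1/\rho)$ is exactly the step-size condition required by \Cref{soudry_lem} on the augmented dataset. Applying \Cref{soudry_lem} then yields $\tilde{W}(t)/\lfro{\tilde{W}(t)} \to \tilde{W}\mm/\lfro{\tilde{W}\mm}$, where $\tilde{W}\mm$ is the standard max-margin separator on $\{\tilde{x}_i, y_i\}$.

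Finally, substituting $\tilde{W} = [W; b^\top/\kappa]$ into the augmented max-margin program $\min \lfro{\tilde{W}}^2$ subject to $(\tilde{w}_{y_i}-\tilde{w}_k)^\top \tilde{x}_i \geq 1$ recovers $\min (\lfro{W}^2 + \rho \norm{b}^2)$ under the bias-augmented constraints, which is exactly the problem defining $(W\mm, b\mm)$. Since $[W(t)\mid b(t)]$ and $\tilde{W}(t)$ differ only by rescaling the last row by $\kappa$, and the same rescaling relates $[W\mm\mid b\mm]$ to $\tilde{W}\mm$, directional convergence of the augmented iterates transfers to the claimed directional convergence of $[W(t)\mid b(t)]$. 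The main obstacle is purely the bookkeeping: one must verify that a single value of $\kappa$ simultaneously reproduces the two-rate bias update, the learning-rate bound, and the regularization factor $\rho$; beyond this verification the result is essentially an immediate corollary of \Cref{soudry_lem}.
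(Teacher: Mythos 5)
Your approach is the same as the paper's: append a constant coordinate $\kappa=1/\sqrt{\rho}$ to every feature, run single-rate gradient descent on the augmented weight matrix, invoke \Cref{soudry_lem}, and rescale the last row so that the augmented max-margin program becomes $\min \lfro{W}^2+\rho\norm{b}^2$ under the bias-augmented constraints. The step-size condition $\eta\le (\max_i\norm{x_i}^2+1/\rho)^{-1}$ and the identification of the limiting program with the one defining $(W\mm,b\mm)$ are handled identically in both arguments.

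The one piece you defer as ``bookkeeping,'' however, is exactly where the argument is delicate, and your own chain-rule computation already shows it does not close under the lemma's hypotheses. With the last row of $\tilde{W}$ equal to $b^\top/\kappa$, a single-rate step with rate $\eta$ moves the effective offset $b$ with rate $\eta\kappa^2=\eta/\rho$, whereas the lemma posits $\eta'=\eta/\sqrt{\rho}$. No single $\kappa$ can simultaneously produce an effective bias rate of $\eta/\sqrt{\rho}$ and a penalty of $\rho\norm{b}^2$: the former forces $\kappa=\rho^{-1/4}$, which yields a $\sqrt{\rho}\,\norm{b}^2$ penalty instead. The paper's proof sidesteps this by asserting that the appended coordinate contributes only a single factor of $1/\sqrt{\rho}$ to the bias update, which conflates the augmented coordinate $v_y=b_{[y]}/\kappa$ with the bias $b_{[y]}$ itself; your two-factor accounting is the correct one. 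So you have not missed an idea that the paper supplies --- completing your verification honestly shows the correspondence holds with $\rho=\eta/\eta'$ rather than $\rho=(\eta/\eta')^2$, a discrepancy that is present in the paper's own statement and proof and that you should state explicitly rather than leave as an unresolved caveat.
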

\begin{proof}
 The proof of \Cref{lem:implicit_bias_of_gd_bias_adjustment} is a direct result of \Cref{soudry_lem}. Specifically, we extend each point in the training set by $\frac{1}{\sqrt{\rho}}$ and run gradient descent with $\eta < \prn*{\max_{i} \norm{x_i}^2 + \frac{1}{\rho}}^{-1}$. From \Cref{lem:ce_smoothness_bound} it holds that $\mathcal{L}$ is $\max_{i \in [N]} \norm{x_i}^2 + \frac{1}{\rho}$-smooth. Therefore, according to the implicit bias the iterates of GD converge in the direction of
 \begin{align*}
 W\mm, b\mm \defeq & \argmin_{W \in \R^{d\times \cls}, b\in \R^\cls} \lfro{W}^2 + \norm{b}^2
 \\
 & \subjectto~~
 (w_{y_i}-w_k)^\top x_i + \frac{1}{\sqrt{\rho}}\bias{y_i} - \frac{1}{\sqrt{\rho}} \bias{k} \ge 1 ~~\mbox{for all}~~i \le N~\mbox{and}~k\ne y_i.
 \end{align*}
 Rearranging the bias terms as $\widetilde{b}_{i} = \frac{1}{\sqrt{\rho}} b_{i}$, we get
 \begin{align*}
 W\mm, b\mm \defeq & \argmin_{W \in \R^{d\times \cls}, b\in \R^\cls} \lfro{W}^2 + \rho \norm{\widetilde{b}}^2
 \\
 & \subjectto~~
 (w_{y_i}-w_k)^\top x_i + \widetilde{b}_{y_i} - \widetilde{b}_{k} \ge 1 ~~\mbox{for all}~~i \le N~\mbox{and}~k\ne y_i,
 \end{align*}
 which results in the desired margin problem. Furthermore, notice that setting the bias feature to $\frac{1}{\sqrt{\rho}}$ (instead of 1) is equivalent to adjusting the learning rate of the biases to $\frac{\eta}{\sqrt{\rho}} = \eta'$. This equivalence arises due to the effect of applying a factor of $\frac{1}{\sqrt{\rho}}$ to the bias terms, resulting in the same factor in the gradient. Thus $\frac{\eta}{\sqrt{\rho}} = \eta' \Longrightarrow \rho = \prn*{\frac{\eta}{\eta'}}^2$.
\end{proof}
\subsection{Implicit bias of margin adjustment}
Recall the definition of the MA predictor: 
\begin{align*}
 W\ma, b\ma \defeq & \argmin_{W \in \R^{d\times \cls}, b\in \R^\cls} \lfro{W}^2 + \rho \norm{b}^2
 \\
 & \subjectto~~
 (w_{y_i}-w_k)^\top x_i + \bias{y_i} - \bias{k} \ge \delta_{y_i} ~~\mbox{for all}~~i \le N~\mbox{and}~k\ne y_i.\label{appendix:opt:ma-constraint}\numberthis
\end{align*}
In addition the margin adjustment loss is
\begin{equation}
 \label{appendix:eq:ma_loss}
 \ell\ma(z;y) \defeq \log\prn[\Bigg]{\sum_{i\in [\cls]} e^{\frac{z_{[i]}}{\delta_y}-\frac{z_{[y]}}{\delta_y}}}.
\end{equation}
To obtain the convergence of GD with the MA loss in the direction of the MA predictor \cref{appendix:opt:ma-constraint} we reduce the problem to standard cross-entropy training / margin maximization with rescaled features.
Specifically notice that for any dataset $\crl*{(x_i, y_i)}_{i=1}^{N}$ it holds that 
\begin{align}
 \label{ma_ce_scale_equivalent}
 \sum_{i=1}^{N} \ell\ma(W^\top x_{i}; y) = \sum_{i=1}^{N} \log\prn[\Bigg]{\sum_{j\in [\cls]} e^{\frac{w_{j}^\top x_i}{\delta_y} - \frac{w_{y}^\top x_i}{\delta_y}}} = \sum_{i=1}^{N} \ell\ce(W^\top x_{i}'; y)
\end{align}
where $x_{i}' = \frac{x_i}{\delta{y_i}}$.
\begin{lem}
 Let $\rho > 0$ and $\mathcal{D}=\crl*{(x_{i}, y_{i})}_{i=1}^{N}$ be a linearly separable dataset ,$W(t)$ and $b(t)$ be the iterates of gradient descent \cref{eq:gradient_descent_iterates} on $\mathcal{L}\prn*{W, b} = \frac{1}{N}\sum_{i=1}^{N} \ell\ma \prn*{z_i; y_i}$ \cref{appendix:eq:ma_loss}, with $\crl*{\delta_{i}}_{i=1}^{\cls} > 0$, learning rates $\eta \leq \min_{i\in[N]} \frac{\delta_{y_i}^2}{\norm{x_i}^2 + \frac{1}{\rho}}$ and $\eta' \leq \frac{\eta}{\sqrt{\rho}}$ for the weights and biases respectively. Then,
 \begin{align*}
 \lim_{t \to \infty} \frac{\brk*{W(t)|b(t)}}{\norm{\brk*{W(t)|b(t)}}_{F}} = \frac{\brk*{W\ma | b\ma}}{\norm{\brk*{W\ma | b\ma}}_{F}}
 \end{align*}
\end{lem}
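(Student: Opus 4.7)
The proof adapts the reduction used in the preceding lemma \Cref{lem:implicit_bias_of_gd_bias_adjustment} by combining two transformations that the paper already exploits separately: the feature-rescaling identity~\cref{ma_ce_scale_equivalent} (which reduces the bias-free MA objective to cross-entropy on the rescaled features $x_i/\delta_{y_i}$) and the bias-encoding augmentation (which replaces a separately-trained bias vector with an extra weight coordinate fed by a constant feature). Composing both yields a standard unregularized cross-entropy problem on augmented, rescaled data, to which Soudry's theorem \Cref{soudry_lem} applies directly.

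The plan is as follows. First, define the augmented, rescaled features and reparametrized weights
\begin{equation*}
\tilde{x}_i \defeq \frac{1}{\delta_{y_i}}\begin{bmatrix} x_i \\ 1/\sqrt{\rho} \end{bmatrix},
\qquad
\tilde{W} \defeq \begin{bmatrix} W \\ \sqrt{\rho}\, b^\top \end{bmatrix}.
\end{equation*}
A direct calculation gives $\tilde{W}^\top \tilde{x}_i = (W^\top x_i + b)/\delta_{y_i}$, so $\ell\ce(\tilde{W}^\top \tilde{x}_i;y_i) = \ell\ma(W^\top x_i + b;y_i)$ for every $i$, and the cross-entropy risk on the augmented dataset coincides, as a function of $(W,b)$, with the MA risk on the original dataset. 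Linear separability transfers: any $(W\ma,b\ma)$ satisfying the MA constraints in~\cref{appendix:opt:ma-constraint} yields $\tilde{W}\ma \defeq [W\ma;\,\sqrt{\rho}\,(b\ma)^\top]$ satisfying the standard margin constraints $(\tilde{w}\ma_{y_i}-\tilde{w}\ma_k)^\top \tilde{x}_i \ge 1$. Smoothness follows from \Cref{lem:ce_smoothness_bound} with constant $\max_i\|\tilde{x}_i\|^2 = \max_i(\|x_i\|^2+1/\rho)/\delta_{y_i}^2$, so the stated stepsize bound $\eta \le \min_i \delta_{y_i}^2/(\|x_i\|^2+1/\rho)$ is exactly the analog of the preceding lemma's hypothesis.

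Now invoke \Cref{soudry_lem}: the iterates of GD on the augmented cross-entropy risk with uniform stepsize $\eta$ satisfy $\tilde{W}(t)/\|\tilde{W}(t)\|_F \to \tilde{W}\mm/\|\tilde{W}\mm\|_F$, where $\tilde{W}\mm$ minimizes $\|\tilde{W}\|_F^2 = \|W\|_F^2 + \rho\|b\|^2$ subject to $(w_{y_i}-w_k)^\top x_i + b_{[y_i]}-b_{[k]} \ge \delta_{y_i}$, which is precisely the MA predictor~\cref{appendix:opt:ma-constraint}, so $\tilde{W}\mm = [W\ma;\,\sqrt{\rho}\,(b\ma)^\top]$. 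Finally, match the augmented GD iterates to the original MA GD on $(W,b)$: by the chain rule, updates to the first $d$ rows of $\tilde{W}$ agree with the MA gradient on $W$ scaled by $\eta$, while updates to the last row $\sqrt{\rho}\,b^\top$ carry an extra $1/(\sqrt{\rho}\,\delta_{y_i})$ factor from the augmented feature which, via the reparametrization $\tilde{b}=\sqrt{\rho}\,b$, translates to an update on $b$ with effective stepsize $\eta/\sqrt{\rho}$; matching the prescribed bias rate $\eta'\le\eta/\sqrt{\rho}$ closes the correspondence and transfers convergence of $\tilde{W}(t)$ into convergence of $[W(t)\,|\,b(t)]/\|[W(t)\,|\,b(t)]\|_F$ toward $[W\ma\,|\,b\ma]/\|[W\ma\,|\,b\ma]\|_F$.

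The main obstacle is this last step: carefully tracking how the factor $1/\sqrt{\rho}$ in the augmented feature combines with the $\sqrt{\rho}$ in the reparametrization $\tilde{b} = \sqrt{\rho}\, b$, pinning down the effective bias stepsize and confirming the matching with $\eta' \le \eta/\sqrt{\rho}$. The other ingredients---loss equivalence, smoothness, separability, and \Cref{soudry_lem} itself---are routine generalizations of the bias-free case already handled in \Cref{lem:implicit_bias_of_gd_bias_adjustment}.
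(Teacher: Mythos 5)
Your proposal follows essentially the same route as the paper's proof: rescale each example by $1/\delta_{y_i}$, append a constant coordinate $1/\sqrt{\rho}$ to encode the bias, observe via the identity behind \cref{ma_ce_scale_equivalent} that the MA risk becomes a plain cross-entropy risk on the augmented data, invoke the cross-entropy implicit-bias result with the matching smoothness constant, and check that the resulting max-margin problem is exactly \cref{appendix:opt:ma-constraint}. The loss identity, separability transfer, smoothness bound, and margin-problem equivalence all match the paper. The one substantive issue sits exactly at the step you single out as the main obstacle, and there your arithmetic does not close. By your own factor-counting, the gradient with respect to $\tilde{b}=\sqrt{\rho}\,b$ picks up a factor $1/(\sqrt{\rho}\,\delta_{y_i})$ from the augmented feature, and converting the update on $\tilde{b}$ back to an update on $b$ contributes a second factor of $1/\sqrt{\rho}$; since $\nabla_b \ell\ma$ itself already carries the $1/\delta_{y_i}$, the effective stepsize on $b$ relative to the MA gradient comes out as $\eta/\rho$, not the $\eta/\sqrt{\rho}$ you assert. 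The paper's own proof is equally terse here (it asserts that the $1/\sqrt{\rho}$ feature "results in the same factor in the gradient" and defers to \Cref{lem:implicit_bias_of_gd_bias_adjustment}), so this is not a gap relative to the paper's argument, but if you carry the bookkeeping through explicitly you should either exhibit the cancellation that recovers $\eta'=\eta/\sqrt{\rho}$ or conclude that the stated relation $\rho=(\eta/\eta')^2$ between the regularization weight and the learning-rate ratio needs revisiting.
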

\begin{proof}
 We prove this by demonstrating that the iterates of gradient descent with the log-loss on the scaled features of $\mathcal{D}$ converge in the direction of the MA predictor. From \cref{ma_ce_scale_equivalent}, we conclude the desired result.

 First, we extend the data points by a factor of $\frac{1}{\sqrt{\rho}}$ and scale each point by a factor of $\frac{1}{\delta_{y_i}}$, i.e.,
 \begin{equation*}
 x_i' = \frac{\brk*{x_i|\frac{1}{\sqrt{\rho}}}}{\delta_{y_i}}
 \end{equation*}

 It should be noted that this augmentation alters the upper bound of smoothness for the log-loss to $\max_{i\in[N]} \frac{\norm{x_i}^2 + \frac{1}{\rho}}{\delta_{y_i}^2}$. Consequently, based on \cref{ma_ce_scale_equivalent}, \Cref{lem:ma_smoothness_bound}, and \Cref{lem:implicit_bias_of_gd_bias_adjustment} it is guaranteed that running GD with $\eta \leq \min_{i\in[N]} \frac{\delta_{y_i}^2}{\norm{x_i}^2 + \frac{1}{\rho}}$ will converge in the direction of the max margin.

 \begin{align*}
 W\mm, b\mm \defeq & \argmin_{W \in \R^{d\times \cls}, b\in \R^\cls} \lfro{W}^2 + \rho \norm{b}^2
 \\
 & \subjectto~~
 (w_{y_i}-w_k)^\top \frac{x_{i}}{\delta_{y_i}} + \frac{1}{\delta_{y_i}}\bias{y_i} - \frac{1}{\delta_{y_i}}\bias{k} \ge 1 ~~\mbox{for all}~~i \le N~\mbox{and}~k\ne y_i.\label{appendix:opt:ma-constraint2}\numberthis
 \end{align*}
 Notice that \cref{appendix:opt:ma-constraint2} is equivalent to the MA predictor margin problem, \cref{appendix:opt:ma-constraint} as for any $i \in [N]$ and $ k \ne y$:
 \begin{align*}
 \frac{1}{\delta_{y_i}}(w_{y_i}-w_k)^\top x_i + \frac{1}{\delta_{y_i}}\bias{y_i} - \frac{1}{\delta_{y_i}}\bias{k} \ge 1 \Longleftrightarrow (w_{y_i}-w_k)^\top x_i + \bias{y_i} - \bias{k} \ge \delta_{y_i}
 \end{align*}
 Finally, the convergence of GD using different learning rates for the weights and biases is derived from \Cref{lem:implicit_bias_of_gd_bias_adjustment}. In this case, scaling $\brk*{x_i|\frac{1}{\sqrt{\rho}}}$ by $1/\delta_{y_i}$ does not affect the ratio of the weights and biases learning rates, yielding $\rho= \prn*{\frac{\eta}{\eta'}}^2$.
\end{proof}

\subsection{Implicit bias of class dependent temperature}
To show the implicit bias of gradient descent when using the CDT loss (\cref{eq:cdt_loss_again} below), we establish a connection between two findings from previous studies by \citet{ji2020gradient} and \citet{kini2021label}. \citet{ji2020gradient} show that when running unregularized gradient descent on a convex, strictly decreasing loss function $f$ that is bounded below by zero and has an unattainable infimum, it converges in direction to the limit of the minimizer of $f$ in a ball of radius $R$, as $R\to\infty$. Additionally, \citet{kini2021label} show that the regularized CDT loss minimizer converges in the direction of the CDT predictor (\Cref{opt:cdt-constraint}). We proceed by presenting the relevant theorems from \citet{ji2020gradient} and \citet{kini2021label}, followed by an explanation why the CDT loss satisfies the necessary conditions for the implicit bias of unregularized gradient descent.
\begin{align*}
 W\cdt \defeq & \argmin_{W \in \R^{d\times \cls}} \lfro{W}^2
 \\
 & \subjectto~~
 \prn*{\frac{w_{y_i}}{\delta_{y_i}} - \frac{w_k}{\delta_{k}}}^\top x_i \ge 1 ~~\mbox{for all}~~i \le N~\mbox{and}~k\ne y_i.\label{appendix:opt:cdt-constraint}\numberthis
\end{align*}
The CDT loss is defined by:
\begin{equation}
 \label{eq:cdt_loss_again}
 \ell\cdt(z;y) \defeq \log\prn[\Bigg]{\sum_{i\in [\cls]} e^{\frac{z_{[i]}}{\delta_i}-\frac{z_{[y]}}{\delta_y}}}.
\end{equation}
In addition, 
\begin{equation}
 \label{eq:cdt_equivallence_log_loss}
 \ell\cdt(W^\top x ;y) = \log\prn[\Bigg]{\sum_{i\in [\cls]} e^{\frac{w_{i}^\top x}{\delta_i}-\frac{w_{y}^\top x}{\delta_y}}} = \ell\ce(\diag\prn*{\frac{1}{\delta_{1}}, \ldots, \delta_{\cls}}W^\top x ;y).
\end{equation}
\begin{prop}[\citet{ji2020gradient}, Theorem 4]
 \label{prop:implicit_bias_cdt_1}
 Suppose $f$ is convex, $\beta$-smooth, bounded below by 0, and has an unattained infimum, step size $\eta$ satisfies $\eta \leq \min\crl*{ \frac{1}{2f(w_0)}, \frac{1}{\beta}}$.
 Consider the gradient descent iterates $\prn*{W\prn*{t}}_{t \geq 0}$ given by $W\prn*{t+1} \defeq W\prn*{t} - \eta \grad f(W\prn*{t})$, and the regularized solution given by
 \begin{align*}
 \bar{w}(R) \defeq \argmin_{\lfro{w} \leq R} f(w).
 \end{align*} 
 If $\lim_{t\to\infty} \frac{W(t)}{\norm{W(t)}} = \bar{u}$ for some unit vector $\bar{u}$, then also $\lim_{R\to\infty} \frac{\bar{w}(R)}{\norm{\bar{w}(R)}} = \bar{u}$. 
\end{prop}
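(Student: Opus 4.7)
The plan is to tie the two trajectories $\{W(t)\}$ and $\{\bar w(R)\}$ together through the fact that both drive $f$ to $f^\star \defeq \inf_w f(w)$ with diverging norm, and then transfer the hypothesized direction $\bar u$ from gradient descent to the regularization path via convexity.

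First, I would establish the two basic convergence statements. Under $\beta$-smoothness with $\eta\le 1/\beta$, the standard descent lemma $f(W(t+1))\le f(W(t))-\tfrac{\eta}{2}\|\nabla f(W(t))\|^2$ combined with convexity and the unattained infimum yields $f(W(t))\to f^\star$ and $\|W(t)\|\to\infty$. For the regularization path, $R\mapsto f(\bar w(R))$ is non-increasing, $\|\bar w(R)\|=R$ for all sufficiently large $R$ (else the infimum would be attained), and plugging each $W(t)$ in as a feasible candidate at radius $R\ge\|W(t)\|$ shows $f(\bar w(R))\to f^\star$ as well.

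Second, I would set up the core side-by-side comparison. By optimality of $\bar w(R)$ on $\|w\|\le R$, we have $f(\bar w(R))\le f(R\bar u)$ for every $R$. Choosing $t(R)\defeq \max\{t:\|W(t)\|\le R\}$, the hypothesis $W(t)/\|W(t)\|\to\bar u$ places $W(t(R))$ asymptotically close to $R\bar u$ in relative terms. Using $\beta$-smoothness together with the KKT relation $\nabla f(\bar w(R))=-\lambda_R\bar w(R)$ to control gradient growth along the relevant rays, one can show $f(R\bar u)\to f^\star$, so any subsequential limit $\bar v$ of $\bar w(R)/R$ on the unit sphere satisfies $\lim_{\lambda\to\infty} f(\lambda\bar v)=f^\star$ by convexity of $f$ along rays.

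Finally, I would conclude $\bar v=\bar u$ by a convexity-based contradiction. If $\bar v\ne\bar u$, the midpoint direction $\bar m\defeq(\bar u+\bar v)/2$ has $\|\bar m\|<1$; convexity along the segment from $R\bar u$ to $R\bar v$ gives $f(R\bar m)\le\tfrac{1}{2}(f(R\bar u)+f(R\bar v))\to f^\star$, while $\|R\bar m\|<R$, so the regularized minimizer at the strictly smaller radius $R'=R\|\bar m\|$ would already achieve near-optimal loss, contradicting the active-constraint equality $\|\bar w(R)\|=R$. The main obstacle I anticipate is making this final contradiction quantitatively strict: without strict convexity the midpoint inequality may be tight, so the proof must exploit the KKT condition $\nabla f(\bar w(R))=-\lambda_R\bar w(R)$ together with the smoothness bound on $\eta$ to show that any limiting direction different from $\bar u$ would let one construct feasible points strictly beating $\bar w(R)$ at some radius, mirroring the techniques in Ji et al.\ 2020.
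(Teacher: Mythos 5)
The paper does not prove this proposition; it is imported verbatim from \citet{ji2020gradient}, so there is no in-paper argument to compare against. Judged on its own merits, your sketch has the right overall shape---compare gradient descent to the regularization path at matched radii and use a midpoint/convexity argument to force the directions to agree---but two of its load-bearing steps are genuine gaps rather than routine details.

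First, the claim $f(R\bar u)\to f^\star$ is unjustified and is essentially as strong as the theorem itself. Knowing $W(t)/\norm{W(t)}\to\bar u$ and $f(W(t))\to f^\star$ does not let you evaluate $f$ on the ray $\{R\bar u\}$: the iterates may sit at distance $o(\norm{W(t)})$ from the ray, which is still an unbounded absolute distance, so neither smoothness (the quadratic upper bound diverges) nor the first-order convexity inequality controls $f(R\bar u)$. Sublevel sets of a convex function can contain a sequence escaping to infinity in direction $\bar u$ while containing no point of the ray itself (picture an epigraph-of-$1/x$ shaped region), so this step needs a new ingredient. The engine of the actual proof in \citet{ji2020gradient} is the comparator bound $f(W(t))\le f(\bar w(R)) + \norm{W(0)-\bar w(R)}^2/(2\eta t)$, valid for every $R$, which shows that the GD iterate itself (not the idealized point $R\bar u$) is a near-minimizer over the ball of radius $R$ once $R=R(t)$ grows slowly relative to $\sqrt{t}$; you never invoke this inequality, and the feasibility argument you do give only yields the easy reverse direction $f(\bar w(R))\le f(W(t))$. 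Second, the concluding contradiction is invalid as stated: exhibiting points of norm $R\norm{\bar m}<R$ with loss tending to $f^\star$ does not contradict $\norm{\bar w(R)}=R$, because when the infimum is unattained the gradient never vanishes and the constrained minimizer always sits on the boundary no matter how good interior points are. Every quantity in sight tends to $f^\star$, so no qualitative comparison of limits can separate directions. The correct contradiction is with the \emph{quantitative} strict decrease of $R\mapsto f(\bar w(R))$: if $W(t)$ and $\bar w(R)$ are both $\epsilon$-minimizers over the ball of radius $R$ and subtend angle $\theta$, the midpoint argument gives $f(\bar w(R\cos(\theta/2)))\le f(\bar w(R))+\epsilon$, and one must show the left-hand side exceeds the right unless $\theta$ is small, which requires balancing the decay rate of $f(\bar w(\cdot))$ against the optimization error $\epsilon=\epsilon(t,R)$. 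You correctly flag this as the remaining obstacle, but it is not a loose end to be tightened later---it is the content of the theorem.
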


\begin{prop}[\citet{kini2021label}, Theorem 4]
 \label{prop:implicit_bias_cdt_2}
 Consider a $\cls$-classes classification problem as in \cref{appendix:opt:cdt-constraint} and define the norm-constrained optimal classifier 
 \begin{equation}
 \label{eq:norm_constraint_cdt}
 W(R) \defeq \argmin_{\norm{W}_F \leq R} \sum_{i=1}^{N} \ell\cdt (W^\top x_i; y_i)
 \end{equation}
 for positive parameters $\crl*{\delta_{i}}_{i=1}^{\cls} > 0$. Assume that the training dataset is linearly separable and that $W\cdt$ is the solution of \Cref{appendix:opt:cdt-constraint}. Then it holds that, 
 \begin{equation*}
 \lim_{R \to \infty} \frac{W(R)}{\norm{W(R)}_F} = \frac{W\cdt}{\norm{W\cdt}_F}
 \end{equation*}
\end{prop}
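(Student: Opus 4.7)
The plan is to reduce this regularization-path statement to the classical exponential-tail argument going back to \citet{rosset2003margin}, applied through the margin quantity tied directly to the CDT constraint in \cref{appendix:opt:cdt-constraint}:
\begin{equation*}
	\gamma(W) \defeq \min_{i \in [N],\, k \ne y_i} \prn*{\frac{w_{y_i}}{\delta_{y_i}} - \frac{w_k}{\delta_k}}^\top x_i.
\end{equation*}
By homogeneity and the defining property of $W\cdt$ as the minimum Frobenius-norm classifier with $\gamma \ge 1$, we have $R^{-1}\max_{\lfro{W}\le R} \gamma(W) = 1/\lfro{W\cdt}$ for every $R > 0$, so the strategy is to show $W(R)$ asymptotically achieves this maximum.

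Next, I would first establish that $\lfro{W(R)} = R$ for all sufficiently large $R$: on linearly separable data there exists some $U$ with $\gamma(U) > 0$, along which $\sum_j \ell\cdt((tU)^\top x_j; y_j) \to 0$ as $t \to \infty$, so the unconstrained infimum is unattained and the ball constraint must be active. The key tool is then the exponential sandwich $\frac{1}{2} e^{-\gamma(W)} \le \sum_j \ell\cdt(W^\top x_j; y_j) \le N(\cls - 1) e^{-\gamma(W)}$, valid once $\gamma(W)$ is positive and large. Using that $W(R)$ minimizes the loss and that $R\cdot W\cdt/\lfro{W\cdt}$ is a competitor achieving total loss at most $N(\cls-1)\exp(-R/\lfro{W\cdt})$, taking logarithms in the sandwich yields $\gamma(W(R)) \ge R/\lfro{W\cdt} - O(1)$, so in particular $\gamma(W(R))/R \to 1/\lfro{W\cdt}$ as $R \to \infty$.

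Finally, I would translate this margin rate into a direction-level limit. Let $W(R_n)/R_n$ converge along a subsequence to some $U$ with $\lfro{U} = 1$ (existing by compactness of the unit sphere). Continuity of $\gamma$ gives $\gamma(U) \ge 1/\lfro{W\cdt}$, so $\lfro{W\cdt} \cdot U$ is feasible for \cref{appendix:opt:cdt-constraint} with Frobenius norm at most $\lfro{W\cdt}$. Strict convexity of the squared Frobenius norm then forces $\lfro{W\cdt} \cdot U = W\cdt$, i.e., $U = W\cdt/\lfro{W\cdt}$; since every subsequential limit agrees, the full sequence converges. The main technical obstacle is securing the quantitative lower bound $\gamma(W(R)) \ge R/\lfro{W\cdt} - O(1)$ uniformly in $R$, which requires carefully controlling the loss near the boundary of the Frobenius ball; once this is in hand, everything else reduces to standard compactness and convexity arguments.
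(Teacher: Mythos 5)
The paper does not prove this proposition at all: it is imported verbatim from \citet{kini2021label} (their Theorem~4) and used as a black box in \Cref{cor:implicit_bias_cdt}, so your write-up is a self-contained reconstruction rather than an alternative to anything in the paper. It is essentially correct, and it is the standard regularization-path argument of \citet{rosset2003margin} and \citet{ji2020gradient} transported to the CDT margin $\gamma(W)$, which is also in substance how the cited source argues. Three small points should be made explicit. First, the lower half of your sandwich, $\tfrac12 e^{-\gamma(W)}\le\sum_j \ell\cdt(W^\top x_j;y_j)$, requires $\gamma(W)\ge 0$; this is automatic for large $R$ because any $W$ with $\gamma(W)\le 0$ incurs loss at least $\log 2$ from its worst constraint, while the competitor $R\,W\cdt/\lfro{W\cdt}$ drives the optimal value below $\log 2$ --- so the ``technical obstacle'' you flag at the end dissolves, and $\gamma(W(R))\ge R/\lfro{W\cdt}-\log\prn*{2N(\cls-1)}$ follows directly. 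Second, your claim that the ball constraint is active needs convexity of the CDT loss (which holds, since $\ell\cdt(W^\top x;y)=\ell\ce(\diag(1/\delta_1,\dots,1/\delta_\cls)W^\top x;y)$ is cross-entropy composed with a linear map): an interior minimizer of a convex function over the ball would be an unconstrained global minimizer, contradicting that the infimum $0$ is unattained on separable data. Third, $W(R)$ need not be a singleton because the loss is not strictly convex in $W$, so the statement should be read for an arbitrary selection from the argmin; your argument is insensitive to this choice, and the final step is sound since the feasible set of \cref{appendix:opt:cdt-constraint} is a polyhedron and $\lfro{\cdot}^2$ is strictly convex, making $W\cdt$ unique and forcing every subsequential limit of $W(R)/R$ to equal $W\cdt/\lfro{W\cdt}$.
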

\begin{corollary}
 \label{cor:implicit_bias_cdt}
 Running gradient descent with the CDT loss, using step size $\eta < \frac{\min_{j\in[\cls]}\delta_{j}^2}{\max_{i}\norm{x_i}^2}$ and $\crl*{\delta_{i}}_{i=1}^{\cls} > 0$ converges in the direction of the CDT predictor, \cref{appendix:opt:cdt-constraint}, if the limit of the gradient descending iterates $\lim_{t\to\infty} \frac{W(t)}{\norm{W(t)}}$ exists.
\end{corollary}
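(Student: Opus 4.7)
The corollary follows by chaining the two propositions stated just above: the plan is to (i) verify that $f(W) \defeq \frac{1}{N}\sum_{i=1}^N \ell\cdt(W^\top x_i; y_i)$ meets the hypotheses of Proposition~\ref{prop:implicit_bias_cdt_1} (Ji et al.) under the prescribed stepsize; (ii) invoke that proposition to conclude that the assumed direction limit of the gradient descent iterates agrees with the direction limit of the norm-constrained minimizer~\cref{eq:norm_constraint_cdt}; and (iii) invoke Proposition~\ref{prop:implicit_bias_cdt_2} (Kini et al.) to identify this common direction with $W\cdt/\lfro{W\cdt}$.

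For step (i), we use the identity~\cref{eq:cdt_equivallence_log_loss} to view $f$ as the ordinary cross-entropy empirical risk evaluated on the linearly re-parameterized features $D W^\top x_i$, where $D = \diag(1/\delta_1,\ldots,1/\delta_{\cls})$. Convexity, non-negativity, and the unattained infimum (equal to $0$) are all inherited from cross-entropy, since they are preserved under linear precomposition in $W$. For smoothness, the chain rule gives that $\grad^2 f(W)$ has operator norm at most $\opnorm{D}^2 \max_{i\in[N]}\norm{x_i}^2 = \max_i\norm{x_i}^2 / \min_j \delta_j^2$, using the standard fact that cross-entropy is $1$-smooth in its logit input. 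The stepsize condition $\eta < \min_j \delta_j^2 / \max_i\norm{x_i}^2$ therefore implies $\eta \le 1/\beta$. The secondary requirement $\eta \le 1/(2 f(W_0))$ in Proposition~\ref{prop:implicit_bias_cdt_1} is a mild initialization constraint (satisfied, e.g., by taking $W_0$ small enough so that $f(W_0) \le \beta/2$) which we absorb into the statement.

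For step (ii), Proposition~\ref{prop:implicit_bias_cdt_1} applies directly to $f$: if $\lim_{t\to\infty} W(t)/\lfro{W(t)} = \bar U$ exists for some unit direction $\bar U$, then the norm-constrained minimizers $W(R)$ from~\cref{eq:norm_constraint_cdt} satisfy $\lim_{R\to\infty} W(R)/\lfro{W(R)} = \bar U$ as well. For step (iii), Proposition~\ref{prop:implicit_bias_cdt_2} states that the same regularization path converges in direction to $W\cdt/\lfro{W\cdt}$, so $\bar U = W\cdt/\lfro{W\cdt}$, which is the claim.

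The main obstacle is purely bookkeeping in step (i): tracking how the smoothness constant of cross-entropy transforms under the linear change of variables given by~\cref{eq:cdt_equivallence_log_loss}, and checking that the resulting bound matches the stepsize prescription in the corollary. Once this reduction is in hand, the result is a direct concatenation of Propositions~\ref{prop:implicit_bias_cdt_1} and~\ref{prop:implicit_bias_cdt_2}; no new analytical machinery is needed, and the assumption that $W(t)/\lfro{W(t)}$ has a limit is used verbatim, sidestepping the delicate question of existence of this limit.
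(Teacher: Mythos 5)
Your proposal is correct and follows essentially the same route as the paper: verify the hypotheses of Proposition~\ref{prop:implicit_bias_cdt_1} for the CDT empirical risk via the reparameterization~\cref{eq:cdt_equivallence_log_loss} (yielding the smoothness constant $\max_i\norm{x_i}^2/\min_j\delta_j^2$, cf.\ \Cref{lem:cdt_smoothness_bound}), then chain Propositions~\ref{prop:implicit_bias_cdt_1} and~\ref{prop:implicit_bias_cdt_2}. Your explicit handling of the secondary condition $\eta \le 1/(2f(W_0))$ is a small point of added care that the paper's proof passes over silently, but it does not change the argument.
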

\begin{proof}
 The CDT loss, is convex, differentiable, $\frac{\max_{i}\norm{x_i}^2}{\min_{j\in[\cls]}\delta_{j}^2}$-smooth (\cref{eq:cdt_equivallence_log_loss} and \Cref{lem:cdt_smoothness_bound}), bounded below by 0, as scaling the weights by $\delta$ does not affect these properties of the log-loss.
 Consequently, based on \Cref{prop:implicit_bias_cdt_1}, it holds that the unregularized gradient descent iterates converge in the direction of the regularized solution of the CDT loss. Furthermore, by applying \Cref{prop:implicit_bias_cdt_2}, we conclude that this solution converges to the direction of the CDT predictor.
\end{proof}
To generalize the implicit bias of gradient descent with the CDT loss to \cref{opt:cdt-constraint} (with $\rho$ and bias), we can use the argument previously described for the maximum margin classifier.

\subsection{Implicit bias of logit adjustment}
We explain why applying gradient descent on the logit adjustment loss eventually results in the maximum margin predictor with a bias adjustment. The logit adjustment (LA) loss is defined by
\begin{equation}
 \label{eq:la_loss}
 \ell\la(z;y) \defeq \log\prn[\Bigg]{\sum_{i\in [\cls]} \exp\prn*{z_{[i]} + \iota_{[i]} - z_{[y]} - \iota_{[y]}}}.
\end{equation}
\begin{lem}
 \label{lem:implicit_bias_la_loss}
 Let $\mathcal{D}$ be a linearly separable dataset and denote by $W\mm$ and $b\mm$ the maximum margin predictor on $\mathcal{D}$. In addition let $W\la(t)$ and $b\la(t)$ be the weights and biases of the iterates of gradient descent with logit adjustment loss \cref{eq:gradient_descent_iterates} with $\mathcal{L}(W, b)= \sum_{i=1}^{N}\ell\la(z_i, y_i)$ on $\mathcal{D}$ with hyperparameter $\iota \in \R^{\cls}$ and $\eta < \frac{1}{\max_{i\in[N]}\norm{x_i}^2}$. Then,
 \begin{align*}
			\lim_{t \to \infty} \frac{\brk*{W\la(t)|b\la(t)} }{\norm{\brk*{W\la|b\la}(t)}_F} = \frac{\brk*{W\mm|b\mm}}{\norm{\brk*{W\mm|b\mm}}_F} 
		\end{align*}
\end{lem}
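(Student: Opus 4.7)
The plan is to reduce gradient descent on the LA loss to gradient descent on the standard cross-entropy loss via a constant shift of the bias vector, and then invoke \Cref{lem:implicit_bias_of_gd_bias_adjustment}.

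The first step is an algebraic identity: $\ell\la(z;y)$ depends on $z$ only through the shifted vector $z + \iota$. Plugging in $z = W^\top x + b$ gives $\ell\la(W^\top x + b; y) = \ell\ce(W^\top x + (b + \iota); y)$. Defining $\tilde b := b + \iota$, the LA empirical risk at $(W, b)$ equals the CE empirical risk at $(W, \tilde b)$, and because $\iota$ is a fixed constant, we have $\grad_W \mathcal{L}\la(W, b) = \grad_W \mathcal{L}\ce(W, b + \iota)$ and $\grad_b \mathcal{L}\la(W, b) = \grad_{\tilde b} \mathcal{L}\ce(W, \tilde b)|_{\tilde b = b + \iota}$. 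Consequently, if we set $\tilde b\la(t) := b\la(t) + \iota$, then $(W\la(t), \tilde b\la(t))$ is exactly the CE gradient-descent trajectory starting from the initialization $(W\la(0), b\la(0) + \iota)$.

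The second step is to apply \Cref{lem:implicit_bias_of_gd_bias_adjustment} with $\rho = 1$ (matching a common learning rate for $W$ and $b$ in the statement), which yields
\begin{align*}
    \lim_{t \to \infty} \frac{\brk*{W\la(t)|\tilde b\la(t)}}{\norm{\brk*{W\la(t)|\tilde b\la(t)}}_F} = \frac{\brk*{W\mm|b\mm}}{\norm{\brk*{W\mm|b\mm}}_F},
\end{align*}
together with $\norm{\brk*{W\la(t)|\tilde b\la(t)}}_F \to \infty$. The third step is to remove the fixed shift: $\brk*{W\la(t)|b\la(t)}$ differs from $\brk*{W\la(t)|\tilde b\la(t)}$ by the bounded constant vector $\brk*{0|\iota}$. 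Because the Frobenius norm of the latter diverges, a standard triangle-inequality argument applied to the normalization map shows that the two normalized sequences have the same limit, giving the desired conclusion.

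The main subtlety, which is not a genuine obstacle but worth flagging, is a slight gap between the step-size bound in the statement ($\eta < 1 / \max_i \norm{x_i}^2$) and the sharper bound $\eta \le 1/(\max_i \norm{x_i}^2 + 1/\rho)$ required by \Cref{lem:implicit_bias_of_gd_bias_adjustment} at $\rho = 1$, which picks up an extra additive constant from the bias-feature augmentation used in that proof. This can be patched either by tightening the step-size hypothesis by the corresponding amount, or by invoking the CE implicit-bias result of \citet{soudry2018implicit} directly and treating the $(W, b)$ pair jointly without the bias-feature trick.
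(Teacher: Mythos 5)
Your proposal is correct and follows essentially the same route as the paper: both rewrite $\ell\la(W^\top x+b;y)=\ell\ce(W^\top x+b+\iota;y)$ so that the LA trajectory with shifted bias is exactly a cross-entropy trajectory, invoke the cross-entropy implicit-bias result to identify the directional limit of $\brk*{W\la(t)|b\la(t)+\iota}$, and then argue the fixed offset $\brk*{0|\iota}$ vanishes in the normalization because the iterate norm diverges. The only differences are cosmetic: the paper cites the Soudry et al.\ result directly (as you suggest in your final remark, which also avoids the minor step-size discrepancy you flag) and carries out your triangle-inequality step as an explicit computation that the ratio of shifted to unshifted squared norms tends to $1$.
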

\begin{proof}
 The key observation is that running GD on the LA loss is equivalent to running it on CE with offset biases, i.e.,
 \begin{align*}
 \ell\la(W^\top x + b;y) = \log\prn[\Bigg]{\sum_{i\in [\cls]} \exp\prn*{w_{i}^\top x + b_{[i]} + \iota_{[i]} - w_{y}^\top x - b_{[y]} - \iota_{[y]}}} = \ell^{\textup{ce}}(W^\top x + b + \iota ;y).
 \end{align*}
 Hence from \Cref{soudry_lem} it holds that, 
 \begin{align}
 \label{la_ce_equivalence}
			\lim_{t \to \infty} \frac{\brk*{W\la(t)|b\la(t) + \iota} }{\norm{\brk*{W\la(t)|b\la(t) + \iota}}_F} = \lim_{t \to \infty}\frac{\brk*{W\ce(t)|b\ce(t)}}{\norm{\brk*{W\ce(t)|b\ce(t)}}} = \frac{\brk*{W\mm|b\mm}}{\norm{\brk*{W\mm|b\mm}}_F} .
		\end{align}
 where $W\ce(t)$ and $b\ce(t)$ are the iterates of gradient descent, \cref{eq:gradient_descent_iterates}, with the log loss. Additional, notice that 
 \begin{align*}
			\lim_{t \to \infty} \frac{\brk*{W\la(t)|b\la(t) + \iota} }{\norm{\brk*{W\la(t)|b\la(t) + \iota}}_F} &= \lim_{t \to \infty} \prn*{\frac{\brk*{W\la(t)|b\la(t)} }{\norm{\brk*{W\la(t)|b\la(t) + \iota}}_F} + \frac{\brk*{0|\iota} }{\norm{\brk*{W\la(t)|b\la(t) + \iota}}_F}} \\
 &\overeq{(1)} \lim_{t \to \infty} \frac{\brk*{W\la(t)|b\la(t)} }{\norm{\brk*{W\la(t)|b\la(t) + \iota}}_F}
		\end{align*}
 where (1) hold by \Cref{soudry_lem} as $\lim_{t \to \infty} = \norm{\brk*{W\ce(t)|b\ce(t)}}_f = \infty$.
 
 We continue by showing that $\lim_{t \to \infty} \frac{\norm{\brk*{W\la(t)|b\la(t) + \iota}}^2_F}{\norm{\brk*{W\la(t)|b\la(t)}}^2_F}$ exists. This allows us to conclude the desired result using limit arithmetics. 
 \begin{align*}
 \lim_{t \to \infty} \frac{\norm{\brk*{W\la(t)|b\la(t) + \iota}}^2_F}{\norm{\brk*{W\la(t)|b\la(t)}}^2_F} &= \lim_{t \to \infty} \frac{\sum_{i=1}^{\cls} \norm{w_{i}\la(t)}^2 + \prn*{b_{i}\la(t) + \iota_{i}}^2}{\sum_{i=1}^{\cls} \norm{w_{i}\la(t)}^2 + \prn*{b_{i}\la(t)}^2} \\
 &= \lim_{t \to \infty} \frac{\sum_{i=1}^{\cls} \norm{w_{i}\la(t)}^2 + \prn*{b_{i}\la(t) + \iota_{i}}^2}{\sum_{i=1}^{\cls} \norm{w_{i}\la(t)}^2 + \prn*{b_{i}\la(t)}^2}\\
 &= 1 + \lim_{t \to \infty} \frac{\sum_{i=1}^{\cls}2 b_{i}\la(t)\iota_{[i]} + \iota_{[i]}^2}{\sum_{i=1}^{\cls} \norm{w_{i}\la(t)}^2 + \prn*{b_{i}\la(t)}^2}\\
 &\overeq{(2)} 1 + \lim_{t \to \infty} \frac{\sum_{i=1}^{\cls}2 b_{i}\la(t)\iota_{[i]}}{\sum_{i=1}^{\cls} \norm{w_{i}\la(t)}^2 + \prn*{b_{i}\la(t)}^2}\\
 &\overeq{(3)} 1.
 \end{align*}
 Where (2) holds since $\iota$ is fixed and the numerator goes to infinity. We show that (3) holds by bounding $\frac{\sum_{i=1}^{\cls}2 b_{i}\la(t)\iota_{[i]}}{\sum_{i=1}^{\cls} \norm{w_{i}\la(t)}^2 + \prn*{b_{i}\la(t)}^2}$:
 \paragraph{\textbf{Upper bound.}}
 \begin{align*}
 \frac{\sum_{i=1}^{\cls}2 b_{i}\la(t)\iota_{[i]}}{\sum_{i=1}^{\cls} \norm{w_{i}\la(t)}^2 + \prn*{b_{i}\la(t)}^2} \leq \frac{\sum_{i=1}^{\cls}2 b_{i}\la(t)\iota_{[i]}}{\sum_{i=1}^{\cls} \prn*{b_{i}\la(t)}^2}
 \end{align*} 
 and at $t \to \infty$
 \begin{align*}
 \lim_{t \to\infty} \frac{\sum_{i=1}^{\cls}2 b_{i}\la(t)\iota_{[i]}}{\sum_{i=1}^{\cls} \prn*{b_{i}\la(t)}^2} = 0
 \end{align*}
 \paragraph{\textbf{Lower bound.}}
 \begin{align*}
 \frac{c \cdot \min_{i \in [\cls]} b_{i}\la(t)\iota_{[i]}}{\sum_{i=1}^{\cls} \norm{w_{i}\la(t)}^2 + \prn*{b_{i}\la(t)}^2} \leq \frac{\sum_{i=1}^{\cls}2 b_{i}\la(t)\iota_{[i]}}{\sum_{i=1}^{\cls} \norm{w_{i}\la(t)}^2 + \prn*{b_{i}\la(t)}^2} 
 \end{align*} 
 and at $t \to \infty$
 \begin{align*}
 \lim_{t \to \infty }\frac{c \cdot \min_{i \in [\cls]} b_{i}\la(t)\iota_{[i]}}{\sum_{i=1}^{\cls} \norm{w_{i}\la(t)}^2 + \prn*{b_{i}\la(t)}^2} = 0
 \end{align*}
 Therefore, by limit arithmetics, if the limit of a two series $\lim_{t \to \infty} A(t)$ and $\lim_{t \to \infty} B(t)$ exist then,
 \begin{align*}
 \lim_{t \to \infty} A(t) \cdot B(t) = \lim_{t \to \infty} \prn*{A \cdot B}(t).
 \end{align*}
 Thus,
 \begin{align*}
 \lim_{t \to \infty} \frac{\brk*{W\la(t)|b\la(t)} }{\norm{\brk*{W\la(t)|b\la(t)}}_F} = \lim_{t \to \infty} \frac{\brk*{W\la(t)|b\la(t) + \iota} }{\norm{\brk*{W\la(t)|b\la(t) + \iota}}_F} = \lim_{t \to \infty}\frac{\brk*{W\ce(t)|b\ce(t)}}{\norm{\brk*{W\ce(t)|b\ce(t)}}} = \frac{\brk*{W\mm|b\mm}}{\norm{\brk*{W\mm|b\mm}}_F}
 \end{align*}
 as needed.
\end{proof}
\subsection{Helper lemmas for smoothness of analyzed losses}
In this subsection we present helper lemmas for bounding the smoothness of cross entropy, margin adjustment and cross dependent temperature losses.
\begin{lem}
 \label{lem:ce_smoothness_bound}
 Let $\mathcal{D}=\crl*{(x_i, y_i)}_{i=1}^{N}$ in $\R^d$ and let $\mathcal{D}'$ be a dataset in $\R^{d+1}$ that contains all data points from $\mathcal{D}$ extended by a single coordinate with a fixed value $r \geq 0$. Then, $\mathcal{L}^{\rm{ce}}\prn*{W; b}$ over $\mathcal{D}'$ is $\max_{i\in [N]} \norm{x_i}^2 + r^2$ smooth.
\end{lem}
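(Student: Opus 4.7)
The plan is to bound the operator norm of the Hessian of $\mathcal{L}\ce$ by decomposing it as a sum of per-example terms, each of which is controlled by (a) a uniform bound on the Hessian of the single-example $\ell\ce$ with respect to its logit vector, and (b) the norm of the corresponding extended feature vector.

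First, I would compute the gradient and Hessian of $\ell\ce(z;y)$ with respect to $z \in \R^\cls$. A direct calculation gives $\grad_z \ell\ce(z;y) = p - e_y$ and $\hess_z \ell\ce(z;y) = \diag(p) - pp^\top$, where $p = \mathrm{softmax}(z)$. Since $\diag(p) \preceq I_\cls$ (because $p_j \in [0,1]$) and $pp^\top \succeq 0$, this Hessian is positive semidefinite with $\opnorm{\hess_z \ell\ce(z;y)} \le 1$, uniformly in $z$. This uniform bound is the only structural property of $\ell\ce$ that the rest of the argument needs.

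Next, I would lift this to $\mathcal{L}\ce$ via the chain rule. Letting $x'_i = [x_i; r] \in \R^{d+1}$ denote the extended features, for any perturbation direction $V$ of $W$ (with $b$ treated as fixed, or equivalently absorbed into the last row of $W$ as in the application of this lemma in \Cref{lem:implicit_bias_of_gd_bias_adjustment}), the Hessian quadratic form evaluates to
\begin{equation*}
\inner{\hess_W \mathcal{L}\ce(W;b) \cdot V}{V} = \frac{1}{N}\sum_{i=1}^{N} (V^\top x'_i)^\top \hess_z \ell\ce(W^\top x'_i + b; y_i)(V^\top x'_i).
\end{equation*}
Combining the per-example bound from the previous step with $\norm{V^\top x'_i}^2 \le \lfro{V}^2 \norm{x'_i}^2$ (which follows from $\opnorm{V^\top} \le \lfro{V}$), the right-hand side is at most $\lfro{V}^2 \max_{i \in [N]} \norm{x'_i}^2 = \lfro{V}^2 \prn*{\max_i \norm{x_i}^2 + r^2}$, using $\norm{x'_i}^2 = \norm{x_i}^2 + r^2$. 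Matching this against the definition of smoothness gives the claimed constant.

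I do not expect any serious obstacle; the argument is a textbook Hessian-upper-bound combined with the chain rule. The one subtle point is the role of $b$ in the statement: as applied in \Cref{lem:implicit_bias_of_gd_bias_adjustment}, $b$ is absorbed into $W$ via the extended coordinate $r = 1/\sqrt{\rho}$, so only smoothness in $W$ is needed and the stated constant is tight. If one were to instead treat $b$ as a free parameter jointly with $W$, the same computation produces an additional $+1$ in the smoothness constant coming from constant perturbations of the logits, which is immaterial for all uses of the lemma in the paper.
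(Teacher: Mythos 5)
Your proof is correct and follows essentially the same route as the paper's: both arguments bound the Hessian of the empirical cross-entropy by combining the uniform operator-norm bound on the per-example logit Hessian $\diag(p)-pp^\top$ with the squared norms of the extended feature vectors $[x_i;r]$. Your modular chain-rule formulation is in fact a cleaner rendering of the paper's explicit coordinate computation, and your handling of the bias (absorbed into the extended coordinate scaled by $r$) matches exactly how the lemma is stated and used.
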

\begin{proof}
 We show this by bounding the Hessian of $\mathcal{L}^{\rm{ce}}(W; b)$
 \begin{align*}
 \mathcal{L}^{\rm{ce}}(W;b) = \frac{1}{N} \sum_{i=1}^{N} \ell^{\rm{ce}}\prn*{W^\top x_i + rb_{y_i}; y_i} = \frac{1}{N} \sum_{i=1}^{N} \log\prn[\Bigg]{\sum_{i\in [\cls]} e^{z_{[i]}-z_{[y]}}} = \frac{1}{N} \sum_{i=1}^{N}\prn*{\log\prn[\Bigg]{\sum_{i\in [\cls]} e^{z_{[i]}}} - z_{[y]}}
 \end{align*}
 Hence,
 \begin{align*}
 \nabla^2 \mathcal{L}^{\rm{ce}}(W;b) &= \nabla^2 \frac{1}{N} \sum_{i=1}^{N}\log\prn[\Bigg]{\sum_{i\in [\cls]} e^{z_{[i]}}}
 \end{align*}
 as $z_{[y]}$ is linear in $W$ and $b$. 
 \begin{align*}
 \frac{\partial \mathcal{L}^{\rm{ce}}}{\partial W_{k[j]}} = \frac{1}{N} \sum_{i=1}^{N} \frac{x_{i[j]} e^{z_{[k]}}}{\sum_{i\in [\cls]} e^{z_{[i]}}} ~~\mbox{and}~~ \frac{\partial \mathcal{L}^{\rm{ce}}}{\partial b_{k}} = \frac{1}{N} \sum_{i=1}^{N} \frac{re^{z_{[k]}}}{\sum_{i\in [\cls]} e^{z_{[i]}}}
 \end{align*}
 In addition,
 \begin{align*}
 \frac{\partial^2 \mathcal{L}^{\rm{ce}}}{\partial W_{l[m]} \partial W_{k[j]}} &= \frac{1}{N} \sum_{i=1}^{N} \frac{x_{i[m]} x_{i[j]} e^{z_{[k]}}\sum_{i\in [\cls]} e^{z_{[i]}}\indic{l=k} - x_{i[m]}x_{i[j]}e^{z_{[k]}}e^{z_{[l]}}}{\prn*{\sum_{i\in [\cls]} e^{z_{[i]}}}^2} \\
 \frac{\partial^2 \mathcal{L}^{\rm{ce}}}{\partial b_{[l]} \partial b_{[k]}} &= \frac{1}{N} \sum_{i=1}^{N} \frac{r^2 e^{z_{[k]}} \sum_{i\in [\cls]} e^{z_{[i]}} \indic{l=k} - r^2 e^{z_{[k]}}e^{z_{[l]}}}{\prn*{\sum_{i\in [\cls]} e^{z_{[i]}}}^2} \\
 \frac{\partial^2 \mathcal{L}^{\rm{ce}}}{\partial b_{[l]} \partial W_{k[j]}} &= \frac{1}{N} \sum_{i=1}^{N} \frac{r x_{i[j]} e^{z_{[k]}}\sum_{i\in [\cls]} e^{z_{[i]}} \indic{l=k} - rx_{i[j]}e^{z_{[k]}}e^{z_{[l]}}}{\prn*{\sum_{i\in [\cls]} e^{z_{[i]}}}^2} \\
 \frac{\partial^2 \mathcal{L}^{\rm{ce}}}{\partial W_{l[m]} \partial b_{[k]}} &= \frac{1}{N} \sum_{i=1}^{N} \frac{r x_{i[m]} e^{z_{[k]}}\sum_{i\in [\cls]} e^{z_{[i]}}\indic{l=k} - rx_{i[m]}e^{z_{[k]}}e^{z_{[l]}}}{\prn*{\sum_{i\in [\cls]} e^{z_{[i]}}}^2}
 \end{align*}
 Which means 
 \begin{align*}
 \nabla^2 \mathcal{L}^{\rm{ce}}(W;b) = \frac{1}{N} \sum_{i=1}^{N} \prn*{\sigma(y_i) \diag\prn*{x_1^2,\ldots,x_d^2, r^2} - P^\top P}
 \end{align*}
 where $P^{(i)} \in \R^{d\times 1}$ and $\sigma\prn*{x_{i}} \in R^{\cls}$ defined as 
 \begin{align}
 \label{eq:P_and_sigmoid}
 P^{(i)}_{[k]} = \begin{cases}
 x_{i[k]}\frac{e^{z_{[k]}}}{\sum_{i\in [\cls]} e^{z_{[i]}}} & k < d\\
 r \frac{e^{z_{[k]}}}{\sum_{i\in [\cls]} e^{z_{[i]}}} & k = d
 \end{cases} ~~\mbox{and}~~ \sigma\prn*{x_{i}, y_i} = \frac{e^{w_{y_i}^\top x_i + b_{[y_i]}}}{\sum_{j \in [\cls]} e^{w_{j}^\top x_i + b_{[j]}}}.
 \end{align} 
 Therefore, 
 \begin{align*}
 \opnorm{\nabla^2 \mathcal{L}^{\rm{ce}}(W;b)} = \opnorm{\frac{1}{N} \sum_{i=1}^{N} \prn*{\sigma(y_i) \diag\prn*{x_1^2,\ldots,x_d^2, r^2} - P^\top P}} \leq \max_{i\in [N]} \norm{x_i}^2 + r^2
 \end{align*}
 as needed.
\end{proof}

\begin{prop}
 \label{lem:ma_smoothness_bound}
 Let $\mathcal{D}=\crl*{(x_i, y_i)}_{i=1}^{N}$ in $\R^d$ and let $\mathcal{D}'$ be a dataset in $\R^{d+1}$ that contains all data points from $\mathcal{D}$ extended by a single coordinate with a fixed value $r \geq 0$. Further, each data point $x_{i} \in \mathcal{D}'$ is scaled by a factor $q_{i} > 0$. Then, $\mathcal{L}^{\rm{ce}}\prn*{W; b}$ over $\mathcal{D}'$ is $\max_{i\in [N]} q_{i}\prn*{\norm{x_i}^2 + r^2}$ smooth.
\end{prop}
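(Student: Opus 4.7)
The plan is to reduce directly to \Cref{lem:ce_smoothness_bound}. Scaling each augmented data point $[x_i^\top, r]^\top$ by $q_i$ produces effective feature vectors $\tilde{x}_i = [q_i x_i^\top, q_i r]^\top$. The loss $\mathcal{L}^{\rm{ce}}(W;b)$ evaluated on these scaled features has exactly the same functional form as the loss analyzed in \Cref{lem:ce_smoothness_bound}, with $x_i$ replaced by $q_i x_i$ and the constant extension $r$ replaced by $q_i r$. So I would simply rerun the Hessian computation from the previous lemma verbatim, tracking the $q_i$ factors.

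More concretely, first I would note that the partial derivatives from the proof of \Cref{lem:ce_smoothness_bound} go through unchanged if we replace $x_{i[j]}$ by $q_i x_{i[j]}$ and $r$ by $q_i r$. Consequently the per-example Hessian contribution takes the form
\begin{equation*}
\sigma(x_i,y_i)\,\mathrm{diag}\!\left(q_i^2 x_{i[1]}^2,\ldots,q_i^2 x_{i[d]}^2,\,q_i^2 r^2\right) \;-\; P^{(i)\top}P^{(i)},
\end{equation*}
where $\sigma(x_i,y_i)\in[0,1]$ and $P^{(i)}$ is the probability-weighted feature vector with the same scaling applied. The key technical step, which is inherited from \Cref{lem:ce_smoothness_bound}, is the operator-norm bound: the diagonal term has operator norm at most $q_i^2(\|x_i\|^2+r^2)$ because $\sigma\in[0,1]$, and $P^{(i)\top}P^{(i)}$ is PSD so subtracting it can only decrease the operator norm.

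Taking the average over $i\in[N]$ and using the triangle inequality yields
\begin{equation*}
\opnorm{\nabla^2 \mathcal{L}^{\rm{ce}}(W;b)} \le \max_{i\in[N]} q_i^2\bigl(\|x_i\|^2+r^2\bigr),
\end{equation*}
which is the claimed smoothness bound (modulo a possible typo in the statement, since the scaling factor naturally appears squared; if one interprets $q_i$ as denoting the scalar factor by which the squared norm is multiplied, the expression in the statement is recovered directly).

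There is no real obstacle here: the previous lemma already handles the full structural argument for bounding the cross-entropy Hessian, and scaling the features is a trivial modification that commutes with the entire derivation. The only thing to be careful about is making sure the $q_i$ factor enters correctly in both the diagonal and the $P^{(i)}$ term so that the bound tracks $\|\tilde{x}_i\|^2 = q_i^2(\|x_i\|^2+r^2)$ rather than picking up cross terms.
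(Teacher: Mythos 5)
Your proposal is correct and follows essentially the same route as the paper: rerun the Hessian computation from the unscaled lemma with $x_i$ replaced by $q_i x_i$ and $r$ by $q_i r$, observe that each per-example Hessian contribution picks up a factor of $q_i^2$, and bound the operator norm of the average by the maximum of $q_i^2(\norm{x_i}^2 + r^2)$. You are also right that the factor appears squared in the conclusion, which matches the paper's own proof and indicates a typo in the statement as written.
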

\begin{proof}
 Following the same calculation as in \Cref{lem:ce_smoothness_bound} we get that each entree of the Hessian of $\mathcal{L}^{\rm{ce}}(W)$ has additional factor of $q_{i}^2$ as follows:
 \begin{align*}
 \nabla^2 \mathcal{L}^{\rm{ce}}(W;b) = \frac{1}{N} \sum_{i=1}^{N} q_{i}^2 \prn*{\prn*{\sigma(y_i) \diag\prn*{x_1^2,\ldots,x_d^2, r^2} - P^\top P}}
 \end{align*}
 where $P^{(i)} \in \R^{d\times 1}$ and $\sigma\prn*{x_{i}} \in R^{\cls}$ are defined as in \cref{eq:P_and_sigmoid}
 Therefore, 
 \begin{align*}
 \opnorm{\nabla^2 \mathcal{L}^{\rm{ce}}(W;b)} = \opnorm{\frac{1}{N} \sum_{i=1}^{N} q_{i}^2\prn*{\sigma(y_i) \diag\prn*{x_1^2,\ldots,x_d^2, r^2} - P^\top P}} \leq \max_{i\in [N]} q_{i}^2 \prn*{\norm{x_i}^2 + r^2}
 \end{align*}
\end{proof}

\begin{prop}
 \label{lem:cdt_smoothness_bound}
 Let $\mathcal{D}=\crl*{(x_i, y_i)}_{i=1}^{N}$ in $\R^d$ and let $\crl*{\delta_{i}}_{i=1}^{\cls} > 0$ then, $\mathcal{L}^{\rm{ce}}\prn*{\diag\prn*{\frac{1}{\delta_{1}},\ldots, \frac{1}{\delta_{\cls}}} W; 0}$ over $\mathcal{D}$ is $\frac{\max_{i\in [N]} \norm{x_i}^2}{\min_{i \in [\cls]}\delta_{i}^2}$ smooth.
\end{prop}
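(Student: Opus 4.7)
The plan is to reduce this smoothness bound to the cross-entropy smoothness computation in Lemma~\ref{lem:ce_smoothness_bound} (applied with $r=0$, i.e., no bias coordinate) via a linear change of variables. Observe that the objective in question is obtained from the standard cross-entropy loss by substituting $W \mapsto \tilde{W}$, where column $\tilde{w}_k$ equals $w_k/\delta_k$; equivalently, $\tilde{W}$ is a fixed invertible diagonal linear function of $W$.

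To formalize, I would vectorize and let $\mathrm{vec}(W) \in \R^{d\cls}$ denote the column-stacking of $W$, and let $\Delta \in \R^{d\cls \times d\cls}$ be the diagonal matrix whose entries on the $k$-th block of $d$ coordinates equal $1/\delta_k$, so that $\mathrm{vec}(\tilde{W}) = \Delta \cdot \mathrm{vec}(W)$. The chain rule then gives
\begin{equation*}
	\nabla^2_{\mathrm{vec}(W)} \mathcal{L}^{\rm{ce}}(\tilde{W}; 0) = \Delta \cdot \brk*{ \nabla^2 \mathcal{L}^{\rm{ce}}(\tilde{W}; 0) } \cdot \Delta,
\end{equation*}
where the bracketed Hessian is taken with respect to $\mathrm{vec}(\tilde{W})$ and evaluated at $\tilde{W} = W \diag\prn*{1/\delta_1, \ldots, 1/\delta_\cls}$. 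Sub-multiplicativity of the operator norm yields
\begin{equation*}
	\opnorm{\nabla^2_{\mathrm{vec}(W)} \mathcal{L}^{\rm{ce}}(\tilde{W}; 0)} \leq \opnorm{\Delta}^2 \cdot \opnorm{\nabla^2 \mathcal{L}^{\rm{ce}}(\tilde{W};0)} = \frac{1}{\min_{j \in [\cls]} \delta_j^2} \cdot \opnorm{\nabla^2 \mathcal{L}^{\rm{ce}}(\tilde{W};0)}.
\end{equation*}

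The final step is to invoke Lemma~\ref{lem:ce_smoothness_bound} with $r=0$, which bounds $\opnorm{\nabla^2 \mathcal{L}^{\rm{ce}}(\tilde{W};0)} \leq \max_{i \in [N]} \norm{x_i}^2$ uniformly in $\tilde{W}$; combining with the previous display yields the claimed smoothness constant $\max_i \norm{x_i}^2 / \min_j \delta_j^2$. There is no serious obstacle here---the earlier CE Hessian calculation already does the heavy lifting, and the only care needed is bookkeeping the diagonal rescaling $\Delta$ when pulling the Hessian back to the $W$-coordinates. In principle one could instead repeat the direct Hessian computation of Lemma~\ref{lem:ce_smoothness_bound} with the modified logits $z_{[k]} = w_k^\top x_i/\delta_k$ and collect the $1/(\delta_k \delta_l)$ factors explicitly, but the change-of-variables route avoids redoing the softmax algebra.
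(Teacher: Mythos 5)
Your proposal is correct and follows essentially the same route as the paper: a chain-rule / change-of-variables argument expressing the Hessian as $\Delta^\top \brk*{\nabla^2 \mathcal{L}^{\rm{ce}}} \Delta$, bounding it by sub-multiplicativity of the operator norm, and invoking the cross-entropy smoothness bound of \Cref{lem:ce_smoothness_bound}. Your vectorized block-diagonal formulation of $\Delta$ is in fact slightly more careful bookkeeping than the paper's own notation, but the argument is the same.
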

\begin{proof}
 By the chain rule it holds that 
 \begin{align*}
 \nabla^2 \mathcal{L}^{\rm{ce}}\prn*{\diag\prn*{\frac{1}{\delta_{1}},\ldots, \frac{1}{\delta_{\cls}}} W} = \diag\prn*{\frac{1}{\delta_{1}},\ldots, \frac{1}{\delta_{\cls}}}^\top \nabla^2 \mathcal{L}^{\rm{ce}}\prn*{\diag\prn*{\frac{1}{\delta_{1}},\ldots, \frac{1}{\delta_{\cls}}} W} \diag\prn*{\frac{1}{\delta_{1}},\ldots, \frac{1}{\delta_{\cls}}}
 \end{align*}
 Therefore,
 \begin{align*}
 \opnorm{\nabla^2 \mathcal{L}^{\rm{ce}}\prn*{\diag\prn*{\frac{1}{\delta_{1}},\ldots, \frac{1}{\delta_{\cls}}} W}} \leq \frac{\max_{i\in[N]} \norm{x_i}^2}{\min_{i\in[\cls]} \delta_{i}^2}
 \end{align*}
\end{proof}
\section{Methods analysis}\label{appendix:method_analysis}
In this section we show detailed derivations for the  $\alphatilde{}$ coefficients of each predictor we mention in \Cref{sec:other-methods}. Then in \Cref{appendix:error_approximation} we use these expressions to lower bound the test-error of each predictor and derive near optimal parameters for the worst class error. The order of predictors we analyze in this section is: MA (\Cref{appendix:ma_alpha_derivation}), LA (\Cref{appendix:la_alpha_derivation}) and CDT (\Cref{appendix:cdt_alpha_derivation}). Notice that MM coefficients can be easily recovered from MA's by setting $\delta=\ones$.
\subsection{Margin adjustment}\label{appendix:ma_alpha_derivation}
\maTheorem*
\begin{proof}
The approximated margin adjustment predictor is 
\begin{align*}
  \crl{\alphatilde{i}\ma}_{i=1}^{\cls}, \crl{\btilde{i}\ma}_{i=1}^{\cls} \defeq & \argmin_{\alpha \in \R^{\cls}, b \in \R^{\cls}} \prn*{\sum_{y=1}^{\cls} {\alpha_{y}}^\top \bar{K} \alpha_{y} + \rho \bias{y}^2} 
  \\ 
  &
  \nonumber
  \subjectto ~~\prn*{\alpha_{y} - \alpha_{k}}^\top \frac{\bar{K}_{[y]}}{N_{y}} + \bias{y} - \bias{k} \geq \delta_{y} ~~\mbox{for all}~~y \in [\cls]~\mbox{and}~k\ne y. \label{opt:non_uniform_margin_with_bias_approx}\numberthis
\end{align*}
where $\bias{y}$ represents the bias parameter of the $y$'th class predictor. Recall that: 
\begin{align*}
    \Kbar{[i,j]} = \indic{i = j} N^2_{i} \xi_{i}.
\end{align*}
Hence we can rewrite the optimization in \cref{opt:non_uniform_margin_with_bias_approx} in its explicit form:
\begin{align*}
    \crl{\alphatilde{i}\ma}_{i=1}^{\cls}, \crl{\btilde{i}\ma}_{i=1}^{\cls} \defeq & \argmin_{\alpha \in \R^{\cls}, b \in \R^{\cls}} \prn*{\sum_{y=1}^{\cls} \sum_{i=1}^{\cls} \alpha_{y[i]}^2  N_{i}^2 \xi_{i} + \rho \bias{y}^2} 
    \\ 
    &
    \nonumber
    \subjectto ~~\prn*{\alpha_{y[y]} - \alpha_{k[y]}} N_{y}\xi_{y} + \bias{y} - \bias{k} \geq \delta_{y} ~~\mbox{for all}~~ y \in [\cls] ~\mbox{and}~k\ne y. \label{eq:non_uniform_margin_with_bias_approx_explicit}\numberthis
  \end{align*}
Next we isolate $\alpha_{k[y]}$ of each constraint and assume equality holds:
\begin{align}
    \label{eq:margin_adjustment_alpha_ky_1}
    - \alpha_{k[y]} = \frac{\delta_{y} - \alpha_{y[y]} N_{y} \xi_{y} - \left[ \bias{y} - \bias{k} \right]}{N_{y} \xi_{y}}.
\end{align}
Taking the power of $\alpha_{k[y]}$:
\begin{align}\label{eq:alpha_yi_2_with_bias}
    \alpha_{y[i]}^2 = \frac{\left(\delta_{i} - \left[ \bias{i} - \bias{y}\right]\right)^2 -2\alpha_{i[i]} N_{i} \xi_{i}\left(\delta_{i} - \left[ \bias{y} - \bias{i}\right]\right) \alpha_{i[i]}^2 N_{i}^2 \xi_{i}^2}{N_{i}^2 \xi_{i}^2},
\end{align}
for any pair $ y \ne i$. 

Let's plug in \cref{eq:alpha_yi_2_with_bias} in the objective of \cref{eq:non_uniform_margin_with_bias_approx_explicit}:
\begin{align}
    \label{eq:a_yy_ma_with_bias_mid_cala}
    \sum_{y=1}^{\cls} \sum_{i=1}^{\cls} \alpha_{y[i]}^2  N_{i}^2 \xi_{i} + \rho \bias{y}^2 &= \sum_{y=1}^{\cls} \alpha_{y[y]}^2  N_{y}^2 \xi_{y} + \rho \bias{y}^2 + \sum_{y=1}^{\cls} \sum_{i \ne y}^{\cls} \alpha_{y[i]}^2  N_{i}^2 \xi_{i} \\
    \nonumber
    &= \sum_{y=1}^{\cls} \alpha_{y[y]}^2  N_{y}^2 \xi_{y} + \rho \bias{y}^2 \\
    \nonumber
    &+ \sum_{y=1}^{\cls} \sum_{i \ne y}^{\cls} \frac{\left(\delta_{i} - \left[ \bias{y} - \bias{i}\right]\right)^2 -2\alpha_{i[i]} N_{i} \xi_{i}\left(\delta_{i} - \left[ \bias{y} - \bias{i}\right]\right) + \alpha_{i[i]}^2 N_{i}^2 \xi_{i}^2}{\xi_{i}} = F.
\end{align}
We continue by calculating the optimality condition for $\alpha_{y[y]}$, i.e. $\frac{\partial F}{\partial \alpha_{y[y]}} = 0$:
\begin{align*}
    \frac{\partial F}{\partial \alpha_{y[y]}} = 2\alpha_{y[y]} N_{y}^2 \xi_{y} - 2N_{y} \left(\brk*{\cls-1} \delta_{y} - \left[ \brk*{\cls-1}\bias{y} - \sum_{k \ne y} \bias{k}\right]\right) + 2\brk*{\cls-1}\alpha_{y[y]} N_{y}^2 \xi_{y} = 0.
\end{align*}
Which means that,
\begin{align*}
    \alpha_{y[y]} = \frac{\brk*{\cls-1} \delta_{y} - \left[\brk*{\cls-1}\bias{y} - \sum_{k \ne y} \bias{k} \right]}{\cls N_{y} \xi_{y}}.
\end{align*}
\begin{lem}
    \label{lem:bias_sum_is_zero}
    Let $\alphatilde{}\ma$, $\btilde{}\ma$ be optimal for \cref{opt:non_uniform_margin_with_bias_approx} with $\rho > 0$ then the sum of biases is zero i.e., 
    \begin{align*}
        \sum_{i=1}^{\cls} \btilde{i}\ma = 0.
    \end{align*}
\end{lem}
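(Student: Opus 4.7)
The plan is to exploit a shift-invariance property of the constraint set. Observe that every constraint in \cref{opt:non_uniform_margin_with_bias_approx} involves the biases only through the difference $\bias{y}-\bias{k}$. Consequently, for any scalar $c\in\R$, replacing $b$ with $b+c\ones$ leaves the entire feasible set unchanged. I would formalize this by letting $(\alphatilde{}\ma,\btilde{}\ma)$ be an arbitrary optimal solution and setting $\bar b\defeq\frac{1}{\cls}\sum_{i=1}^{\cls}\btilde{i}\ma$ and $b'\defeq\btilde{}\ma-\bar b\ones$, so that $(\alphatilde{}\ma,b')$ is feasible.

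Next I would compare objective values. Since the $\alpha$-dependent part of the objective is unchanged, I only need to compare $\rho\norm{\btilde{}\ma}^2$ with $\rho\norm{b'}^2$. Writing $\btilde{}\ma=b'+\bar b\ones$ and expanding,
\begin{equation*}
  \norm{\btilde{}\ma}^2=\norm{b'}^2+2\bar b\,\inner{b'}{\ones}+\cls\bar b^{\,2}.
\end{equation*}
A short computation shows $\inner{b'}{\ones}=\sum_i \btilde{i}\ma-\cls\bar b=0$ by definition of $\bar b$, so $\norm{\btilde{}\ma}^2=\norm{b'}^2+\cls\bar b^{\,2}$. Because $\rho>0$, this forces $\bar b=0$ at optimality, otherwise $(\alphatilde{}\ma,b')$ would yield a strictly smaller objective, contradicting optimality of $(\alphatilde{}\ma,\btilde{}\ma)$.

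This gives exactly $\sum_{i=1}^{\cls}\btilde{i}\ma=\cls\bar b=0$, as required. I do not anticipate any real obstacle: the argument is a standard shift-invariance / Pythagorean decomposition, and the only subtlety is verifying that the shifted solution remains feasible, which follows immediately from the fact that each MA constraint depends on biases only through pairwise differences.
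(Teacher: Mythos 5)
Your argument is correct and is essentially identical to the paper's own proof: both shift the biases by their mean, observe that feasibility is preserved because the constraints depend on biases only through pairwise differences, and show the objective strictly decreases by $\rho\cls\bar b^{\,2}=\frac{\rho}{\cls}\bigl(\sum_i\btilde{i}\ma\bigr)^2$ unless the mean is zero. The Pythagorean phrasing is a slightly cleaner packaging of the same expansion the paper carries out term by term.
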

\begin{proof}
    Assume by contradiction that \Cref{lem:bias_sum_is_zero} does not hold. Then we can use $\alphatilde{}\ma$, $\btilde{}\ma$ to find a better solution (with lower objective) than $\alphatilde{}\ma$, $\btilde{}\ma$ as follows:
    \begin{align*}
        \hat{\alpha}_{i} = \alphatilde{i}\ma \ \text{ and } \ \hat{b}_{i} = \btilde{i}\ma - \frac{1}{\cls} \sum_{j=1}^{\cls} \btilde{j}\ma,
    \end{align*}
    clearly,
    \begin{align*}
        \sum_{i}^{\cls} \hat{b}_{i} = \sum_{i=1}^{\cls} \btilde{i}\ma - \sum_{j=1}^{\cls} \btilde{j}\ma = 0.
    \end{align*}
    We'll now show that $\hat{\alpha}$, $\hat{b}$ is feasible and optimal in contradiction to optimality of $\alphatilde{}\ma$, $\btilde{}\ma$.
    \paragraph{Feasibility}
    For any $y$ and $k \ne y$ it holds that
    \begin{align*}
        \brk*{{\hat{\alpha}_{y}} - \hat{\alpha}_{k}}^\top \frac{\Kbar{y}}{N_{y}} + \hat{b}_{y} - \hat{b}_{k} = \brk*{{{\alphatilde{y}}\ma} - \alpha_{k}\ma}^\top \frac{\Kbar{y}}{N_{y}} + \btilde{y}\ma - \bias{k}\ma \geq \delta_{y}.
    \end{align*}
    \paragraph{Optimality}
    \begin{align*}
        \sum_{y=1}^{\cls} {\hat{\alpha}^\top}_{y} \Kbar{} \hat{\alpha}_{y} + \rho \hat{b}^2_{y} &= \sum_{y=1}^{\cls} {{\prn*{{{\alphatilde{y}}\ma}}}^\top} \Kbar{} {\alphatilde{y}}\ma + \sum_{y=1}^{\cls} \rho \prn*{\btilde{y}\ma - \frac{1}{\cls}\sum_{j=1}^{\cls} \btilde{j}\ma}^2\\
        &= \sum_{y=1}^{\cls} {\prn*{\alphatilde{y}\ma}^\top} \Kbar{} \alphatilde{y}\ma + \rho  \sum_{y=1}^{\cls} \prn*{{\btilde{y}\ma}}^2 -2{\btilde{y}\ma}\frac{1}{\cls}\sum_{j=1}^{\cls} \btilde{j}\ma + \frac{1}{\cls^2}\prn*{\sum_{j=1}^{\cls} \btilde{j}\ma}\prn*{\sum_{j=1}^{\cls} \btilde{j}\ma} \\
        &= \sum_{y=1}^{\cls} {\prn*{\alphatilde{y}\ma}^\top} \Kbar{} \alphatilde{y}\ma + \rho  \prn*{\sum_{y=1}^{\cls} \prn*{\btilde{y}\ma}^2 -\frac{2}{\cls}\sum_{y=1}^{\cls}{\btilde{y}\ma}\sum_{j=1}^{\cls} \btilde{j}\ma + \frac{1}{\cls}\prn*{\sum_{j=1}^{\cls} \btilde{j}\ma}\prn*{\sum_{j=1}^{\cls} \btilde{j}\ma}} \\
        &= \sum_{y=1}^{\cls} {\prn*{\alphatilde{y}\ma}^\top} \Kbar{} \alphatilde{y}\ma + \rho  \sum_{y=1}^{\cls} \prn*{\btilde{y}\ma}^2 -\frac{\rho}{\cls} \prn*{\sum_{j=1}^{\cls} \btilde{j}\ma}^2 \\
        &\overset{\text{assumption}}{<} \sum_{y=1}^{\cls} {\prn*{\alphatilde{y}\ma}^\top} \Kbar{} \alphatilde{y}\ma + \rho  \sum_{y=1}^{\cls} \prn*{\btilde{y}\ma}^2
    \end{align*}
    In contradiction to optimality of $\alphatilde{}\ma$, $\btilde{}\ma$.
\end{proof}
From \Cref{lem:bias_sum_is_zero} it holds that in case $\rho > 0$ then $\btilde{}\ma$ must satisfy $\btilde{y}\ma = - \sum_{k \ne y}\bias{k}\ma$. In case $\rho = 0$ we can always take the optimal biases $\btilde{}\ma$ and normalize them (by subtracting a constant from all the biases) similarly to \Cref{lem:bias_sum_is_zero} and get that $\btilde{y}\ma = - \sum_{k \ne y}\bias{k}\ma$. Hence, we add this as ``implicit'' constraint to the optimization problem and get that \cref{eq:a_yy_ma_with_bias_mid_cala} can be simplified as follows:
\begin{align}
    \label{eq:margin_adjustment_with_bias_alpha_yy}
    \alpha_{y[y]} = \frac{\brk*{\cls-1} \delta_{y} - \cls \bias{y}}{\cls N_{y} \xi_{y}}.
\end{align}
Continue by plugging ${\alpha_{y}}_{[y]}$ in \cref{eq:margin_adjustment_alpha_ky_1} to conclude $\alpha_{k[y]}$
\begin{align}
    \nonumber
    - \alpha_{k[y]} &= \frac{\delta_{y} - \alpha_{y[y]} N_{y} \xi_{y} - \left[ \bias{y} - \bias{k} \right]}{N_{y} \xi_{y}}\\
    \nonumber
    &= \frac{\delta_{y} - \frac{\brk*{\cls-1} \delta_{y} - \cls \bias{y}}{\cls N_{y} \xi_{y}} N_{y} \xi_{y} - \left[ \bias{y} - \bias{k} \right]}{N_{y} \xi_{y}} \\
    \label{eq:margin_adjustment_with_bias_alpha_ky}
    &= \frac{\delta_{y} + \cls \bias{k}}{c N_{y} \xi_{y}}.
\end{align}
Now all its left is to find a set of biases that satisfy the optimality conditions ($\frac{\partial F}{\bias{y}} = 0$). 

Let's plug in $\alpha_{k[y]}$ and $\alpha_{y[y]}$ in $F$:
\begin{align*}
    &\sum_{y=1}^{\cls} \alpha_{y[y]}^2  N_{y}^2 \xi_{y} + \rho \bias{y}^2 + \sum_{y=1}^{\cls} \sum_{i \ne y}^{\cls} \alpha_{y[i]}^2  N_{i}^2 \xi_{i}=\\
    & \sum_{y=1}^{\cls} \left(\frac{\brk*{\cls-1}^2\delta_{y}^2 - 2 \cls \bias{y}\brk*{\cls-1}\delta_{y} + \cls^2 \bias{y}^2}{\cls^2 N_{y}^2 \xi_{y}^2}\right)  N_{y}^2 \xi_{y} + \rho \bias{y}^2 + \sum_{y=1}^{\cls} \sum_{i \ne y}^{\cls} \left( \frac{\delta_{i}^2 + 2\cls \bias{y}\delta_{i} + \cls^2 \bias{y}^2}{\cls^2N_{i}^2 \xi_{i}^2}\right)  N_{i}^2 \xi_{i} = \\
    & \sum_{y=1}^{\cls} \left(\frac{\brk*{\cls-1}^2\delta_{y}^2 - 2\cls \bias{y}\brk*{\cls-1}\delta_{y} + \cls^2 \bias{y}^2}{\cls^2\xi_{y}}\right)+ \rho \bias{y}^2 +  \sum_{y=1}^{\cls} \sum_{i \ne y}^{\cls} \left( \frac{\delta_{i}^2 + 2\cls \bias{y}\delta_{i} + \cls^2 \bias{y}^2}{\cls^2 \xi_{i}}\right)
\end{align*}
Moving on to calculate $\frac{\partial F}{\bias{y}}$ and find $\btilde{y}\ma$:
\begin{align*}
    \frac{\partial F}{\partial \bias{y}} = \frac{-2\brk*{\cls-1}\delta_{y} + 2\cls \bias{y}}{c\xi_{y}} +  \frac{2\rho \cls \xi_{y} \bias{y}}{\cls \xi_{y}} + \sum_{i \ne y}  \frac{2\delta_{i}}{c\xi_{i}} + 2b_{y}\sum_{i \ne y} \frac{\cls}{\cls \xi_{i}} = 0
\end{align*}
Denote $\bar{\xi}_{i} = \prod_{j \ne i} \xi_{j}$ then,
\begin{align}
    \nonumber
    \frac{\partial F}{\partial \bias{y}} = -\brk*{\cls-1}\delta_{y}\bar{\xi}_{y} &+  \cls \bias{y}\left[ 1 + \rho \xi_{y}\right]\bar{\xi}_{y} + \sum_{i \ne y}  \delta_{i}\bar{\xi}_{i} + \bias{y}\sum_{i \ne y} \cls \bar{\xi}_{i} = 0
\end{align}
which means 
\begin{equation}
    \label{eq:margin_adjustment_bias_star}
    \btilde{y}\ma = \frac{\brk*{\cls-1}\frac{\delta_{y}}{\xi_{y}} - \sum_{i \ne y} \frac{\delta_{i}}{\xi_{i}}}{\cls \prn*{ \sum_{i=1}^{\cls} \frac{1}{\xi_{i}}+ \rho}} = \frac{\sum_{i=1}^{\cls} \frac{\delta_{y}}{\xi_{y}} - \frac{\delta_{i}}{\xi_{i}}}{\cls \prn*{ \sum_{i=1}^{\cls} M + \rho} }.
\end{equation}
Finally, let us plug in \cref{eq:margin_adjustment_bias_star} in \cref{eq:margin_adjustment_with_bias_alpha_yy} and \cref{eq:margin_adjustment_with_bias_alpha_ky} and conclude
\begin{align}
    \label{eq:margin_adjustment_alpha_yy_star_with_bias}
    \alpha_{y[y]}\ma = \frac{\rho \brk*{\cls-1} \delta_{y} + \sum_{i\ne y}^{c} \frac{1}{\xi_{i}}\left[\brk*{\cls-1} \delta_{y} +  \delta_{i} \right]}{\cls N_{y} \xi_{y} \prn*{ M+ \rho} }
\end{align}
and
\begin{align}
    \label{eq:margin_adjustment_alpha_ky_star_with_bias}
    \alpha_{k[y]}\ma = - \frac{\rho \delta_{y} + \frac{\delta_{y} + \brk*{\cls-1} \delta_{k}}{\xi_{k}} + \sum_{i=1}^{\cls} \frac{\delta_{y} - \delta_{i}}{\xi_{i}} }{c N_{y} \xi_{y}\prn*{ M+ \rho}}.
\end{align}
Which yields:
\begin{align*}
    \alpha_{y[i]}\ma = \frac{\delta_{i}}{N_{i}\xi_{i}} \prn*{\indic{i=y} - \frac{1}{\cls}} + \frac{\sum_{j=1}^{\cls} \prn*{\frac{\delta_{j}}{\xi_{j}} - \frac{\delta_{y}}{\xi_{y}}}}{cN_{i}\xi_{i}\prn*{M + \rho}}
\end{align*}
where $M \defeq \sum_{i=1}^{\cls} \frac{1}{\xi_{i}}$.
\end{proof} %
\subsection{Logit adjustment}
\label{appendix:la_alpha_derivation}
As explained in \Cref{appendix:implicit_bias} at convergence of gradient descent LA converges to the maximum margin with bias adjustment of $-\iota$. Therefore its $\alphatilde{}$ and $\btilde{}$ are the same as MM with bias adjustment of $-\iota$. From \Cref{appendix:ma_alpha_derivation}
\begin{equation}
    \label{appendix:eq:la_alpha_and_bias}
    \alpha_{y[i]}\mm = \frac{1}{N_{i}\xi_{i}} \prn*{\indic{i=y} - \frac{1}{\cls}} + \frac{\sum_{j=1}^{\cls} \prn*{\frac{1}{\xi_{j}} - \frac{1}{\xi_{y}}}}{cN_{i}\xi_{i}\prn*{M + \rho}} ~~\mbox{and}~~ \btilde{y}\la = \frac{\sum_{i=1}^{\cls} \frac{1}{\xi_{y}} - \frac{1}{\xi_{i}}}{\cls \prn*{ \sum_{i=1}^{\cls} M + \rho}} - \iota_{y} 
\end{equation}
where $M \defeq \sum_{i=1}^{\cls} \frac{1}{\xi_{i}}$. %
\subsection{CDT}\label{appendix:cdt_alpha_derivation}
\cdtTheorem*
\begin{proof}
Recall the expected kernel approximation of the CDT predictor at $\rho=\infty$ (learning without bias):
\begin{align*}
    \crl{\alphatilde{i}\cdt}_{i=1}^{\cls} \defeq & \argmin_{\alpha \in \R^{\cls}} \prn*{\sum_{y=1}^{\cls} {\alpha_{y}}^\top \bar{K} \alpha_{y}} 
    \\ 
    &
    \nonumber
    \subjectto ~~\prn*{\frac{\alpha_{y}}{\delta_{y}} - \frac{\alpha_{k}}{\delta_{k}}}^\top \frac{\bar{K}_{[y]}}{N_{y}} \geq 1 ~~\mbox{for all}~~y \in [\cls]~\mbox{and}~k\ne y. \label{opt:cdt_approx}\numberthis
  \end{align*}
We repeat the same analysis as in (\ref{appendix:ma_alpha_derivation}). 

First we explicitly write the optimization problem:
\begin{align}
    \label{eq:cdt_approx_explicit}
    \crl{\alphatilde{i}\cdt}_{i=1}^{\cls} \defeq & \argmin_{\alpha \in \R^{\cls}} \prn*{\sum_{y=1}^{\cls} \sum_{i=1}^{\cls} \alpha_{y[i]}^2  N_{i}^2 \xi_{i}} 
    \\ 
    &
    \nonumber
    \subjectto ~~\prn*{\frac{\alpha_{y[y]}}{\delta_{y}} - \frac{\alpha_{k[y]}}{\delta_{k}}} N_{y}\xi_{y} \geq 1 ~~\mbox{for all}~~y \in [\cls]~\mbox{and}~k\ne y.
\end{align} 
Next we assume the constraints are tight and get:
\begin{align}
    \label{eq:cdt_alpha_iy}
      - {\alpha_{k}}_{[y]} = \frac{\delta_{y}\delta_{k} - \alpha_{y[y]} N_{y} \xi_{y} \delta_{k}}{\delta_{y} N_{y} \xi_{y}}.
\end{align}
Plug in \cref{eq:cdt_alpha_iy} in the objective of \cref{eq:cdt_approx_explicit}:
\begin{align*}
    \min_{\alpha}\sum_{y=1}^{\cls} \sum_{i=1}^{\cls} \alpha_{y[i]}^2  N_{i}^2 \xi_{i} &= \min_{\alpha}\sum_{i=1}^{\cls} \alpha_{i[i]}^2  N_{i}^2 \xi_{i} + \sum_{i=1}^{\cls}\sum_{y \ne i}^{\cls} \alpha_{i[y]}^2  N_{y}^2 \xi_{y} \\
    &= \min_{\alpha}\sum_{i=1}^{\cls} \alpha_{i[i]}^2  N_{i}^2 \xi_{i} + \sum_{i=1}^{\cls}\sum_{y \ne i}^{\cls}\left( \frac{\delta_{i}^2 \delta_{y}^2 -2\delta_{i}^2\delta_{y}\alpha_{y[y]} N_{y}\xi_{y} + \alpha_{y[y]}^2 \delta_{i}^2 N_{y}^2\xi_{y}^2}{\delta_{y}^2 \xi_{y}} \right) \\
    &= \min_{\alpha} F \prn*{\alpha}.
\end{align*}
Moving on to calculate the optimality condition of $F$ for $\alpha_{y[y]}$:
\begin{align*}
    \frac{\partial F}{\partial \alpha_{y[y]}} = 2\alpha_{y[y]} N_{y}^2 \xi_{y} \delta_{y}^2 - &2\sum_{i \ne y}^{\cls} \delta_{i}^2 \delta_{y} N_{y} + 2\sum_{i \ne y}^{\cls} \alpha_{y[y]} \delta_{i}^2 N_{y}^2 \xi_{y} = 0 \\
    & \Longleftrightarrow \\
    \alpha_{y[y]} = &\frac{\sum_{i \ne y} \delta_{i}^2 \delta_{y}}{N_{y}\xi_{y} \sum_{i=1}^{\cls} \delta_{i}^2 }.
\end{align*}
Denote 
\begin{equation*}
    \Delta^{\setminus y} \defeq \sum_{i\ne y} \delta_{i}^2 ~~\mbox{and}~~ \Delta \defeq \sum_{i=1}^{\cls} \delta_{i}^2
\end{equation*}
Then
\begin{align}
    \label{eq:cdt_alpha_star_yy}
    \alphatilde{y[y]}\cdt = \frac{\Delta^{\setminus y} \delta_{y}}{\Delta N_{y}\xi_{y}} ~~\mbox{and}~~ \alphatilde{y[i]}\cdt &= -\frac{\delta_{y}\delta^2_{i}}{\Delta N_{y} \xi_{y}}
\end{align}
where $\alphatilde{i[y]}\cdt$ was obtained by plugging $\alphatilde{y[y]}\cdt$ in \cref{eq:cdt_alpha_iy}. Hence 
\begin{align*}
    \alphatilde{y[i]}\cdt = \frac{\delta_i}{N_i \xi_i} \prn*{\indic{i=y} - \frac{\delta_{y}\delta_{i}}{\Delta}}
\end{align*}
Which means we find a solution that satisfy the constraints and the optimality condition and due to convexity this is sufficient to know we find the optimum.
\end{proof}  %
\section{Error approximation}\label{appendix:error_approximation}
In this section we show how to use the $\alphatilde{}$ and $\btilde{}$ expressions of each predictor (from \Cref{appendix:method_analysis}) to derive the error score parameters. Then for MA and LA we use the error expressions to derive near optimal hyper-parameters for the worst class error. The order of predictors in this section is: MA, MM, LA and CDT. %
\subsection{Margin adjustment}\label{appendix:ma_err_derivation}
We show the derivation for $\nuYhat{y}$ and $\SigmaYhat{y}$ using the optimal solution of the approximated margin problem, \cref{opt:non_uniform_margin_with_bias_approx}.
\begin{align*}
    \alpha_{y[i]}\ma=\frac{\delta_{i}}{N_{i}\xi_{i}}\left(\indic{i=y}-\frac{1}{c}\right)+\frac{\sum_{j=1}^{c}\left[\frac{\delta_{j}}{\xi_{j}}-\frac{\delta_{y}}{\xi_{y}}\right]}{cN_{i}\xi_{i}\left(M+\rho\right)} ~~\mbox{and}~~ \btilde{[y]}\ma = \frac{\sum_{i=1}^{\cls} \brk*{\frac{\delta_{y}}{\xi_{y}} - \frac{\delta_{i}}{\xi_{i}}}}{\cls \left(M+\rho\right)}.
\end{align*}
Therefore,
\begin{align*}
    \alphatilde{y[i]}\ma - \alphatilde{k[i]}\ma=\frac{\delta_{i}}{N_{i}\xi_{i}}\left(\indic{i=y}-\indic{i=k}\right) + \frac{\Delta_{k}}{N_{i}\xi_{i}\left(M+\rho\right)},
\end{align*}
and 
\begin{align*}
    \btilde{y}\ma - \btilde{k}\ma = -\frac{\Delta_{k} }{ \left(M+\rho\right)},
\end{align*}
where 
\begin{align*}
    \Delta_{k}\defeq \frac{\delta_{k}}{\xi_{k}}-\frac{\delta_{y}}{\xi_{y}} ~~\mbox{and}~~ M \defeq \sum_{i=1}^{\cls} \frac{1}{\xi_i}.
\end{align*}
Consequently \cref{eq:nu_and_sigma_hat}
\begin{flalign*}
	\hat{\Sigma}_{[k,k']} & =\sum_{i}\left(\alpha_{y[i]}\ma-\alpha_{k[i]}\ma\right)\left(\alpha_{y[i]}\ma-\alpha_{k'[i]}\ma\right)N_{i}^{2}\xi_{i}\\
	& =A_{[k,k']}+\frac{1}{M+\rho}\left(B_{[k,k']}+B_{[k',k]}\right)+\frac{1}{\left(M+\rho\right)^{2}}C_{[k,k']},
\end{flalign*}
where
\begin{flalign*}
	A_{[k,k']} & =\sum_{i}\frac{\delta_{i}^{2}}{\xi_{i}}\left(\indic{i=y}-\indic{i=k}\right)\left(\indic{i=y}-\indic{i=k'}\right)=\frac{\delta_{y}^{2}}{\xi_{y}}+\frac{\delta_{k}^{2}}{\xi_{k}}\indic{k=k'}\\
	B_{[k,k']} & =\sum_{i}\frac{\delta_{i}}{\xi_{i}}\left(\indic{i=y}-\indic{i=k}\right)\Delta_{k'}=\left(\frac{\delta_{y}}{\xi_{y}}-\frac{\delta_{k}}{\xi_{k}}\right)\Delta_{k'}=-\Delta_{k}\Delta_{k'}=B_{[k',k]}\\
	C_{[k,k']} & =\sum_{i}\frac{1}{\xi_{i}}\Delta_{k}\Delta_{k'}=M\Delta_{k}\Delta_{k'}.
\end{flalign*}
Substituting back, we have
\begin{flalign*}
	\hat{\Sigma}_{[k,k']} & =\frac{\delta_{y}^{2}}{\xi_{y}}+\frac{\delta_{k}^{2}}{\xi_{k}}\indic{k=k'}-\Delta_{k}\Delta_{k'}\left(\frac{2}{M+\rho}-\frac{M}{\left(M+\rho\right)^{2}}\right)
	\\ &= \frac{\delta_{y}^{2}}{\xi_{y}}+\frac{\delta_{k}^{2}}{\xi_{k}}\indic{k=k'}-\frac{M+2\rho}{\left(M+\rho\right)^{2}}\left(\frac{\delta_{k}}{\xi_{k}}-\frac{\delta_{y}}{\xi_{y}}\right)\left(\frac{\delta_{k'}}{\xi_{k'}}-\frac{\delta_{y}}{\xi_{y}}\right)
    \\ &= \frac{\delta_{y}^{2}}{\xi_{y}}+\frac{\delta_{k}^{2}}{\xi_{k}}\indic{k=k'} + \psi_{k, k'}(\rho)\label{appendix:eq:sigma_ma} \numberthis
\end{flalign*}
Now look at $\nuYhat{y}$, from \cref{eq:nu_and_sigma_hat}:
\begin{align*}
    \nuYhat{y}_{[i]} &= \frac{1}{\sigma}\prn*{\prn*{\alphatilde{y[y]}\ma - \alphatilde{i[y]}\ma} N_{y} \norm{\mu_{y}}^2  + \btilde{y}\ma - \btilde{i}\ma} \overeq{(1)} s_{y}\frac{\delta_{y}}{\xi_{y}} + \phi_{i}\ma\prn*{\rho} \label{appendix:eq:nu_ma} \numberthis
\end{align*}
where,
\begin{align*}
    \phi_{i}\ma\prn*{\rho} = \prn*{\frac{s_{y}}{\xi_{y}} -\sqrt{d}} \frac{\Delta_{i} }{ \left(M+\rho\right)} ~\mbox{and}~ \psi_{k, k'}\ma \prn*{\rho} = -\frac{M+2\rho}{\left(M+\rho\right)^{2}}\left(\frac{\delta_{k}}{\xi_{k}}-\frac{\delta_{y}}{\xi_{y}}\right)\left(\frac{\delta_{k'}}{\xi_{k'}}-\frac{\delta_{y}}{\xi_{y}}\right)
\end{align*}
(1) holds since $\frac{1}{\sigma} = \sqrt{d}$ and $\norm{\mu_{y}}^{2} = \frac{s_y}{\sqrt{d}}$.  
Therefore,
\begin{align*}
    \lim_{d\rightarrow \infty }\ErrYtoK(\Wtilde{}\ma, \btilde{}\ma) = Q \left( \frac{s_{y}N_{y}\deltatilde{y} + \tilde{\phi}_{k}\ma \prn*{\rho} }{\sqrt{\deltatilde{y}^2N_{y} + \deltatilde{k}^2 N_{k} + \tilde{\psi}_{k,k}\ma \prn*{\rho}}  } \right)
\end{align*}
where $\deltatilde{i} \defeq \lim_{d \rightarrow \infty} \delta_{i}$.
\begin{remark}
    The expressions for the maximum margin predictor are obtained by setting $\delta_{i}=1$.
\end{remark}
Recall that the lower bound for the worst class error of MA is defined by: 
\begin{align*}
    \WCELB{\MA}\prn*{\delta, \rho} \defeq \max_{k \ne y} \lim_{d \rightarrow \infty} \ErrYtoK \prn*{\Wtilde{}\ma \prn*{\delta, \rho}, \btilde{}\ma \prn*{\delta, \rho}}.
\end{align*}
We now continue by showing how to find near-optimal margins for promoting fairness by the MA predictor.
\paragraph{Near optimal margins.}
\begin{theorem}\label{ma:near_opt_margins}
    For any dataset $\mathcal{D} \sim \Pd{}$ it holds that
    \begin{equation*}
        \delta_{i}^\star \defeq \frac{\xi_{i}}{\norm{\mu_{i}}^2 + 2 \prn*{M + \rho}^{-1}}
    \in \argmin_{\delta} \WCELB{\MA}\prn*{\delta, \rho},
    \end{equation*}
\end{theorem}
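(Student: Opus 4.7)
Since $Q$ is strictly decreasing, minimizing $\WCELB{\MA}(\delta,\rho)$ over $\delta$ is equivalent to maximizing $T(\delta,\rho)\defeq\min_{y\ne k}\theta_{y,k}(\delta,\rho)$, where $\theta_{y,k}(\delta,\rho)\defeq\lim_{d\to\infty}\nuYhat{y}_{[k]}/\sqrt{\SigmaYhat{y}_{[k,k]}}$ is the limiting argument of $Q$ computed from the closed forms in \cref{appendix:eq:nu_ma}--\cref{appendix:eq:sigma_ma}. Two reductions simplify matters. First, the MA problem \cref{opt:ma-constraint} is invariant under $\delta\mapsto c\delta$ (this sends $(W,b)\mapsto(cW,cb)$ and preserves the induced classifier), so we may normalize $\delta$ freely. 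Second, writing $\gamma_i\defeq\delta_i/\xi_i$, the bias-driven contribution to $\nuYhat{y}_{[k]}$ contains the factor $\sqrt{d}\,(\gamma_k-\gamma_y)/(M+\rho)$; for $\rho<\infty$ this forces any maximizer of $T$ to satisfy $\gamma_i\to\gamma_0$ for a common limit $\gamma_0$, since otherwise some pair with $\gamma_k>\gamma_y$ has $\nuYhat{y}_{[k]}\to-\infty$ and the corresponding $\theta_{y,k}\to-\infty$ drives $T\to-\infty$.

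Restricted to this balanced regime, I would expand $\gamma_i=\gamma_0+\tilde\gamma_i/\sqrt{d}+o(1/\sqrt{d})$, so that the divergent bias term becomes the finite quantity $(\tilde\gamma_k-\tilde\gamma_y)/(M+\rho)$ in the limit; for $\rho=\infty$ the prefactor $1/(M+\rho)$ already suppresses the divergence and no expansion is needed. Plugging $\delta^\star_i=\xi_i/(\|\mu_i\|^2+2(M+\rho)^{-1})$ into this framework and using $\xi_i\to1/N_i$, $\sqrt{d}\,\|\mu_i\|^2\to s_i$, and $M\to\sum_j N_j$, a direct computation shows $\delta^\star$ lies in the balanced family and yields $\theta_{y,k}(\delta^\star,\rho)=(s_y+s_k)/(2\sqrt{N_y^{-1}+N_k^{-1}})$ for $\rho<\infty$ and $\theta_{y,k}(\delta^\star,\infty)=((s_y^2 N_y)^{-1}+(s_k^2 N_k)^{-1})^{-1/2}$, matching \cref{eq:ma_wce_lower_bound}.

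The main obstacle is verifying that $\delta^\star$ actually \emph{maximizes} $T$, rather than being a mere candidate with the right value. The reduction above confines candidate optimizers to a finite-dimensional family parameterized by $\tilde\gamma_i$ modulo scale, turning the problem into a max-min over $\cls-1$ free variables. I plan to verify optimality via KKT conditions: at $\delta^\star$, the minimum is attained on some subset of pairs $(y,k)$, and one must exhibit non-negative multipliers $\lambda_{y,k}$ supported on that subset with $\sum_{y,k}\lambda_{y,k}\nabla_{\tilde\gamma}\theta_{y,k}(\delta^\star)=0$. The delicate point is combinatorial: many pairs may tie at the minimum, so one must show that the sign pattern of the partial derivatives ($\partial_{\tilde\gamma_y}\theta_{y,k}>0$, $\partial_{\tilde\gamma_k}\theta_{y,k}<0$, zero otherwise), combined with the symmetric contribution from the reversed pair $(k,y)$, admits the required decomposition. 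Equivalently, a direct perturbation argument---showing that any feasible direction in $\tilde\gamma$ strictly decreases at least one binding $\theta_{y,k}$---gives local optimality, and the simple form of $\theta_{y,k}$ (an affine function divided by the square root of a positive quadratic in $\gamma$) makes this calculation tractable.
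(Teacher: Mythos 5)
Your setup is sound and close in spirit to the paper's: the reduction via monotonicity of $Q$ to a max--min over the $Q$-arguments, the scale invariance of the MA problem, and the observation that for $\rho<\infty$ the $\sqrt{d}\,(\delta_k/\xi_k-\delta_y/\xi_y)/(M+\rho)$ term forces any maximizer to have $\gamma_i=\delta_i/\xi_i$ equalized to leading order are all correct, as is your check that $\delta^\star$ attains the values in \cref{eq:ma_wce_lower_bound}. But the proposal stops exactly where the theorem's content begins. You write that ``the main obstacle is verifying that $\delta^\star$ actually maximizes $T$'' and then only outline a KKT/perturbation plan, flagging the combinatorics of which pairs are binding as the delicate point without resolving it; for $\rho=\infty$ even the confinement to a finite-dimensional family is absent. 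As written this is a candidate with the right value plus an unproven optimality claim, which is a genuine gap.

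The paper closes that gap without any KKT machinery by exploiting a structural feature your plan does not use: in the finite-$d$ expression \cref{eq:ma_with_bias_err_term}, the term for an ordered pair $(y,k)$ depends on $\delta$ only through $(\delta_y,\delta_k)$, and since its numerator and denominator are both homogeneous of degree one, only through the ratio $\delta_y/\delta_k$. The two directional terms $y\to k$ and $k\to y$ are monotone in this ratio in opposite directions, so their minimum is maximized at the crossing point; this gives, for every pair, an upper bound on the pairwise minimum that no choice of $\delta$ can exceed. The decisive observation is then that the crossing condition is solved, up to corrections that vanish as $d\to\infty$, by a $\delta$ whose $i$-th entry depends only on class $i$'s own parameters, namely $\delta_i^\star=\xi_i/(\norm{\mu_i}^2+2(M+\rho)^{-1})$, so a single $\delta$ attains all pairwise upper bounds simultaneously and is therefore a global maximizer of the overall min. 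This simultaneous-equalization property is both the missing ingredient in your argument and the reason the answer has its separable per-class form; the KKT route you propose would have to adjudicate among up to $\cls(\cls-1)$ potentially active constraints, which is precisely the difficulty the pairwise decoupling lets the paper avoid.
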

\begin{proof}
    We would like to find a set of margins that minimizes the worst class error, \cref{eq:wge-and-be}. To do so, we are looking for a set of margins, $\crl{\delta_{i}}_{i=1}^{\cls}$ that minimizes the lower bound of the worst class error at the limit of $d\rightarrow \infty$. To find such margins we would have to analyze $\WCELB{\MA} \prn*{\delta, \rho}$ at a finite dimension, $d$, since the order of limits of $\rho \to \infty$ and $d \to \infty$ matters.
    \begin{align*}
        \min_{\delta} \max_{k \ne y} Q \prn*{\frac{\frac{s_{y}}{\xi_{y}} \brk*{\frac{\delta_{k}}{\xi_{k}} + \delta_{y} \prn*{\Mbar{y} + \rho}} + \sqrt{d} \brk*{\frac{\delta_{y}}{\xi_{y}} - \frac{\delta_{k}}{\xi_{k}}}}{\sqrt{\frac{\prn*{\frac{\delta_{k}}{\xi_{k}} + \delta_{y}\brk*{\Mbar{y} + \rho}}^2}{\xi_{y}} + \frac{\prn*{\frac{\delta_{y}}{\xi_{y}} + \delta_{k}\brk*{\Mbar{k} + \rho}}^2}{\xi_{k}} + \prn*{\frac{\delta_{k}}{\xi_{k}} - \frac{\delta_{y}}{\xi_{y}}}^2 \Mbar{y,k}}}}.
    \end{align*}
    where $\Mbar{y} \defeq \sum_{i \ne y}^{\cls} \frac{1}{\xi_{i}}$.

    By the monotonicity of the Q-function this is equivalent to find a set of margins that maximizes:
    \begin{align}
        \label{eq:ma_with_bias_err_term}
        \max_{\delta} \min_{y} \min_{y \ne k} \frac{\frac{s_{y}}{\xi_{y}} \brk*{\frac{\delta_{k}}{\xi_{k}} + \delta_{y} \prn*{\Mbar{y} + \rho}} + \sqrt{d} \brk*{\frac{\delta_{y}}{\xi_{y}} - \frac{\delta_{k}}{\xi_{k}}}}{\sqrt{\frac{\prn*{\frac{\delta_{k}}{\xi_{k}} + \delta_{y}\brk*{\Mbar{y} + \rho}}^2}{\xi_{y}} + \frac{\prn*{\frac{\delta_{y}}{\xi_{y}} + \delta_{k}\brk*{\Mbar{k} + \rho}}^2}{\xi_{k}} + \prn*{\frac{\delta_{k}}{\xi_{k}} - \frac{\delta_{y}}{\xi_{y}}}^2 \Mbar{y,k}}}
    \end{align}
    We start by explaining why $\delta^\star$ (that satisfy \cref{eq:ma_with_bias_err_term}) equalizes each $\delta_{y}, \delta_{k}$ dependent pair:
    \begin{align}
        \label{eq:ma_with_bias_equations1}
        \frac{s_{y}}{\xi_{y}} \brk*{\frac{\delta_{k}}{\xi_{k}} + \delta_{y} \prn*{\Mbar{y} + \rho}} + \sqrt{d} \brk*{\frac{\delta_{y}}{\xi_{y}} - \frac{\delta_{k}}{\xi_{k}}} = \frac{s_{k}}{\xi_{k}} \brk*{\frac{\delta_{y}}{\xi_{y}} + \delta_{k} \prn*{\Mbar{k} + \rho}} + \sqrt{d} \brk*{\frac{\delta_{k}}{\xi_{k}} - \frac{\delta_{y}}{\xi_{y}}}.
    \end{align}
    \begin{remark}
        Notice that denominator of \cref{eq:ma_with_bias_err_term} is symmetric for $y$ and $k$, hence we can analyze the numerators.
    \end{remark}
    First we note that there is always a set of positive margins that satisfy \cref{eq:ma_with_bias_equations1}. Let $\delta_{i}' = \frac{\delta_{i}}{\xi_{i}}$ and plug in $\delta_{y}', \delta_{k}'$ in \cref{eq:ma_with_bias_equations1}.
    \begin{align}
        \label{eq:ma_with_bias_equations2}
        \frac{s_{y}\delta_{k}'}{\xi_{y}}  + s_{y}\delta_{y}'\prn*{\Mbar{y} + \rho} + \sqrt{d} \brk*{\delta_{y}' - \delta_{k}'} = \frac{s_{k}\delta_{y}'}{\xi_{k}}  + s_{k}\delta_{k}'\prn*{\Mbar{k} + \rho} + \sqrt{d} \brk*{\delta_{k}' - \delta_{y}'}.
    \end{align}
    Let $\delta_{k} = \xi_{k} \Longrightarrow \delta_{k}' = 1$ and substitute in \cref{eq:ma_with_bias_equations2}.
    \begin{align*}
        \frac{s_{y}}{\xi_{y}}  + s_{y}\delta_{y}'\prn*{\Mbar{y} + \rho} + \sqrt{d} \brk*{\delta_{y}' - 1} = \frac{s_{k}\delta_{y}'}{\xi_{k}}  + s_{k}\prn*{\Mbar{k} + \rho} + \sqrt{d} \brk*{1 - \delta_{y}'}.
    \end{align*}
    Which yields:
    \begin{align*}
        \delta_{y}' = \frac{s_{k} \prn*{M + \rho} + 2\sqrt{d} - \frac{s_{y}}{\xi_{y}} - \frac{s_{k}}{\xi_{k}}}{s_{y} \prn*{M + \rho} + 2\sqrt{d} - \frac{s_{y}}{\xi_{y}} - \frac{s_{k}}{\xi_{k}}}.
    \end{align*}
    In addition, $\delta_{y}' > 0$ since: 
    \begin{align*}
        2\sqrt{d} - \frac{s_{y}}{\xi_{y}} - \frac{s_{k}}{\xi_{k}} \overeq{(1)} 2\sqrt{d} - \frac{\sqrt{d}}{1 + \frac{\sqrt{d}}{s_{y}N_{y}}} - \frac{\sqrt{d}}{1 + \frac{\sqrt{d}}{s_{k}N_{k}}} > 0.
    \end{align*}
    where $(1)$ holds since $\xi_{i} \defeq \norm{\mu_{i}}^2 + \frac{\sigma^2 d}{N_{i}} = \frac{s_{i}}{\sqrt{d}} + \frac{1}{N_{i}}$. Note that this set of margins is not necessarily feasible for the case of $\cls > 2$ (multiclass). However, we later show that we can make $\delta_{i}$ depends only on its parameters which is asymptotically optimal.
    We continue by showing that the error term of the $y$'th class is monotonically increasing in $\delta_{y}'$ and that the error term of the $k$'th class is monotonically decreasing in $\delta_{y}'$ which means that the optimum of \cref{eq:ma_with_bias_err_term} is obtained at the intersection of the error term functions.
    We start by rearranging the term inside the Q-function of \cref{eq:ma_with_bias_err_term} as follows (for $\delta_{y}'$ and $\delta_{k}'$ as described above):
    \begin{align*}
        \frac{s_{y}\delta_{y}' \prn*{M + \rho} + \brk*{\sqrt{d} - \frac{s_{y}}{\xi_{y}}} \brk*{\delta' - 1}}{\sqrt{\frac{\prn*{1 + \delta_{y}'\xi_{y}\brk*{\Mbar{y} + \rho}}^2}{\xi_{y}} + \frac{\prn*{\delta_{y}' + \xi_{k} \brk*{\Mbar{k} + \rho}}^2}{\xi_{k}} + \prn*{1 - \delta_{y}'}^2 \Mbar{y,k}}}
    \end{align*}

    Let $f\prn*{\delta_{y}'}$ and $g\prn*{\delta_{y}'}$ be defined as the functions of the $\delta_{y}, \delta_{k}$ dependent terms as follows:
    \begin{align*}
        f\prn*{\delta_{y}'} \defeq \frac{s_{y}\delta_{y}' \prn*{M + \rho}+ \brk*{\sqrt{d} - \frac{s_{y}}{\xi_{y}}} \brk*{\delta_{y}' - 1}}{\sqrt{\frac{\prn*{1 + \delta_{y}'\xi_{y}\brk*{\Mbar{y} + \rho}}^2}{\xi_{y}} + \frac{\prn*{\delta_{y}' + \xi_{k}\brk*{\Mbar{k} + \rho}}^2}{\xi_{k}} + \prn*{1 - \delta_{y}'}^2 \Mbar{y,k}}} = \frac{f_{1}\prn*{\delta_{y}'}}{h\prn*{\delta_{y}'}},
    \end{align*}
    and,
    \begin{align*}
        g\prn*{\delta_{y}'} \defeq \frac{s_{k} \prn*{M + \rho}+ \brk*{\sqrt{d} - \frac{s_{k}}{\xi_{k}}} \brk*{1 - \delta_{y}'}}{\sqrt{\frac{\prn*{1 + \delta_{y}'\xi_{y}\brk*{\Mbar{y} + \rho}}^2}{\xi_{y}} + \frac{\prn*{\delta_{y}' + \xi_{k}\brk*{\Mbar{k} + \rho}}^2}{\xi_{k}} + \prn*{1 - \delta_{y}'}^2 \Mbar{y,k}}} = \frac{g_{1}\prn*{\delta_{y}'}}{h\prn*{\delta_{y}'}}.
    \end{align*}
    Starting by looking at the derivative of $h(\delta_{y}')$:
    \begin{align*}
        h'(\delta_{y}') &= \prn*{\frac{\prn*{1 + \delta_{y}'\xi_{y}\brk*{\Mbar{y} + \rho}}^2}{\xi_{y}} + \frac{\prn*{\delta_{y}' + \xi_{k}\brk*{\Mbar{k} + \rho}}^2}{\xi_{k}} + \prn*{1 - \delta_{y}'}^2 \Mbar{y,k}}^{-1/2} \times\\
        & \prn*{\prn*{1 + \delta_{y}'\xi_{y}\brk*{\Mbar{y} + \rho}}\brk*{\Mbar{y} + \rho} + \frac{\prn*{\delta_{y}' + \xi_{k}\brk*{\Mbar{k} + \rho}}}{\xi_{k}} - \prn*{1 - \delta_{y}'} \Mbar{y,k}} \\
        &= \underbrace{\prn*{\frac{\prn*{1 + \delta_{y}'\xi_{y}\brk*{\Mbar{y} + \rho}}^2}{\xi_{y}} + \frac{\prn*{\delta_{y}' + \xi_{k}\brk*{\Mbar{k} + \rho}}^2}{\xi_{k}} + \prn*{1 - \delta_{y}'}^2 \Mbar{y,k}}^{-1/2}}_{\geq 0} \times\\
        & \underbrace{\prn*{\prn*{1 + \delta_{y}'\xi_{y}\brk*{\Mbar{y} + \rho}}\brk*{\Mbar{y} + \rho} + \frac{\prn*{\delta_{y}' + \xi_{k}\rho}}{\xi_{k}} + \delta_{y}' \Mbar{y,k} + \frac{1}{\xi_{y}}}}_{\geq 0} \geq 0
    \end{align*}
    Therefore $h$ is monotonically increasing in $\delta_{y}'$.

    We continue by showing that $f_1$ and $g_2$ are monotonically increasing and decreasing in $\delta_{y}'$ respectively.
    \begin{align*}
        f'(\delta_{y}') = s_{y}\prn*{M + \rho}+ \brk*{\sqrt{d} - \frac{s_{y}}{\xi_{y}}} \overset{(*)}{>} 0
    \end{align*}
    and
    \begin{align*}
        g'(\delta_{y}') = - \brk*{\sqrt{d} - \frac{s_{k}}{\xi_{k}}} \overset{(*)}{<} 0
    \end{align*}
    (*) for large enough $d$, which means that $f$ and $g$ are monotonically increasing and decreasing in $\delta_{y}'$ respectively.

    Therefore, the optimal $\delta$ must satisfy \cref{eq:ma_with_bias_equations1} which is obtained by
    \begin{align*}
        \delta_{y}' = \frac{s_{k} \prn*{M + \rho} + 2\sqrt{d} - \frac{s_{y}}{\xi_{y}} - \frac{s_{k}}{\xi_{k}}}{s_{y} \prn*{M + \rho} + 2\sqrt{d} - \frac{s_{y}}{\xi_{y}} - \frac{s_{k}}{\xi_{k}}} \ \text{ and } \ \delta_{k}' = 1.
    \end{align*}
    In order to make it symmetric we can multiply both deltas by $\frac{1}{s_{k} \prn*{M + \rho} + 2\sqrt{d} - \frac{s_{y}}{\xi_{y}} - \frac{s_{k}}{\xi_{k}}}$ and obtain
    \begin{align*}
        \delta_{y}' = \frac{1}{s_{y} \prn*{M + \rho} + 2\sqrt{d} - \frac{s_{y}}{\xi_{y}} - \frac{s_{k}}{\xi_{k}}} \ \text{ and } \ \delta_{k}' = \frac{1}{s_{k} \prn*{M + \rho} + 2\sqrt{d} - \frac{s_{y}}{\xi_{y}} - \frac{s_{k}}{\xi_{k}}}.
    \end{align*}
    Which means that: 
    \begin{align*}
        \delta_{y}^\star = \frac{\xi_{y}}{\frac{s_{y} \prn*{M + \rho}}{2\sqrt{d}} + 1 - \frac{s_{y}}{2\sqrt{d}\xi_{y}} - \frac{s_{k}}{2\sqrt{d}\xi_{k}}} \ \text{ and } \ \delta_{k}^\star = \frac{\xi_{k}}{\frac{s_{k} \prn*{M + \rho}}{2\sqrt{d}} + 1 - \frac{s_{y}}{2\sqrt{d}\xi_{y}} - \frac{s_{k}}{2\sqrt{d}\xi_{k}}}.
    \end{align*}
    Since $\delta_{y}^\star$ can not be dependent on $\xi_{k}$ and $s_{k}$ for all the classes $k \ne y$ we neglect the $-\frac{s_{y}}{2\sqrt{d}\xi_{y}} - \frac{s_{k}}{2\sqrt{d}\xi_{k}}$ factor. This is valid since when looking at \cref{eq:ma_with_bias_err_term} at the limit of $d\rightarrow \infty$ the only term that is affected by this factor is $\sqrt{d} \brk*{\frac{\delta_{y}^\star}{\xi_{y}} - \frac{\delta_{k}^\star}{\xi_{k}}}$  (in any other case this factor vanishes). Additionally notice that when we plug in $\delta^\star$ in $\sqrt{d} \brk*{\frac{\delta_{y}^\star}{\xi_{y}} - \frac{\delta_{k}^\star}{\xi_{k}}}$ we get:
    \begin{align*}
        \sqrt{d} \brk*{\frac{\delta_{y}^\star}{\xi_{y}} - \frac{\delta_{k}^\star}{\xi_{k}}} &= \sqrt{d} \brk*{\frac{1}{\frac{s_{y} \prn*{M + \rho}}{2\sqrt{d}} + 1 - \frac{s_{y}}{2\sqrt{d}\xi_{y}} - \frac{s_{k}}{2\sqrt{d}\xi_{k}}} - \frac{1}{\frac{s_{k} \prn*{M + \rho}}{2\sqrt{d}} + 1 - \frac{s_{y}}{2\sqrt{d}\xi_{y}} - \frac{s_{k}}{2\sqrt{d}\xi_{k}}}} \\
        &= \sqrt{d} \brk*{\frac{s_{k} \prn*{M + \rho}}{2\sqrt{d}} + 1 - \frac{s_{y}}{2\sqrt{d}\xi_{y}} - \frac{s_{k}}{2\sqrt{d}\xi_{k}} - \frac{s_{y} \prn*{M + \rho}}{2\sqrt{d}} - 1 + \frac{s_{y}}{2\sqrt{d}\xi_{y}} + \frac{s_{k}}{2\sqrt{d}\xi_{k}}} \\
        &= \sqrt{d} \brk*{\frac{s_{k} \prn*{M + \rho}}{2\sqrt{d}} - \frac{s_{y} \prn*{M + \rho}}{2\sqrt{d}}}
    \end{align*}
    Which is essentially the same expression that is obtained for 
    \begin{align*}
        \hat{\delta_{y}} = \frac{\xi_{y}}{\norm{\mu_{y}}^2 + 2\prn*{M + \rho}^{-1}}
    \end{align*}
    Therefore, at $d \to \infty$ the error terms of $\delta^\star$ and $\hat{\delta}$ converge to the same value and
    \begin{align*}
        \frac{\xi_{y}}{\norm{\mu_{y}}^2 + 2\prn*{M + \rho}^{-1}} \in \argmin_{\delta} \WCELB{\MA}\prn*{\delta, \rho}.
    \end{align*}
\end{proof}
Following \Cref{ma:near_opt_margins} we evaluate the lower bound of the worst class error of MA at $\delta^\star$ when $\rho=\infty$ and $\rho<\infty$ i.e., a predictor without and with bias respectively. 
Recall that 
\begin{align*}
    \lim_{d\rightarrow \infty }\ErrYtoK(\Wtilde{}\ma, \btilde{}\ma) = Q \left( \frac{s_{y}N_{y}\deltatilde{y} + \tilde{\phi}_{k}\ma \prn*{\rho} }{\sqrt{\deltatilde{y}^2N_{y} + \deltatilde{k}^2 N_{k} + \tilde{\psi}_{k,k}\ma \prn*{\rho}}  } \right),
\end{align*}
 where 
 \begin{align*}
    \phi_{i}\ma\prn*{\rho} = \prn*{\frac{s_{y}}{\xi_{y}} -\sqrt{d}} \frac{\frac{\delta_{i}}{\xi_{k}}-\frac{\delta_{y}}{\xi_{y}} }{ M+\rho} ~\mbox{and}~ \psi_{k, k'}\ma \prn*{\rho} = -\frac{M+2\rho}{\left(M+\rho\right)^{2}}\left(\frac{\delta_{k}}{\xi_{k}}-\frac{\delta_{y}}{\xi_{y}}\right)\left(\frac{\delta_{k'}}{\xi_{k'}}-\frac{\delta_{y}}{\xi_{y}}\right).
\end{align*}
Notice that for $\rho=\infty$ it holds that:
\begin{align*}
    \phi_{i}\ma\prn*{\infty} = 0 ~~\mbox{,}~~ \psi_{k, k'}\ma \prn*{\infty} = 0 ~~\mbox{and}~~
    \lim_{d\rightarrow \infty }\ErrYtoK(\Wtilde{}\ma, \btilde{}\ma) = Q \left( \frac{1}{\sqrt{\frac{1}{s_{y}^2N_{y}} + \frac{1}{s_{k}^2N_{k}}}  } \right).
\end{align*}
For the case of $\rho < \infty$ we get when looking at the error term at a finite $d$ it hold that:
\begin{align*}
     & \frac{s_{y}N_{y} \frac{\xi_{y}}{\norm{\mu_{y}}^2 + 2\prn*{M + \rho}^{-1}} + \phi_{k} (\rho)}{\sqrt{\prn*{\frac{\xi_{y}}{\norm{\mu_{y}}^2 + 2\prn*{M + \rho}^{-1}}}^2 N_{y} + \prn*{\frac{\xi_{k}}{\norm{\mu_{k}}^2 + 2\prn*{M + \rho}^{-1}}}^2 N_{k} + \psi_{y, k}(\rho)}} = \\
     & \frac{s_{y}N_{y} \frac{\frac{s_{y}}{\sqrt{d}} + \frac{1}{N_{y}}}{\frac{s_{y}}{\sqrt{d}} + 2\prn*{M + \rho}^{-1}} - \sqrt{d}\brk*{\frac{1}{\frac{s_{k}}{\sqrt{d}} + 2\prn*{M + \rho}^{-1}} - \frac{1}{\frac{s_{y}}{\sqrt{d}} + 2\prn*{M + \rho}^{-1}}}\prn*{M+\rho}^{-1}}{\sqrt{\prn*{\frac{\frac{s_{y}}{\sqrt{d}} + \frac{1}{N_{y}}}{\frac{s_{y}}{\sqrt{d}} + 2\prn*{M + \rho}^{-1}}}^2 N_{y} + \prn*{\frac{\frac{s_{k}}{\sqrt{d}} + \frac{1}{N_{k}}}{\frac{s_{k}}{\sqrt{d}} + 2\prn*{M + \rho}^{-1}}}^2 N_{k} -\frac{M+2\rho}{\left(M+\rho\right)^{2}}\left(\frac{1}{\frac{s_{k}}{\sqrt{d}} + 2\prn*{M + \rho}^{-1}} - \frac{1}{\frac{s_{y}}{\sqrt{d}} + 2\prn*{M + \rho}^{-1}}\right)^2}} 
\end{align*}
which results at the limit of $d \rightarrow \infty$ in
\begin{align*}
    \frac{s_{y} + s_{k}}{2\sqrt{N_{y}^{-1} + N_{k}^{-1}}}.
\end{align*}
\begin{corollary}
	We have
    \begin{align}
        \label{appendix:opt_ma_lower_bound}
        \WCELB{\MA}\prn*{\delta^\star, \rho} = \begin{cases}
          \max_{k \ne y} Q \left(\frac{1}{\sqrt{\prn*{s_{y}^2 N_{y}}^{-1} + \prn*{s_{k}^2 N_{k}}^{-1}}} \right) & \rho = \infty \\
          \max_{k \ne y} Q \prn*{ \frac{s_{y} + s_{k}}{2\sqrt{N_{y}^{-1} + N_{k}^{-1}}}} & \rho < \infty.
        \end{cases}
      \end{align}
\end{corollary}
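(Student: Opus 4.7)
The plan is to substitute the near-optimal margins $\delta^\star$ from \Cref{ma:near_opt_margins} into the general per-pair error formula
\[
\lim_{d\to\infty}\ErrYtoK(\Wtilde{}\ma,\btilde{}\ma) = Q\!\left(\frac{s_y N_y \deltatilde{y} + \tilde\phi_k\ma(\rho)}{\sqrt{\deltatilde{y}^{\,2} N_y + \deltatilde{k}^{\,2} N_k + \tilde\psi_{k,k}\ma(\rho)}}\right)
\]
already derived in this appendix from \cref{appendix:eq:nu_ma,appendix:eq:sigma_ma}, then take the maximum over $k\neq y$ in each of the two regimes. The argument splits cleanly along the jump in the formulas for $\phi\ma$ and $\psi\ma$ as $\rho$ passes from finite to infinite.

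For $\rho = \infty$, I would use the fact, visible directly from the definitions
\(
\phi_i\ma(\rho) = (\tfrac{s_y}{\xi_y}-\sqrt d)\tfrac{\delta_i/\xi_k - \delta_y/\xi_y}{M+\rho}
\)
and
\(
\psi_{k,k'}\ma(\rho) = -\tfrac{M+2\rho}{(M+\rho)^2}(\tfrac{\delta_k}{\xi_k}-\tfrac{\delta_y}{\xi_y})(\tfrac{\delta_{k'}}{\xi_{k'}}-\tfrac{\delta_y}{\xi_y}),
\)
that both correction terms vanish. Since the per-pair expression is invariant under rescaling $\delta\to c\delta$, I can normalize $\delta^\star_i = \xi_i/\|\mu_i\|^2$ by extracting the common factor, obtaining $\tilde\delta_i \propto 1/(s_i N_i)$ in the $d\to\infty$ limit (using $\xi_i\to 1/N_i$ and $\|\mu_i\|^2 = s_i/\sqrt d$). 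Plugging $\tilde\delta_i = 1/(s_i N_i)$ into the simplified numerator $s_y N_y \tilde\delta_y$ and denominator $\sqrt{\tilde\delta_y^2 N_y + \tilde\delta_k^2 N_k}$ reduces directly to $Q(1/\sqrt{(s_y^2 N_y)^{-1}+(s_k^2 N_k)^{-1}})$, giving the first case after maximizing over $k\neq y$.

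For $\rho < \infty$ the work is essentially already done inside the proof of \Cref{ma:near_opt_margins}. There, substituting $\delta_y' = \delta^\star_y/\xi_y$ and $\delta_k' = \delta^\star_k/\xi_k$ and taking $d\to\infty$ was shown to reduce the $y\to k$ error argument of $Q$ exactly to $(s_y + s_k)/(2\sqrt{N_y^{-1} + N_k^{-1}})$; by the symmetry equation \cref{eq:ma_with_bias_equations1}, the corresponding $k\to y$ pairwise error is the same. So I only need to read off that limit, observe it is already symmetric in $y$ and $k$, and take the maximum over pairs $k\neq y$ to obtain the second case.

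The main obstacle is bookkeeping in the $\rho<\infty$ case: $\phi_k\ma$ contains a $\sqrt d$ factor and $\psi_{k,k}\ma$ contains $(M+\rho)^{-2}$ with $M\to N$, while $\delta^\star_i\to (N+\rho)/(2N_i)$ and $\xi_i\to 1/N_i$. Naively combining these gives several $\sqrt d$ terms in both numerator and denominator that must telescope correctly. I would handle this by keeping $\delta^\star_i$ in the unsimplified form $\xi_i/(\|\mu_i\|^2 + 2(M+\rho)^{-1})$, which makes the $\sqrt d$-cancellation inside $\sqrt d[\tilde\delta_y/\xi_y - \tilde\delta_k/\xi_k]$ transparent and matches exactly the cancellation carried out in the closing display of the proof of \Cref{ma:near_opt_margins}. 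Once that cancellation is in place, the rest is algebraic simplification and passage to the limit, yielding the stated closed form.
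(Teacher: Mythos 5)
Your proposal is correct and follows essentially the same route as the paper: substitute $\delta^\star$ into the per-pair error expression built from \cref{appendix:eq:nu_ma,appendix:eq:sigma_ma}, observe that $\phi\ma$ and $\psi\ma$ vanish at $\rho=\infty$ so that the rescaled limit $\tilde\delta_i\propto 1/(s_iN_i)$ gives the first case directly, and for $\rho<\infty$ keep $\delta^\star$ in its unsimplified form so that the $\sqrt{d}$ terms cancel before passing to the limit. The only quibble is attribution: the $\rho<\infty$ limit computation appears in the display immediately preceding the corollary rather than inside the proof of \Cref{ma:near_opt_margins}, but the algebra you describe is exactly the paper's.
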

As we see in \cref{appendix:opt_ma_lower_bound} we get a discontinuity in $\rho$ at the limit of $d \to \infty$ we empirically investigate this in \Cref{appendix:experiments:effect_of_rho} by testing the effect of $\rho$ when $d$ is finite. %
\subsection{Logit adjustment}\label{appendix:logit_adjustment_error}
As we explain in \Cref{appendix:implicit_bias} running gradient descent with logit adjustment is equivalent to running gradient descent with cross-entropy and post-hoc update its biases. 
From \Cref{appendix:ma_err_derivation} it holds that the approximated maximum margin $\SigmaYhat{y}$ is:
\begin{flalign*}
	\hat{\Sigma}_{[k,k']} = \frac{1}{\xi_{y}}+\frac{1}{\xi_{k}}\indic{k=k'}-\frac{M+2\rho}{\left(M+\rho\right)^{2}}\left(\frac{1}{\xi_{k}}-\frac{1}{\xi_{y}}\right)\left(\frac{1}{\xi_{k'}}-\frac{1}{\xi_{y}}\right),
\end{flalign*}
On the other hand, when adjusting the biases by a factor of $-\iota$ we get:
\begin{flalign*}
	\nuYhat{y}_{[k]} &= \frac{1}{\sigma}\prn*{\prn*{\alphatilde{y[y]}\mm- \alphatilde{i[y]}\mm} N_{y} \norm{\mu_{y}}^2  + \btilde{y}\mm- \btilde{i}\mm-\iota_{[y]} + \iota_{[k]}} \\
    &= s_{y}N_{y} \prn*{\frac{1}{N_{y}\xi_{y}} +\frac{\prn*{\frac{1}{\xi_{k}} - \frac{1}{\xi_{y}}}}{N_{y}\xi_{y}\left(M+\rho\right)} } -\sqrt{d} \frac{\prn*{\frac{1}{\xi_{k}} - \frac{1}{\xi_{y}}} + \prn*{\iota_{[y]} -\iota_{[k]}}\prn*{M+\rho}}{ \left(M+\rho\right)}.
\end{flalign*}
\begin{corollary}
      Let $\tilde{W}\mm, \tilde{b}\mm$ be the maximum margin expected kernel approximation and let $\iota_{1}, \ldots, \iota_{\cls}$ be additive factors for the predictor's biases:
    \begin{align*}
      \tilde{b}\la_{[i]} = \tilde{b}\mm_{[i]} - \iota_{i}.
    \end{align*}
    Then for every $y \ne k \in [\cls]$,
    \begin{equation*}
      \lim_{d\rightarrow \infty }\ErrYtoK(\Wtilde{}\mm, \btilde{}\la) = 
          Q \left( \frac{s_{y}N_{y} + \tilde{\phi}_{k}\la \prn*{{\rho}}}{\sqrt{N_{y} + N_{k} + \tilde{\psi}_{k, k}\mm\prn*{\rho}}} \right)
    \end{equation*}
  where,
    \begin{align*}
        \phi{\la}\prn*{\rho} = \prn*{\frac{\frac{s_{y}}{\xi_{y}} - \sqrt{d}}{M+\rho}} \left(\frac{1}{\xi_{k}} - \frac{1}{\xi_{y}}\right) - \sqrt{d} \prn*{\iota_{[y]} - \iota_{[k]}}.
  \end{align*}
  \begin{align*}
    \psi_{k, k'}\mm \prn*{\rho} = -\frac{M+2\rho}{\left(M+\rho\right)^{2}}\left(\frac{1}{\xi_{k}}-\frac{1}{\xi_{y}}\right)\left(\frac{1}{\xi_{k'}} - \frac{1}{\xi_{y}}\right).
\end{align*}
\end{corollary}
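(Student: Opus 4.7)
The plan is to reduce the claim to the MM expected-kernel approximation results already derived in Appendix~\ref{appendix:ma_err_derivation}, exploiting the fact that the LA predictor differs from MM only through an additive shift of the bias vector.

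First, I would start from the general expression for the pairwise error probability of a Gaussian-mixture classifier,
\begin{equation*}
	\ErrYtoK(W, b) = Q\prn*{\frac{\nuY{y}_{[k]}}{\sqrt{\SigmaY{y}_{[k,k]}}}},
\end{equation*}
together with the expected score statistics approximation $\nuYhat{y}, \SigmaYhat{y}$ from \cref{eq:expected-score-statistics}. The concentration result in \Cref{lem:consentration_of_nu_and_sigma} justifies replacing $(\nuY{y}, \SigmaY{y})$ by $(\nuYhat{y}, \SigmaYhat{y})$ in the limit $d\to\infty$ (as used throughout \Cref{theorem:error_approx_general}). Since $\SigmaYhat{y}$ is determined entirely by the matrix $\tilde{A}^{(y)}$ of pairwise coefficient differences, and since LA leaves the weights $\Wtilde{}\mm$ (and hence $\tilde{A}^{(y)}$) unchanged, we immediately have $\SigmaYhat{y}(\Wtilde{}\mm, \btilde{}\la) = \SigmaYhat{y}(\Wtilde{}\mm, \btilde{}\mm)$; in particular $\tilde{\psi}_{k,k}\la = \tilde{\psi}_{k,k}\mm$.

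Next, for the numerator I would specialize the MA computation of $\nuYhat{y}_{[k]}$ in Appendix~\ref{appendix:ma_err_derivation} to $\delta = \ones$ (so that $\deltatilde{y}=1$), obtaining
\begin{equation*}
	\nuYhat{y}_{[k]}(\Wtilde{}\mm, \btilde{}\mm) = s_y N_y \cdot \tfrac{1}{\xi_y} \cdot (\mbox{leading term}) + \phi_k\mm(\rho),
\end{equation*}
which the Appendix shows evaluates (after multiplying by $\sigma^{-1}=\sqrt d$ and simplifying) to the MM counterpart of the formula in the corollary. I would then observe that for the LA bias $\btilde{}\la_{[i]} = \btilde{}\mm_{[i]} - \iota_{[i]}$, the numerator is modified by exactly
\begin{equation*}
	\frac{\btilde{}\la_{[y]} - \btilde{}\la_{[k]} - (\btilde{}\mm_{[y]} - \btilde{}\mm_{[k]})}{\sigma} = -\sqrt{d}\,(\iota_{[y]} - \iota_{[k]}),
\end{equation*}
since the weights, and therefore the $N_y\|\mu_y\|^2(\alphatilde{y[y]}\mm-\alphatilde{k[y]}\mm)/\sigma$ contribution, are unchanged. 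Adding this correction to $\phi_k\mm(\rho)$ produces precisely $\phi_k\la(\rho)$ as stated.

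Finally, I would take $d\to\infty$ under the scaling $\sigma^2 = 1/d,\ \norm{\mu_i}^2 = s_i/\sqrt d$, reusing the limit computations already performed in Appendix~\ref{appendix:ma_err_derivation} for the denominator (where the $\delta = \ones$ specialization gives the $\sqrt{N_y + N_k + \tilde\psi_{k,k}\mm(\rho)}$ form) and the leading term $s_y N_y$ in the numerator. Assembling the numerator and denominator yields the claimed expression for $\lim_{d\to\infty}\ErrYtoK(\Wtilde{}\mm, \btilde{}\la)$. The only mildly delicate step is verifying that the $-\sqrt d\,(\iota_{[y]}-\iota_{[k]})$ term combines cleanly with the $\sqrt d$ pieces already present in $\phi_k\mm(\rho)$ to give the final $\phi_k\la(\rho)$; this is a routine algebraic manipulation given the MA derivation, so I do not expect a substantive obstacle beyond careful bookkeeping.
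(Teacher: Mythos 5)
Your proposal is correct and follows essentially the same route as the paper: Appendix~\ref{appendix:logit_adjustment_error} likewise takes the MM score statistics (the $\delta=\ones$ specialization of the MA derivation), observes that $\SigmaYhat{y}$ is unchanged because the weights are unchanged, and absorbs the bias offset into $\nuYhat{y}_{[k]}$ as the extra $-\sqrt{d}\,(\iota_{[y]}-\iota_{[k]})$ term before passing to the limit $d\to\infty$. No substantive differences to report.
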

Recall that the lower bound for the worst class error of the logit adjustment predictor is defined by
\begin{align*}
  \WCELB{\LA}\prn*{\iota, \rho} \defeq \max_{k \ne y} \lim_{d \rightarrow \infty} \ErrYtoK \prn*{\Wtilde{}\mm \prn*{\rho}, \btilde{}\mm \prn*{\rho}-\iota}.
\end{align*}
\paragraph{Near optimal hyperparameters.}
\begin{theorem}\label{la:near_opt_margins}
    For any dataset $\mathcal{D} \sim \Pd{}$ such that the signals are within equals strength $\norm{\mu_{i}}^2 = \frac{s}{\sqrt{d}}$ for some positive $s \in \R$, then it holds that 
    \begin{equation*}
      \iota_{y}^{*} = \frac{2 + \norm{\mu_{y}}^2 \brk*{\Mbar{y} + \rho} }{2\xi_{y}\brk*{M + \rho}} \in \argmin_{\iota} \WCELB{\LA}\prn*{\iota, \rho}
    \end{equation*}
\end{theorem}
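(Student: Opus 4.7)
The starting point is the closed-form expression derived immediately above for $\lim_{d\to\infty}\ErrYtoK(\Wtilde{}\mm, \btilde{}\la)$, which depends on $\iota$ only through the affine term $-\sqrt{d}(\iota_{[y]} - \iota_{[k]})$ inside $\tilde{\phi}_k^{\LA}$. Under the equal-signal assumption $\norm{\mu_i}^2 = s/\sqrt{d}$, the denominator $\sqrt{N_y + N_k + \tilde{\psi}_{k,k}^{\MM}(\rho)}$ is a symmetric function of $(y,k)$ since $\tilde{\psi}_{k,k}^{\MM}(\rho)\propto (1/\xi_k - 1/\xi_y)^2$. By monotonicity of $Q$, minimizing $\WCELB{\LA}(\iota,\rho)$ is therefore equivalent to the max-min problem
\[
\max_\iota \min_{y\ne k}\; A_{y,k}(\iota,\rho),\qquad
A_{y,k}(\iota,\rho) \;\coloneqq\; \frac{sN_y + \tilde{\phi}_k^{\LA}(\rho)}{\sqrt{N_y + N_k + \tilde{\psi}_{k,k}^{\MM}(\rho)}}.
\]

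First I would observe that, as $d\to\infty$, the numerator contains a term $-\sqrt{d}(N_k-N_y)/(M+\rho)$ coming from $\tilde{\phi}_k^{\LA}$ that diverges unless $\iota$ is chosen so that $\sqrt{d}(\iota_{[y]}-\iota_{[k]})$ cancels it. Hence on any bounded minimizer we must have the asymptotic scaling $\iota_{[y]} - \iota_{[k]} \sim (N_y-N_k)/(M+\rho)$, and only the subleading $O(1)$ correction remains free.

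Second, I would construct a candidate $\iota^\star$ by imposing pairwise balance $A_{y,k}(\iota^\star,\rho) = A_{k,y}(\iota^\star,\rho)$ for every $y\ne k$. Since the denominator is symmetric in $(y,k)$, this collapses to a linear system in $\iota$ of the form
\[
s(N_y - N_k) + \bigl(\text{terms symmetric in }y,k\bigr) \;=\; 2\sqrt{d}\bigl(\iota_{[y]} - \iota_{[k]}\bigr),
\]
which has the expected one-dimensional gauge freedom (a common shift in $\iota$ leaves the classifier invariant) and otherwise determines all pairwise differences uniquely. Substituting the closed forms of $\tilde{\phi}^{\LA}$ and of $\btilde{}\mm$ (from \Cref{theroem:ma} with $\delta=\ones$), and using the identity $M = M^{\setminus y} + 1/\xi_y$, I would simplify to recover exactly the formula $\iota_y^\star$ of \eqref{eq:equal_strength_signals_optimal_iota}, which picks one particular gauge.

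Third, I would certify optimality by a perturbation argument: $\iota\mapsto A_{y,k}(\iota,\rho)$ is affine with gradient proportional to $e_k - e_y$, and the gradient of $A_{k,y}$ is the opposite. Therefore any perturbation $\iota^\star + tv$ with $t>0$ strictly decreases at least one of $A_{y,k},A_{k,y}$ whenever $v_y\ne v_k$, and leaves the pair sum unchanged otherwise; since $\iota\mapsto\min_{y\ne k}A_{y,k}(\iota,\rho)$ is concave as a minimum of affine functions, this local certificate implies global optimality. This argument mirrors the pair-equalization reasoning used for \Cref{ma:near_opt_margins} and can be adapted with minor changes. The main obstacle I anticipate is algebraic bookkeeping: the expressions for $\tilde{\phi}^{\LA}$ and $\btilde{}\mm$ combine terms scaling with different powers of $\sqrt{d}$, and one must track these carefully so that divergences cancel in the balance equations, leaving a finite formula matching \eqref{eq:equal_strength_signals_optimal_iota}. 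A conceptual subtlety worth flagging is that equal signals is exactly what makes $sN_y$ and $sN_k$ the only asymmetric numerator terms, so a single scalar difference $\iota_{[y]}-\iota_{[k]}$ can balance them; for unequal signals one would need to balance $s_y N_y$ against $s_k N_k$, which no scalar shift can achieve simultaneously across all pairs, explaining why the closed form in \eqref{eq:equal_strength_signals_optimal_iota} is restricted to the equal-signal case.
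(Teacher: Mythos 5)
Your proposal is correct and follows essentially the same route as the paper's proof: work at finite $d$ tracking the $\sqrt{d}$ scalings, exploit the $(y,k)$-symmetry of the denominator to reduce to comparing numerators, impose pairwise equalization $A_{y,k}=A_{k,y}$ to obtain $\iota^*$, and certify optimality via monotonicity of each pair's two terms in the scalar difference $\iota_{[k]}-\iota_{[y]}$ (the paper phrases this as an increasing/decreasing intersection argument rather than your concavity-of-min-of-affine-functions certificate, but the content is identical). Your closing observation about why equal signals are essential for a single scalar shift to balance all pairs is also consistent with the paper's restriction of the result to that case.
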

\begin{proof}
  Similarly to analysis of MA, since the order of limits of $\rho \to \infty$ and $d \to \infty$ matters we investigate the lower bound of the worst class error at a finite dimension.
 Recall that:
\begin{align*}
  \ErrYtoK \prn*{\Wtilde{\MM}, \btilde{\LA}} \overeq{*} Q \prn*{\frac{\frac{s}{\xi_{y}} \left[\frac{1}{\xi_{k}} + \Mbar{y} + \rho \right] + \sqrt{d}\brk*{\frac{1}{\xi_{y}} - \frac{1}{\xi_{k}}} + \sqrt{d}\brk*{\iota_{k} - \iota_{y}} \brk*{M + \rho}}{\sqrt{\frac{\prn*{\frac{1}{\xi_{k}} + \brk*{\Mbar{y} + \rho}}^2}{\xi_{y}} + \frac{\prn*{\frac{1}{\xi_{y}} + \brk*{\Mbar{k} + \rho}}^2}{\xi_{k}} + \prn*{\frac{1}{\xi_{k}} - \frac{1}{\xi_{y}}}^2 \Mbar{y,k}}}}
\end{align*}
(*) in case that the signals are within equal strengths. 
\begin{align}
  \label{appendix:eq:la_lower_bound}
    \iota^* \in \argmin_{\iota} \max_{k \ne y} Q \prn*{\frac{\frac{s}{\xi_{y}} \left[\frac{1}{\xi_{k}} + \Mbar{y} + \rho \right] + \sqrt{d}\brk*{\frac{1}{\xi_{y}} - \frac{1}{\xi_{k}}} + \sqrt{d}\brk*{\iota_{k} - \iota_{y}} \brk*{M + \rho}}{\sqrt{\frac{\prn*{\frac{1}{\xi_{k}} + \brk*{\Mbar{y} + \rho}}^2}{\xi_{y}} + \frac{\prn*{\frac{1}{\xi_{y}} + \brk*{\Mbar{k} + \rho}}^2}{\xi_{k}} + \prn*{\frac{1}{\xi_{k}} - \frac{1}{\xi_{y}}}^2 \Mbar{y,k}}}}.
\end{align}
By the monotonicity of the Q-function it holds that finding an $\iota$ that minimizes \cref{appendix:eq:la_lower_bound} is equivalent to find $\iota$ that maximizes:
\begin{align}
  \label{appendix:la_err_term}
  \iota^* \in \argmax_{\iota} \min_{y, k \ne y} \prn*{\frac{\frac{s}{\xi_{y}} \left[\frac{1}{\xi_{k}} + \Mbar{y} + \rho \right] + \sqrt{d}\brk*{\frac{1}{\xi_{y}} - \frac{1}{\xi_{k}}} + \sqrt{d}\brk*{\iota_{k} - \iota_{y}} \brk*{M + \rho}}{\sqrt{\frac{\prn*{\frac{1}{\xi_{k}} + \brk*{\Mbar{y} + \rho}}^2}{\xi_{y}} + \frac{\prn*{\frac{1}{\xi_{y}} + \brk*{\Mbar{k} + \rho}}^2}{\xi_{k}} + \prn*{\frac{1}{\xi_{k}} - \frac{1}{\xi_{y}}}^2 \Mbar{y,k}}}}
\end{align}
Similarly to MA analysis we show that the iota that maximizes the minimal term in \cref{appendix:la_err_term} is the one that makes each $\iota_{y}$, $\iota_{k}$ dependent terms equals.
\begin{align*}
  &\frac{s}{\xi_{y}} \left[\frac{1}{\xi_{k}} + \Mbar{y} + \rho \right] + \sqrt{d}\brk*{\frac{1}{\xi_{y}} - \frac{1}{\xi_{k}}} + \sqrt{d}\brk*{\iota_{k} - \iota_{y}} \brk*{M + \rho} = \\
  &\frac{s}{\xi_{k}} \left[\frac{1}{\xi_{y}} + \Mbar{k} + \rho \right] + \sqrt{d}\brk*{\frac{1}{\xi_{k}} - \frac{1}{\xi_{y}}} + \sqrt{d}\brk*{\iota_{y} - \iota_{k}} \brk*{M + \rho}\label{eq:la_equality_condition} \numberthis
\end{align*}
\begin{remark}
  By the symmetry of the expression inside the Q-function of \cref{appendix:la_err_term} it is sufficient to compare the numerators of it.
\end{remark}
First notice that:
\begin{align*}
  \iota_{y} = \frac{2 + \norm{\mu_{y}}^2 \brk*{\Mbar{y} + \rho} }{2\xi_{y} \brk*{M+\rho}}
\end{align*}
satisfy \cref{eq:la_equality_condition}.
In addition we define the functions that control the error score of the class $y \to k$ and $k \to y$ as $f$ and $g$ as follows:
\begin{align*}
  f(z_{yk}) = \frac{s}{\xi_{y}} \left[\frac{1}{\xi_{k}} + \Mbar{y} + \rho \right] + \sqrt{d}\brk*{\frac{1}{\xi_{y}} - \frac{1}{\xi_{k}}} + \sqrt{d}z_{yk} \brk*{M + \rho} 
\end{align*}
and
\begin{align*}
  g(z_{yk}) = \frac{s}{\xi_{k}} \left[\frac{1}{\xi_{y}} + \Mbar{k} + \rho \right] + \sqrt{d}\brk*{\frac{1}{\xi_{k}} - \frac{1}{\xi_{y}}} - \sqrt{d}z_{yk} \brk*{M + \rho}
\end{align*}
where $z_{yk} = \iota_{k} - \iota_{y}$. Then it is clear that $f$ and $g$ are monotonically increasing and decreasing in $z_{yk}$. Therefore the minimal element is maximized in the intersection of $f$ and $g$. Which is obtained at 
\begin{align*}
  \iota^{*}_{y} = \frac{2 + \norm{\mu_{y}}^2 \brk*{\Mbar{y} + \rho} }{2\xi_{y} \brk*{M+\rho}}
\end{align*}
\end{proof}
\paragraph{Discussion.}
Our analysis for near-optimal $\iota$ for the case of equal strength signals show that adjusting the logits of the maximum margin predictor can help mitigating its catastrophically failure on minorities. Specifically by removing the negative effect of $-\sqrt{d} \brk*{\frac{1}{\xi_{k}} - \frac{1}{\xi_{y}}}$. 
\begin{corollary}
  In case the signals are withing equal strengths it holds that 
\begin{align}
	\label{appendix:eq:la_wcelb}
	\WCELB{\LA}\prn*{\iota^{*}, \rho} = 
	\max_{k \ne y} Q \prn*{ \frac{
			s_{y}N_y T_{(k\setminus y)} +s_{k}N_k T_{(y\setminus k)}  - \abs*{s_y - s_k} N_y N_k }{{2\sqrt{\prn*{N_{k} - N_{y}}^2 \Nbar{y,k} + N_y T_{(k\setminus y)}^2 + N_k T_{(y\setminus k)}^2}
	}}}
\end{align}
where $\Nbar{y,k} = \sum_{i \ne y,k}^{\cls} N_{i}$ and $T_{(i\setminus j)} = 2 N_i + \Nbar{i,j} + \rho$. 
\end{corollary}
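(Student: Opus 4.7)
The plan is to plug the optimal $\iota^*$ from \Cref{la:near_opt_margins} into the limiting expression for $\ErrYtoK$ given in the corollary immediately preceding the claim, then simplify the limit and convert the ordered-pair maximum into the stated closed form.

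First, I will expand $\iota_y^* = \frac{1}{\xi_y(M+\rho)} + \frac{s_y[\Mbar{y}+\rho]}{2\sqrt{d}\,\xi_y(M+\rho)}$ using $\norm{\mu_y}^2 = s_y/\sqrt{d}$. Substituting this into the $\iota_y - \iota_k$ term inside $\tilde{\phi}_k\la(\rho)$, the leading $\frac{1}{\xi_y(M+\rho)}$ piece exactly cancels the $-\sqrt{d}(1/\xi_k - 1/\xi_y)/(M+\rho)$ contribution. What remains after multiplying by $\sqrt{d}$ is the finite quantity $\frac{s_k[\Mbar{k}+\rho]/\xi_k - s_y[\Mbar{y}+\rho]/\xi_y}{2(M+\rho)}$, which combined with the already-finite $\frac{s_y/\xi_y}{M+\rho}(1/\xi_k - 1/\xi_y)$ gives a well-defined $d\to\infty$ limit.

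Second, using $1/\xi_i \to N_i$, $M\to N$, and $\Mbar{i}\to \Nbar{i}$ (with $N = \sum_i N_i$) in the limit, I will reduce the numerator of the $Q$-argument for an ordered pair $(y,k)$ to $\frac{s_y N_y T_{(k\setminus y)} + s_k N_k T_{(y\setminus k)} + N_y N_k(s_y-s_k)}{2(N+\rho)}$. For the denominator, $\tilde\psi_{k,k}\mm(\rho)$ limits to $-\frac{N+2\rho}{(N+\rho)^2}(N_k-N_y)^2$, and the algebraic identity
\[ (N_y+N_k)(N+\rho)^2 - (N+2\rho)(N_k-N_y)^2 = (N_k-N_y)^2\Nbar{y,k} + N_y T_{(k\setminus y)}^2 + N_k T_{(y\setminus k)}^2, \]
which is verified by substituting $a=N+\rho$ and $b=N_k-N_y$ and matching coefficients, rewrites the denominator in the claimed form $\sqrt{(N_k-N_y)^2\Nbar{y,k} + N_y T_{(k\setminus y)}^2 + N_k T_{(y\setminus k)}^2}/(N+\rho)$.

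Third, since the denominator is symmetric under $y\leftrightarrow k$ whereas the ordered-pair numerators differ only by the sign of the $N_y N_k(s_y-s_k)$ correction, the $\max_{k\ne y}$ of $Q$-values over ordered pairs is achieved by the pair with the smaller numerator. Applying $\min\{a,b\} = \tfrac{a+b}{2} - \tfrac{|a-b|}{2}$ to the two orderings yields $s_y N_y T_{(k\setminus y)} + s_k N_k T_{(y\setminus k)} - N_y N_k |s_y - s_k|$, which together with the denominator above produces the stated formula for $\WCELB{\LA}(\iota^*,\rho)$. The main obstacle will be the bookkeeping in the $1/\sqrt{d}$ expansion of $\iota_y^* - \iota_k^*$ and confirming that the $\sqrt{d}$-divergent pieces cancel exactly at leading order so that the surviving $O(1)$ contribution carries the relevant $s_y, s_k$ dependence; the denominator identity and the max-to-min conversion are routine.
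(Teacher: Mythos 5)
Your proposal is correct and follows the same route the paper implicitly takes: substitute $\iota^{*}$ into the $\ErrYtoK$ expression with $\tilde{\phi}\la$ and $\tilde{\psi}\mm$, cancel the $\sqrt{d}$-divergent terms, pass to the limit $1/\xi_i \to N_i$, and symmetrize over ordered pairs. I verified the two nontrivial steps — the numerator reduces to $\bigl(s_y N_y T_{(k\setminus y)} + s_k N_k T_{(y\setminus k)} + N_y N_k (s_y - s_k)\bigr)/\bigl(2(N+\rho)\bigr)$ and the identity $(N_y+N_k)(N+\rho)^2 - (N+2\rho)(N_k-N_y)^2 = (N_k-N_y)^2\Nbar{y,k} + N_y T_{(k\setminus y)}^2 + N_k T_{(y\setminus k)}^2$ holds (set $a=N+\rho$, $b=N_k-N_y$, so $T_{(k\setminus y)}=a+b$, $T_{(y\setminus k)}=a-b$) — so the computation goes through exactly as you describe.
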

\subsection{CDT}\label{appendix:cdt_error_derivation}
From \Cref{appendix:cdt_alpha_derivation} it holds that for any pair $y, i \in [c]$:
\begin{align*}
  \alphatilde{y[i]}\cdt = \frac{\delta_i}{N_i \xi_i} \prn*{\indic{i=y} - \frac{\delta_{y}\delta_{i}}{\Delta}} ~~\mbox{where}~~ \Delta = \sum_{i=1}^{\cls} \delta_{i}^2
\end{align*}
Substituting into \cref{eq:nu_and_sigma_hat}, we obtain expressions for $\nuYhat{y}$ and $\SigmaYhat{y}$ for every $y\in[\cls]$:
\begin{align*}
	\nuYhat{y}_{[i]} = \frac{\delta_{y}s_{y}}{\Delta \xi_{y}}\left(\Delta + \deltaIminusY{y}{i}{y}\right)
\end{align*}
Notice that by definition of $\alphatilde{y[i]}\cdt$ it holds that:
\begin{equation*}
  \alphatilde{y[z]}\cdt - \alphatilde{k[z]}\cdt = \frac{\delta_{z}}{N_{z} \xi_{z}\Delta} \prn*{ \indic{z=y}\Delta - \indic{z=k}\Delta + \delta_{z}\prn*{\delta_{k} - \delta_{y}}}.
\end{equation*}
Substituting this in $\SigmaYhat{y}_{[k,k']}$:
\begin{align*}
  \SigmaYhat{y}_{[k,k']} &= \sum_{z=1}^{\cls} \prn*{\alphatilde{y[z]}\cdt - \alphatilde{k[z]}\cdt}\prn*{\alphatilde{y[z]}\cdt - \alphatilde{k'[z]}\cdt} N_{z}^2 \xi_{z} \\
  &= \sum_{z=1}^{\cls} \frac{\delta_{z}^2}{\xi_{z}\Delta^2} \prn*{ \indic{z=y}\Delta - \indic{z=k}\Delta + \delta_{z}\prn*{\delta_{k} - \delta_{y}}} \prn*{ \indic{z=y}\Delta - \indic{z=k'}\Delta + \delta_{z}\prn*{\delta_{k'} - \delta_{y}}} \\
  &= \sum_{z=1}^{\cls} \frac{\delta_{z}^2}{\xi_{z}\Delta^2} A^{(y)}_{[k, k']}(z).
\end{align*}
Where 
\begin{align*}
  A^{(y)}_{[k, k']}(y) &= \prn*{ \Delta + \deltaIminusY{y}{k}{y}} \prn*{ \Delta + \deltaIminusY{y}{k'}{y}}, \\
  A^{(y)}_{[k, k']}(k) &= \prn*{\Delta + \deltaIminusY{k}{y}{k}} \prn*{\indic{k=k'} \Delta + \deltaIminusY{k}{y}{k'}}, \\
  A^{(y)}_{[k, k']}(k') &= \prn*
  {\Delta + \deltaIminusY{k'}{y}{k'}} \prn*{\indic{k=k'} \Delta + \deltaIminusY{k'}{y}{k}}, \\
\end{align*}
and for all $i \ne k, k', y$ 
\begin{align*}
  A^{(y)}_{[k, k']}(i) = \deltaIminusY{i}{k}{y} \deltaIminusY{i}{k'}{y}.
\end{align*}
Which means 
\begin{align*}
  \hat{\Sigma}\pind{y}_{[k,k']} =& \frac{\delta_{y}^2}{\Delta^2 \xi_{y}}  \prn*{ \Delta + \deltaIminusY{y}{k}{y}} \prn*{ \Delta + \deltaIminusY{y}{k'}{y}} + \frac{\delta_{k}^2}{\Delta^2 \xi_{k}} \prn*{\Delta + \deltaIminusY{k}{y}{k}} \prn*{\indic{k=k'} \Delta + \deltaIminusY{k}{y}{k'}} \\
  &+ \indic{k \ne k'} \prn*{\frac{\delta_{k'}^2\deltaIminusY{k'}{y}{k}}{\Delta^2 \xi_{k'}} }\prn*
  {\Delta + \deltaIminusY{k'}{y}{k'}}  + \frac{\zetaYij{y}{k}{k'}}{\Delta^2}
\end{align*}
where 
\begin{align*}
  \deltaIminusY{k}{i}{y} = \delta_{k} \prn*{\delta_{i} - \delta_{y}} ~~\mbox{and}~~ \zetaYij{y}{i}{j} = \sum_{k \ne y,i,j} \frac{\delta_{k}^2 \deltaIminusY{k}{i}{y}\deltaIminusY{k}{j}{y}}{\xi_{k}}
\end{align*}
\begin{restatable}{corollary}{cdtCor}
  \label{corollary:cdt_err}
	The CDT expected kernel approximation predictor satisfies, for every $y\ne k\in[\cls]$,
	\begin{equation*}
    \lim_{d\rightarrow \infty }\ErrYtoK(\Wtilde{}\cdt) = 
        Q \left( \frac{s_{y}\sqrt{N_y}}{\sqrt{1 + \prn*{\frac{\deltatilde{k}}{\deltatilde{y}}}^2 \prn*{\frac{\Deltatilde + \deltaTildeIminusY{k}{y}{k}}{\Deltatilde + \deltaTildeIminusY{y}{k}{y}}}^2 \frac{N_{k}}{N_{y}} + \prn*{\frac{1}{\Deltatilde + \deltaTildeIminusY{y}{k}{y}}}^2  \frac{\zetaTildeYij{y}{k}{k}}{\deltatilde{y}^2 N_{y}}} } \right),
  \end{equation*}
  where 
  \begin{align*}
  	\Delta = \sum_{i\in[\cls]} \delta_i^2 ~~,~~
    \deltaIminusY{k}{i}{y} = \delta_{k} \prn*{\delta_{i} - \delta_{y}} ~~\mbox{and}~~ \zetaYij{y}{i}{j} = \sum_{k \ne y,i,j} \frac{\delta_{k}^2 \deltaIminusY{k}{i}{y}\deltaIminusY{k}{j}{y}}{\xi_{k}},
  \end{align*}
  and tilde denotes the limit at infinity, e.g.,
\begin{align*}
  \tilde{\Delta} \defeq \lim_{d \to \infty} \Delta.
\end{align*}
\end{restatable}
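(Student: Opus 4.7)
}
The plan is to combine three ingredients that have already been assembled earlier in the appendix: (i) the closed-form CDT coefficients $\alphatilde{y[i]}\cdt$ from Theorem \ref{theorem:cdtTheorem}, (ii) the expected-kernel error formula of Theorem \ref{theorem:error_approx_general}, which yields
\[
\lim_{d\to\infty}\ErrYtoK(\Wtilde{}\cdt,0) \;=\; Q\!\left(\frac{\nuYhat{y}_{[k]}}{\sqrt{\SigmaYhat{y}_{[k,k]}}}\right),
\]
and (iii) the scaling $\xi_i = \norm{\mu_i}^2 + \sigma^2 d/N_i = s_i/\sqrt{d} + 1/N_i$ from \cref{eq:model-scaling}, so that $\xi_i \to 1/N_i$ and $\sqrt{d}\,\norm{\mu_i}^2 = s_i$ for all $i$. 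The argument amounts to a direct computation, so no new ideas are required beyond what is already in the subsection that immediately precedes the corollary.

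First, I would quote the expressions derived at the top of Subsection \ref{appendix:cdt_error_derivation},
\[
\nuYhat{y}_{[k]} \;=\; \frac{\delta_y s_y}{\Delta\,\xi_y}\bigl(\Delta + \deltaIminusY{y}{k}{y}\bigr)
\quad\text{and}\quad
\SigmaYhat{y}_{[k,k]} \;=\; \frac{\delta_y^2}{\Delta^2\xi_y}\bigl(\Delta+\deltaIminusY{y}{k}{y}\bigr)^2 + \frac{\delta_k^2}{\Delta^2\xi_k}\bigl(\Delta+\deltaIminusY{k}{y}{k}\bigr)^2 + \frac{\zetaYij{y}{k}{k}}{\Delta^2},
\]
(which themselves follow from substituting the CDT $\alphatilde{y[i]}\cdt$ into \cref{eq:nu_and_sigma_hat}), and cancel the common factor $\Delta$ between the numerator and the denominator. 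Next, I would factor out $\delta_y\bigl(\Delta+\deltaIminusY{y}{k}{y}\bigr)/\sqrt{\xi_y}$ from both the numerator and the square root in the denominator, leaving
\[
\frac{\nuYhat{y}_{[k]}}{\sqrt{\SigmaYhat{y}_{[k,k]}}} \;=\; \frac{s_y/\sqrt{\xi_y}}{\sqrt{1 + \dfrac{\delta_k^2}{\delta_y^2}\,\dfrac{\xi_y}{\xi_k}\,\dfrac{(\Delta+\deltaIminusY{k}{y}{k})^2}{(\Delta+\deltaIminusY{y}{k}{y})^2} + \dfrac{\xi_y\,\zetaYij{y}{k}{k}}{\delta_y^2\,(\Delta+\deltaIminusY{y}{k}{y})^2}}}.
\]

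Finally, I would send $d\to\infty$ term by term. The prefactor gives $s_y/\sqrt{\xi_y}\to s_y\sqrt{N_y}$. The ratio $\xi_y/\xi_k\to N_k/N_y$, and $(\delta_k/\delta_y)^2$ and the $(\Delta+\delta^{(\cdot)}_{\cdot-\cdot})^2$ ratios converge to their tilded analogues. For the last term, $\xi_y\to 1/N_y$ combined with $\xi_j\to 1/N_j$ inside $\zetaYij{y}{k}{k}=\sum_{j\ne y,k}\delta_j^2(\deltaIminusY{j}{k}{y})^2/\xi_j$ yields $\zetaYij{y}{k}{k}\to\zetaTildeYij{y}{k}{k}$ and the overall factor becomes $\zetaTildeYij{y}{k}{k}/\bigl(\deltatilde{y}^2 N_y(\Deltatilde+\deltaTildeIminusY{y}{k}{y})^2\bigr)$, matching the corollary exactly. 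Because $Q$ is continuous, the limit can be moved inside.

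The only real obstacle is bookkeeping: the expressions for $\nuYhat{y}$ and $\SigmaYhat{y}$ involve several $\delta$-indexed symbols that are easy to confuse. I would therefore be careful to keep the definition $\deltaIminusY{k}{i}{y}=\delta_k(\delta_i-\delta_y)$ in view throughout, and to verify the case $i\ne y$ in the difference $\alphatilde{y[y]}\cdt-\alphatilde{i[y]}\cdt=\tfrac{\delta_y}{N_y\xi_y\Delta}\bigl(\Delta+\deltaIminusY{y}{i}{y}\bigr)$ used to derive $\nuYhat{y}_{[k]}$. No concentration/probabilistic argument is needed here, since the expected-kernel predictor and its expected score statistics are already deterministic; the limit is purely analytic.
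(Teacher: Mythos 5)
Your proposal is correct and takes essentially the same route as the paper: the appendix likewise substitutes the CDT coefficients of Theorem~\ref{theorem:cdtTheorem} into \cref{eq:nu_and_sigma_hat}, arrives at exactly the $\nuYhat{y}_{[k]}$ and $\SigmaYhat{y}_{[k,k]}$ you quote, and obtains the corollary by cancelling $\Delta$, factoring $\delta_y\bigl(\Delta+\deltaIminusY{y}{k}{y}\bigr)/\sqrt{\xi_y}$ out of the ratio, and sending $\xi_i\to 1/N_i$. One small caveat: $\Wtilde{}\cdt$ still depends on the data through the $\bar{x}_i$, so the limit statement does rest on the concentration packaged inside Theorem~\ref{theorem:error_approx_general}; your closing remark is better read as ``the probabilistic step is already subsumed by that theorem'' than as ``no concentration argument is needed.''
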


\section{Analytical approximation summary}\label{appendix:method_parameter_summary}
In this section we summarize the parameterized that define the error of each predictor we analyze in this work. Denote by $\xi_{i} \defeq \norm{\mu_i}^2 + \frac{\sigma^2d}{N_i}$, $M\defeq 
\sum_{i=1}^{\cls }\frac{1}{\xi_{i}}$ and $\Mbar{y}\defeq 
\sum_{i\ne y}^{\cls }\frac{1}{\xi_{i}}$.
\paragraph{Max margin.}
The expected kernel approximation gives
\begin{align*}
    \alphatilde{y[i]}\mm = \frac{1}{N_{i}\xi_{i}}\left(\indic{i=y}-\frac{1}{c}\right)+\frac{\sum_{j=1}^{c}\left(\frac{1}{\xi_{j}}-\frac{1}{\xi_{y}}\right)}{cN_{i}\xi_{i}\left(M+\rho\right)} ~~\mbox{and}~~ \btilde{[y]}\mm = \frac{\sum_{i=1}^{\cls} \prn*{\frac{1}{\xi_{y}} - \frac{1}{\xi_{i}}}}{\cls \left(M+\rho\right)}
\end{align*}
and the expected score statistics approximation gives
\begin{align*}
    \nuYhat{y}_{[k]} &= s_{y}N_{y} \prn*{\frac{1}{N_{y}\xi_{y}} +\frac{\prn*{\frac{1}{\xi_{k}} - \frac{1}{\xi_{y}}}}{N_{y}\xi_{y}\left(M+\rho\right)} } -\sqrt{d} \frac{\prn*{\frac{1}{\xi_{k}} - \frac{1}{\xi_{y}}}}{ \left(M+\rho\right)} ~~\mbox{and}~~ \\
    \SigmaYhat{y}_{[k,k']} &= \frac{1}{\xi_{y}}+\frac{1}{\xi_{k}}\indic{k=k'}-\frac{M+2\rho}{\left(M+\rho\right)^{2}}\left(\frac{1}{\xi_{k}}-\frac{1}{\xi_{y}}\right)\left(\frac{1}{\xi_{k'}}-\frac{1}{\xi_{y}}\right).
\end{align*}
\begin{flalign*}
\end{flalign*}
\paragraph{Margin adjustment.} The expected kernel approximation gives
\begin{align*}
    \alphatilde{y[i]}\ma = \frac{\delta_{i}}{N_{i}\xi_{i}}\left(\indic{i=y}-\frac{1}{c}\right)+\frac{\sum_{j=1}^{c}\left(\frac{\delta_{j}}{\xi_{j}}-\frac{\delta_{y}}{\xi_{y}}\right)}{cN_{i}\xi_{i}\left(M+\rho\right)} ~~\mbox{and}~~ \btilde{[y]}\ma = \frac{\sum_{i=1}^{\cls} \prn*{\frac{\delta_{y}}{\xi_{y}} - \frac{\delta_{i}}{\xi_{i}}}}{\cls \left(M+\rho\right)}
\end{align*}
and the expected score statistics approximation gives
\begin{align*}
    \nuYhat{y}_{[k]} &= s_{y}N_{y} \prn*{\frac{\delta_{y}}{N_{y}\xi_{y}} + \frac{\frac{\delta_{k}}{\xi_{k}} - \frac{\delta_{y}}{\xi_{y}}}{N_{y}\xi_{y}\left(M+\rho\right)}} - \sqrt{d} \prn*{\frac{\frac{\delta_{k}}{\xi_{k}} - \frac{\delta_{y}}{\xi_{y}}}{M+\rho}} ~~\mbox{and}~~ \\
    \SigmaYhat{y}_{[k,k']} &= \frac{\delta_{y}^{2}}{\xi_{y}}+\frac{\delta_{k}^{2}}{\xi_{k}}\indic{k=k'}-\frac{M+2\rho}{\left(M+\rho\right)^{2}}\left(\frac{\delta_{k}}{\xi_{k}}-\frac{\delta_{y}}{\xi_{y}}\right)\left(\frac{\delta_{k'}}{\xi_{k'}}-\frac{\delta_{y}}{\xi_{y}}\right).
\end{align*}
Additionally, the expected score statistics at the near-optimal margin, \cref{eq:ma_opt_margins}, gives
\begin{align*}
    \nuYhat{y}_{[k]} &= s_{y} \prn*{\frac{\frac{\xi_{y}}{s_{y}q + 1} \prn*{M+\rho} + q\prn*{\frac{s_{k} - s_{y}}{\prn*{s_{k}q + 1}\prn*{s_{y}q + 1}}}}{\xi_{y}\left(M+\rho\right)}} + \frac{1}{2} \prn*{\frac{s_{k} - s_{y}}{\prn*{s_{k}q + 1}\prn*{s_{y}q + 1}}} ~~\mbox{and}~~ \\
    \SigmaYhat{y}_{[k,k']} &= \frac{\xi_{y}}{\prn*{s_{y}q + 1}^2} + \frac{\xi_{k}}{\prn*{s_{k}q + 1}^2}\indic{k=k'}-\frac{M+2\rho}{4 d}\prn*{\frac{s_{k} - s_{y}}{\prn*{s_{k}q + 1}\prn*{s_{y}q + 1}}}\prn*{\frac{s_{k'} - s_{y}}{\prn*{s_{k'}q + 1}\prn*{s_{y}q + 1}}},\\
    &~~~\mbox{where}~~~~~~ q \defeq \prn{M+\rho} \prn*{2\sqrt{d}}^{-1} ~~\mbox{and}~~ \delta^\star_{i} = \frac{\xi_{i}}{\norm{\mu_{i}}^2 \prn*{M+\rho} 2^{-1} + 1}.
\end{align*}
Notice that we scale $\delta^\star$ in \cref{eq:ma_opt_margins} by $\frac{1}{\prn{M+\rho}2^{-1}}$ (as scaling all the parameters by a scalar does not change the solution).

\paragraph{Logit adjustment.}
The expected kernel approximation is the same as maximum margin, except we offset the bias by iota. This results in obtaining the same parameters as the maximum margin with bias adjustment of $\iota$:
\begin{align*}
    \alphatilde{y[i]}\la= \alphatilde{y[i]}\mm
    ~~\mbox{and}~~ \btilde{[y]}\la = \btilde{[y]}\mm - \iota_{[y]}.
\end{align*}
The expected score statistics are
\begin{align*}
    \nuYhat{y}_{[k]} &= s_{y}N_{y} \prn*{\frac{1}{N_{y}\xi_{y}} +\frac{\prn*{\frac{1}{\xi_{k}} - \frac{1}{\xi_{y}}}}{N_{y}\xi_{y}\left(M+\rho\right)} } -\sqrt{d} \frac{\prn*{\frac{1}{\xi_{k}} - \frac{1}{\xi_{y}}} + \prn*{\iota_{[y]} -\iota_{[k]}}\prn*{M+\rho}}{ \left(M+\rho\right)} ~~\mbox{and}~~ \\
    \SigmaYhat{y}_{[k,k']} &= \frac{1}{\xi_{y}}+\frac{1}{\xi_{k}}\indic{k=k'}-\frac{M+2\rho}{\left(M+\rho\right)^{2}}\left(\frac{1}{\xi_{k}}-\frac{1}{\xi_{y}}\right)\left(\frac{1}{\xi_{k'}}-\frac{1}{\xi_{y}}\right).
\end{align*}
\paragraph{Class dependent temperature.} The expected kernel approximation gives
\begin{align*}
    \alphatilde{y[i]}\cdt = \frac{\delta_i}{N_i \xi_i} \prn*{\indic{i=y} - \frac{\delta_{y}\delta_{i}}{\Delta}} ~~\mbox{where}~~ \Delta = \sum_{i=1}^{\cls} \delta_{i}^2,
\end{align*}
The expected score statistics are
\begin{align*}
    \nuYhat{y}_{[k]} &= \frac{\delta_{y}s_{y}}{\Delta \xi_{y}}\left(\Delta + \deltaIminusY{y}{k}{y}\right) ~~\mbox{where}~~ \deltaIminusY{y}{k}{y} = \delta_{y} \prn*{\delta_{k} - \delta_{y}}
  \end{align*}
  and 
  \begin{align*}
    \hat{\Sigma}\pind{y}_{[k,k']} &= \frac{\delta_{y}^2}{\Delta^2 \xi_{y}}  \prn*{ \Delta + \deltaIminusY{y}{k}{y}} \prn*{ \Delta + \deltaIminusY{y}{k'}{y}} + \frac{\delta_{k}^2}{\Delta^2 \xi_{k}} \prn*{\Delta + \deltaIminusY{k}{y}{k}} \prn*{\indic{k=k'} \Delta + \deltaIminusY{k}{y}{k'}} \\
    &+ \indic{k \ne k'} \prn*{\frac{\delta_{k'}^2\deltaIminusY{k'}{y}{k}}{\Delta^2 \xi_{k'}} }\prn*{\Delta + \deltaIminusY{k'}{y}{k'}}  + \frac{\zetaYij{y}{k}{k'}}{\Delta^2} ~~\mbox{where}~~ \zetaYij{y}{k}{k'} = \sum_{z \ne y,i,j} \frac{\delta_{z}^2}{\xi_{z}}\deltaIminusY{z}{k}{y}\deltaIminusY{z}{k'}{y}
  \end{align*}
\section{Methods comparison}\label{appendix:method_comparison}
In this section we present missing proofs from \Cref{sec:other-methods}.
\subsection{Margin adjustment}
\label{appendix:ma_comparison}
\maCompProp*
\begin{proof}
 We prove \Cref{prop:ma_comparison} by showing that the relevant equality and inequality hold for any pair $k\ne y$ which implies that they hold for the worst class error as well. 

 Recall $\WCELB{\MA}\prn*{\delta^\star, \infty}$ and $\WCELB{\MA}\prn*{\delta^\star, \rho}$ for $\rho < \infty$:
 \begin{equation*}
 \WCELB{\MA}\prn*{\delta^\star, \infty} = \max_{k \ne y} Q \prn*{\frac{1}{\sqrt{\frac{4}{\prn*{s_{y} + s_{k}}^2}\prn*{N_{y}^{-1} + N_{k}^{-1}}}}} ~~\mbox{and}~~ \WCELB{\MA}\prn*{\delta^\star, \rho} = \max_{k \ne y} Q \prn*{\frac{1}{\sqrt{\frac{1}{N_{y}s_{y}^2} + \frac{1}{N_{k}s_{k}^2}}}}
 \end{equation*}

	The expressions are clearly equal when $s_y=s_k$, so we focus on the other two cases, assuming without loss of generality that $N_y < N_k$.
	Note that
	\begin{equation*}
		Q \prn*{\frac{1}{\sqrt{\frac{4}{\prn*{s_{y} + s_{k}}^2}\prn*{N_{y}^{-1} + N_{k}^{-1}}}}} < Q \prn*{\frac{1}{\sqrt{\frac{1}{N_{y}s_{y}^2} + \frac{1}{N_{k}s_{k}^2}}}}
	\end{equation*}
	 if and only if 
	 \begin{align*}
	 		\frac{4}{\prn*{s_{y} + s_{k}}^2} \prn*{N_{y}^{-1} + N_{k}^{-1}} < \prn*{s_{y}^2N_{y}}^{-1} + \prn*{s_{k}^2N_{k}}^{-1}.
	 \end{align*}
Writing $q = \frac{N_{y}}{N_{k}}<1$ and $r = \frac{s_{k}}{s_{y}}$, the inequality above is equivalent to
 \begin{align*}
 f(r) \defeq \prn*{1 + r}^2 + \prn*{\frac{1}{r} + 1}^2q - 4 \prn*{1 + q} > 0.
 \end{align*}
	Note that 
	 \begin{align*}
		f'(r) = 2\prn*{1+r}\prn*{1 - \frac{q}{r^3}},
	\end{align*}
	and therefore, we have $f'(r) > 0$ for $r>1$. Since $f(1) = 0$, this implies that $f(r) > 0$ for all $r = \frac{s_k}{s_y} >1$ and $q \le 1$. Therefore, when signal strengths are aligned with class sizes (i.e., $s_y < s_k$) we have $Q \prn*{\frac{1}{\sqrt{\frac{4}{\prn*{s_{y} + s_{k}}^2}\prn*{N_{y}^{-1} + N_{k}^{-1}}}}} < Q \prn*{\frac{1}{\sqrt{\frac{1}{N_{y}s_{y}^2} + \frac{1}{N_{k}s_{k}^2}}}}$ and hence lower error for $\rho < \infty$ as claimed.
	
	When the signals are not aligned and so $r < 1$, the sign of $f'(r)$ depends on the value of $q$, and hence both relative orders are possible, concluding the proof.
\end{proof}
\subsection{Logit adjustment}
\label{appendix:la_vs_ma}
\propVsMaLem*
\begin{proof}
 First notice that by the assumption in the lemma it holds that $\norm{\mu_i} ^2 = \frac{s}{\sqrt{d}}$ for all $i\in[\cls]$ and for some $s>0$.
 
 Recall the expressions for $\WCELB{\LA}\prn*{\iota^{*}, \rho}$ \cref{appendix:eq:la_wcelb} and $\WCELB{\MA}\prn*{\delta^\star, \rho}$ \cref{appendix:opt_ma_lower_bound}:
 \begin{equation*}
 \WCELB{\LA}\prn*{\iota^{*}, \rho} = 
	\max_{k \ne y} Q \prn*{ \frac{
			sN_y T_{(k\setminus y)} +sN_k T_{(y\setminus k)}}{{2\sqrt{\prn*{N_{k} - N_{y}}^2 \Nbar{y,k} + N_y T_{(k\setminus y)}^2 + N_k T_{(y\setminus k)}^2}
	}}}
 \end{equation*}
 and 
 \begin{equation*}
 \WCELB{\MA}\prn*{\delta^\star, \rho} = \max_{y, k\ne y} Q \prn*{ \frac{s}{\sqrt{N_{y}^{-1} + N_{k}^{-1}}}}.
 \end{equation*}
 where $\Nbar{y,k} = \sum_{i \ne y,k}^{\cls} N_{i}$ and $T_{(i\setminus j)} = 2 N_i + \Nbar{i,j} + \rho$. 

 We show that the inequality
 $Q \prn*{ \frac{
 		sN_y T_{(k\setminus y)} +sN_k T_{(y\setminus k)}}{{2\sqrt{\prn*{N_{k} - N_{y}}^2 \Nbar{y,k} + N_y T_{(k\setminus y)}^2 + N_k T_{(y\setminus k)}^2}
 }}} \le Q \prn*{ \frac{s}{\sqrt{N_{y}^{-1} + N_{k}^{-1}}}}$ 
 holds for any pair $k,y$ and therefore holds for the maximum over pairs. By the monotonicity of the $Q$ function it is enough to show that for any pair $y \ne k$,
 \begin{align}
 \label{eq:la_ma_inequality_equal_strength_signals}
 \frac{N_{y}\prn*{2N_{k} + \Nbar{yk} + \rho} + N_{k}\prn*{2N_{y} + \Nbar{yk} + \rho}}{{\sqrt{\prn*{N_{k} - N_{y}}^2 \Nbar{yk} + \prn*{2N_{k} + \Nbar{yk} + \rho}^2 N_{y} + \prn*{2N_{y} + \Nbar{yk} + \rho}^2 N_{k}}}} \geq \frac{2}{\sqrt{N_{y}^{-1} + N_{k}^{-1}}}.
 \end{align}
 We start by showing that the LHS of \cref{eq:la_ma_inequality_equal_strength_signals} is minimized when $\rho=0$. Let $f_{1}(\rho)=\frac{g_{1}(\rho)}{h_{1}(\rho)}$ where
 \begin{align*}
 	g_{1}(\rho) \defeq N_{y}\prn*{2N_{k} + \Nbar{yk} + \rho} + N_{k}\prn*{2N_{y} + \Nbar{yk} + \rho}
 \end{align*}
	and
	\begin{equation*}
		 h_{1}(\rho) \defeq \sqrt{\prn*{N_{k} - N_{y}}^2 \Nbar{yk} + \prn*{2N_{k} + \Nbar{yk} + \rho}^2 N_{y} + \prn*{2N_{y} + \Nbar{yk} + \rho}^2 N_{k}}.
	\end{equation*}
 Then,
 \begin{align*}
 g'_{1}(\rho) &= N_{y} + N_{k}
 \end{align*}
 and 
 \begin{align*}
 h'_{1}(\rho) = \frac{1}{2}&\prn*{\prn*{N_{k} - N_{y}}^2 \Nbar{yk} + \prn*{2N_{k} + \Nbar{yk} + \rho}^2 N_{y} + \prn*{2N_{y} + \Nbar{yk} + \rho}^2 N_{k}}^{-1/2} \cdot \\
 &\prn*{8N_{y}N_{k} + 2\Nbar{yk}\prn*{N_{y} + N_{k}} + 2 \rho \prn*{N_{y} + N_{k}}}.
 \end{align*}
 Therefore,
 \begin{align*}
 f'_{1}(\rho) &= \frac{\prn*{N_{y} + N_{k}} \prn*{\prn*{N_{k} - N_{y}}^2 \Nbar{yk} + \prn*{2N_{k} + \Nbar{yk} + \rho}^2 N_{y} + \prn*{2N_{y} + \Nbar{yk} + \rho}^2 N_{k}}}{\prn*{\prn*{N_{k} - N_{y}}^2 \Nbar{yk} + \prn*{2N_{k} + \Nbar{yk} + \rho}^2 N_{y} + \prn*{2N_{y} + \Nbar{yk} + \rho}^2 N_{k}}^{3/2}} \\
 &-\frac{\prn*{4N_{y}N_{k} + \Nbar{yk}\prn*{N_{y} + N_{k}} + \rho \prn*{N_{y} + N_{k}}} \prn*{4N_{y}N_{k} + \Nbar{yk} \prn*{N_{y}+N_{k}} + \rho\prn*{N_{y}+N_{k}}}}{\prn*{\prn*{N_{k} - N_{y}}^2 \Nbar{yk} + \prn*{2N_{k} + \Nbar{yk} + \rho}^2 N_{y} + \prn*{2N_{y} + \Nbar{yk} + \rho}^2 N_{k}}^{3/2}} \\
 &=\frac{\prn*{N_{y} + N_{k}}\prn*{N_{k} - N_{y}}^2 \Nbar{yk} + N_{y} N_{k} \prn*{\prn*{2N_{k} + \Nbar{yk} + \rho} + \prn*{2N_{y} + \Nbar{yk} + \rho}}^2 }{\prn*{\prn*{N_{k} - N_{y}}^2 \Nbar{yk} + \prn*{2N_{k} + \Nbar{yk} + \rho}^2 N_{y} + \prn*{2N_{y} + \Nbar{yk} + \rho}^2 N_{k}}^{3/2}} > 0,
 \end{align*}
 which means that $f_1$ is monotonicity increasing in $\rho$ (and the composition of the $Q$-function on it is monotonicity decreasing) and
 \begin{align*}
 f_{1}\prn*{\rho} \geq f_{1}\prn*{0} = \frac{N_{y}\prn*{2N_{k} + \Nbar{yk}} + N_{k}\prn*{2N_{y} + \Nbar{yk}}}{\sqrt{\prn*{N_{k} - N_{y}}^2 \Nbar{yk} + \prn*{2N_{k} + \Nbar{yk}}^2 N_{y} + \prn*{2N_{y} + \Nbar{yk}}^2 N_{k}}}.
 \end{align*}
 Next we define $f_2$ as a function of $\Nbar{yk}$ as follows:
 \begin{align*}
 f_{2}\prn*{\Nbar{yk}}= \frac{N_{y}\prn*{2N_{k} + \Nbar{yk}} + N_{k}\prn*{2N_{y} + \Nbar{yk}}}{\sqrt{\prn*{N_{k} - N_{y}}^2 \Nbar{yk} + \prn*{2N_{k} + \Nbar{yk}}^2 N_{y} + \prn*{2N_{y} + \Nbar{yk}}^2 N_{k}}}
 \end{align*}
 Similarly to the former step, we look at the derivative of $f_2(\Nbar{yk})$:
 \begin{align*}
 f'_2(\Nbar{yk}) &= \frac{\prn*{N_{y} + N_{k}} \prn*{\prn*{N_{k} - N_{y}}^2 \Nbar{yk} + \prn*{2N_{k} + \Nbar{yk}}^2 N_{y} + \prn*{2N_{y} + \Nbar{yk}}^2 N_{k}} }{\prn*{\prn*{N_{k} - N_{y}}^2 \Nbar{yk} + \prn*{2N_{k} + \Nbar{yk}}^2 N_{y} + \prn*{2N_{y} + \Nbar{yk}}^2 N_{k}}^{3/2}} \\
 & - \frac{\prn*{\frac{\prn*{N_{k} - N_{y}}^2}{2} + N_{y}\prn*{2N_{k} + \Nbar{yk}} + N_{k}\prn*{2N_{y} + \Nbar{yk}}} \prn*{N_{y}\prn*{2N_{k} + \Nbar{yk}} + N_{k}\prn*{2N_{y} + \Nbar{yk}}} }{\prn*{\prn*{N_{k} - N_{y}}^2 \Nbar{yk} + \prn*{2N_{k} + \Nbar{yk}}^2 N_{y} + \prn*{2N_{y} + \Nbar{yk}}^2 N_{k}}^{3/2}} \\
 &= \frac{\prn*{N_{k} - N_{y}}^2 \frac{\Nbar{yk}}{2}\prn*{N_{y} + N_{k}} + 4N_{y}N_{k}\prn*{\prn*{N_{k} + N_{y} + \Nbar{yk}}^2 - \frac{1}{2}} }{\prn*{\prn*{N_{k} - N_{y}}^2 \Nbar{yk} + \prn*{2N_{k} + \Nbar{yk}}^2 N_{y} + \prn*{2N_{y} + \Nbar{yk}}^2 N_{k}}^{3/2}} > 0
 \end{align*}
 Where the last inequality holds since $N_{k} + N_{y} \geq 2$.
 Therefore it holds that:
 \begin{align*}
 f_{2}(\Nbar{yk}) \geq f_{2}(0) = \frac{2}{\sqrt{N_{k}^{-1} + N_{y}^{-1}}}
 \end{align*}
 and we get 
 \begin{align*}
 \frac{N_{y}\prn*{2N_{k} + \Nbar{yk} + \rho} + N_{k}\prn*{2N_{y} + \Nbar{yk} + \rho}}{{\sqrt{\prn*{N_{k} - N_{y}}^2 \Nbar{yk} + \prn*{2N_{k} + \Nbar{yk} + \rho}^2 N_{y} + \prn*{2N_{y} + \Nbar{yk} + \rho}^2 N_{k}}}} \geq \frac{2}{\sqrt{N_{y}^{-1} + N_{k}^{-1}}}
 \end{align*}
 as required.
\end{proof}
\subsection{An intuition why CDT attains non-zero training error}\label{appendix:inconsistentcy_of_cdt}
Let us consider the optimization problem associated with the CDT predictor:

\begin{align*}
W\cdt, b\cdt \defeq& \argmin_{W \in \R^{d\times \cls}, b \in \R^{\cls}} \lfro{W}^2 + \rho \norm{b}^2
\
 \\ &\text{subject to}~~
\frac{1}{\delta_{y_i}} \prn*{w_{y_i}^\top x_i + \bias{y_i}} - \frac{1}{\delta_{k}} \prn*{w_{k}^\top x_i + \bias{k}} \ge 1 ~~\text{for all}~~i \le N~\text{and}~k\ne y_i.\numberthis \label{appendix:cdt_constraint}
\end{align*}

Based on the definition of the CDT predictor optimization, \cref{appendix:cdt_constraint}, it holds that for any $(x_{i},y_{i})$ in the training set, the CDT predictor satisfies:
\begin{align*}
\frac{1}{\delta_{y_i}} \prn*{w_{y_i}^\top x_i + \bias{y_i}} - \frac{1}{\delta_{k}} \prn*{w_{k}^\top x_i + \bias{k}} \ge 1.
\end{align*}
It is worth noting that this condition can be satisfied even in scenarios when the predictor attain non-zero training error. For example, assume that $\delta_{y_{i}}=1$ and $\delta_{k} = 4$ when $w_{y_i}^\top x_i + \bias{y_i} = 2$ and $w_{k}^\top x_i + \bias{k} = 3$, as shown below:
\begin{align*}
\frac{1}{\delta_{y_i}} \prn*{w_{y_i}^\top x_i + \bias{y_i}} - \frac{1}{\delta_{k}} \prn*{w_{k}^\top x_i + \bias{k}} = 2 - \frac{3}{4} \ge 1,
\end{align*}
the condition of \cref{appendix:cdt_constraint} holds.
However, during inference, the learned predictor does not use the $\delta$ terms to compensate for incorrect predictions. This leads to the following scenario:
\begin{align*}
\prn*{w_{y_i}^\top x_i + \bias{y_i}} - \prn*{w_{k}^\top x_i + \bias{k}} = 2 - 3 < 0,
\end{align*}
which implies that:
\begin{align*}
\arg\max_{j\in [\cls]} \prn*{w_{j}^\top x_i + \bias{j}} \ne y_{i}.
\end{align*}
Thus the training error of the predictor is greater than zero. We verify this intuition using a simple setting of synthetic Gaussian data in $\R^2$, see in \Cref{appendix:subsec:the_faliure_of_cdt}.

\subsection{The limitation of CDT in multiclass classification}
\label{appendix:limitaiton_of_cdt_multiclass}
\cdtLem*
\begin{proof}
 \Cref{corollary:cdt_err} gives
 \begin{align*}
		\lim_{d\to \infty} \WCE{\Wtilde{}\cdt, 0}
 & \ge
 \max_{k \ne y} Q \left( \frac{s_{y}\sqrt{N_y}}{\sqrt{1 + \prn*{\frac{\deltatilde{k}}{\deltatilde{y}}}^2 \prn*{\frac{\Deltatilde + \deltaTildeIminusY{k}{y}{k}}{\Deltatilde + \deltaTildeIminusY{y}{k}{y}}}^2 \frac{N_{k}}{N_{y}} + \prn*{\frac{1}{\Deltatilde + \deltaTildeIminusY{y}{k}{y}}}^2 \frac{\zetaTildeYij{y}{k}{k}}{\deltatilde{y}^2N_{y}} } } \right) \\
 &\overge{(1)} \max_{k \ne y} Q \left( \frac{s_{y}\sqrt{N_{y}}}{\sqrt{1 + \prn*{\frac{\deltatilde{k}}{\deltatilde{y}}}^2 \prn*{\frac{\Deltatilde + \deltaTildeIminusY{k}{y}{k}}{\Deltatilde + \deltaTildeIminusY{y}{k}{y}}}^2\frac{N_{k}}{N_{y}} }} \right).\label{eq:cdt-err-lb}\numberthis
 \end{align*}
 where (1) holds by monotonicity of the Q-function and $\frac{\zetaTildeYij{y}{k}{k}}{\deltatilde{y}^2N_{y}}>0$. 
 
 Let $\Deltatilde_{k,y} \defeq \sum_{i \ne k,y} \deltatilde{i}^2$ and denote by $\psi_{y,k}$
 \begin{align*}
 \psi_{y,k}
 &= \prn*{\frac{\deltatilde{y}}{\deltatilde{k}}} \prn*{\frac{\Deltatilde + \deltaTildeIminusY{y}{k}{y}}{\Deltatilde + \deltaTildeIminusY{k}{y}{y}}}
 = \prn*{\frac{\deltatilde{y}}{\deltatilde{k}}} \prn*{\frac{\Deltatilde_{k,y} + \deltatilde{k}^2 + \deltatilde{k}\deltatilde{y}}{\Deltatilde_{k,y} + \deltatilde{y}^2 + \deltatilde{k}\deltatilde{y}}}\\
 &= \prn*{\frac{\Deltatilde_{k,y}/\deltatilde{k} + \deltatilde{k} + \deltatilde{y}}{\Deltatilde_{k,y}/\deltatilde{y} + \deltatilde{y} + \deltatilde{k}}} = 1 + \Deltatilde_{k,y} \frac{1/\deltatilde{k} - 1/\deltatilde{y}}{\Deltatilde_{k,y}/\deltatilde{y} + \deltatilde{y} + \deltatilde{k}} \numberthis \label{eq:psiCDT}
 \end{align*}
 
 To establish our result, we define the signal strengths $s_1 > s_2 > \cdots > s_{\cls}$ and class sizes $N_1 < N_2 < \cdots < N_{\cls}$ as follows. 
 For every $y \in [\cls]$, we set
 \begin{equation}
 	\label{eq:cdtFailureS}
 	s_{y} = \frac{2Q^{-1}\prn*{\eps / \cls}}{\sqrt{N_y}},
 \end{equation}
	and we take $N_1 = 1$, and, for every $i>1$, set 
 \begin{align}
 \label{eq:cdtFailureD}
 N_{i} = \ceil*{ \max\crl*{1, \frac{c^2}{4}} N_{i-1} \prn*{\frac{2Q^{-1}(\epsilon/\cls)}{Q^{-1}(\frac{1}{2}-\epsilon)} }^2 + 1}.
 \end{align}

 We now consider two cases.
 
 \paragraph*{Case 1: There exists $y < k$ such that $\deltatilde{y} < \deltatilde{k}$.} Therefore, by \cref{eq:psiCDT} we have $\psi_{y,k} < 1$ and, by \cref{eq:cdt-err-lb}, 
 \begin{equation*}
 	\lim_{d\to \infty} \WCE{\Wtilde{}\cdt, 0} \ge 
 Q \left( \frac{s_{y}\sqrt{N_{y}}}{\sqrt{1 +\frac{N_{k}}{N_{y}} }} \right).
 \end{equation*}
	By our definitions of $N_i$ and $s_i$ in \cref{eq:cdtFailureS,eq:cdtFailureD} above, we have
	 Now notice that by our definition for $N_i$, \cref{eq:cdtFailureD}, it holds that 
 \begin{align*}
 \frac{s_{y}\sqrt{N_{y}}}{\sqrt{1 + \frac{N_{k}}{N_{y}} }} < Q^{-1}\prn*{\frac{1}{2}-\epsilon},
 \end{align*}
	giving the claimed bound.
 
 \paragraph*{Case 2: For all $y < k$ it holds that $\deltatilde{k} < \deltatilde{y}$.} 
 In this case we have
 \begin{align*}
 \psi_{1, 2} &= 1 + \Deltatilde_{1,2} \frac{1/\deltatilde{2} - 1/\deltatilde{1}}{\Deltatilde_{1,2}/\deltatilde{1} + \deltatilde{1} + \deltatilde{2}} < 1 + \Deltatilde_{1,2} \frac{1}{\deltatilde{2}\deltatilde{1} + \deltatilde{2}^2} \\
 &< 1 + \frac{\brk*{\cls -2} \deltatilde{2}^2 }{\deltatilde{2}\deltatilde{1} + \deltatilde{2}^2} \leq 1 + \frac{\cls-2}{2} = \frac{\cls}{2}.
 \end{align*}
 Which means that 
 \begin{equation*}
 \lim_{d\to \infty} \WCE{\Wtilde{}\cdt, 0} \geq Q \left( \frac{s_{1}\sqrt{N_{1}}}{\sqrt{1 + \frac{4}{c^2}\frac{N_{2}}{N_{1}} }} \right).
 \end{equation*}
 By the same explanation of case 1 it holds that 
 \begin{align*}
 	\frac{s_{y}\sqrt{N_{y}}}{\sqrt{1 + \frac{4}{c^2}\frac{N_{k}}{N_{y}} }} < Q^{-1}\prn*{\frac{1}{2}-\epsilon},
 \end{align*}

 Therefore for any $\delta$:
 \begin{align*}
 \lim_{d\to \infty} \WCE{\Wtilde{}\cdt, 0} \ge \frac{1}{2} - \eps
 \end{align*}
	as claimed. 
	
	To conclude the proof, we show that the worst class error of MA is less than equal to $\eps$. Recall that 
	\begin{equation*}
		 \lim_{d \rightarrow \infty }\WCE{\Wtilde{}\ma\prn*{\delta^\star, \infty}, 0} \le c \cdot \max_{k \ne y} \WCELB{\MA}\prn*{\delta^\star, \infty},
	\end{equation*}
	and note that \cref{eq:ma_wce_lower_bound} and our choices of $s_i$ and $N_i$ give
	\begin{equation*}
		 \WCELB{\MA}\prn*{\delta^\star, \infty} = \max_{k \ne y} Q \prn*{ \frac{s_{y}\sqrt{N_{y}}}{1 + \frac{s_{y}^2N_{y}}{s_{k}^2N_{k}}}} = 
		Q \prn*{ \frac{2Q^{-1}\prn*{\eps / \cls} }{2}} = \frac{\eps}{\cls}
		 ,
	\end{equation*}
	completing the proof.
 \end{proof}
\subsection{The limitation of CDT in binary classification}
We show that when using CDT with $c=2$, the direction of the classifiers are independent of $\delta_1$ and $\delta_2$, and therefore CDT is no better than the MM. We note that while \citet{kini2021label} claim to analyze CDT in the binary case (and show improved performance on minority classes), the method they analyze is really a binary version of MA.
\begin{lem}
 Let $\mathcal{D}=\left\{ (x_1, y_1, ..., x_n, y_n)\right\}$ be a separable training set such that for any $i\in[n] \ \ y_i \in \{1, 2\}, x_i \in \mathbb{R}^d$. Assume $w_{1}, w_{2} \in \mathbb{R}^d$ comprise the CDT predictor with parameters $\delta_{1}, \delta_{2} > 0$. In addition let $\hat{w}_{1}, \hat{w}_{2}$ be the maximum margin predictor over $\mathcal{D}$ then for any $i \in \crl*{1, 2}$, 
 \begin{align*}
 \frac{w_{i}}{\norm{w_{i}}} = \frac{\hat{w}_{i}}{\norm{\hat{w}_{i}}}
 \end{align*}
\end{lem}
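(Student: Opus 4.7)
The plan is to show that in the binary case the CDT optimization effectively decouples into an ``inner'' problem over $w_1,w_2$ for a fixed combination $u \defeq w_1/\delta_1 - w_2/\delta_2$ and an ``outer'' problem over $u$, where the inner solution forces both $w_1$ and $w_2$ to be scalar multiples of $u$ (with coefficients depending only on $\delta_1,\delta_2$), and the outer problem is a standard max-margin problem on $\pm 1$ labels whose direction does not depend on $\delta_1,\delta_2$ at all.

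First I would rewrite the CDT constraints for $c=2$. For every training example with $y_i=1$ the single active constraint is $u^\top x_i \ge 1$ and for $y_i=2$ it is $-u^\top x_i \ge 1$. Thus with the relabeling $\tilde{y}_i \defeq +1$ if $y_i=1$ and $\tilde{y}_i\defeq -1$ if $y_i=2$, the CDT feasibility set in $(w_1,w_2)$ is exactly $\{(w_1,w_2) : \tilde{y}_i\, u^\top x_i \ge 1 \text{ for all } i\}$, which depends on $w_1,w_2$ only through $u$.

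Next I would solve the inner problem: for a fixed $u$, minimize $\|w_1\|^2+\|w_2\|^2$ subject to $w_1/\delta_1 - w_2/\delta_2 = u$. Parametrizing $z\defeq w_2/\delta_2$ gives the unconstrained quadratic $\delta_1^2\|u+z\|^2 + \delta_2^2\|z\|^2$, whose unique minimizer is $z^\star = -\tfrac{\delta_1^2}{\delta_1^2+\delta_2^2}u$. Substituting back yields
\begin{equation*}
w_1^\star(u) = \frac{\delta_1\delta_2^2}{\delta_1^2+\delta_2^2}\,u, \qquad w_2^\star(u) = -\frac{\delta_1^2\delta_2}{\delta_1^2+\delta_2^2}\,u,
\end{equation*}
so both vectors are scalar multiples of $u$, with signs $+$ and $-$ respectively. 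Moreover the resulting objective equals $\tfrac{\delta_1^2\delta_2^2}{\delta_1^2+\delta_2^2}\|u\|^2$, a positive constant times $\|u\|^2$.

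Then the outer problem becomes $\min_u \|u\|^2$ subject to $\tilde{y}_i u^\top x_i \ge 1$, which is the standard binary hard-margin SVM and has a unique solution $u^\star$ that is independent of $\delta_1,\delta_2$. Plugging $u^\star$ into the formulas above gives $w_1$ proportional to $u^\star$ and $w_2$ proportional to $-u^\star$. Specializing the same derivation to $\delta_1=\delta_2=1$ yields $\hat{w}_1 = \tfrac{1}{2}u^\star$ and $\hat{w}_2 = -\tfrac{1}{2}u^\star$, so $w_i/\|w_i\| = \hat{w}_i/\|\hat{w}_i\|$ for $i=1,2$ as required. I do not anticipate a substantial obstacle here; the only care needed is to verify that the coefficients multiplying $u$ are strictly positive and strictly negative (which follows from $\delta_1,\delta_2>0$) so that the normalized directions genuinely coincide and the ``inner'' minimizer is the unique optimum at each feasible $u$.
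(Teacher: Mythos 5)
Your proof is correct, and it takes a genuinely different route from the paper's. The paper argues via the dual/representer form: from the KKT conditions it reads off that $w_{1}=\frac{1}{\delta_{1}}\sum_{i}\lambda_{i}y_{i}x_{i}$ and $w_{2}=-\frac{1}{\delta_{2}}\sum_{i}\lambda_{i}y_{i}x_{i}$, i.e., that $\delta_{1}w_{1}=-\delta_{2}w_{2}$, and then substitutes this parametrization back into the primal to recover a standard max-margin problem in a single vector $w'$. You instead work entirely in the primal: you observe that the feasible set depends on $(w_{1},w_{2})$ only through $u=w_{1}/\delta_{1}-w_{2}/\delta_{2}$, solve the inner quadratic minimization over the affine fiber $\{(w_{1},w_{2}): w_{1}/\delta_{1}-w_{2}/\delta_{2}=u\}$ in closed form to get $w_{1}^\star(u)=\frac{\delta_{1}\delta_{2}^{2}}{\delta_{1}^{2}+\delta_{2}^{2}}u$ and $w_{2}^\star(u)=-\frac{\delta_{1}^{2}\delta_{2}}{\delta_{1}^{2}+\delta_{2}^{2}}u$, and reduce the outer problem to the standard hard-margin SVM in $u$ (whose solution is unique and nonzero, and independent of $\delta$ since the inner value is a positive constant times $\norm{u}^{2}$). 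The two arguments establish the same structural fact---$w_{1}$ and $w_{2}$ are antiparallel multiples of the $\delta$-independent SVM direction---but yours derives the antiparallel relation by explicit computation rather than by appeal to the form of the dual solution, which makes it more self-contained and makes the positivity/negativity of the two coefficients (and hence the equality of normalized directions, not just collinearity) completely explicit. The paper's route is shorter if one takes the representer form for granted. Your closed-form coefficients are consistent with the paper's (with $c=\delta_{1}/\delta_{2}$ one recovers the paper's $\frac{2c}{1+c^{2}}$ scaling up to an overall $\delta_{2}$ factor), so there is no discrepancy in the conclusion.
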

\begin{proof}
From the dual problem it holds that the learned predictor is of the form: 
\begin{align*}
 w_{1} = \frac{1}{\delta_{1}} \sum_{i=1}^n \lambda_i y_i x_i \ \text{ and } \ 
 w_{2} = -\frac{1}{\delta_{2}} \sum_{i=1}^n \lambda_i y_i x_i,
\end{align*}
where $\lambda_{i}$ is the dual variable that correspond the $\prn*{x_i, y_i}$ sample.
In addition, notice that since $\delta_{1}, \delta_{2} > 0$ there exists $0<c\in \mathbb{R}$ such that $\delta_{1} = c \cdot \delta_{2}$, which means that:
\begin{align}
 \label{eq:binary_cdt}
 w_{1} = \frac{1}{c\delta_{2}} \sum_{i=1}^n \lambda_i y_i x_i = \frac{1}{c\delta_{2}} w' \ \text{ and } \ w_{2} = -\frac{1}{\delta_{2}} \sum_{i=1}^n \lambda_i y_i x_i = -\frac{1}{\delta_{2}} w'.
\end{align}
Recall that the CDT predictor solve the following optimization problem:
\begin{align}
 \label{eq:binary_cdt_optimization}
 \minimize_{w_{1}, w_{2}} \norm{w_{1}}^2 + \norm{w_{2}}^2 : \forall i \in [N] \ \ y_{i} \prn*{\frac{w_{1}}{\delta_{1}} - \frac{w_{2}}{\delta_{2}}}^\top x_i \geq 1
\end{align}
plug in \cref{eq:binary_cdt} in \cref{eq:binary_cdt_optimization}:
\begin{align*}
 \minimize_{w'} \frac{1}{c^2\delta_{2}^2} \norm{w'}^2 + \frac{1}{\delta_{2}^2} \norm{w'}^2 : \forall i \in [N] \ \ y_{i} \prn*{\frac{1}{c^2\delta_{2}^2} w' + \frac{1}{\delta_{2}^2} w'}^\top x_i \geq 1
\end{align*}
which is equivalent to:
\begin{align}
 \label{eq:cdt_to_max_margin}
 \minimize_{w'} \norm{w'}^2 : \forall i \in [N] \ \ y_{i} \prn*{\frac{1}{c^2\delta_{2}^2} + \frac{1}{\delta_{2}^2}}w'^\top x_i \geq 1.
\end{align}

Recall that the maximum margin predictor satisfies $\hat{w}_{1} = -\hat{w}_{2}$ (which essentially results in learning a single predictor $\hat{w} = 2\hat{w}_{1}$). Hence \cref{eq:cdt_to_max_margin} implies that 
\begin{align*}
 2\hat{w}_{1} = \prn*{\frac{1}{c^2\delta_{2}^2} + \frac{1}{\delta_{2}^2}}w' \Longrightarrow 2\frac{c^2\delta_{2}}{1 + c^2}\hat{w}_{1} = w'
\end{align*}
we can plug this back in $w_{1}$ and $w_{2}$ and get that the CDT predictor is just a scaling of the maximum margin predictor:
\begin{align*}
 w_{1} = \frac{2c}{1 + c^2}\hat{w}_{1} \ \text{ and } \ w_{2} = -\frac{2c^2}{1 + c^2}\hat{w}_{1} = \frac{2c^2}{1 + c^2}\hat{w}_{2}
\end{align*}
\end{proof}
See empirical validation in \Cref{appendix:subsec:the_faliure_of_cdt}. %
\section{Experiments}\label{appendix:experiments}
In this section, we provide a detailed presentation of our experiments. We showcase experiments conduct on multiple seeds and settings, including additional experiments omitted from the main paper due to space constraints.

\paragraph{How we compute predictors.} Unless stated otherwise, we find the MM, MA, CDT and LA predictors we consider by solving the corresponding margin maximization problems (defined in \Cref{subsec:methods}) using CVXPY \citep{diamond2016cvxpy} with the MOSEK solver \citep{aps2023mosek}.

\paragraph{How we compute error approximation.} To compute the error approximation, we use the expressions for $\alphatilde{}$ and $\btilde{}$ (refer to details in \Cref{appendix:method_analysis}) to construct the statistics score parameters $\nuYhat{y}$ and $\SigmaYhat{y}$ as described in \Cref{appendix:error_approximation}. Then, for calculating the error of the $y$'th class, we employ Monte Carlo simulation. Specifically, we sample 10,000 random samples from $\mathcal{N} \prn*{\nuYhat{y}, \SigmaYhat{y}}$ using $\textup{numpy.random.multivariate\_normal}$ and record the number of positive vectors as $N_{+}$. The error approximation of the $y$'th class is subsequently set to $1 - \pi_{+}$, where $\pi_{+} = \frac{N_{+}}{10,000}$.

\paragraph*{Gaussian mixture model datasets.} To generate a Gaussian mixture dataset that aligns with our data assumptions, we follow the steps outlined below. For a dataset with $\cls$ classes, we define the first $\cls$ standard basis vectors as the directions of $\mu_{1}, \ldots, \mu_{\cls}$. Specifically, each $\mu_{i}$ is 
\begin{align*}
\mu_{i} = e_{i} \cdot \sqrt{\frac{s_{i}}{\sqrt{d}}} ~~\mbox{such that}~~ \norm{\mu_{i}}^2 = \frac{s_{i}}{\sqrt{d}}.
\end{align*}
Additionally, for each data point, we sample a noise vector from a multivariate Gaussian distribution with $\sigma^2 = \frac{1}{d}$, resulting in $n_{i} \in \mathbb{R}^{d}$. Then, each data point is constructed as the sum of its signal and noise vectors as follows:
\begin{align*}
x_{i} = \mu_{y_i} + n_i.
\end{align*}

\paragraph*{Exponential long-tail profile.} Whenever we mention the use of an exponential long-tail profile, we are referring to the exponential profile proposed by \citet{cui2019class}. Specifically, for a desired imbalance ratio $R= N_{\max}/N_{\min}$ and a maximum sample size $N_{\max}$, the number of samples for each class is defined as:
\begin{align*}
N_{i} = N_{\max} \times \left(\frac{1}{R}\right)^{\frac{\cls-i+1}{\cls}}.
\end{align*}

\subsection{Evaluating the impact of dimension on the validity of our approximation (\Cref{fig:1})}\label{appendix:fig1}

The approximations employed in this study rely on dimension dependent concentration bounds. To evaluate their effectiveness, we generate three distinct instances of our problem across varying dimensions, aiming to measure how accurately our approximation predicts the actual error obtained by the exact predictors (i.e., predictors that are obtained using the true feature kernel rather than the expected kernel). Each sub-figure in \Cref{fig:1} displays the empirical and theoretical worst-class errors~\cref{eq:wge-and-be} for the maximum margin (MM), class-dependent temperature (CDT), margin adjustment (MA), and logit adjustment (LA) predictors. Specifically, for MA and LA, we present the worst class error achieved at the near-optimal parameters discussed in \Cref{sec:other-methods}. For CDT, we employed $\delta_{i} = \prn*{\frac{1}{N_{i}}}^{\gamma^\star}$, where $\gamma^\star$ represents the $\gamma$ value that results in the lowest worst class error based on our theoretical predictions.

Our main goal in conducting experiments with diverse problem instances is to showcase situations where MA surpasses LA and vice versa, as well as instances where all methods demonstrate comparable performance. We repeat each experiment with 10 seeds; in the figures, the shaded area corresponds to two standard deviations from the empirical mean (one standard deviation above and below), providing a measure of variability. The solid line with markers represents our theoretical approximation for the worst-class error, enabling a comparison between the empirical and theoretical outcomes. Lastly, in all the experiments we use 4 classes and a balanced test set with 500 samples per class.

\paragraph*{Common classes have weaker signals (left panel).} We used $N_i$ values of 5, 50, 100 and 200, and corresponding $s_i$ values of of 0.5, 0.3, 0.2 and 0.1.  MA performs better than LA, and CDT performs nearly as badly as MM,  consistent with \Cref{prop:la_vs_ma} and \Cref{prop:failureOfCDT}.

\paragraph*{Common classes have stronger signals (middle panel).} We used the same $N_i$ values as in the previous panel, but changed the corresponding $s_i$ values to 0.5, 0.7, 0.9 and 1.1. LA performs better than MA, and CDT performs nearly as badly as MM, consistent with \Cref{prop:la_vs_ma} and \Cref{prop:failureOfCDT}.

\paragraph*{Less class imbalance (right panel).}  We used $N_i$ values of 20, 35, 80, 100, and corresponding $s_i$ values of of 0.4, 0.3, 0.25 and 0.2. In this less imbalanced instance, CDT performs similarly to MA and LA, while MM still performs noticeably worse.

\subsection{Performance of the MM predictor and norm deviation}\label{appendix:experiments:performance_of_max_margin}
In this section, we provide a detailed empirical analysis of the failure of the maximum margin (MM) predictor under class imbalance.

Recall the approximate error of the MM predictor from \Cref{appendix:error_approximation}:
\begin{align*}
\lim_{d\rightarrow \infty }\ErrYtoK(\Wtilde{}\mm, \btilde{}\mm) = Q \left( \frac{s_{y}N_{y} + \phi_{k}\mm\prn*{\rho} }{\sqrt{N_{y} + N_{k} + \psi_{k,k}\mm \prn*{\rho}} } \right),
\end{align*}
where,
\begin{align*}
\phi_{k}\mm\prn*{\rho} =\lim_{d \rightarrow \infty} -\sqrt{d} \frac{\frac{1}{\xi_{k}} - \frac{1}{\xi_{y}} }{ \left(M+\rho\right)} ~~\mbox{and}~~ \psi_{k, k}\mm \prn*{\rho} = \lim_{d \rightarrow \infty} -\frac{M+2\rho}{\left(M+\rho\right)^{2}}\left(\frac{1}{\xi_{k}}-\frac{1}{\xi_{y}}\right)^2.
\end{align*}
By examining these equations, we observe that when $\rho < \infty$, i.e., learning with bias, and $N_{k} > N_{y}$, it holds that:
\begin{align*}
\phi_{k}\mm\prn*{\rho} = -\infty.
\end{align*}
This results in a worst-class error lower bound of 1. On the other hand, when $\rho=\infty$ i.e., learning without bias, we get that:
\begin{align}\label{eq:mm_q_appendix}
  \lim_{d\rightarrow \infty }\ErrYtoK(\Wtilde{}\mm, \btilde{}\mm) = Q \left( \frac{s_{y}N_{y} }{\sqrt{N_{y} + N_{k}} } \right) = Q \left( \frac{s_{y}\sqrt{N_{y} }}{\sqrt{1 + \frac{N_{k}}{N_{y}}} } \right).
\end{align}
This observation indicates that as the imbalance ratio $\frac{N_{k}}{N_{y}}$ increases, the lower bound for the worst-class error becomes $\frac{1}{2}$, which leads to a high test error. In other words, when there is a significant class imbalance, the predictor is more likely to misclassify instances from minority classes, resulting in a higher overall error rate. We validate this observation in the following experiment.

\begin{figure}[t]
  \centering
  \includegraphics[width=0.7\textwidth]{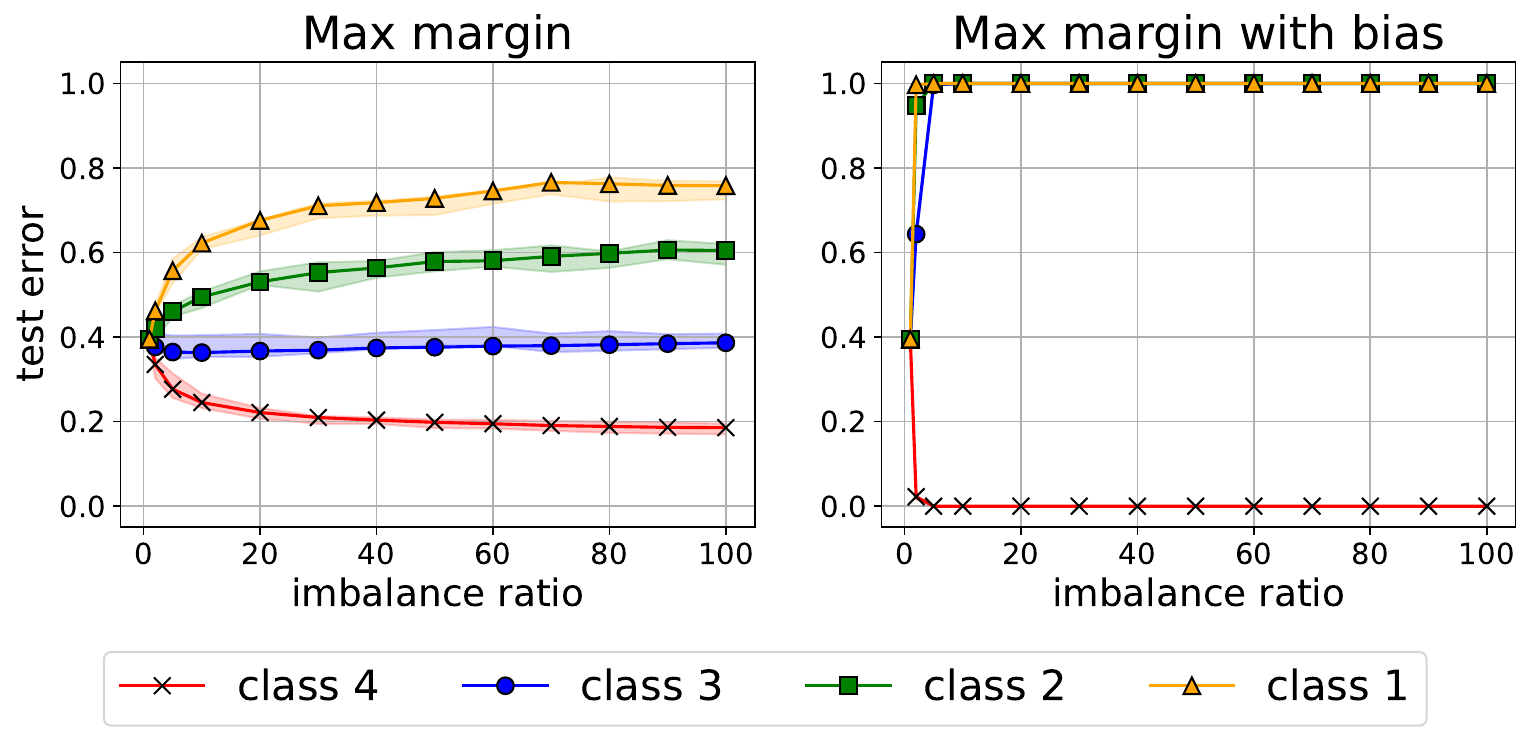}
  \caption{Per-class error vs.\ imbalance ratio for MM predictors. The $x$-axis shows the imbalance ratio, i.e., the ratio between the largest and smallest per-class sample sizes. The $y$-axis represents the test error, and we plot the test error of each class. The left side shows the test error of the maximum margin (MM) predictor without bias, while the right side shows the test error of the MM predictor with bias. The shaded regions show empirical results (two standard deviations over 5 random seeds) and the solid lines show our analytical approximation.}
  \label{appendix:fig:max_margin}
\end{figure}

\paragraph{Experiment details.}
We use four classes, with $N_{\max}=200$, and vary the exponential long tail profiles from an imbalance ratio of $1$ to $100$ in $\R^d$ where $d=10^5$. The signal strengths are set inversely proportional to the number of samples in each class, i.e., $s_{i} = \frac{1.2}{\sqrt{N_{i}}}$ (which allows us to fix the signal effect in the numerator of \cref{eq:mm_q_appendix} to 1.2). For testing the performance of the learned predictor, we use a balanced test set with 500 samples per class. The results are reported over 5 different seeds, where the shaded area represents two standard deviations from the empirical mean (one above and one below), and the solid line with markers represents our theoretical model prediction for the class error.

\paragraph*{Discussion.}
\Cref{appendix:fig:max_margin} demonstrates that our approximation is tight even at moderate dimension. Specifically, we observe a catastrophic failure of the MM predictor with bias even in scenarios with extremely small class imbalance. Furthermore, the error of the predictor without bias on minority classes increases as the imbalance ratio becomes larger.

\subsection{Testing discontinuity in $\rho$}\label{appendix:experiments:effect_of_rho}
\begin{figure}[t]
  \centering
  \includegraphics[width=0.8\textwidth]{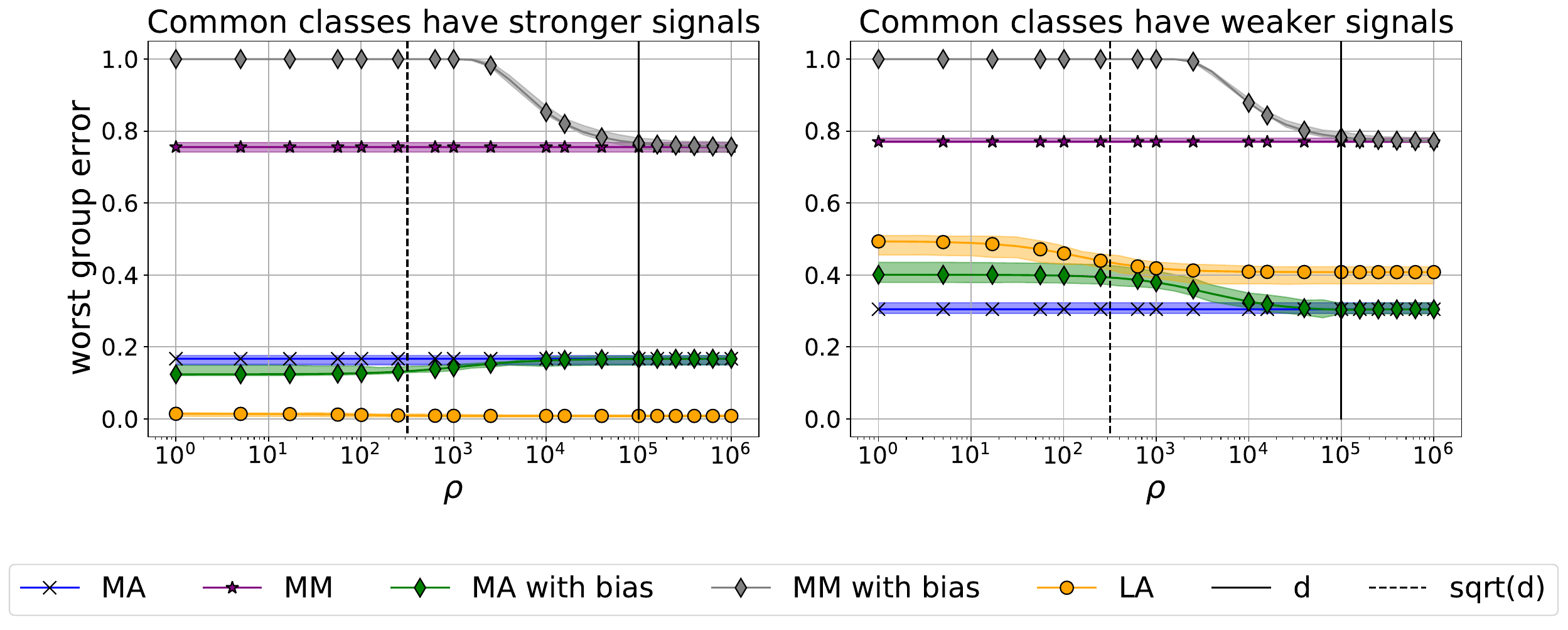}
  \caption{Worst class error vs.\ $\rho$ for MM, MA and LA in \textbf{linear classification} on synthetic Gaussian data at the near-optimal hyperparameters presented in \Cref{sec:other-methods}. The shaded regions show empirical results (two standard deviations over 5 random seeds) and the solid lines show our analytical approximation.}
  \label{appendix:fig:affect_of_rho}
\end{figure}

Our analysis of the MA and MM predictors in \Cref{sec:other-methods} reveals a discontinuity in the lower bound of the worst class error with respect to $\rho$ as the dimension $d$ approaches infinity. Specifically, for $\rho<\infty$, we observe a certain value in the lower bound of the worst class error at the near-optimal margins of MA, while for $\rho=\infty$, we encounter a different lower bound that is independent of $\rho$ \cref{appendix:eq:ma_wce_lower_bound}. 

\begin{align}\label{appendix:eq:ma_wce_lower_bound}
  \WCELB{\MA}\prn*{\delta^\star, \rho} = \begin{cases}
    \max_{k \ne y} Q \left(\frac{1}{\sqrt{\prn*{s_{y}^2 N_{y}}^{-1} + \prn*{s_{k}^2 N_{k}}^{-1}}} \right) & \rho = \infty \\
    \max_{k \ne y} Q \prn*{ \frac{s_{y} + s_{k}}{2\sqrt{N_{y}^{-1} + N_{k}^{-1}}}} & \rho < \infty.
  \end{cases}
\end{align}

This raises an interesting question regarding how $\rho$ affects the transition between learning with bias and without bias in finite dimension.
To investigate this, we conduct experiments where we vary the value of $\rho$ in a finite-dimensional setting. By sweeping over different $\rho$ values, we observe how the errors of the MM, MA, and LA predictors change as a function of $\rho$ (at the near-optimal parameters).

\paragraph{Experiment details.}
We generate two Gaussian datasets, each containing four classes in $\mathbb{R}^{d}$ with $d=10^5$. Additionally, both datasets contain 5, 50, 100, and 200 samples per class. In the first dataset, the signal strengths are aligned with the order of the sample sizes, with values of 0.5, 0.7, 0.9, and 1.1. In the second dataset, the signal strengths are in reversed order, with values of 0.5, 0.3, 0.2, and 0.1. We utilize this setup to illustrate several observations. When the signals align with the class frequencies, the LA predictor outperforms MA, and MA with bias is superior to MA without bias. Conversely, when the signal strengths are reversed, MA without bias becomes the most effective choice. Moreover, we anticipate that MM with bias will yield worse performance in both cases. As $\rho$ increases, we expect MA and MM (with bias) to converge to their corresponding predictors without bias. Furthermore, we employ a balanced test set consisting of 500 samples per class.

\paragraph*{Discussion.}
Our results in \Cref{appendix:fig:affect_of_rho} demonstrate a smooth convergence between predictors with and without bias. Specifically, we observe that there is no midpoint of $\rho$ that surpasses $\rho=1$ or $\rho=\infty$. Moreover, as our theory suggests we observe a catastrophic failure of the MM predictor with bias even in scenarios with extremely small class imbalance. Additionally, we see the expected behavior of MA when the signals are aligned and reversed. Moreover, as our approximation~\cref{appendix:eq:nu_ma} suggests, we see that the transition between the $\rho<\infty$ regime to the $\rho=\infty$ regime starts when $\rho$ is of the order of $\sqrt{d}$.

\subsection{Classification with RBF kernel}\label{appendix:experiments:classification_with_rbf_kernel}
To test the validity of our predictions beyond the scope of our model, we conduct classification experiments using the RBF kernel on imbalanced CIFAR10, MNIST and FashionMNIST datasets that were featurized using ViT-B/32 CLIP model~\citep{radford2021learning}. Specifically we solve the kernel form of the optimization problems in \Cref{sec:class_of_predictors} (see \Cref{appendix:approximated_classification_model}) when $K$ is the RBF kernel (defined below). 
We consider two different class size profiles. The first profile follows a standard ``long-tailed'' exponential distribution~\citep{cui2019class,cao2019learning}, with 500 examples for the majority class and 5 examples for the minority class. The second profile maintains the same majority and minority sizes but increases the sample sizes of the other classes to worsen the theoretical predictions for CDT.

\paragraph*{Data generation.} For each dataset use in our tests (CIFAR10, MNIST and FashionMNIST), we sample 5, 8, 13, 23, 38, 64, 107, 179, 299, 500 and 5, 100, 120, 140, 160, 180, 200, 220, 300, 500 samples per class for the exponential and modified profiles, respectively. We use the image encoder of pre-trained ViT-B/32 CLIP model~\citep{radford2021learning} to extract features in $\mathbb{R}^{512}$ for each image. In addition, we featurize the standard test sets of each dataset and use to test the learned predictors. 

\paragraph*{Hyperparameter sweep.}
Taking a random feature perspective, using the RBF kernel essentially corresponds to taking $d \to \infty$. Therefore, we use our theoretical tuning that matches this specific setting. For the CDT and MA predictors, we use $\delta_{i} = \left(\frac{1}{N_{i}}\right)^\gamma$, while for the LA predictor, we use $\iota_{i} = \tau \left(\frac{N_{i}}{N}\right)$ (assuming the signals have equal strengths).

\paragraph*{RBF Kernel.} The RBF kernel is
\begin{align*}
K^{\textup{RBF}}_{[i,j]} = \exp \prn*{-\frac{\norm{x_{i} - x_{j}}^2}{2 \zeta^2}},
\end{align*}
where $K^{\textup{RBF}}$ is an $N \times N$ matrix and $\zeta$ is the kernel width. It is important to note that as $\zeta$ becomes smaller, the kernel becomes closer to a diagonal matrix, agreeing with our expected kernel approximation with the scaling $\norm{\mu_{i}}^2 = \frac{s_{i}}{\sqrt{d}}$ and in the limit 
in the limit $d\to\infty$. Conversely, as $\zeta$ becomes larger, the kernel widens and becomes less diagonal. We illustrate this in \Cref{fig:CIFAR10_kernel,fig:MNIST_kernel,fig:FashionMNIST_kernel}. 

\paragraph*{Distance from theory.} To quantify the similarity between $K^{\textup{RBF}}$ and the expected $K$ kernel (see \cref{K_matrix}) used in our theoretical analysis, we define the ``distance from theory'' as follows:
\begin{equation*}
    \text{DT}\prn*{K^{\textup{RBF}}} = \min_{\alpha_1,\alpha_2\in\R} \lfro{K^{\textup{RBF}}-\alpha_1 B - \alpha_2 I},
\end{equation*}
where $I$ is the $N\times N$ identity matrix and $B\in\R^{N\times N}$ has elements $B\brk*{i,j} = \indic{y_i = y_j}$.
Thus, the distance from theory measures the Euclidean (Frobenius) distance between $K^{\textup{RBF}}$ and the block matrix $K'=\alpha_1^\star B + \alpha_2^\star I$ that best approximates $K^{\textup{RBF}}$ and is consistent with our data model (see \cref{sec:data_model,subsec:expected-kernel-approximation}) when the signals have equal strengths.

\begin{figure}[t] 
	\centering
	\subfloat[CIFAR10 RBF Kernels]{\includegraphics[width=1\textwidth]{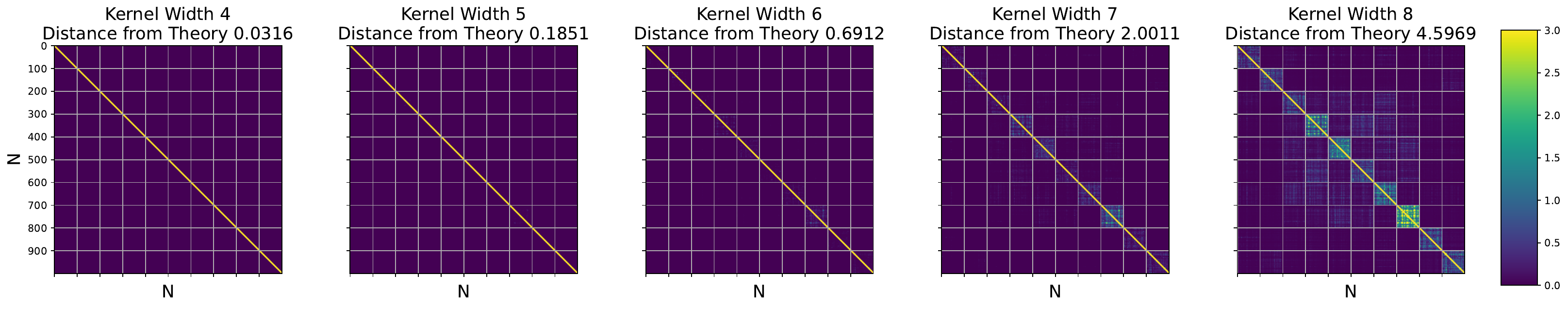}\label{fig:CIFAR10_kernel}}
	\hfill\\
	\subfloat[MNIST RBF Kernels]{\includegraphics[width=1\textwidth]{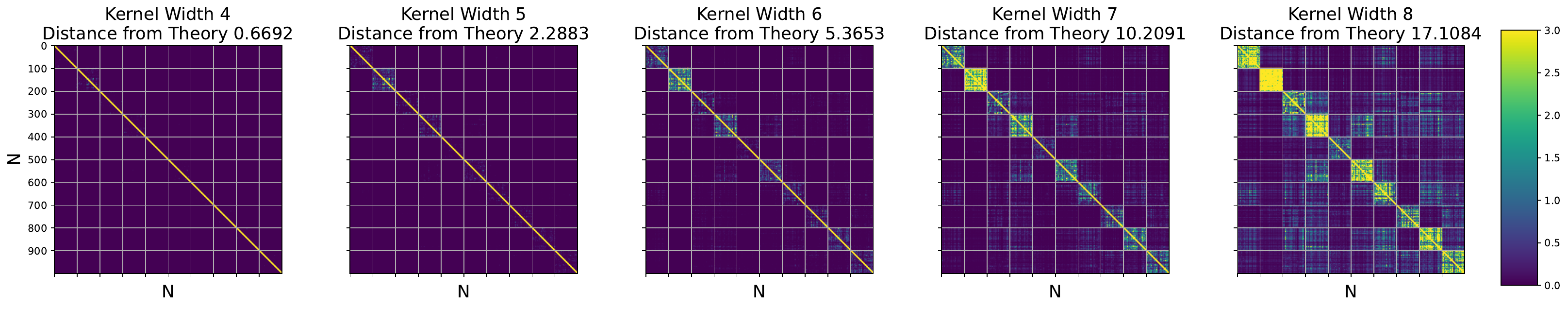}\label{fig:MNIST_kernel}}
	\hfill\\
	\subfloat[FashionMNIST RBF Kernels]{\includegraphics[width=1\textwidth]{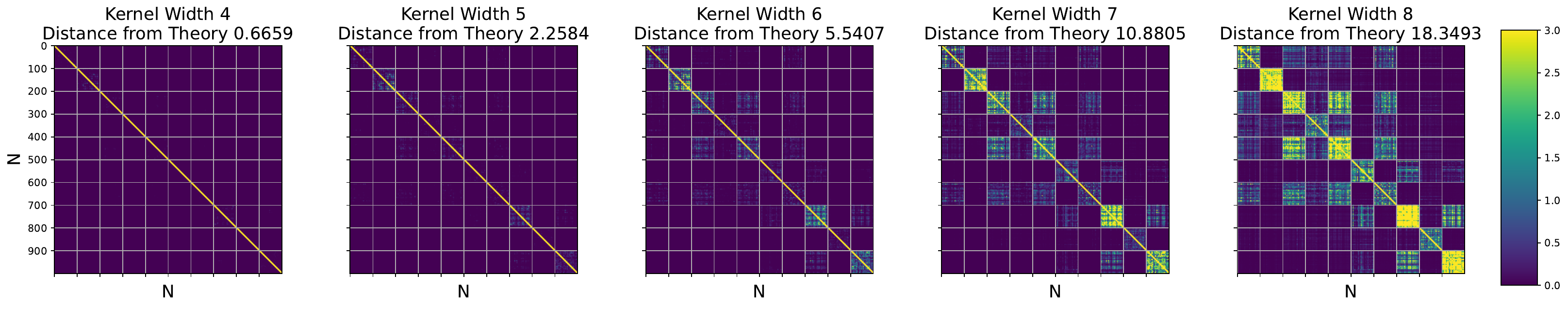}\label{fig:FashionMNIST_kernel}}
	\caption{Illustration of the RBF kernel on (a) CIFAR10 (b) MNIST and (c) FashionMNIST with varying widths using 100 samples per class. The ``distance from theory'' at the widest kernel width indicates that CIFAR10 has properties that fits the best to our theoretical model.
	}
\end{figure}

\begin{figure}[t]
	\centering
	\includegraphics[width=1\textwidth]{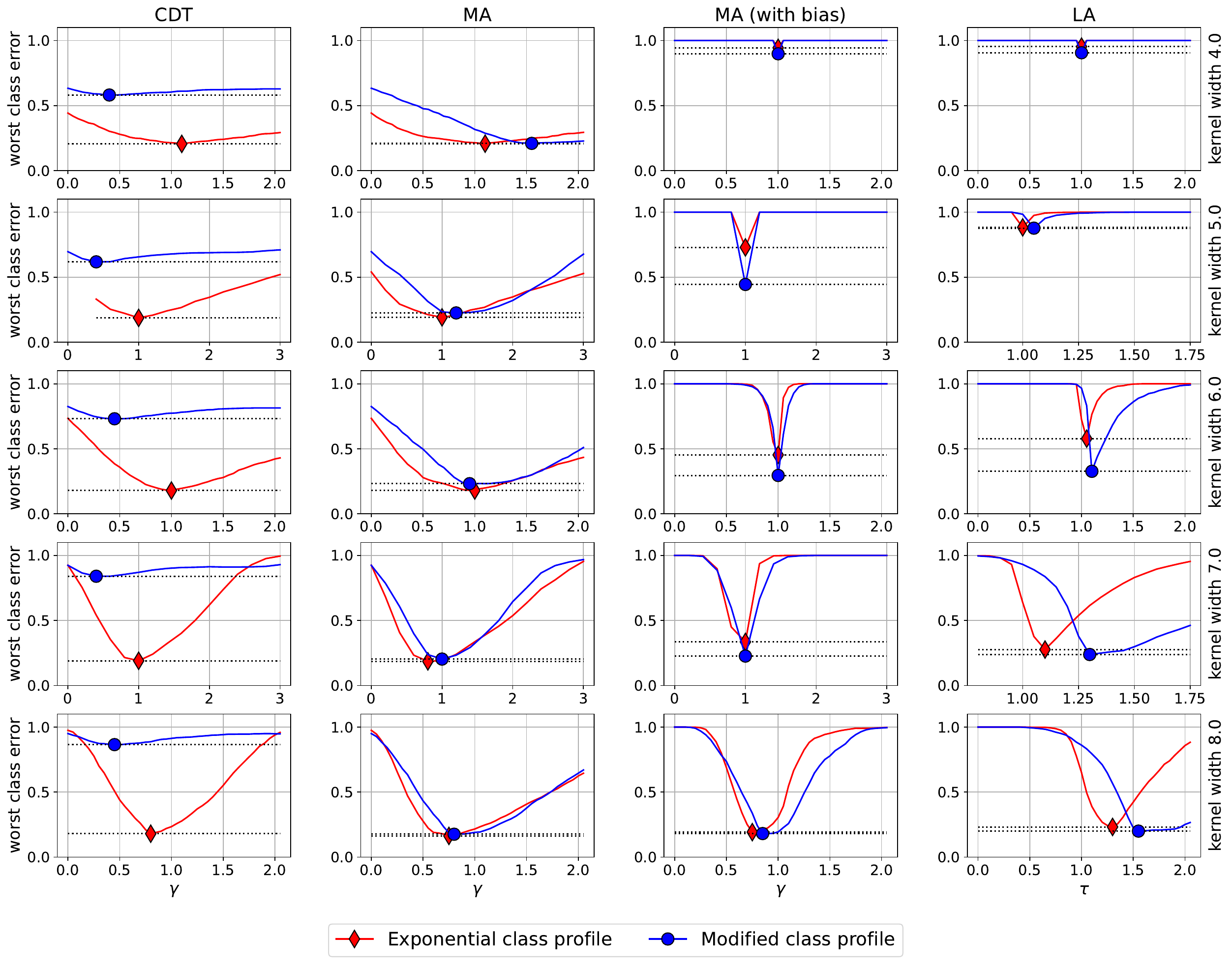}
	\cprotect\caption{\label{fig:kernel_experiemnt}
		Empirical worst class test error vs.\ loss tuning hyperparameter for the different methods we consider and \textbf{kernel classification} on two imbalanced versions of CIFAR10 with varying kernel widths.
	}
\end{figure}

\begin{figure}[t]
	\centering
	\includegraphics[width=1\textwidth]{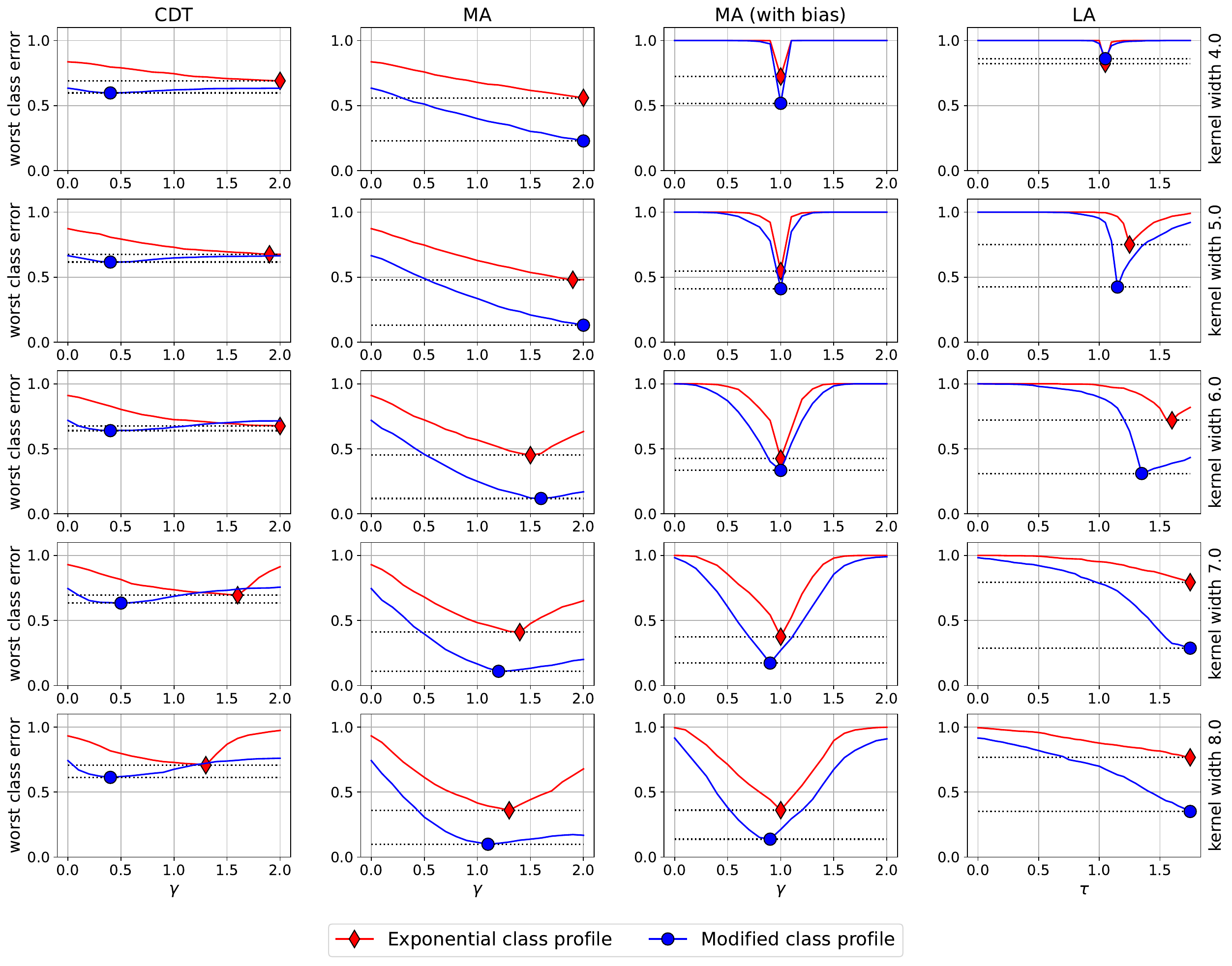}
	\cprotect\caption{\label{fig:Mnist_kernel_experiemnt}
		Empirical worst class test error vs.\ loss tuning hyperparameter for the different methods we consider and \textbf{kernel classification} on two imbalanced versions of MNIST with varying kernel widths.
	}
\end{figure}

\begin{figure}[t]
	\centering
	\includegraphics[width=1\textwidth]{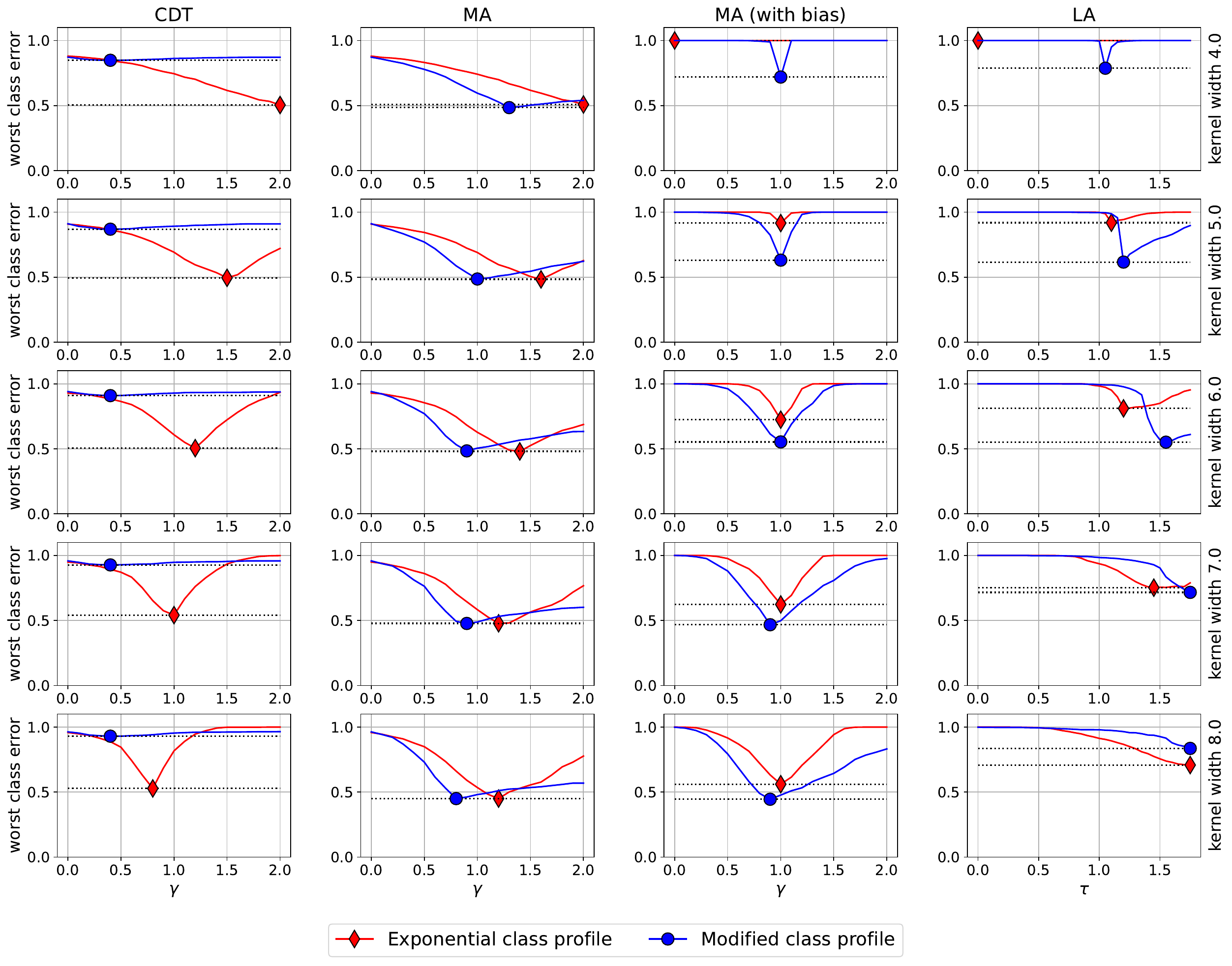}
	\cprotect\caption{\label{fig:fashionMnist_kernel_experiemnt}
		Empirical worst class test error vs.\ loss tuning hyperparameter for the different methods we consider and \textbf{kernel classification} on two imbalanced versions of FashionMNIST with varying kernel widths.
	}
\end{figure}

\paragraph*{Discussion.} 
In \Cref{fig:kernel_experiemnt}, we extend \Cref{fig:3} (which corresponds to kernel width 6.0) to show results with additional kernel widths.  
The figure illustrates that many of our predictions continue to hold in this setting, particularly the high errors of CDT and MM (especially with bias), which persist across all tested kernel widths. Additionally, we observe that a kernel width of 6.0 appears to agree most closely with our theory. As seen in \Cref{fig:kernel_experiemnt}, as we approach this kernel width, the optimal tuning parameters tend to be around $\gamma, \tau = 1$. Conversely, as we move away from it (e.g., kernel widths of 7.0 and 8.0), the optimal tuning parameters deviate from $\gamma, \tau = 1$.

Moreover, \Cref{fig:fashionMnist_kernel_experiemnt,fig:Mnist_kernel_experiemnt} show similar observations for MNIST and FashionMNIST datasets. Specifically, in both datasets, we observe that the ``modified class profile'' leads CDT to failure in all kernel widths. Margin adjustment with bias, on the other hand, has optimal hyperparameters around $\gamma = 1$.

For margin adjustment without bias and logit adjustment, our theoretical predictions hold better for CIFAR10 than they do for MNIST and FashionMNIST. This is consistent with the higher ``distance from theory'' computed for the latter two datasets \Cref{fig:CIFAR10_kernel,fig:MNIST_kernel,fig:FashionMNIST_kernel}. Two likely causes for the higher distance are: (1) the class signals are non-orthogonal, and (2) the signal strengths are not of equal magnitude. For FashionMNIST (\Cref{fig:FashionMNIST_kernel}), both causes seem to be at play, since there are significant kernel magnitudes across classes (indicating class signals are not orthogonal), and also the classes have unequal signal strengths. For MNIST (\Cref{fig:MNIST_kernel}), we see that in a high kernel width, $K^{\textup{RBF}}$ looks like the signal vectors of different classes are nearly orthogonal (as there are no bright blocks or cells that are not centered around the diagonal), but still the classes have unequal strengths, as some class blocks are much brighter than other class blocks.

\subsection{CLIP fine-tuning}\label{appendix:experiments:clip_fine-tuning}

\begin{figure}[t]
	\centering
	\subfloat[CLIP fine-tuning and post-hoc bias adjustment worst class error (repeating \Cref{fig:3})]{\includegraphics[width=1\textwidth]{figures/clip/clip_exp.pdf}\label{fig:clip_sweep_a}}
	\\
	\subfloat[Train error and train loss of CDT and MA]{\includegraphics[width=1.0\textwidth]{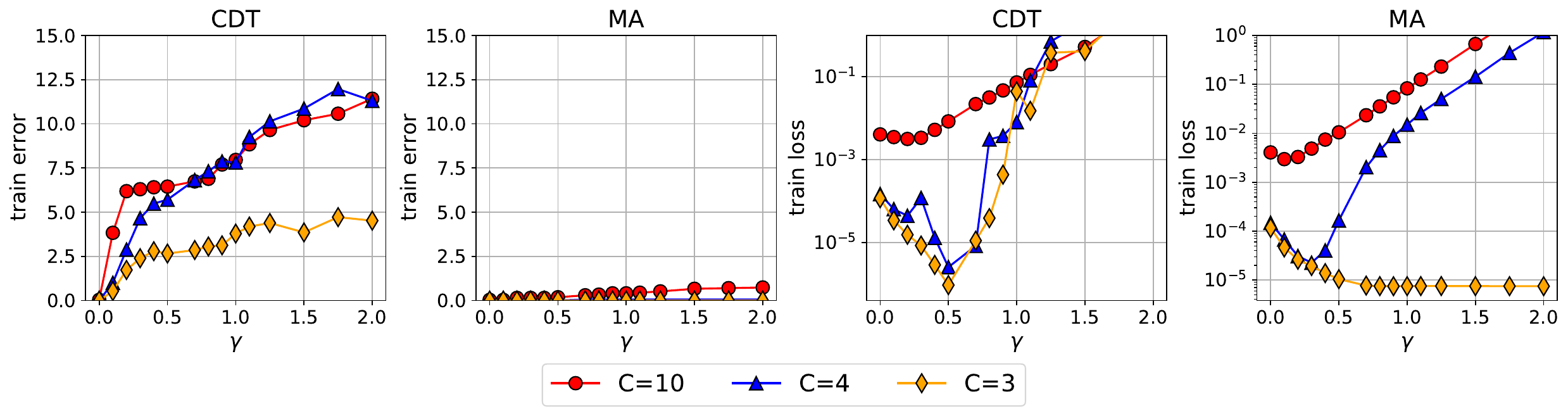}\label{fig:clip_sweep_b}}
	\cprotect\caption{\label{fig:clip_summary}
		(a) Empirical worst class test error vs.\ loss tuning hyperparameter and (b) training loss and error vs.\ loss tuning hyperparameter for CDT and MA in \textbf{neural network fine-tuning} on class-imbalanced subsets of CIFAR10.
	}
\end{figure}

We conduct experiments to test our theoretical observations beyond the realm of linear classifiers and Gaussian data by fine-tuning zero-shot CLIP (ViT-B/32) \citep{radford2021learning} on three imbalanced CIFAR10 datasets, each varying in the number of classes used.

In the \textbf{three-class dataset}, we include images of airplanes, cats, and horses from CIFAR10, with sample sizes per class of 1000, 500 and 5, respectively.

The \textbf{four-class dataset} consists of images of airplanes, cats, deer, and horses from CIFAR10, with sample sizes per class of 1000, 500, 300 and 5, respectively.

The \textbf{ten-class dataset} comprises images from all classes of CIFAR10, with 500, 450, 350, 300, 280, 250, 230, 200, 150, and 5 sample sizes per class, respectively.

We consider a reduced number of classes in an attempt to make it more challenging for CDT to succeed, as our theory suggests that a significant imbalance between consecutive classes hinders CDT's performance. For each model, we use the official CIFAR10 test set and keep only the relevant classes for each experiment.

For each instance, we use PyTorch \citep{paszke2019pytorch} to fine-tune CLIP by minimizing the CDT and MA losses using the standard tuning parameter $\delta_{i} = \prn*{\frac{1}{N_{i}}}^\gamma$ over the training set. In addition we test the post-hoc bias adjustment of the MM predictor using both standard $\tau \log\prn*{\frac{N_{i}}{N}}$ and theoretical tuning $\iota_{i} = \tau \frac{N_{i}}{N}$.

\paragraph*{Zero-shot initialization.} To initialize the zero-shot model, we utilize templates from the official GitHub repository of CLIP \citep{radford2021learning} for CIFAR10.

\paragraph*{Training procedure.}
Fine-tuning is performed using PyTorch \citep{paszke2019pytorch} while training is conducted for 1000 epochs, employing the SGD optimizer with a batch size of 128, no momentum, no weight decay, and gradient clipping with global norm threshold of 1. The learning rate is set to 1e-4 with cosine learning rate scheduler, and the training is distributed across 4 GPUs.

\paragraph*{Discussion.}
\Cref{fig:clip_summary} demonstrates partial agreement with our theory. CDT performs comparably to MA, and our theoretical setting of LA is slightly superior in the ten-class dataset but inferior in the other two cases. We note that CDT exhibits greater sensitivity to the tuning of $\gamma$ than MA, which contradicts the predictions of \citet{behnia2023implicit}. Additionally, we observe that CDT does not converge to zero training error (\Cref{fig:clip_sweep_b}), supporting \Cref{remark:cdt}.

\subsection{Linear classification with Gaussian model}\label{appendix:experiments:linear_classification_with_gmm}
\begin{figure}[t] 
	\centering
	\subfloat[Instances 1--3]{\includegraphics[width=1\textwidth]{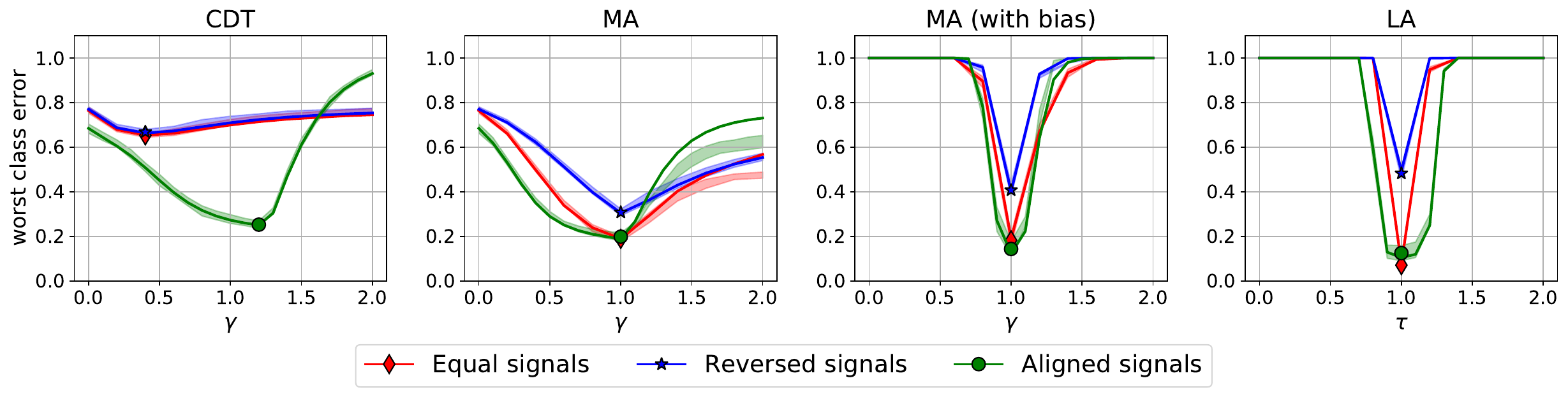}\label{fig:gmm_sweep_a}}
	\hfill\\
	\subfloat[Instances 4--5]{\includegraphics[width=1\textwidth]{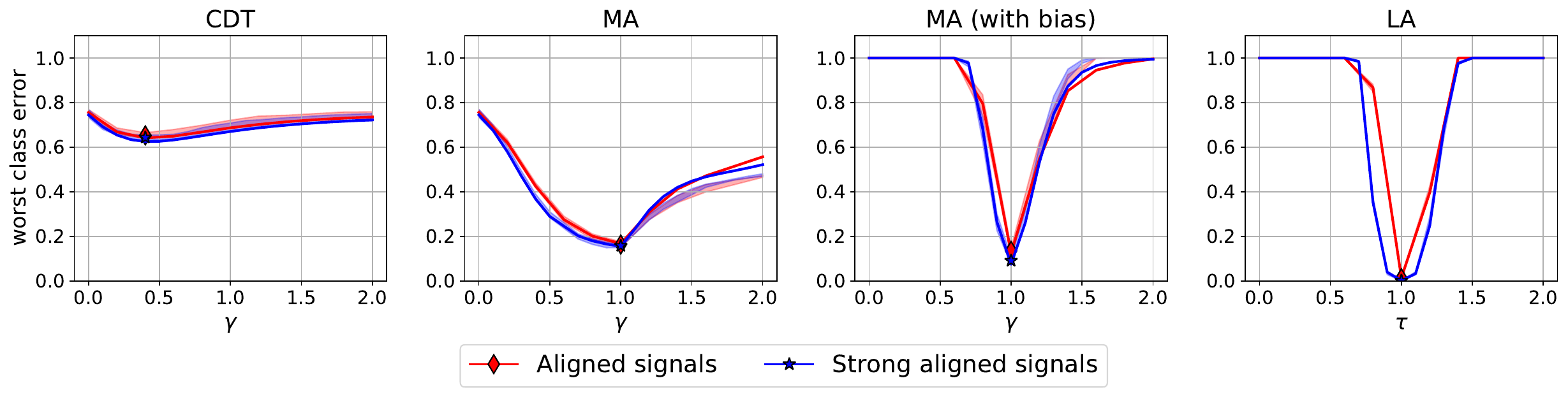}\label{fig:gmm_sweep_b}}
	\cprotect\caption{\label{fig:gmm_summary}
  Worst class error vs.\ loss tuning hyperparameter for the different methods we consider in \textbf{linear classification} on synthetic data from our model. The shaded region show empirical results (two standard deviations over 5 random seeds) and the solid lines show our analytical approximation. \Cref{fig:gmm_sweep_a} is exactly the same as the top row (showing worst class error) in \Cref{fig:2}.
	}
\end{figure}

To validate the theoretical observations presented in \Cref{sec:other-methods}, we conduct experiments to test the performance of the predictors we analyze in this work on various instances of our model. The empirical validation aims to demonstrate the following key points:

\begin{enumerate}
  \item The theoretical predictions for the error of all predictors are tight even in a finite dimension.
  \item There are instances of data where CDT performs very poorly, while MA and LA can perform extremely well.
  \item When the signals are of equal strength, MA with and without bias achieve the same optimum for the worst class error. Additionally, LA surpasses MA.
  \item There are instances where MA without bias outperforms MA with bias and LA.
   \item At least in a one-dimensional hyperparameter sweep, the near-optimal value predicted from our theory is indeed optimal.
  \item In certain scenarios, our theoretical tuning outperforms standard tuning.
\end{enumerate}

\begin{figure}[t]
	\centering
	\includegraphics[width=0.85\textwidth]{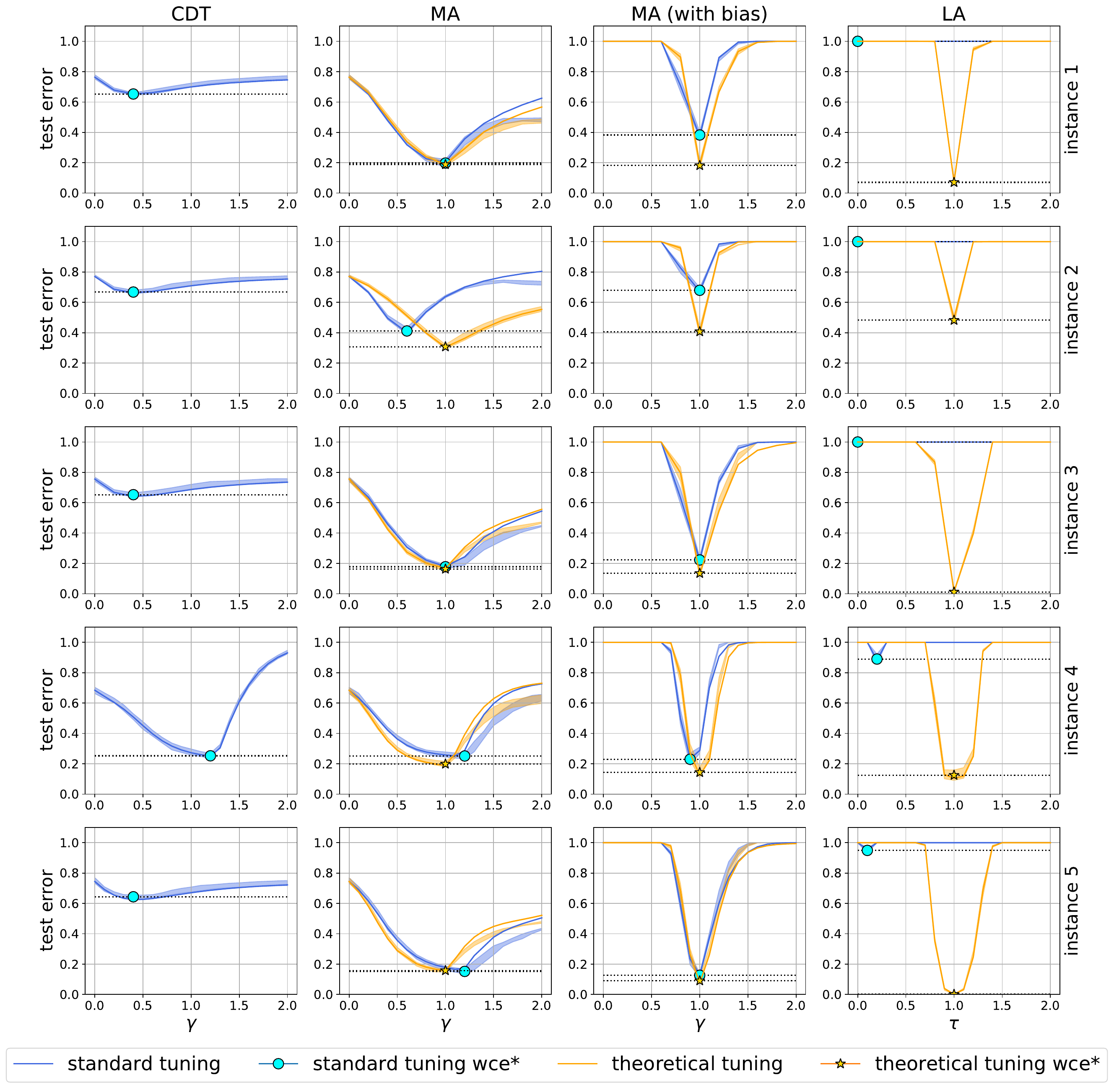}
	\caption{Worst class error vs.\ loss tuning hyperparameter (standard blue and theoretical orange) for the different methods we consider in \textbf{linear classification} on synthetic data from our model. The shaded regions show empirical results (two standard deviations over 5 random seeds) and the solid lines show our analytical approximation.}
	\label{fig:standard_vs_theoretical_gmm}
\end{figure}

\begin{figure}[t]
	\centering
	\includegraphics[width=1\textwidth]{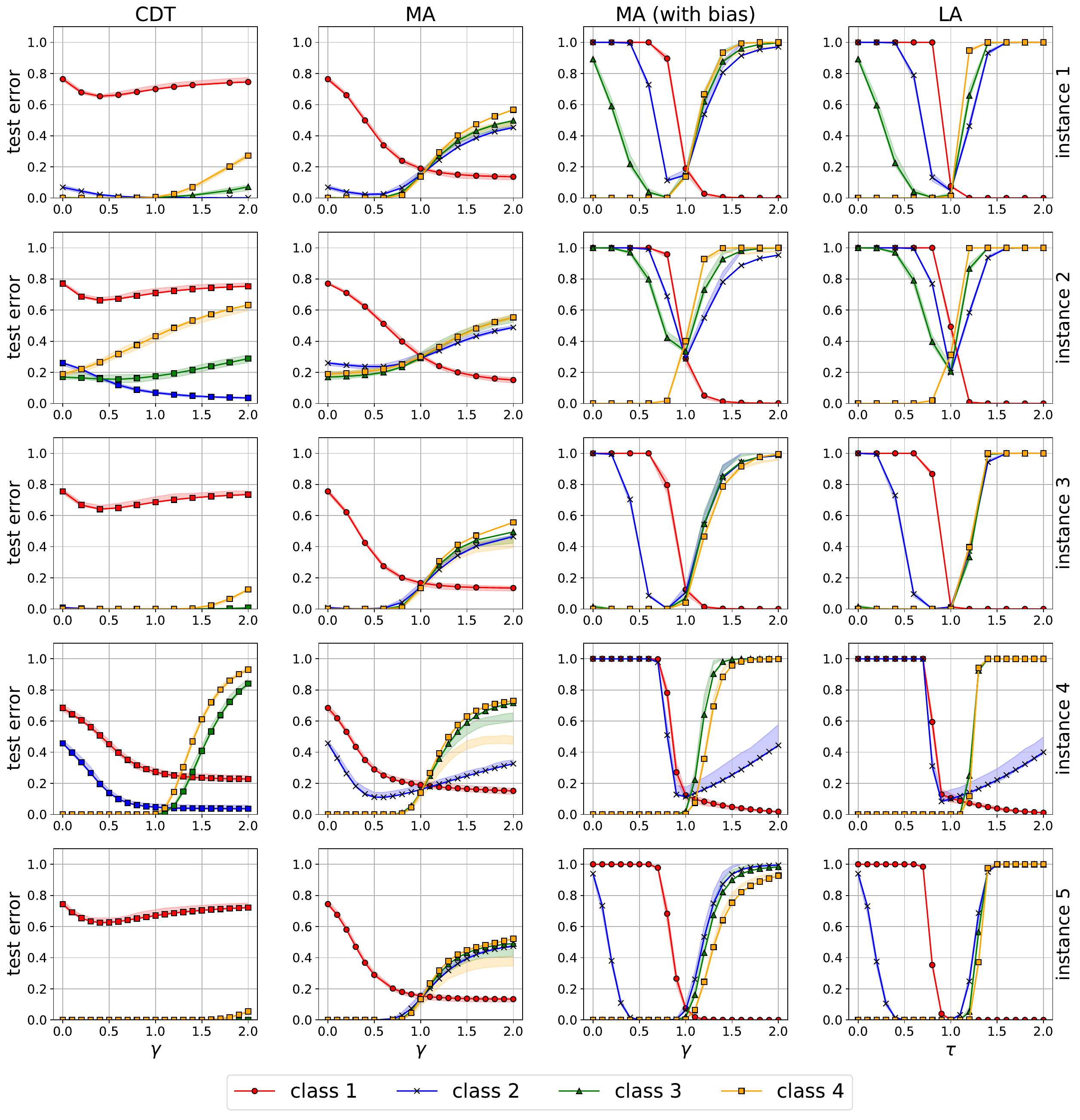}
	\caption{Per class error vs.\ loss \textbf{theoretical tuning} hyperparameter for the different methods we consider in \textbf{linear classification} on synthetic data from our model. The shaded regions show empirical results (two standard deviations over 5 random seeds) and the solid lines show our analytical approximation.}
	\label{fig:theoretical_gmm}
\end{figure}

\paragraph{Data setting.}
We conduct experiments using five different instances of Gaussian mixture datasets that conform to our data model described in \Cref{sec:data_model}. Each dataset consists of four classes in $\mathbb{R}^d$, where $d=10^5$. The test set for each instance is balanced, with 500 samples per class. The instances differ in terms of the following characteristics:

\textbf{Instance 1 (equal signals)}: has 5, 50, 100 and 200 samples per class and signal strengths 0.5 for all classes.

\textbf{Instance 2 (reversed signals)}: has 5, 50, 100 and 200 samples per class and signal strengths 0.5, 0.3 0.2 and 0.1, respectively.

\textbf{Instance 3 (aligned signals)}: has 5, 6, 100 and 200 samples per class and signal strengths 0.5, 1.0, 1.5 and 2.0.

\textbf{Instance 4 (weak aligned signals)}: has 5, 50, 100 and 200 samples per class and signal strengths 0.5, 0.7,0.9 and 1.1.

\textbf{Instance 5 (strong aligned signals)}: has 5, 50, 100 and 200 samples per class and signal strengths 0.5, 1.0, 1.5 and 2.0., respectively.

\paragraph{Experiment details.} 
We tune each method by sweeping over a scalar hyperparameter. Specifically we tune MA and LA using our theoretical expressions from \Cref{sec:other-methods}, setting $\delta_{i}= \prn*{\delta^{\star}_{i}}^\gamma$ and $\iota_{i}=\tau\iota_{i}^*$, respectively, and sweeping over $\gamma$ and $\tau$ such that value 1 recovers the theoretical tuning and value 0 reduces to standard maximum margin. For CDT we use standard tuning and set $\delta_i = N_i^{-\gamma}$. Additionally, whenever we mention that we use standard tuning for MA and LA we use $\delta_i = N_i^{-\gamma}$ and $\iota_i = \tau \log N_i$ respectively. 

\paragraph{Balanced error.}
The balanced error is defined by 
\begin{equation*}
	\BalErr{W, b} \defeq \frac{1}{c}\sum_{y=1}^{c} \yErr{y}{W, b}
\end{equation*}
Where $\yErr{y}{W, b}$ is defined in \cref{eq:err_y}.

\paragraph{Macro $F_1$ score.} 
The macro $F_1$ score is calculated by averaging the $F_1$ scores for each class. The $F_1$ score for each class $y$ is given by:
\begin{equation*}
(F_1)_y = \frac{2 \cdot \text{precision}_y \cdot \text{recall}_y}{\text{precision}_y + \text{recall}_y}
\end{equation*}

where 
\begin{equation*}
\text{precision}_y = \frac{\text{TP}_y}{\text{TP}_y + \text{FP}_y}
~~\mbox{and}~~
\text{recall}_y = \frac{\text{TP}_y}{\text{TP}_y + \text{FN}_y},
\end{equation*}
and $\text{TP}_y$, $\text{FP}_y$, $\text{FN}_y$ are the true positive, false positive and false negative rates for class $y$, respectively, assuming the data comes from the (class imbalanced) training distribution.

The macro $F_1$ score is then computed by averaging the $F_1$ scores across all $C$ classes:
\begin{equation*}
\text{macro $F_1$} = \frac{1}{C} \sum_{i=1}^{C} (F_1)_i.
\end{equation*}

\paragraph{Discussion.} 
Our experimental findings, as illustrated in Figures \ref{fig:gmm_summary} and \ref{fig:standard_vs_theoretical_gmm}, provide strong support for the validity of our theory. Specifically, in Figure \ref{fig:gmm_sweep_a}, we observe that when the signal strengths are equal, both MA with and without bias exhibit equivalent performance when $\gamma=1$, while LA outperforms MA when $\tau=1$. Furthermore, across all test cases, we find no value of $\gamma$ that improves CDT's performance on the worst class. Additionally, \Cref{fig:gmm_sweep_b} demonstrates that CDT can achieve good performance in certain cases. In \Cref{fig:standard_vs_theoretical_gmm}, we compare the worst class error of standard tuning to theoretical tuning for all methods. The results indicate that, in all tested scenarios except for instance 5, theoretical tuning consistently outperforms standard tuning, particularly in the case of LA. Finally, Figures \ref{fig:theoretical_gmm} illustrate the tightness of our analytical expression per class in each tested instance

\subsection{Empirical demonstration of the limitations of CDT}\label{appendix:subsec:the_faliure_of_cdt}

We conduct two additional experiments to examine the effect of the $\delta$ hyperparameter on the decision boundary of the CDT and MA predictors (without learning a bias parameter).

\paragraph*{Data generation.}
We generate two synthetic datasets (binary and multiclass) in $\R^2$ by following these steps for each class:
\begin{enumerate}
  \item Sample $N_i$ random samples from $\mathcal{N}(0,I_2)$.
  \item Generate a random projection matrix $Q_i$ from $\R^2$ to $\R^2$
  \item Perform random projection using $Q_i$ and add the signal vector $\mu_{i}$ to the projected points.
\end{enumerate}
Where for the binary experiment we use:
\begin{equation*}
	\mu_{1} = \begin{bmatrix}
		2 \\
		2 
	\end{bmatrix} ~~\mbox{and}~~
  \mu_{2} = \begin{bmatrix}
		-1 \\
		2 
	\end{bmatrix}
\end{equation*} 
and for the multiclass experiment we use:
Where for the binary experiment we use:
\begin{equation*}
	\mu_{1} = \begin{bmatrix}
		3 \\
		0 
	\end{bmatrix} ~~\mbox{,}~~
  \mu_{2} = \begin{bmatrix}
		0 \\
		3 
	\end{bmatrix}
  ~~\mbox{and}~~
  \mu_{2} = \begin{bmatrix}
		3 \\
		3 
	\end{bmatrix}
\end{equation*} 
\paragraph*{Discussion.}
In \Cref{fig:binary_cdt_failure}, we observe that in the binary case, the hyperparameter $\delta$ has no effect on the decision boundary of the CDT predictor (see \Cref{fig:binary_cdt_failure_a}). However, when used in the MA predictor, it allows for shifting the decision boundary towards the majority class, which leads to potential improvement of generalization on minority classes (see \Cref{fig:binary_cdt_failure_b}). Similarly, in \Cref{fig:mc_cdt_failure}, we see that the hyperparameter $\delta$ can lead to non-trivial training errors for CDT (see \Cref{fig:mc_cdt_failure_a}), while for MA, it changes the angle of the learned predictor towards the majority class (see \Cref{fig:mc_cdt_failure_b}).

\begin{figure}[t]
	\centering
	\subfloat[The CDT predictor decision boundary using different choice of hyperparameters]{\includegraphics[width=1\textwidth]{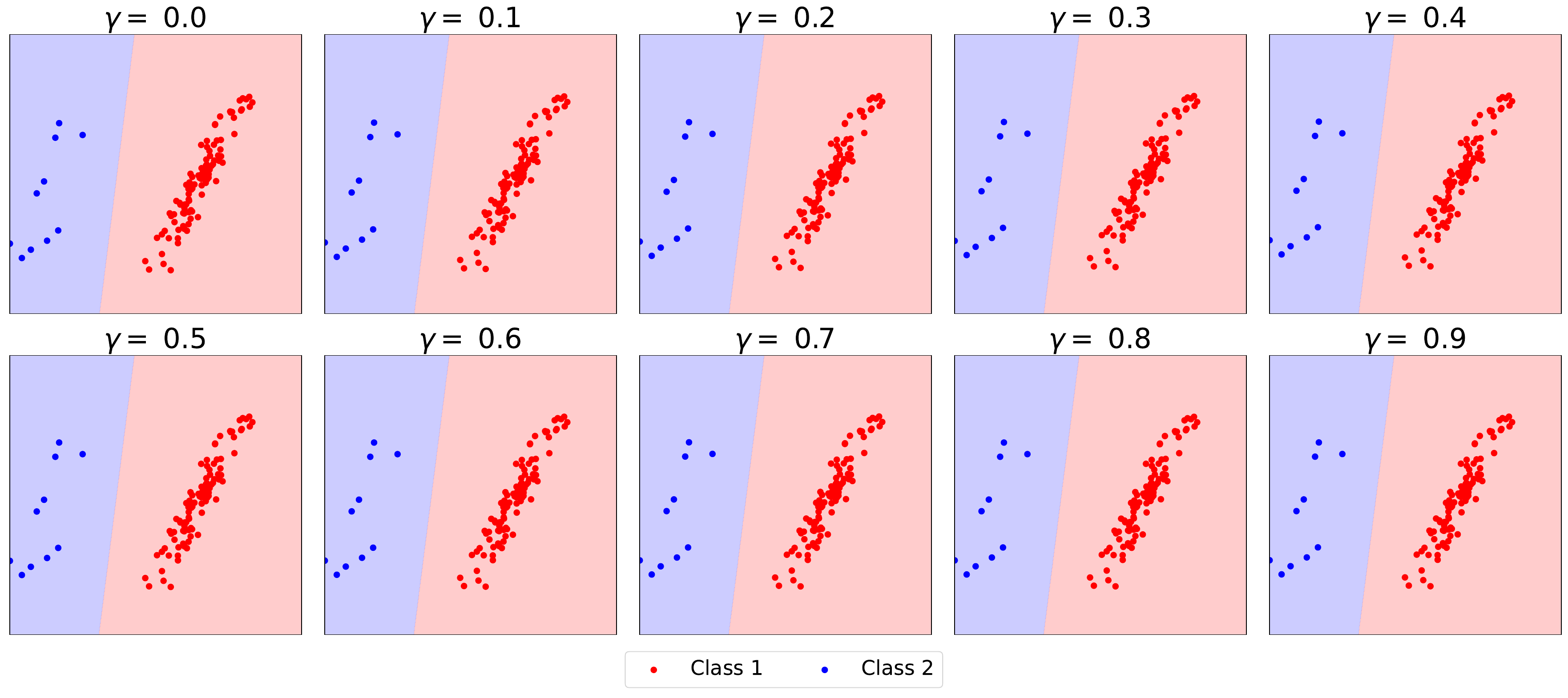}\label{fig:binary_cdt_failure_a}}
	\hfill\\
	\subfloat[The MA predictor decision boundary using different choice of hyperparameters]{\includegraphics[width=1\textwidth]{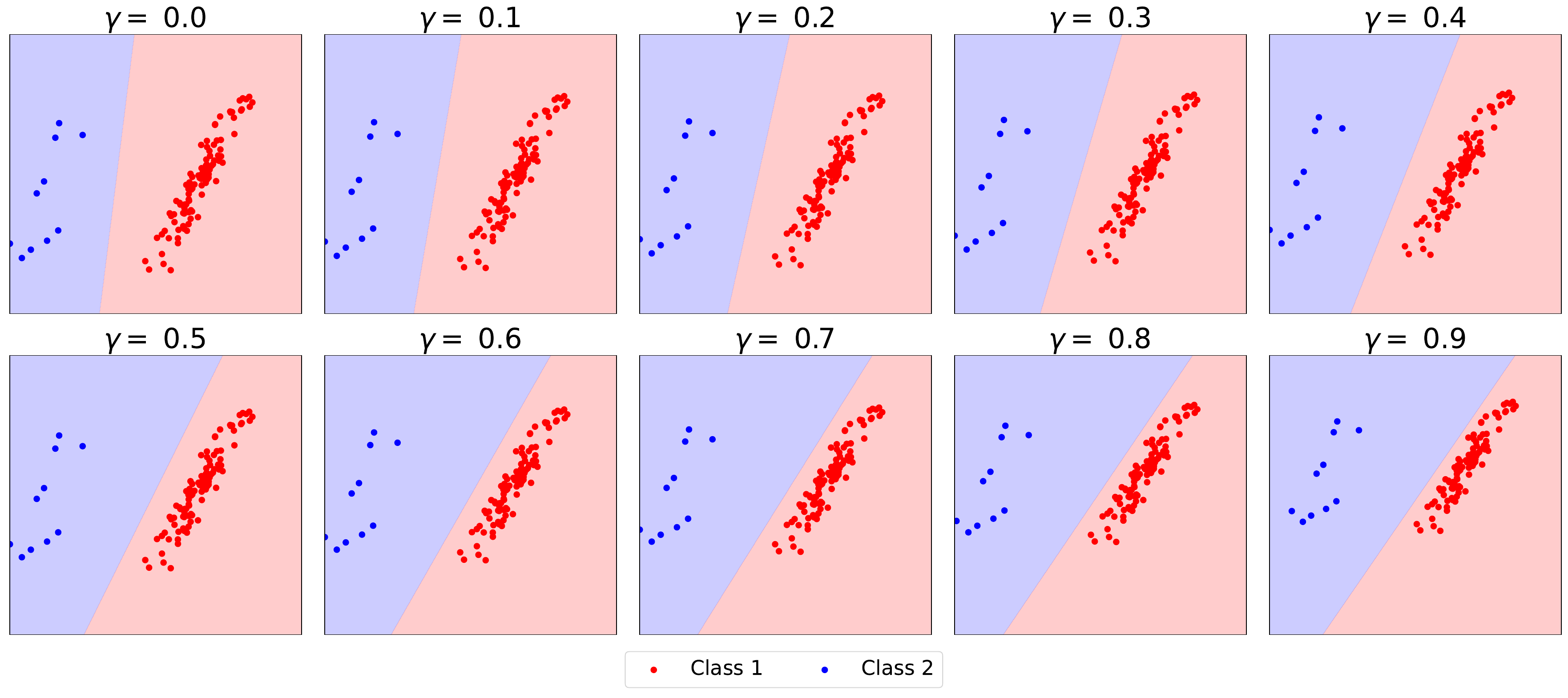}\label{fig:binary_cdt_failure_b}}
	\cprotect\caption{\label{fig:binary_cdt_failure}
	The effect of $\gamma$ on the decision boundary of (a) CDT and (b) MA predictors in \textbf{binary classification}. As it can be seen, in binary classification the hyperparameter $\delta$ has no affect on the decision boundary of the CDT predictor while in MA it changes the angle of the margin towards majority.}
\end{figure}

\begin{figure}[t]
	\centering
	\subfloat[The CDT predictor decision boundary using different choice of hyperparameters]{\includegraphics[width=1\textwidth]{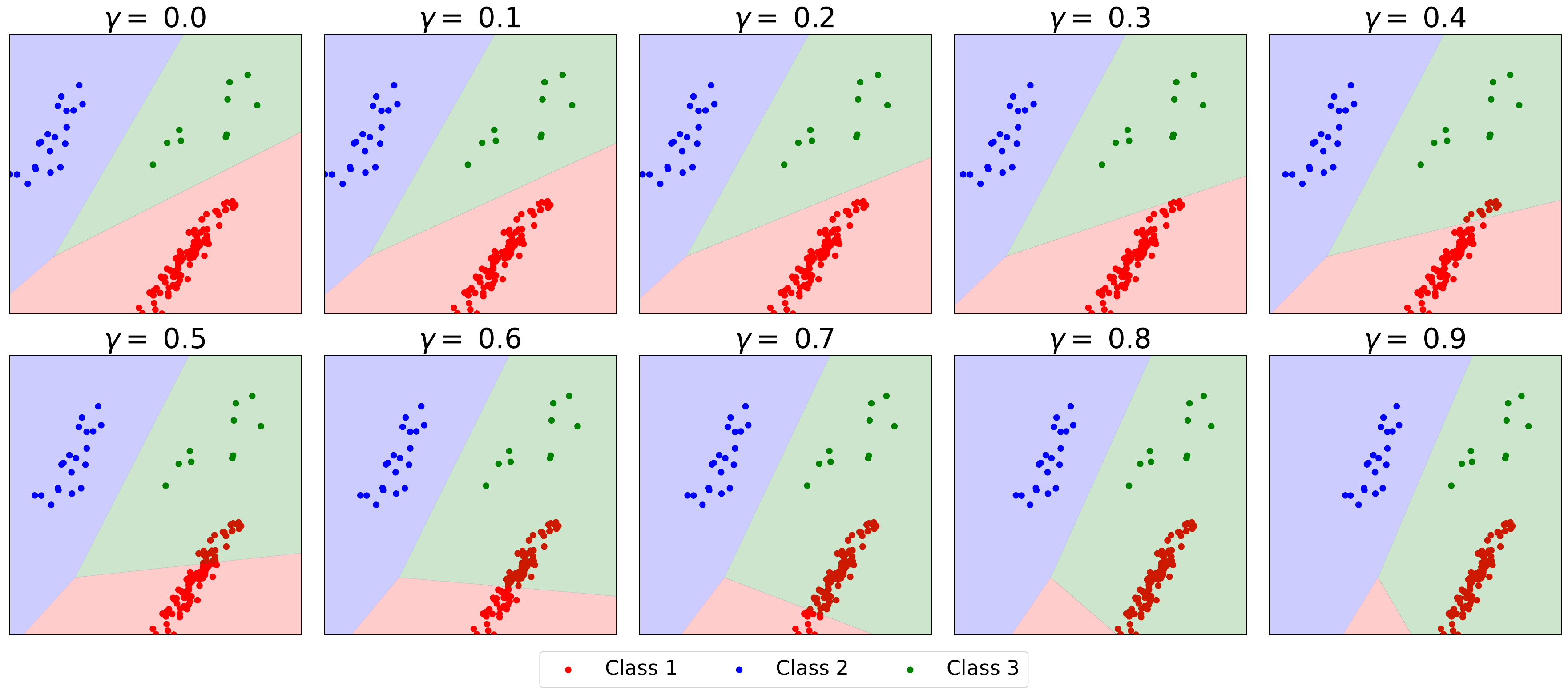}\label{fig:mc_cdt_failure_a}}
	\hfill\\
	\subfloat[The MA predictor decision boundary using different choice of hyperparameters]{\includegraphics[width=1\textwidth]{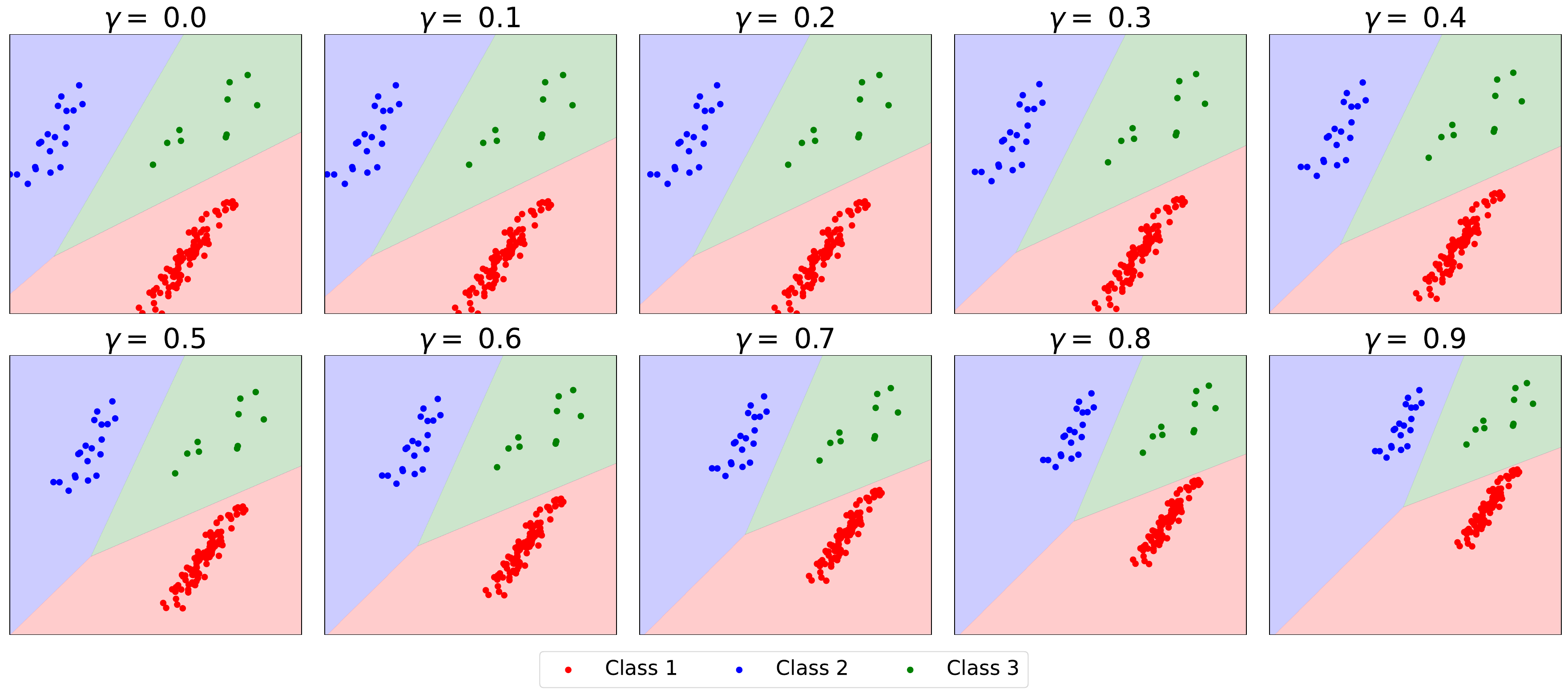}\label{fig:mc_cdt_failure_b}}
	\cprotect\caption{\label{fig:mc_cdt_failure}
	The effect of $\gamma$ on the decision boundary of the (a) CDT and (b) MA predictors in \textbf{multiclass classification}. As it can be seen, in multiclass classification the hyperparameter $\delta$ can lead CDT to have non-trivial train error, while in MA it changes the angle of the margin towards majority.
  }
\end{figure}

\subsection{Implicit bias of gradient descent}\label{appendix:experiments:implicit_bias}
To validate the implicit bias of gradient descent on the MA and CDT losses described in \Cref{appendix:implicit_bias}, we compare the test error of the classifiers obtained by gradient descent (\Cref{fig:implicit_bias_gd}) with their corresponding predictors (\Cref{fig:implicit_bias_cvxpy}) on a synthetic Gaussian dataset.

\paragraph*{Dataset.}
We use a Gaussian dataset in $\R^d$ for $d=10^4$ with four classes. Each class contains $5, 50, 80$, and $100$ samples, respectively. The signal strengths for all classes are set to $s_{i} = 0.5$. Additionally, we use a balanced test set containing 500 samples per class.

\paragraph*{Gradient descent training.}
We use PyTorch \citep{paszke2019pytorch} to run gradient descent on the MA and CDT losses (without learning the bias parameter) bias for 20,000 epochs. To avoid learning rate tuning and accelerate convergence we employed a variant of the Polyak step size for functions whose minimal value is 0:
\begin{equation}
\label{polyakstep}
\eta_{t} \defeq 0.05\cdot \frac{\bigL{W_t}}{\|\nabla \bigL{W_t}\|^2}
\end{equation}
In all experiments we initialized $W_0 = 0$ and did not use weight decay. 

\paragraph*{Hyperparameter sweep.}
For CDT we use the standard tuning i.e., $\delta_{i} = \prn*{\frac{1}{N_{y}}}^\gamma$ and for MA we use our theoretical tuning from \Cref{sec:other-methods}.

\paragraph*{Discussion.}
\Cref{fig:implicit_bias} shows that, as our theory suggests, gradient descent on different losses converges in the direction of their corresponding predictors, as described in \Cref{sec:class_of_predictors}.

\begin{figure}[t]
    \centering
    \subfloat[Gradient descent classifier error per class]{\includegraphics[width=0.8\textwidth]{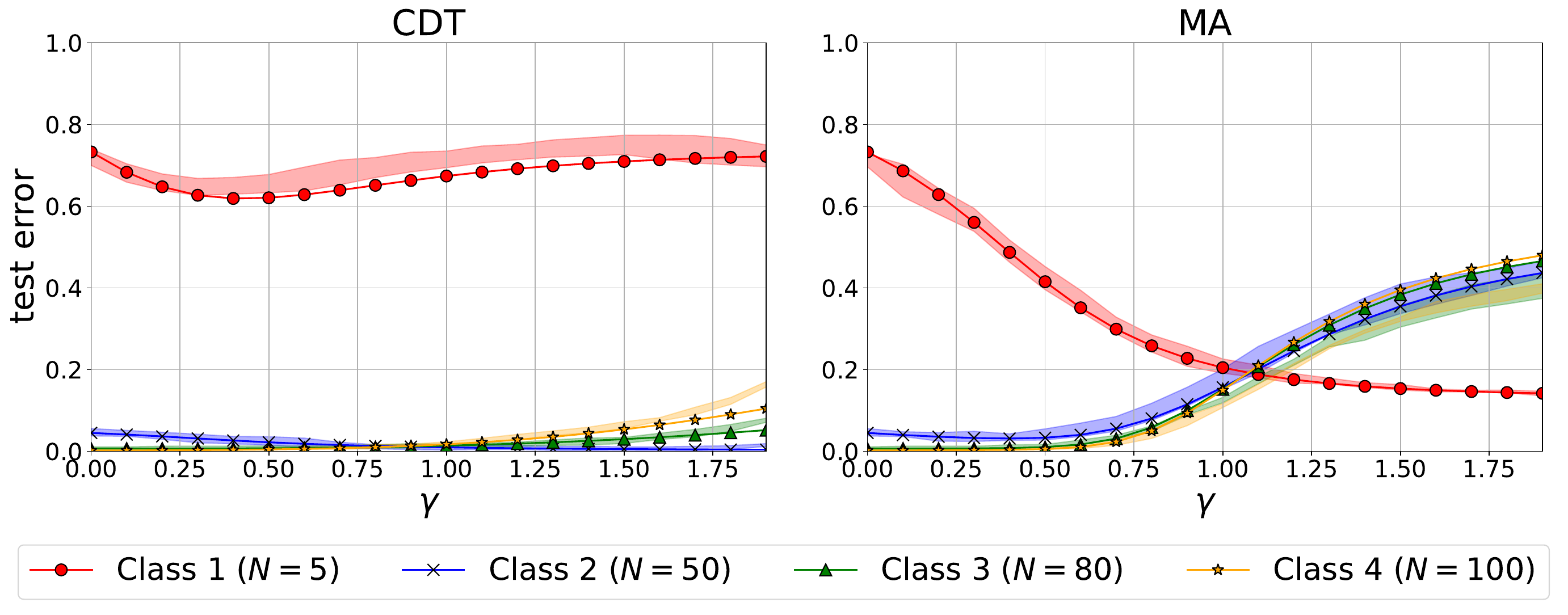}\label{fig:implicit_bias_gd}}
    \\
    \subfloat[Exact predictors error per class]{\includegraphics[width=0.8\textwidth]{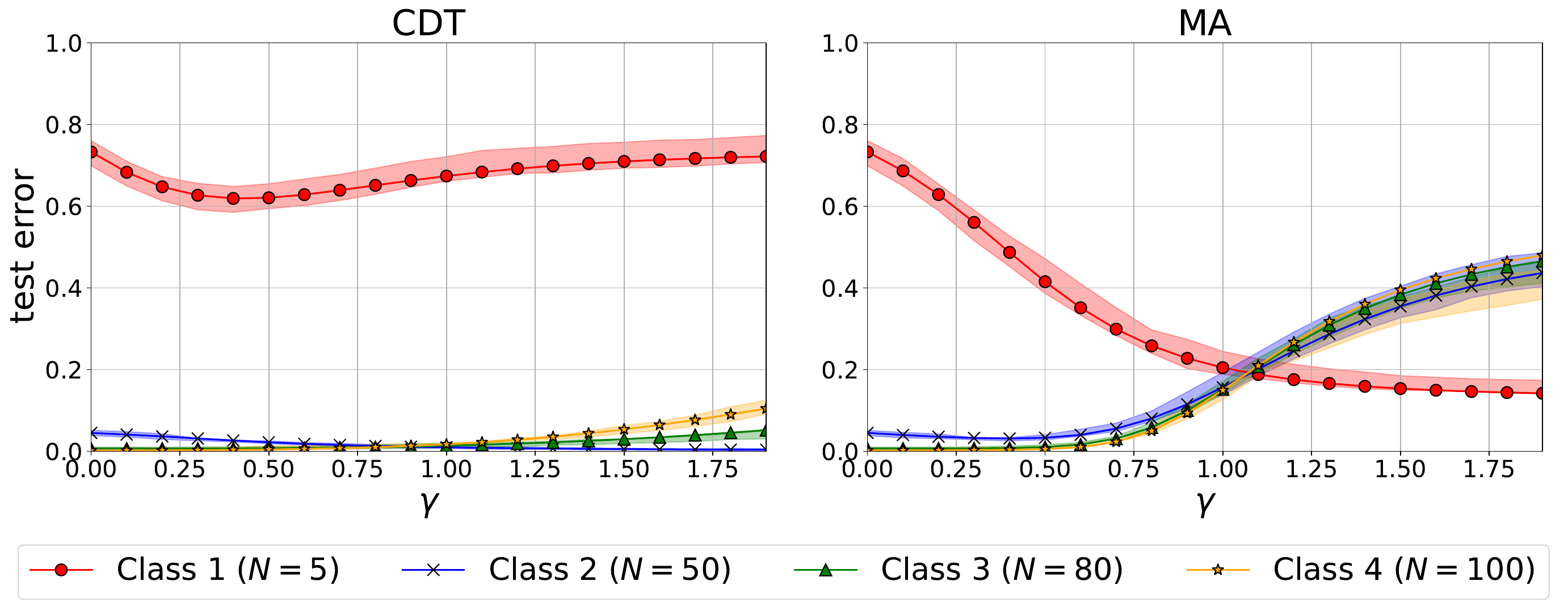}\label{fig:implicit_bias_cvxpy}}
    \caption{%
    An illustration of the \textbf{implicit bias of gradient descent} on linear classifiers over Gaussian data. (a) Per class error vs. loss hyperparameter using gradient descent (b) Per class error vs. loss hyperparameter using CVXPY \citep{diamond2016cvxpy} to solve the corresponding margin problem. The shaded regions show empirical results ($\pm$ one standard deviation over 3 random seeds) and the dashed lines show our analytical approximation for the error.
    }
    \label{fig:implicit_bias}
\end{figure}

\subsection{Extended comparison between LA and MA predictors}\label{appendix:la_vs_ma_comparison}
To validate whether the LA predictor, tuned for near-optimal parameters with equal strength signals, outperforms the MA predictor under reversed or aligned signal strengths, we compare their worst class error lower bounds at the near-optimal parameters of MA (with and without bias) \cref{eq:ma_wce_lower_bound} and LA \cref{eq:la_wcelb}, and calculate the histogram of bound differences for these cases.

\paragraph*{Use case parameter generation.}
For each case study, we conduct 500 experiments where, in each experiment, we sample the number of classes uniformly from 3 to 10. Each class size is sampled uniformly from 1 to 200, the dimension \( d \) is sampled uniformly from \( 10^5 \) to \( 10^7 \), the signal strengths are sampled uniformly from 0.01 to 1, and \( \rho \) is sampled uniformly from 1 to 1000.

\paragraph*{Discussion.}
Our findings in \Cref{appendix:fig:la_vs_ma_comparison} show that for aligned signals, LA (tuned for equal signals) is consistently better than MA without bias, though in rare cases MA with bias slightly outperforms it. For reversed signals, LA is generally worse than MA with bias, but there are occasional exceptions. These findings strongly suggest that LA is consistently better than MA without bias for aligned signals, aligning with our additional experiments in \Cref{sec:experiments}. Additionally, the converse does not hold: for reversed signals, MA is not always superior to LA, even when LA is not optimally tuned.

\begin{figure}[t]
    \centering
    \subfloat[]{\includegraphics[width=0.8\textwidth]{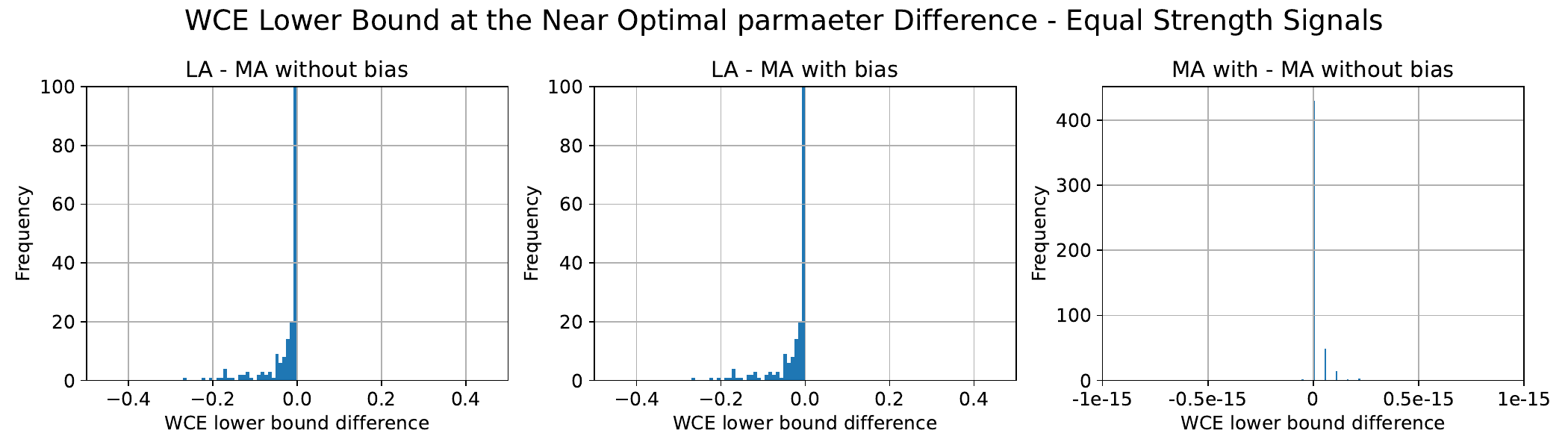}}
    \\
    \subfloat[]{\includegraphics[width=0.8\textwidth]{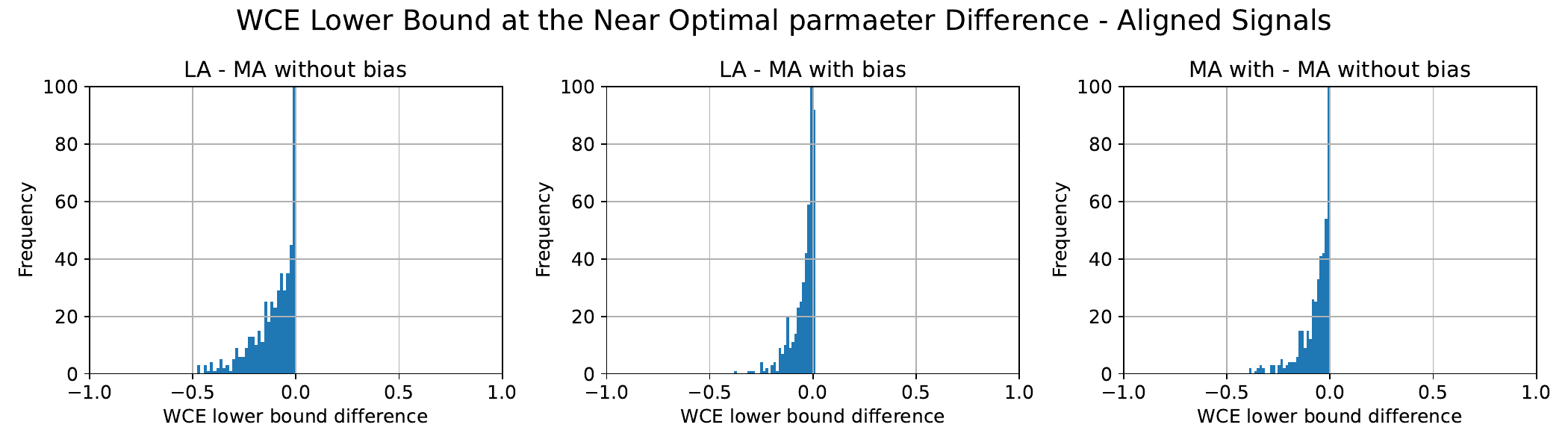}}
    \\
    \subfloat[]{\includegraphics[width=0.8\textwidth]{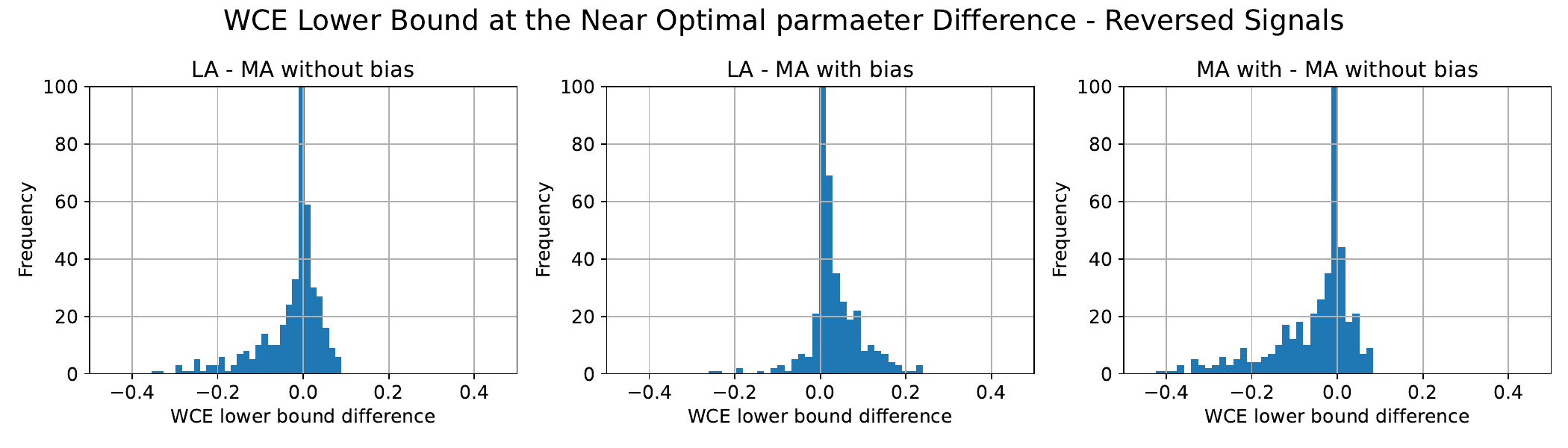}}
    \caption{%
		Histogram of the worst-class error lower bound differences between LA at its near-optimal parameter $\iota^{*}$~\cref{eq:la_wcelb} and MA with and without bias at the near-optimal parameter $\delta^\star$~\cref{eq:ma_wce_lower_bound}. Shown for cases with (a) equal signal strengths, (b) aligned signal strengths, and (c) reversed signal strengths. Each case confirms the validity of~\cref{prop:ma_comparison}. Row (a) also confirms~\cref{prop:la_vs_ma}. Row (b) demonstrates that, even with near-optimal parameters for the equal-strength signals case, LA is consistently superior to MA. Row (c) shows that, in most cases, MA outperforms LA for reversed signals, though there are instances where LA, tuned for equal signal strengths surpasses MA.
    }
    \label{appendix:fig:la_vs_ma_comparison}
\end{figure}

 \end{document}